\DeclareMathOperator*{\argmin}{arg\,min}
\title{Distribution-dependent Generalization Bounds for Tuning Linear Regression Across Tasks}
\author{
Maria-Florina Balcan\thanks{School of Computer Science, Carnegie Mellon University.}
\hspace{1cm}
Saumya Goyal\thanks{Machine Learning Department, Carnegie Mellon University.}
\hspace{1cm}
Dravyansh Sharma\thanks{Toyota Technological Institute at Chicago;
Northwestern University.}
}
\date{}
\begin{document}
\newcommand{\R}{\mathbb{R}}
\newcommand{\F}{\mathcal{F}}
\newcommand{\cE}{\mathcal{E}}
\newcommand{\Dx}{\mathcal{D}_X}
\newcommand{\Dyx}{\mathcal{D}_{Y|X}}
\newcommand{\E}[2]{\mathbb{E}_{#1}\left[#2\right]}
\newcommand{\Eml}[1]{\mathbb{E}_{#1}}
\newcommand{\wh}{\hat{w}}
\newcommand{\bh}{\hat{\beta}}
\newcommand{\whp}{\hat{w}'}
\newcommand{\ws}[1]{w^{*#1}}
\newcommand{\el}[1]{\tilde{#1}}
\newcommand{\etr}[1]{\tilde{#1}_{tr}}
\renewcommand{\eval}[1]{\tilde{#1}_{val}}
\newcommand{\lv}{l_v}
\newcommand{\lve}{\tilde{l}_v}
\newcommand{\lev}{l_{ev}}
\newcommand{\muh}{\hat{\mu}}
\newcommand{\lambmu}{{(\lambda,\mu)}}
\newcommand{\lambmuh}[1]{{(\lambda_{#1},\hat{\mu})}}
\newcommand{\lambmus}[1]{{(\lambda_{#1},\mu^*)}}
\newcommand{\ewtimes}{\times_{ew}}
\newtheorem{theorem}{Theorem}[section]
\newtheorem{lemma}[theorem]{Lemma}
\newtheorem{proposition}[theorem]{Proposition}
\newtheorem{remark}[theorem]{Remark}
\newtheorem{corollary}{Corollary}[theorem]
\theoremstyle{definition}
\newtheorem{definition}{Definition}
\newtheorem{assumption}{Assumption}
\DeclareRobustCommand{\hlcyan}[1]{{\sethlcolor{cyan}\hl{#1}}}

\maketitle
\begin{abstract}
Modern regression problems often involve high-dimensional data and a careful tuning of the regularization hyperparameters is crucial to avoid overly complex models that may overfit the training data while guaranteeing desirable properties like effective variable selection.
We study the recently introduced direction of tuning regularization hyperparameters in linear regression across multiple related tasks. We obtain distribution-dependent bounds on the generalization error for the validation loss when tuning the L1 and L2 coefficients, including ridge, lasso and the elastic net. In contrast, prior work develops bounds that apply uniformly to all distributions, but such bounds necessarily degrade with feature  dimension, $d$. While these bounds are shown to be tight for worst-case distributions, our bounds improve with the ``niceness'' of the data distribution. 
Concretely, we show that under additional assumptions that instances within each task are i.i.d.\ draws from broad well-studied classes of distributions including sub-Gaussians, our generalization bounds do not get worse with increasing $d$, and are much sharper than prior work for very large $d$. We also extend our results to a generalization of ridge regression, where we achieve tighter bounds that take into account an estimate of the mean of the ground truth distribution.
\end{abstract}

\section{Introduction}\label{sec:intro}
Hyperparameter tuning is a common problem in machine learning that typically involves a lot of experimentation and domain expertise, and commonly used approaches lack formal optimality guarantees. 
In this work, we study hyperparameter tuning in regularized linear regression, which is a popular technique used in various applications. 
For a linear regression problem with $n$ inputs in $d$ dimensions arranged in an input matrix, $X\in \mathcal{X}^n\subseteq \R^{d \times n}$, and output vector $y \in \R^n$, a regularized least squares estimator is given by 
$\wh=\argmin_w \|X^\intercal w-y\|^2 + r(\lambda, w)$. Here $r(\lambda, w)$ can take several forms, including the L2 regularization for ridge regression \citep{hoerl1970ridge,tikhonov1977solutions}, $\wh_\lambda=\argmin_w \|X^\intercal w-y\|^2 + \lambda\|w\|^2$ and the elastic net, $\wh_{\lambda_1, \lambda_2}=\argmin_w \|X^\intercal w-y\|^2 + \lambda_1\|w\|_1 + \lambda_2\|w\|_2^2$~\citep{hastie2009elements}. 

Determining a good regularization coefficient $\lambda$  constitutes finding a balance between avoiding overfitting, allowing good generalization and variable selection. Popular methods for tuning hyperparameters involve finding the best parameter from a discrete set of values, also known as grid-search. These approaches either fail to give theoretical guarantees on optimality in the continuous space, or require strong data-dependent assumptions (see~\citealt{enet_2022} for a discussion).  
We further note that, there does not exist an optimal value of the regularization coefficients for a single task. In other words, for a fixed value of the regularization coefficients, there exists a ground truth value of $w$ and sampling of the training and validation tasks, such that the validation error is arbitrarily high (Appendix \ref{sec:single_task}). On the other hand, we can guarantee the optimality of regularization coefficients on average over a multi-task under certain circumstances (Appendix \ref{sec:equivalence}). 
Our work thus involves a data-driven approach to tuning the regularization hyperparameters in ridge regression, lasso and the elastic net which interpolates the two. We assume access to a set of related 
linear regression tasks where each task is assumed to be sampled similarly, that is, all inputs are sampled from the same distribution, and all ground truth functions are assumed to be sampled from the same distribution across tasks. We formalize this notion in Section \ref{sec:notation}.
This makes our setting similar to multi-task learning, since previously seen tasks inform the procedure for future unknown tasks. 

We study finding $\lambda$ by computing the Expected Risk Minimizer (ERM) estimate of $\lambda$ that minimizes the expected test error, estimated using given validation data for each task. 
Prior work on data-driven tuning of regularization hyperparameters for linear regression~\citep{enet_2022, regression_pdim} provides {\it distribution-independent} generalization bounds for the ERM that apply to worst-case distributions. 
Contrary to prior work, we give distribution-dependent generalization bounds for learning the regularization hyperparameters, assuming i.i.d.\ samples within each task. We show that, depending on the ``niceness" of the distribution, our bounds are much tighter than the worst-case bounds obtained in prior work when the feature dimension $d$ is large.  

In fact, much of the work in data-driven algorithm design (see Appendix \ref{sec:add_rl}) has focused on data-independent guarantees. Technically, the primary approach has been to bound the pseudo-dimension which implies generalization guarantees for worst-case distributions. Some prior work has given bounds on the Rademacher complexity for tuning parameters in data-driven algorithm design~\citep{balcan2018dispersion,balcan2018data,sharma2025offline,du2025tuning}, but there is no clear evidence of the advantage over data-independent techniques. Prior work has shown improved distribution-dependent guarantees in a few other contexts like pricing problems~\citep{balcan2018general,balcan2025generalization}. Here is it shown that generalization independent of the number of items is made possible via distribution-dependent techniques but this is not achievable using distribution-independent bounds. Our results involve a conceptually similar conclusion for tuning  regularization in linear regression  in terms of the feature dimension $d$, but using new and completely different techniques.

\paragraph{Summary of contributions.}
Our key results are summarized as follows:
\begin{itemize}[leftmargin=*,topsep=0pt,partopsep=1ex,parsep=1ex]\itemsep=-4pt
    \item We provide generalization guarantees for tuning the regularization parameter in ridge regression in Theorem \ref{thm:lips_tgt}. We show that the error term can be broken into an error induced from a finite sampling of validation examples, and from a finite sampling of tasks. We show how to bound both of these 
    in terms of Rademacher complexities, and compute upper bounds on the Rademacher complexities. This differs from techniques in prior work \citep{mtl_maurer} that lead to looser bounds (Appendix \ref{sec:alt_bounds}).
    We also consider the special case assuming well-specified linear maps in Theorem \ref{thm:lips_ws}.
    We  show that our data-dependent bounds are 
    tighter 
    than the previously best known bounds from \citet{regression_pdim}  (Section \ref{sec:ridge}). 
    \item In Section \ref{sec:en}, we give distribution-dependent generalization error bounds for tuning the L1-penalty (lasso) as well as for tuning the L1 and L2 penalties simultaneously (the elastic net). The analysis extends our technique for ridge regression, by applying it to the piecewise structured solution of lasso and the elastic net. We show that our bounds are much tighter than worst-case bounds from prior work for data drawn according to the well-studied sub-Gaussian distribution. Roughly speaking, for number of training examples $n=\tilde{\Omega}(d+\log T)$, we show that the generalization error is at most $\tilde{O}\left(\frac{1}{\sqrt{nT}}+\sqrt{\frac{\log 1/\delta}{T}}\right)$, compared to the  $\tilde{O}\left(\frac{\sqrt{d}}{\sqrt{T}}\right)$ distribution-independent upper bound shown by \citet{regression_pdim} (which they show  cannot be improved for worst-case distributions).
    \item We propose a  generalized version of ridge regression, which we call the Re-centered Ridge Regression in Section \ref{sec:offset}, where the L2-norm penalty is measured w.r.t.\ to a parameter $\mu$ instead of the origin. We derive generalization bounds for this estimator in Theorem \ref{thm:lips_recentered} and show that they are tighter than the bounds derived in Section \ref{sec:ridge} depending on the error of a given estimate $\muh$ of the optimal value of the parameter, $\mu^*$.
\end{itemize}

\textbf{Technical Novelty.} We present several novel theoretical techniques for proving our bounds in addition to improving previously known bounds. 
\begin{enumerate}
[leftmargin=*,topsep=0pt,partopsep=1ex,parsep=1ex]\itemsep=-4pt
    \item Rademacher complexity bounds for linear regression in prior work tune the parameter $w$~\citep{trace_reg,lin_hyp_sets}, and thus the function class over which a supremum is taken involves varying $w$ over a finite radius ball in some space. Our problem is significantly different, where we tune regularization hyperparameters, which influence the value of $w$ depending on training data. There is no prior work on tuning regularization parameters with data-dependent guarantees using Rademacher complexity, and our problem needs fundamentally different and new analytical techniques than tuning $w$.\looseness-1
    \item Rademacher complexity bounds for multi-task learning have been discussed before such as in \citet{mtl_maurer}, but using existing proof techniques give weaker bounds as we show in Appendix \ref{sec:alt_bounds}. We introduce a new way to analyse the Rademacher complexity by “breaking the error term into an error induced from a finite sampling of validation examples, and from a finite sampling of tasks” instead of “breaking the error into an error induced from imperfect estimation of expected validation error (due to finiteness of validation data), and error from imperfect estimation of  due to finiteness of the number of tasks”, and show that it leads to better asymptotic bounds.\looseness-1
    \item We additionally introduce a new method of bounding Rademacher complexity by using Lipschitzness of the function class in another function class in Definition \ref{def:lips_general} and Theorem \ref{thm:rad_lips_general} which is used in Lemmas \ref{lem:S_v_2} and \ref{lem:S_v_1}.
    \item Lastly, we introduce a method that extends easily to different loss functions instead of the explicit structure demanded by the analysis of \citet{enet_2022, regression_pdim}. This is evident since we only require Lipschitzness and boundedness of the loss function, instead of requiring squared loss. Further, in Appendix \ref{sec:offset}, we prove bounds for generalisations of ridge regression.
\end{enumerate}

\subsection{Informal results and key insights}
We present informal versions of our main results in this Section. We denote the expected validation loss (on a future unknown task) by $l_v$, and denote the ERM parameters and the optimal values of the parameters by $\lambda_{ERM}$ and $\lambda^*$ respectively. These and other notation are described in detail in Section \ref{sec:notation}.
\begin{theorem}[Informal Theorem \ref{thm:lips_tgt}]\label{thm:lips_tgt_inf}
    Assume a set of $T$ tasks sampled from the same (unknown) distribution given as quadruples of training and validation data $(X^t,y^t,X_v^t,y_v^t)_{t\in[T]}$, where each sample within each task is drawn i.i.d. Further assume that we have a bounded and $L$-Lipschitz validation loss function $l$. With  probability $1-\delta$, the ERM estimator for validation loss satisfies,
    \begin{align}
    l_v(\lambda_{ERM}) - l_v(\lambda^*) \leq \frac{2ML\Lambda_D^T
    }{\sqrt{T}}\E{x_v}{\|x_v\|} + \tilde{O}\left(\frac{\sqrt{\ln(T/\delta)}}{\sqrt{T}}\right)\nonumber.
    \end{align}
    Here $M = \max\|Xy\|^2$, $\Lambda_D^T = \E{}{\max_t 1/V(X^tX^{t\intercal})}$, and $V(\cdot)$ denotes the smallest non-zero singular value of a matrix. We denote a single validation example by $x_v$.
\end{theorem}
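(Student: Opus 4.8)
The approach is the standard ERM / uniform-convergence skeleton, with essentially all the work in a \emph{distribution-dependent} bound on the Rademacher complexity of the $\lambda$-indexed validation-loss class. Write $\widehat{l}_v(\lambda)$ for the validation loss averaged over the $T$ tasks and $l_v(\lambda)$ for its population counterpart (taking the per-task validation loss, as in the informal statement, to be the expectation over a task's validation distribution; the full theorem adds a second, analogous term to handle finite validation sets). For the ERM estimator $\lambda_{ERM}=\argmin_\lambda \widehat{l}_v(\lambda)$, the usual two-line argument gives
\begin{align}
l_v(\lambda_{ERM})-l_v(\lambda^*)\;\le\;2\sup_{\lambda}\big|\widehat{l}_v(\lambda)-l_v(\lambda)\big|.\nonumber
\end{align}
I would then bound the supremum by symmetrization in terms of the empirical Rademacher complexity of the class $\F=\{(X,y,X_v,y_v)\mapsto l(X_v,y_v,\wh_\lambda(X,y)):\lambda\ge 0\}$ over the $T$ tasks, plus a bounded-differences (McDiarmid) deviation term $\tilde O(\sqrt{\ln(T/\delta)/T})$, which is legitimate because $l$ is bounded. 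So the whole problem reduces to bounding $\mathcal{R}_T(\F)$ in a distribution-dependent way.

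The heart of the argument is a sensitivity estimate for ridge in the regularization parameter. Differentiating $\wh_\lambda(X,y)=(XX^\intercal+\lambda I)^{-1}Xy$ gives $\partial_\lambda \wh_\lambda = -(XX^\intercal+\lambda I)^{-2}Xy$. Diagonalizing $XX^\intercal$ and using the key structural fact that $Xy$ lies in the range of $XX^\intercal$ (so it has no component along eigenvectors with eigenvalue $0$), every eigenvalue that actually appears is at least $V(XX^\intercal)$; in that eigenbasis each coordinate of $\wh_\lambda$ is monotone in $\lambda$, so the scalar prediction $x_v^\intercal \wh_\lambda$ is a sum of monotone functions of $\lambda$ and hence, by Cauchy--Schwarz, has total variation over all $\lambda\ge 0$ at most $\|x_v\|\,\|Xy\|/V(XX^\intercal)$. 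Composing with the $L$-Lipschitz loss and averaging over the within-task validation draw, the per-task function $f_t:\lambda\mapsto\E{(x_v,y_v)}{l(x_v,y_v,\wh_\lambda(X^t,y^t))}$ has total variation at most $L\,M\,\E{x_v}{\|x_v\|}/V(X^tX^{t\intercal})$, with $M=\max\|Xy\|^2$ absorbing the instance-scale bookkeeping (bounds on $\|Xy\|$ and on the loss under the stated assumptions).

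It remains to pass from this per-task bounded-variation property to the $1/\sqrt T$ rate. A function of bounded variation is a difference of two bounded monotone functions, and a uniformly bounded class of monotone real functions has $\varepsilon$-metric entropy of order $B/\varepsilon$ (rather than $\log(1/\varepsilon)$), so Dudley's entropy integral is $O(B/\sqrt T)$ with $B=\max_t\mathrm{TV}(f_t)\le L\,M\,\E{x_v}{\|x_v\|}\,\big(\max_t 1/V(X^tX^{t\intercal})\big)$; in particular there is no union bound over $\lambda$ and hence no spurious $\sqrt T$ or $\log$ factor. Taking expectation over the draw of the $T$ tasks turns $\max_t 1/V(X^tX^{t\intercal})$ into $\Lambda_D^T=\E{}{\max_t 1/V(X^tX^{t\intercal})}$, and collecting terms yields the stated inequality, the factor $2$ arising from the ERM decomposition together with a direct Rademacher bound for the single-parameter monotone family. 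The main obstacle is this middle block --- proving the sensitivity lemma, and in particular controlling the near-zero eigenvalues of $XX^\intercal$, which is handled by the $Xy\in\mathrm{range}(XX^\intercal)$ observation; this is precisely what lets the bound depend on the conditioning $1/V$ of the random data rather than on the ambient dimension $d$, and is the source of the improvement over the worst-case bounds of \citet{regression_pdim}.
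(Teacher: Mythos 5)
Your high-level skeleton matches the paper's: bound the excess risk by uniform deviations, symmetrize to a Rademacher complexity, bound that complexity via a parameter-sensitivity estimate for the ridge solution, and finish with bounded-differences concentration. Your sensitivity lemma is also the right one, and for the right reason: writing $\wh_\lambda=(XX^\intercal+\lambda I)^{-1}Xy$ in the eigenbasis of $XX^\intercal$, noting $Xy\in\mathrm{range}(XX^\intercal)$, and concluding that the total variation of $\lambda\mapsto x_v^\intercal\wh_\lambda$ is at most $\|x_v\|\,\|Xy\|/V(XX^\intercal)$ is exactly the structural fact the paper exploits (cf.\ the step in Lemma~\ref{lem:S_v_2} where $Xy$ is replaced by $U_1 1_P U_1^\intercal Xy$, and Eq.~\eqref{eq:lamb_lips}).

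The gap is in how you convert the per-task sensitivity bound into a $1/\sqrt{T}$ Rademacher rate. You claim that because each $f_t:\lambda\mapsto\E_{(x_v,y_v)}l(\cdot)$ has total variation at most $B_t$, a monotone-class metric-entropy estimate plus Dudley gives $\E_\sigma\sup_\lambda\frac{1}{T}\sum_t\sigma_t f_t(\lambda)=O(\max_t B_t/\sqrt T)$ ``with no spurious log factor.'' This does not follow from per-task BV alone. The relevant object in Dudley is the curve $\lambda\mapsto (f_1(\lambda),\dots,f_T(\lambda))$ in $\R^T$; per-task TV controls each coordinate's one-dimensional variation, but the curve's $\ell_2$ arc length can be as large as $\sum_t B_t$, i.e.\ a factor $\sqrt T$ larger than its diameter $\sqrt{\sum_t B_t^2}$, whenever the tasks' ``action'' happens in disjoint ranges of $\lambda$. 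In that regime the entropy integral picks up at least a $\sqrt{\log T}$ factor (and, for the direct Khintchine-type argument, you only get $O(\max_t B_t)$ without any $\sqrt T$ gain). What rescues the rate is precisely the observation you mention in passing but do not use: the nonzero eigenvalues of every $X^tX^{t\intercal}$ are bounded below by $V^T=\min_t V(X^tX^{t\intercal})$, so \emph{every} per-task loss is Lipschitz in the single common function $g(\lambda)=\frac{1}{V^T+\lambda}$. Reparameterizing by $u=g(\lambda)\in[0,1/V^T]$ aligns all tasks; then the curve has arc length $\le \frac{1}{V^T}\sqrt{\sum_t L_t^2}$, which is what yields $\Lambda_D^T/\sqrt T$ cleanly. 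The paper implements exactly this via its generalized contraction lemma (Theorem~\ref{thm:rad_lips_general}, Definition~\ref{def:lips_general}) followed by Khintchine's inequality, avoiding Dudley and its log factor entirely. If you keep your route, you should (i) replace the ``class of monotone functions'' entropy bound with an arc-length covering bound for the $T$-dimensional curve, and (ii) prove the common-parameterization Lipschitz estimate, at which point you will essentially be re-deriving Theorem~\ref{thm:rad_lips_general} plus Khintchine.

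Two smaller points. First, your decomposition $l_v(\lambda_{ERM})-l_v(\lambda^*)\le 2\sup_\lambda|\widehat{l}_v-l_v|$ followed by symmetrization costs a factor of $4$ on the leading term; the paper instead bounds $l_v(\lambda^*,S)-l_v(\lambda^*)$ by Hoeffding (no supremum) and only symmetrizes the one-sided $\sup_\lambda\bigl(l_v(\lambda)-l_v(\lambda,S)\bigr)$, getting the factor $2$ in the theorem statement. Second, you fold the within-task validation variance into ``an analogous second term''; the paper's Lemma~\ref{lem:sup_to_rad_2} makes this decomposition explicit and symmetrizes twice (once over tasks, once over validation draws), which is where the $1/\sqrt{n_v T}$ terms in the full Theorem~\ref{thm:lips_tgt} come from. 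For the informal statement neither issue is fatal, but the Dudley step above is a genuine gap.
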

Intuitively, the leading term is the dominant error term that depends on $\Lambda_D^T$. While it is non-trivial to compute $\Lambda_D^T$ for arbitrary distributions, we show that $\Lambda_D^T = O(\frac{d}{n} T^{2/d})$ for a very general class of distributions where each entry of each input $x$ is sampled independently from a distribution with a bounded probability density function. We note the following key insights from Section \ref{sec:ridge}.
\begin{itemize}[leftmargin=*,topsep=0pt,partopsep=1ex,parsep=1ex]\itemsep=-4pt
    \item For well-specified problems (as defined in Section \ref{sec:notation}), we are able to reduce our bounds to $l_v(\lambda_{ERM}) - l_v(\lambda^*) = O\left(\frac{1}{\sqrt{T}}(T^{2/d} + \sqrt{\log (T/\delta)})\right)$\footnote{Note that our bounds can be combined with results in prior work to give a bound on the generalization error in the above as  $l_v(\lambda_{ERM}) - l_v(\lambda^*) = O\left(\min\left\{\frac{1}{\sqrt{T}}(T^{2/d} + \sqrt{\log (T/\delta)}), \frac{\sqrt{\log d + \log(1/\delta)}}{\sqrt{T}}\right\}\right)$. So for most of our discussions we focus on giving examples and regimes where the new bounds developed in this work are sharper.\label{footnote:combined-bounds}} when $n \geq 6d$, for a general class of distributions where each entry of each input $x$ is sampled independently. The tightest known bound 
    for elastic net using
    squared loss functions from the literature is $O\left(\frac{\sqrt{d + \log(1/\delta)}}{\sqrt{T}}\right)$ from \cite{regression_pdim}. We also extend the distribution independent analysis of \cite{regression_pdim} for ridge regression using ideas from \cite{enet_2022} and \cite{bartlett2022generalization} to derive a bound of $O\left(\frac{\sqrt{\log d + \log(1/\delta)}}{\sqrt{T}}\right)$ in Appendix \ref{app:ridge}. Our bounds are better than the distribution-independent bounds proven in Appendix \ref{app:ridge} when $d=\Omega(T)$, although under the additional assumption that examples within each task are i.i.d. We also note  our bounds are better than the previously published bounds, specifically in \cite{regression_pdim}, for a larger regime $d=\Omega\left(\frac{\log T}{\log\log T}\right)$, because the previous distribution-independent bounds are weaker\footnote{Please note that the bounds in \cite{regression_pdim} applies to a larger class of problems beyond ridge regression.} than the distribution-independent bounds for ridge regression that we establish in Appendix \ref{app:ridge}.

    \item Our bounds suggest a way to determine a sufficient number of examples for training and validation for tuning ridge parameters: training examples reduce error from noise, while validation examples reduce error from variance in ground truth distribution. We explain this in more detail in Section \ref{sec:ridge}.\looseness-1
\end{itemize}

In Section \ref{sec:en}, we further establish generalization  bounds for tuning L1 and L2 penalties simultaneously in the elastic net under similar settings and assumptions. Unlike the ridge regression results, we additionally assume that the L2 coefficient $\lambda_2$ is bounded away from zero, which is a common assumption in prior work (e.g.~\citealt{enet_2022}). We get somewhat weaker generalization error bounds for elastic net than ridge regression under slightly stronger conditions, but for interesting ``nice'' distributions like sub-Gaussian data our elastic net bounds qualitatively match the ridge regression bounds.

\begin{theorem}[Informal Theorem \ref{thm:en}]\label{thm:informal-en}
    Consider the task of tuning $\lambda=(\lambda_1,\lambda_2)\in[0,\overline{\Lambda}_1]\times[\underline{\Lambda}_2,\infty)$. Assume a set of $T$ tasks sampled from the same (unknown) distribution given as quadruples of training and validation data $(X^t,y^t,X_v^t,y_v^t)_{t\in[T]}$, where each within-task sample is drawn i.i.d. Further assume that we have  a bounded and $L$-Lipschitz validation loss function $l$. With  probability $1-\delta$, the ERM estimator for validation loss satisfies,
    \begin{align}
    &l_v(\lambda_{ERM}) - l_v(\lambda^*) \leq \nonumber\\&\tilde{O}\left(\frac{L\overline{\Lambda}_1\sqrt{d\ln(T/\delta)}}{\sqrt{T}}\right)\left(\E{X}{\max_{t,\cE}  \frac{1}{V(X^t_\cE X_\cE^{t\intercal}) +\underline{\Lambda}_2 }}+\E{X,y}{ \max_{\cE} \frac{\|y\|\sqrt{V^*(X_\cE X_\cE^\intercal)}}{V^*(X_\cE X_\cE^\intercal)+\underline{\Lambda}_2}}\right)+O\left(\frac{\sqrt{\log \frac{1}{\delta}}}{\sqrt{T}}\right).\nonumber
    \end{align}
    Here 
    $V^*({A})$ is the non-zero singular value $\sigma_i$ of matrix $A$ that maximizes $\frac{\sqrt{\sigma_i(A)}}{\sigma_i(A)+\underline{\Lambda}_2}$, and $V(\cdot)$ is as in Theorem \ref{thm:lips_tgt_inf}. $X_{\cE}$ denotes the sub-matrix corresponding the subset $\cE$ of features (we take a maximum over all subsets). We have suppressed the dependence on norm of the validation examples and validation set size for simplicity here.
\end{theorem}

\noindent The above bound simplifies the more precise bounds in Theorem \ref{thm:en}, using somewhat looser upper bounds that could still be used to recover our asymptotic bounds for interesting cases (e.g.\ sub-Gaussian distributions). We have also suppressed some less interesting terms depending on $\|x_v\|$ for simplicity. The distribution-dependent terms in the bound involve expectation over the draw of the problem instances $(X,y)$ for the training set of tasks.
We further show in Proposition \ref{prop:en_isotropic_main} that for sub-Gaussian data distribution, the generalization error corresponding to the above bound is $\tilde{O}(1/\sqrt{nT}+\sqrt{\frac{\log 1/\delta}{T}})$ for sufficiently large $n \geq \Omega\left(d+\log\frac{T}{\underline{\Lambda}_2}\right)$, improving upon prior work~\citep{regression_pdim} that gives a bound of $O\left(\frac{\sqrt{d + \log(1/\delta)}}{\sqrt{T}}\right)$ which applies to worst-case distributions but has a polynomial dependence on the feature dimension $d$. Prior work, however, does not assume the samples within each task to be i.i.d.\ draws.

\subsection{Related work}
\textbf{Hyperparameter tuning for regularized linear regression.}  Several methods for tuning regularization parameters in linear regression have been suggested in the literature. Several of these approaches however, have been purely empirical with no theoretical guarantees~\citep{gibbons1981simulation}, or involve strong data-dependent assumptions~\citep{golub1979generalized}. A new line of work, proposed by \citet{enet_2022} seeks to find regularization parameters across several related tasks, as opposed to finding separate regularization parameters for each task. The best known bounds in this direction were given by \citet{regression_pdim}, where they use pseudo-dimension arguments to prove that $T = O(d/\epsilon^2)$ tasks are sufficient for learning up to an $\epsilon$ tolerance in the validation error, where $d$ is the feature space dimension. In this paper, we make the additional assumption that instances within each task are sampled i.i.d.\ and give data-dependent bounds that show a potentially tighter dependence of $T$ on $d$ depending on the data distribution. For example, as explained in Section \ref{sec:ridge}, we are able to get $T=O(1/\epsilon^{\frac{2d}{d-4}})$ dependence for a general class of distributions. We further note that our bounds provide additional insights into the error bound. While the bound in \citet{regression_pdim} was independent of the number of training and validation samples, our bounds decrease as the number of samples increase.

\textbf{Rademacher Complexity bounds for linear regression.} Using Rademacher complexities to show data-dependent generalization bounds for linear regression is well-studied in the literature \citep{mlt_book, trace_reg, lin_hyp_sets}. However, analyzing generalization error on multi-task learning is not common but has been done in some prior work \citep{trace_reg, mtl_maurer}. \citet{trace_reg} restrict their attention to finding regression parameters for a fixed set of tasks with a bounded trace norm on the matrix of ground truth parameters. In this paper we study finding regularization parameters for solving a future unknown task, and use some of their techniques to simplify computation of Rademacher complexities. \citet{mtl_maurer} discuss meta-learning optimal representations for learning for fixed, as well as unknown tasks using Gaussian complexities. Our approach of dividing generalization error into error from finite sampling of validation and tasks respectively is similar to their approach of dividing generalization error into error from learning from a representation and learning the representation respectively. 
Several tighter variants of Rademacher complexity such as the local Rademacher complexity \citep{localrad_bartlett} and offset Rademacher complexity \citep{offset_rad} have also been proposed in literature. It has been shown by several works that these techniques can possibly give tighter bounds than simple Rademacher complexities \citep{ebay_poisson_jana}. Analyzing our problem of finding regularization parameters through possibly tighter variants of Rademacher complexities remains an open question for future work. \cite{balcan2018dispersion} provide general  bounds on the Rademacher complexity based on certain dispersion parameters, which roughly correspond to smoothness of problem instances (similar to our assumptions in Proposition \ref{prop:ex_isotropic}), but their upper bounds for tuning regularized regression problems also degrade with $d$.

{Another related line of work studies multi-task learning for linear regression, but framed as an in-context learning problem for transformers~\citep{ahn2023transformers,zhang2024trained,wu2024many}. The assumptions on the tasks and examples within tasks needed for their theoretical results on sample complexity are typically stronger than our results. For example, Assumption 1 of \citet{wu2024many} states that the linear regression map $w$ in different tasks come from a Gaussian distribution, and the data vectors $(X^{(i)},y^{(i)})$ are i.i.d.\ draws from a Gaussian with the mean of $y$ depending on $w$.  We have results for general distributions (Theorems \ref{thm:lips_tgt}, \ref{thm:en}), as well as instantiations of our bounds for broader classes of distributions including bounded-density distributions (Proposition \ref{prop:ex_isotropic}) and sub-Gaussian distributions (Proposition \ref{prop:en_isotropic_main}). However, our bounds are not directly comparable as the goal is to learn different quantities from the multiple ``pre-training'' tasks. They learn a common $d\times d$ matrix $\Gamma$ using gradient descent which linearly maps $(X, y, X_v)$ for any unseen test task to predictions $y_v$. In contrast, we learn how to set the L1 and L2 penalties for predicting $y_v$ by regularized linear regression and give uniform convergence guarantees. Note that while their approach only achieves approximate Bayes optimality in certain restrictive regimes, we are always provably near Bayes-optimal.}

See Appendix \ref{sec:add_rl} for additional related work.

\section{Problem setting and notation}\label{sec:notation}
Throughout the paper, we will denote vectors by small case variables (e.g. $x$) and column-wise collection of vectors by large case variables (e.g. $X$).
We start with defining the typical linear regression setting, where each task is given with validation data as a quadruple $(X,y,X_v,y_v)$ of training and validation data. Here for each training input $x$, $x \in \mathcal{X} \subseteq \R^d$ and similarly for each validation input $x_v$, $x_v \in \mathcal{X} \subseteq \R^d$. We further denote the $i^{\text{th}}$ element of $X$ and $y$ as $X^{(i)}$ and $y^{(i)} $ respectively. 
We assume all training and validation examples are sampled i.i.d. (which is stronger than the assumptions of \cite{enet_2022, regression_pdim} where the tasks are assumed to be i.i.d.\ but  the examples within tasks may not be i.i.d.). We call a linear regression problem well-specified if the expected value of the output is a linear function of the input. This is a popular setting for linear regression studied in previous works such as \cite{offset_rad} and relevant in many practical situations. 
Consequently, we denote a well-specified linear map by the feature vector $w\in\mathcal{W}\subseteq \R^d$ as: $f_w:\mathcal{X}\times E \rightarrow\R$ so that $f_w(x,\epsilon) = x^\intercal w + \epsilon$. Here $E\subseteq \R$ is the set of possible noise values that we can observe. 
We will denote the set of all well-specified linear maps by $\F_{ws} = \{f_w:w\in \mathcal{W}\}$. For the well-specified linear map setting, we will assume that for each task there exists $f_w\in \F_{ws}$ so that for any input $X^{(i)}$, there is $\epsilon^{(i)}$ such that $f_w(X^{(i)},\epsilon^{(i)}) = y^{(i)}$. 
We further assume that each training and validation input for each task is sampled from the same distribution denoted by $D_\mathcal{X}$. Thus, $x\sim D_\mathcal{X}$ and $X\sim D_\mathcal{X}^n$.
Similarly, we assume that all training and validation noise vectors for each task are sampled from the same distribution denoted by $D_E$, so that $\epsilon \sim D_E^n$. The ground truth feature vectors for each task are also assumed to be sampled i.i.d. from the distribution $D_{\mathcal{W}} $.
For notational convenience, we will denote an element wise operation on a collection of inputs as the function applied to the matrix of inputs. So for given $X,y$ there exists $\epsilon\in E^n, \text{s.t. } f(X,\epsilon) = y$. We will denote an ordered set of such tasks given with validation data (each with a possibly different input-output map) as a problem instance that we denote by $S$. Formally,\looseness-1
\begin{align}
    S = \{(X^t,y^t,X^t_{v}, y^t_{v})&: X^t\in \mathcal{X}^n, X^t_{v}\in \mathcal{X}^{n_v}, \exists w^{*t}\in \mathcal{W}, \epsilon^t \in E^n, \epsilon_v^t \in E^{n_v}\nonumber \\
    &\text{ s.t. } y^t = X^{t\intercal}w^{*t} + \epsilon^t, y_v^t = X_v^{t\intercal}w^{*t} + \epsilon_v^t, \forall t\in [T]\}.\nonumber
\end{align}
We denote different tasks using superscript. So if we have $T$ tasks, the training data will be denoted as $X^t\in \mathcal{X}^n$ and $y^t \in \R^n$ for $t\in [T]$ and validation data will be denoted as $X^t_{v}\in \mathcal{X}^{n_v}$ and $y^t_{v} \in \R^{n_v}$ for $t\in [T]$.

We also study a generalization of this setting. We denote the set of deterministic maps as $\F = \{f:\mathcal{X}\times E \rightarrow \R\}$ that takes an input in $\mathcal{X}\subseteq\R^d$ and random noise and returns the output. Here $E\subseteq\R^m$ is a possibly more general set of possible noise vectors. 
Similar to before, for given $X,y$ there exists $\epsilon\in E^n, \text{s.t. } f(X,\epsilon) = y$. We assume the ground truth map for each task is sampled i.i.d. from the distribution $D_{\mathcal{F}} $.
The problem instance in the general setting can then be denoted as:
\begin{align}
    S = \{(X^t,y^t,X^t_{v}, y^t_{v})&: X^t\in \mathcal{X}^n, X^t_{v}\in \mathcal{X}^{n_v}, \exists f^t\in \mathcal{F},\epsilon^t\in E^n, \epsilon_v^t\in E^{n_v},\nonumber\\
    &\text{s.t. } y^t=f^t(X^t, \epsilon^t), y_v^t=f^t(X_v^t,\epsilon_v^t)  \forall t\in [T]\}.\label{def:prob}
\end{align}

Assume we have an estimator as a function of $X,y$ that takes $\lambda$ as a hyperparameter. Denote this estimator as $\wh_\lambda(X,y)$. We define the empirical validation loss as:
\begin{align*}
    \lv(\lambda, S) = 
    \frac{1}{T} \sum_t \frac{1}{n_v} \sum_i l(X_{v}^{t(i)T}\wh_\lambda(X^t,y^t), y_{v}^{t(i)}).
\end{align*}
Intuitively, we compute the estimator for the given value of lambda for each training instance $(X^t,y^t)$. We then compute the empirical validation loss on each task using the respectively computed estimators, and average the loss across all tasks. For notational convenience, we will denote $\wh(X^t,y^t)$ as $\wh^t$ wherever obvious from context. 

The objective of finding hyperparameters in machine learning is often to minimize the expected validation loss given as $l_v(\lambda) = \E{S}{\lv(\lambda, S)}$\footnote{Note that $\E{A\sim D}{.}$ represents the expectation with respect to random variable $A$ when drawn from distribution $D$. In the subsequent parts of the paper, we will omit the distribution, and even the random variable when obvious from context.}. This is a popular setting studied in previous works such as \cite{regression_pdim}. 
We can define the expected validation loss as:
\begin{align*}
    l_v(\lambda) = \E{X\sim D_\mathcal{X}^n, f\sim D_\mathcal{F}, \epsilon\sim D_E^n}{\E{x_v\sim D_\mathcal{X}, \epsilon_v\sim D_E}{l(x_v^\intercal \wh_\lambda(X, f(X,\epsilon)), f(x_v,\epsilon_v)}},
\end{align*}
which is the just expected value of $\lv(\lambda,S)$ over the the sampling of the problem instance $S$. 
If the tasks are linear well-specified, we can directly assume a distribution over the variables $\ws{}\sim D_\mathcal{W}$. We can then rewrite the expected validation loss as:
\begin{align*}
    l_v(\lambda) = \E{X\sim D_\mathcal{X}^n, w*\sim D_\mathcal{W}, \epsilon\sim D_E^n}{\E{x_v\sim D_\mathcal{X}, \epsilon_v\sim D_E}{l(x_v^\intercal \wh_\lambda(X, X^\intercal \ws{}+\epsilon), (x_v^\intercal \ws{} + \epsilon_v))}},
\end{align*}
In this paper, we study the problem of finding the optimal hyperparameters for the ridge regression estimator as defined in Section \ref{sec:ridge}, 
and a generalization of ridge regression defined in Section \ref{sec:offset}. 
Our bounds depend on the well-conditioned nature of the sample covariance matrix, and we will denote the smallest singular value of any matrix with the notation $V(.)$. 

\section{Sample complexity bounds for tuning Ridge Regularization}\label{sec:ridge}
In this section, we study generalization guarantees on the ERM estimate of $\lambda$ for the ridge estimator defined in Definition \ref{def:ridge_estimator}. We give our main result in Theorem \ref{thm:lips_tgt}, and study a slightly tighter variant for the well-specified case in Theorem \ref{thm:lips_ws}. Finally, we give Proposition \ref{prop:ex_isotropic}, which instantiates the bound for a general class of ``nice'' distributions. 

\begin{definition}[Ridge Estimator]\label{def:ridge_estimator}
    The ridge estimator for a linear regression task $(X,y)$ with regularization hyperparameter $\lambda \geq 0$ is given as:
\begin{align}
    \wh_\lambda(X,y) &=\argmin_w \|X^\intercal w-y\|^2 + \lambda\|w\|^2\nonumber\\
    \implies \wh_\lambda(X,y) &= (XX^\intercal + \lambda I)^{-1}Xy.\nonumber
\end{align}
\end{definition}

Denote the optimal $\lambda$ as $\lambda^*$ so that
\begin{equation}
    \lv(\lambda^*) = \min_\lambda \lv(\lambda).\nonumber
\end{equation}
We wish to estimate $\lambda^*$ using ERM on the empirical validation loss which satisfies:
\begin{equation}\label{def:lamb_erm}
    \lambda_{ERM} = \argmin_\lambda \lv(\lambda, S) = 
    \argmin_\lambda \frac{1}{T} \sum_t \frac{1}{n_v} \sum_i l(X_{v}^{t(i)\intercal}\wh_{\lambda}(X^t,y^t), y_{v}^{t(i)}).
\end{equation}
Thus $\lambda_{ERM}$ is the value of $\lambda$ that gives the least average validation loss over all of the tasks. We will make the following assumptions on the loss function $l(y_p,y_t)$, valid over all possible values of $X,y,x_v,y_v$ under the support of $D$, and for all possible estimators $\wh, \wh_1, \wh_2$:
\begin{assumption}[Boundedness]\label{assum:bound}
    $l(x_v^\intercal\wh(X,y),y_v) \leq C$.
\end{assumption}
\begin{assumption}[Lipschitzness]\label{assum:lips}
    $|l(x_v^\intercal\wh_1(X,y),y_v) - l(x_v^\intercal\wh_2(X,y),y_v)| \leq L |x_v^\intercal(\wh_1(X,y) - \wh_2(X,y))|$.
\end{assumption}
\begin{remark}
    Note that many popular loss functions, such as the squared loss $l(a,b) = (a-b)^2$, are not bounded on all inputs. We assume that we only receive inputs so that the assumptions hold for the chosen values of $C,L$. We briefly justify our assumptions below:
\begin{enumerate}[leftmargin=*,topsep=0pt,partopsep=1ex,parsep=1ex]\itemsep=-4pt
    \item \textbf{Boundedness:} Boundedness of the loss function is a common assumption made in the literature for proving generalization bounds \citep{mlt_book}. A lot of common loss functions, such as the squared loss are not bounded for all inputs. Prior work addresses this by assuming boundedness of the inputs \citep{regression_pdim}. Assuming boundedness, or well-behaved tail distributions is a common assumption that rely on the fact that real-world data typically has well-behaved tail distributions \citep{kontorovich13,galvez24}.\looseness-1
    \item \textbf{Lipschitzness:} Lipschitzness is another common assumption for proving generalization bounds in literature. For a lot of loss functions, such as the squared loss (Proposition \ref{prop:lipschitz}), hinge loss, etc., Lipschitzness follows directly from the boundedness of the loss function.
\end{enumerate}
Finally, note that, while we allow for any loss function that satisfies the above assumptions, we restrict our attention to regularised least-squares estimators.
\end{remark}

\begin{theorem}\label{thm:lips_tgt}
    Given a loss function that satisfies Assumptions \ref{assum:bound} and \ref{assum:lips} above, the expected validation loss error using the ERM estimator defined in Equation \ref{def:lamb_erm} is bounded with probability $\geq 1-\delta$ as:
    \begin{align}
        l_v(\lambda_{ERM}) - l_v(\lambda^*) &\leq \frac{2ML\Lambda_D^T
        }{\sqrt{T}}\E{}{\|x_v\|} + \frac{2L}{\sqrt{n_vT}} \sqrt{\E{x_v}{\|x_v\|^2}}\E{X,y}{\|y\|/\sqrt{V(XX^\intercal)}}\nonumber\\
       &\quad + \frac{2MLb_v\Lambda_D^T}{\sqrt{n_vT}}\sqrt{\frac{\log(4T/\delta)}{2}} + 5C\sqrt{\frac{\ln(16/\delta)}{2T}}.\nonumber
    \end{align}
    Here $M^2 = \max\|Xy\|^2$, $b_v^2 = \max \|x_v\|^2$ and $\Lambda_D^T = \E{X}{\max_t 1/V(X^tX^{t\intercal})}$. 
\end{theorem}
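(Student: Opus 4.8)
The plan is to decompose the excess validation loss into two pieces using the standard uniform-convergence template, and then control each piece separately via Rademacher complexity. First I would introduce the intermediate quantity $\lve(\lambda, S) = \frac{1}{T}\sum_t \mathbb{E}_{x_v, \epsilon_v}[l(x_v^\intercal \wh_\lambda(X^t, y^t), f^t(x_v, \epsilon_v))]$, the per-task population validation loss for the $T$ sampled training sets. This splits the error as $l_v(\lambda_{ERM}) - l_v(\lambda^*) \le 2\sup_\lambda |l_v(\lambda) - \lve(\lambda, S)| + 2\sup_\lambda |\lve(\lambda, S) - \lv(\lambda, S)|$ (after the usual ERM argument that $\lv(\lambda_{ERM}, S) \le \lv(\lambda^*, S)$). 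The first supremum is the "error from finite sampling of tasks" and the second is the "error from finite sampling of validation examples," exactly as flagged in the introduction. Each is then handled by a symmetrization/Rademacher-complexity bound plus McDiarmid's inequality for the high-probability statement, which is where the $\sqrt{\ln(1/\delta)/T}$ and $\sqrt{\ln(T/\delta)/(n_v T)}$ additive terms come from (the latter requiring a union bound over the $T$ tasks, hence $\log(4T/\delta)$).

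Next I would bound the two Rademacher complexities. For the validation-sampling term, for a fixed task the relevant function class is $\{x_v \mapsto l(x_v^\intercal \wh_\lambda(X,y), \cdot) : \lambda \ge 0\}$; by Assumption~\ref{assum:lips} (Lipschitzness) we can peel off $l$ and reduce to the linear class $\{x_v \mapsto x_v^\intercal \wh_\lambda(X,y)\}$, whose Rademacher complexity over $n_v$ samples is controlled by $\sqrt{\mathbb{E}\|x_v\|^2}\cdot \sup_\lambda \|\wh_\lambda(X,y)\| / \sqrt{n_v}$ via the usual linear-predictor bound (Cauchy–Schwarz on $\langle \sum_i \sigma_i x_v^{(i)}, \wh_\lambda\rangle$). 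The key algebraic step is $\|\wh_\lambda(X,y)\| = \|(XX^\intercal + \lambda I)^{-1} X y\| \le \|y\| / \sqrt{V(XX^\intercal)}$, using $\|(XX^\intercal+\lambda I)^{-1}X\|_{op} \le 1/(2\sqrt{\lambda})$ when $\lambda$ is large and $\le 1/\sqrt{V(XX^\intercal)}$ uniformly — actually the clean uniform bound $\sup_\lambda \|(XX^\intercal+\lambda I)^{-1}X\|_{op} \le 1/\sqrt{V(XX^\intercal)}$ follows from SVD: each singular value $\sigma$ of $X$ contributes $\sigma/(\sigma^2+\lambda) \le 1/(2\sqrt\lambda)$ and $\le 1/\sigma \le 1/\sqrt{V(XX^\intercal)}$. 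Taking expectation over $X,y$ gives the second term of the bound.

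For the task-sampling term, the subtlety is that $\lambda$ now indexes a function of the whole training set, so I would instead bound $\sup_\lambda |l_v(\lambda) - \lve(\lambda,S)|$ by symmetrizing over tasks and peeling off $l$ via Lipschitzness to reach $\frac{L}{T}\mathbb{E}[\sup_\lambda \sum_t \sigma_t \,\mathbb{E}_{x_v}[x_v^\intercal \wh_\lambda(X^t,y^t)]]$, then Cauchy–Schwarz to get $\mathbb{E}_{x_v}[\|x_v\|] \cdot \sup_\lambda \|\wh_\lambda(X^t,y^t)\|$ per task. To get the $M$ and $\Lambda_D^T$ dependence rather than a per-task $\|y^t\|/\sqrt{V(X^tX^{t\intercal})}$, I would use the cruder uniform bound $\|\wh_\lambda(X,y)\| \le \|Xy\|/V(XX^\intercal) \le M / V(XX^\intercal)$ (again from SVD, since $\sigma/(\sigma^2+\lambda) \le 1/\sigma$ and $\|Xy\|$ bounds the numerator), then $\max_t$ out the $1/V(X^tX^{t\intercal})$ factor to produce $\Lambda_D^T = \mathbb{E}_X[\max_t 1/V(X^tX^{t\intercal})]$; the factor $M L \Lambda_D^T / \sqrt{T}$ then emerges from the $\sqrt{T}$ in the Rademacher bound for an average of $T$ bounded terms. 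The $\frac{2MLb_v\Lambda_D^T}{\sqrt{n_v T}}\sqrt{\log(4T/\delta)}$ term is the McDiarmid fluctuation when we replace the inner $\mathbb{E}_{x_v}$ by its $n_v$-sample estimate inside this argument, with a bounded difference of order $M L b_v \Lambda_D^T / n_v$ per coordinate and a union bound over tasks.

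The main obstacle I anticipate is getting the two decomposition pieces to interact cleanly with the high-probability (as opposed to in-expectation) statement: the McDiarmid bounded-difference constants must be computed carefully, since changing one task changes an entire training set $\wh_\lambda(X^t, y^t)$ and hence the whole inner function, so one must verify the bounded differences are $O(C/T)$ for the outer concentration and $O(M L b_v \Lambda_D^T / (n_v T))$ for the validation-sampling concentration — and crucially that $\Lambda_D^T$, being itself an expectation of a maximum over tasks, does not blow up the bounded-difference constant (it appears because after conditioning one still needs a uniform-over-$\lambda$ operator-norm bound). A secondary technical point is justifying the uniform-over-$\lambda$ operator norm bounds $\sup_{\lambda\ge0}\|(XX^\intercal+\lambda I)^{-1}X\|_{op} \le 1/\sqrt{V(XX^\intercal)}$ and $\sup_{\lambda\ge0}\|(XX^\intercal+\lambda I)^{-1}\|_{op}\|Xy\| \le \|Xy\|/V(XX^\intercal)$, which are elementary via the SVD of $X$ but need the non-zero-singular-value convention for $V(\cdot)$ to be stated precisely (matching the theorem statement).
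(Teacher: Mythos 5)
Your proposal chooses the decomposition that holds the training data $\etr{S}$ fixed and averages out the validation data — i.e., your intermediate quantity $\frac{1}{T}\sum_t \mathbb{E}_{x_v,\epsilon_v}[l(x_v^\intercal\wh_\lambda(X^t,y^t),\cdot)]$ is exactly $\lev(\lambda,\etr{S})$ in the paper's notation. This is the route taken in Appendix~\ref{sec:alt_bounds} (Lemma~\ref{lem:sup_to_rad}, Theorem~\ref{thm:lips}), which the authors explicitly describe as the ``more intuitive'' decomposition in the style of \citet{mtl_maurer}, and which they note yields an \emph{asymptotically weaker} bound than Theorem~\ref{thm:lips_tgt}. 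The proof of Theorem~\ref{thm:lips_tgt} in the body uses the opposite conditioning (Lemma~\ref{lem:sup_to_rad_2}): it fixes the validation data $\eval{S}$ and averages out the training data, so the intermediate quantity is $\lev(\lambda,\eval{S}) = \frac{1}{n_vT}\sum_{t,i}\mathbb{E}_{X,f,\epsilon}[l(X_v^{t(i)\intercal}\wh_\lambda,y_v^{t(i)})]$.

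This is not a cosmetic difference — it is where your argument fails to deliver the claimed rate. With your conditioning, the Rademacher complexity for the validation term involves the per-task functions $x_v\mapsto x_v^\intercal\wh_\lambda(X^t,y^t)$, which are \emph{different} across tasks because each depends on its own $X^t,y^t$. Your per-task Cauchy–Schwarz then gives a Rademacher complexity of order $\sqrt{\mathbb{E}\|x_v\|^2}\cdot\sup_\lambda\|\wh_\lambda^t\|/\sqrt{n_v}$ for each task, and averaging over $T$ of these does not recover a $1/\sqrt{T}$ gain (a trace-norm argument as in Lemma~\ref{lem:S_tr} gives the same $1/\sqrt{n_v}$ scaling). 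So your second term comes out as $O(L\sqrt{\mathbb{E}\|x_v\|^2}\,\mathbb{E}[\|y\|/\sqrt{V(XX^\intercal)}]/\sqrt{n_v})$, which is a factor of $\sqrt{T}$ worse than the $1/\sqrt{n_vT}$ claimed in Theorem~\ref{thm:lips_tgt}. The paper's decomposition avoids this precisely because, after taking $\mathbb{E}_{X,f,\epsilon}$ inside, there is a \emph{single} $\wh_\lambda$ (not a family $\wh_\lambda^t$) in the symmetrized sum $\sum_{t,i}\sigma^{t(i)}X_v^{t(i)\intercal}\wh_\lambda$; Cauchy–Schwarz then factors out $\|\sum_{t,i}\sigma^{t(i)}X_v^{t(i)}\|$, which concentrates to $\sqrt{n_vT\,\mathbb{E}\|x_v\|^2}$, yielding the full $1/\sqrt{n_vT}$ (Lemma~\ref{lem:S_tr_2}). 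Your sketch of the $\log(4T/\delta)$ term is also attached to the wrong piece of the decomposition: in the paper that term arises inside the task-Rademacher bound (Lemma~\ref{lem:S_v_2}), from a union bound controlling $\sum_i\|X_v^{t(i)}\|$ simultaneously over all $T$ tasks, not from a McDiarmid step on the validation-sampling side. In short, your proposal would prove a correct theorem, but it is Theorem~\ref{thm:lips} rather than the stated Theorem~\ref{thm:lips_tgt}; to get the latter you need to swap which half of the data is conditioned on when defining the intermediate quantity.
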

\begin{proof} 
    We write $l_v(\lambda_{ERM}) - l_v(\lambda^*) = l_v(\lambda_{ERM}) - l_{v}(\lambda_{ERM},S) +  l_{v}(\lambda_{ERM},S) - l_{v}(\lambda^*,S) + l_{v}(\lambda^*,S) - l_v(\lambda^*)$. We note, as usual, that $l_{v}(\lambda_{ERM},S) - l_{v}(\lambda^*,S) \leq 0$ and $l_{v}(\lambda^*,S) - l_v(\lambda^*)$ is bounded by a Hoeffding bound (Theorem \ref{thm:Hoeffding}). Notably, with probability $\geq 1-\delta$,
\begin{align*}
    l_{v}(\lambda^*,S) - l_v(\lambda^*) \leq C\sqrt{\frac{\ln(1/\delta)}{2T}}.
\end{align*}
It remains to bound $l_v(\lambda_{ERM}) - l_{v}(\lambda_{ERM},S) \leq \sup_\lambda l_v(\lambda) - l_{v}(\lambda,S)$. Lemma \ref{lem:sup_to_rad_2} allows us to break this into error induced from a finite sampling of validation examples, and error induced from finite sampling of training data. We get that with probability at least $1-\delta$:
\begin{align}
        \sup_\lambda l_v(\lambda) - l_{v}(\lambda,S) &\leq 2\E{\sigma, \etr{S}}{\sup_\lambda \frac{1}{n_vT} \sum_{t,i} \sigma^t l(X_v^{t(i)\intercal}\wh_\lambda^t,y_v^{t(i)})}\nonumber\\
        &+ 2\E{\sigma, \eval{S}}{\sup_\lambda \frac{1}{n_vT} \sum_{t,i} \sigma^{t(i)}\E{X,f,\epsilon}{l(X_v^{t(i)\intercal}\wh_\lambda,y_v^{t(i)})}}\nonumber\\
        &+ 2C\sqrt{\frac{2\ln(4/\delta)}{T}}.\nonumber
    \end{align}
Where all $\sigma^{t} $ and $\sigma^{t(i)} $ are i.i.d. Rademacher variables.
We observe that the second term above is much similar to the Rademacher complexity of typical linear regression . We proceed similarly, and in Lemma \ref{lem:S_tr_2} we use Lipschitzness of the loss function to upper bound the second term above in terms of the distribution of outputs $y$.
\begin{align}
   \Eml{\sigma, \eval{S}}\bigg[\sup_\lambda \frac{1}{n_vT} \sum_{t,i} \sigma^{t(i)} &\E{X,f,\epsilon}{l(x_v^{t(i)\intercal}\wh_\lambda,y_v^{t(i)})}\bigg] \leq\nonumber\\
   &\frac{L}{\sqrt{n_vT}} \sqrt{\E{x_v}{\|x_v\|^2}}\E{X,y}{\|y\|/\sqrt{V(XX^\intercal)}}.
\end{align}
In order to upper bound the first term, which is the expected Rademacher complexity of validation loss with a fixed validation set, we show in Lemma \ref{lem:S_v_2} that $\sum_i l(X_v^{t(i)\intercal}\wh_\lambda^t,y_v^{t(i)})$ is Lipschitz in $\frac{1}{V^T + \lambda}$ (according to Definition \ref{def:lips_general}) for fixed $y_v^{t(i)}$. Here $V^T = \min_t V(X^tX^{t\intercal})$ and $V(.)$ is the smallest non-zero eigenvalue of the matrix. We use this Lipschitzness to bound the first term with probability $\geq 1-\delta$ as:
\begin{align}
       \Eml{\sigma, \etr{S}} \Biggl[\sup_\lambda \frac{1}{n_vT} \sum_{t,i} \sigma^t {l(X_v^{t(i)\intercal}\wh_\lambda^t,y_v^{t(i)})}\Biggr]\leq \frac{ML\Lambda_D^T
       }{\sqrt{T}}\E{}{\|x_v\|}
        + \frac{MLb_v\Lambda_D^T}{\sqrt{n_vT}}\sqrt{\frac{\log(T/\delta)}{2}}.\nonumber
\end{align}
We now replace $\delta$ by $\delta/4$ in the 3 probabilistic bounds above so that the following holds with probability at least $1-\delta$:
\begin{align*}
    l_v(\lambda_{ERM}) - &l_v(\lambda^*) \leq\nonumber\\
        &\frac{2ML\Lambda_D^T
        }{\sqrt{T}}\E{}{\|x_v\|} + \frac{2L}{\sqrt{n_vT}} \sqrt{\E{x_v}{\|x_v\|^2}}\E{X,y}{\|y\|/\sqrt{V(XX^\intercal)}}\nonumber\\
        &+ \frac{2MLb_v\Lambda_D^T}{\sqrt{n_vT}}\sqrt{\frac{\log(4T/\delta)}{2}}
        + 2C\sqrt{\frac{2\ln(16/\delta)}{T}} + C\sqrt{\frac{\ln(4/\delta)}{2T}}.
\end{align*}
To get the desired result, we note that $C\sqrt{\frac{\ln(4/\delta)}{2T}} \leq C/2\sqrt{\frac{2\ln(16/\delta)}{T}}$.
\end{proof}

The above theorem is very generally applicable, only requiring mild assumptions on the regularity of the loss function. We present a couple different variants of the above theorem in this paper that can be more useful for different circumstances. In Theorem \ref{thm:lips_ws}, we give a slightly tighter version of Theorem \ref{thm:lips_tgt} for the well-specified case. We give a variant of Theorem \ref{thm:lips_tgt} that takes an estimate of the expected value of the ground truth to achieve tighter guarantees in Theorem \ref{thm:lips_recentered}. We also present an alternative to Theorem \ref{thm:lips_tgt} in Appendix \ref{sec:alt_bounds}, that proceeds similarly to previous proof techniques such as the ones presented in \cite{mtl_maurer}.

\begin{remark}[Simplifying to Theorem \ref{thm:lips_tgt_inf}]
    We note that $\E{}{\|x_v\|} \leq \sqrt{\E{}{\|x_v\|^2}}$, and further that we can replace the term $\E{X,y}{\|y\|/\sqrt{V(XX^\intercal)}}$ in Lemma \ref{lem:S_tr_2} with $M\Lambda_D^T$. This yields the simplifation of the first two terms in Theorem \ref{thm:lips_tgt} to the first term in Theorem \ref{thm:lips_tgt_inf}. For the latter terms, the reduction is more straight forward since we only focus on the dependence on $T$.  
\end{remark}

\subsection{Well-specified tasks}
The bound in Theorem \ref{thm:lips_tgt} depends on the joint distribution of $X,y$, which in turn depends on the distribution of the function space $D_\mathcal{F}$ and noise vectors $D_E$. In this Section, we present a slightly tighter version of the above bound where we refine the second term based on a well-specified assumption. This allows us to easily analyze the bounds using distributions of $\ws{}$ and $\epsilon$. We instantiate one such analysis in Proposition \ref{prop:ex_isotropic}.

\begin{theorem}\label{thm:lips_ws}
    Given a loss function that satisfies Assumptions \ref{assum:bound} and \ref{assum:lips} above, and tasks that are well-specified linear maps, the expected validation loss error using the ERM estimator defined in Equation \ref{def:lamb_erm} is bounded with probability $\geq 1-\delta$ as:
    \begin{align}
        l_v(\lambda_{ERM}) - l_v(\lambda^*) &\leq \frac{2ML\Lambda_D^T
        }{\sqrt{T}}\E{}{\|x_v\|} + \frac{2L}{\sqrt{n_vT}} \sqrt{\E{x_v}{\|x_v\|^2}}\E{}{\|\ws{}\| + \|\epsilon\|/\sqrt{V(XX^\intercal)}}\nonumber\\
       &\quad + \frac{2MLb_v\Lambda_D^T}{\sqrt{n_vT}}\sqrt{\frac{\log(4T/\delta)}{2}} + 5C\sqrt{\frac{\ln(16/\delta)}{2T}}.\nonumber
    \end{align}
    Here $M^2 = \max\|Xy\|^2$ and $b_v^2 = \max \|x_v\|^2$ $\Lambda_D^T = \E{}{\max_t 1/V(X^tX^{t\intercal})}$. 
\end{theorem}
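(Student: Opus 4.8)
The plan is to mirror the proof of Theorem \ref{thm:lips_tgt} almost verbatim, changing only the handling of the ``validation sampling'' term (the second term in the decomposition). Recall that in the proof of Theorem \ref{thm:lips_tgt} we split $l_v(\lambda_{ERM}) - l_v(\lambda^*)$ into three pieces, noted the ERM piece is $\leq 0$, bounded $l_v(\lambda^*,S) - l_v(\lambda^*)$ by Hoeffding, and then invoked Lemma \ref{lem:sup_to_rad_2} to further split $\sup_\lambda l_v(\lambda) - l_v(\lambda,S)$ into (i) the Rademacher term over training-data sampling with a fixed validation set, (ii) the Rademacher term over validation-data sampling, and (iii) a Hoeffding remainder. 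Terms (i) and (iii), together with the Hoeffding piece for $\lambda^*$, are untouched: the first and third terms of the claimed bound, as well as the $5C\sqrt{\ln(16/\delta)/2T}$ tail, come through exactly as before, since those arguments never used well-specification. So the entire content of the proof is to re-derive the bound on term (ii) under the well-specified assumption, replacing the factor $\E{X,y}{\|y\|/\sqrt{V(XX^\intercal)}}$ with $\E{}{\|\ws{}\| + \|\epsilon\|/\sqrt{V(XX^\intercal)}}$.

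For term (ii), I would start exactly as in Lemma \ref{lem:S_tr_2}: use Assumption \ref{assum:lips} (Lipschitzness) to peel off the loss, bounding the Rademacher average of $l(x_v^\intercal \wh_\lambda, y_v)$ over $\lambda$ by $L$ times the Rademacher average of $x_v^\intercal \wh_\lambda(X,y)$, and then (as in the standard linear-regression Rademacher computation, cf.\ \cite{trace_reg}) bound that by $\frac{L}{\sqrt{n_v T}}\sqrt{\E{x_v}{\|x_v\|^2}} \cdot \E{X,y}{\sup_\lambda \|\wh_\lambda(X,y)\|}$. The key new step is to bound $\sup_{\lambda \geq 0} \|\wh_\lambda(X,y)\|$ under the well-specified model $y = X^\intercal \ws{} + \epsilon$. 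Writing $\wh_\lambda = (XX^\intercal + \lambda I)^{-1} X y = (XX^\intercal+\lambda I)^{-1}XX^\intercal \ws{} + (XX^\intercal+\lambda I)^{-1}X\epsilon$, I would bound the two summands separately: the first has operator norm at most $1$ against $\ws{}$ (since $(XX^\intercal+\lambda I)^{-1}XX^\intercal$ has eigenvalues $\sigma_i/(\sigma_i+\lambda)\in[0,1]$), giving $\|\ws{}\|$; the second is bounded by $\|(XX^\intercal+\lambda I)^{-1}X\|_{op}\|\epsilon\| \leq \|\epsilon\|/\sqrt{V(XX^\intercal)}$, using that the singular values of $(XX^\intercal+\lambda I)^{-1}X$ are $\sqrt{\sigma_i}/(\sigma_i+\lambda) \leq 1/\sqrt{\sigma_i} \leq 1/\sqrt{V(XX^\intercal)}$ on the column space. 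Hence $\sup_\lambda\|\wh_\lambda(X,y)\| \leq \|\ws{}\| + \|\epsilon\|/\sqrt{V(XX^\intercal)}$, and taking expectations gives the refined second term.

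Assembling: substitute this refined bound on term (ii), keep terms (i), (iii) and the two Hoeffding contributions as in Theorem \ref{thm:lips_tgt}, rescale $\delta \to \delta/4$ across the three high-probability statements, and simplify the constant-order tails exactly as in that proof (using $C\sqrt{\ln(4/\delta)/2T} \leq \frac{C}{2}\sqrt{2\ln(16/\delta)/T}$) to collapse them into $5C\sqrt{\ln(16/\delta)/2T}$. I do not expect a genuine obstacle here — the only mild subtlety is making the operator-norm bound on $(XX^\intercal+\lambda I)^{-1}X$ uniform in $\lambda \geq 0$, which is handled by the eigenvalue calculation above and by restricting attention to the column space of $X$ (consistent with $V(\cdot)$ being the smallest \emph{non-zero} singular value). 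Everything else is a transcription of the already-proved Theorem \ref{thm:lips_tgt}.
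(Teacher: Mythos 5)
Your proposal is correct and matches the paper's own proof essentially step for step: the paper also reuses the decomposition and Lemmas \ref{lem:sup_to_rad_2}, \ref{lem:S_v_2} verbatim and only replaces Lemma \ref{lem:S_tr_2} by a well-specified variant (Lemma \ref{lem:S_tr_ws}) that bounds $\sup_\lambda\|\wh_\lambda\|$ by $\|\ws{}\|+\|\epsilon\|/\sqrt{V(XX^\intercal)}$ via the same eigenvalue calculation on $(XX^\intercal+\lambda I)^{-1}XX^\intercal$ and $(XX^\intercal+\lambda I)^{-1}X$. The final union bound and constant bookkeeping are also identical (the paper's proof nominally counts four probabilistic events, but as you observe the modified lemma is actually deterministic, so your count of three is the accurate one and yields the same constants).
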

\begin{proof}[Proof Sketch]
We proceed with this proof similarly to Theorem \ref{thm:lips_tgt} by breaking the error term into error induced from finite sampling of validation data, and error from finite sampling of tasks. 
The bound for the first term proceeds similarly.
For the second term, we use the well-specified assumption to modify Lemma \ref{lem:S_tr_2} by using Lipschitzness in $x_v^\intercal (\wh-\ws{})$. We do this in Lemma \ref{lem:S_tr_ws}, which allows us to bound the trace product in terms of the matrix of $(\wh_\lambda^t - \ws{t})$. This results in a potentially tighter bound in terms of the distributions of $\ws{},\epsilon$.
\end{proof}

In order to better understand the bound from the above theorem, we instantiate it for the case when each entry of each input $x$ is sampled i.i.d.\ in Proposition \ref{prop:ex_isotropic}. Under the mild smoothness assumptions, we obtain a bound that is much tighter than the best known bound from literature, as long as $d=\Omega\left(\frac{\log T}{\log\log T}\right)$, as we see later.

\begin{proposition}\label{prop:ex_isotropic}
    Under the conditions of Theorem \ref{thm:lips_ws}, assume that each entry in the input $x$ is sampled independently from a zero-mean distribution with density bounded by $C_0$ such that $\E{}{xx^\intercal} = \Sigma = \sigma_x^2/d I_d$. Further assume the covariance matrices of both $x,\ws{}$ to have constant trace as $d$ increases. So, $tr(\Sigma) = \sigma_x^2 = const$ and $tr(\E{}{\ws{}\ws{\intercal}}) = \sigma_w^2 = const$. If $n \geq 6d$, the generalization error bound given in Theorem \ref{thm:lips_ws} is $O\left(\frac{1}{\sqrt{T}}(T^{2/d} + \sqrt{\log (T/\delta)})\right)$.
\end{proposition}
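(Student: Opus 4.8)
The plan is to instantiate each of the four terms in the bound of Theorem~\ref{thm:lips_ws} under the stated distributional assumptions, and show that the dominant behavior comes from the factor $\Lambda_D^T = \E{}{\max_t 1/V(X^tX^{t\intercal})}$. First I would handle the easy pieces: $\E{}{\|x_v\|}$ and $\sqrt{\E{x_v}{\|x_v\|^2}}$ are $O(1)$ since $\E{}{\|x_v\|^2} = \tr(\Sigma) = \sigma_x^2$ is constant in $d$; similarly, since $\ws{}$ has constant-trace covariance, $\E{}{\|\ws{}\|} = O(1)$, and $\E{}{\|\epsilon\|}$ is controlled by the (assumed bounded/well-behaved) noise distribution. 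The quantities $M^2 = \max\|Xy\|^2$ and $b_v^2 = \max\|x_v\|^2$ are bounded by the boundedness assumptions on the support of $D$, hence $O(1)$ as well (or at worst polylogarithmic if one wants high-probability truncation — I'd just treat them as constants under the support assumption already in force). With these in hand, the first and third terms become $O(\Lambda_D^T/\sqrt{T})$ and $O(\Lambda_D^T \sqrt{\log(T/\delta)}/\sqrt{n_v T})$, the second term is $O\!\big(\E{}{\|\epsilon\|/\sqrt{V(XX^\intercal)}}/\sqrt{n_v T}\big) + O(1/\sqrt{n_v T})$, and the fourth term is $O(\sqrt{\ln(1/\delta)/T})$.

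The crux is therefore to show $\Lambda_D^T = \E{}{\max_t 1/V(X^tX^{t\intercal})} = O\!\big(\tfrac{d}{n}\,T^{2/d}\big)$, which the informal theorem already flags as the key estimate. I would prove this in two steps. Step one: a per-task tail bound on the smallest nonzero singular value $V(XX^\intercal) = \sigma_{\min}(XX^\intercal)$ (equivalently $\sigma_{\min}(X)^2$ when $n \ge d$, which holds since $n \ge 6d$). Because each of the $nd$ entries of $X$ is independent with density bounded by $C_0$, the matrix $X$ has a bounded joint density on $\R^{d\times n}$; a small-ball / anti-concentration argument (in the spirit of Rudelson–Vershynin, using the bounded-density hypothesis rather than sub-Gaussianity) gives $\Pr[\sigma_{\min}(X)^2 \le \tfrac{cn}{d}\,t] \le (C' t)^{\,?}$ for small $t$, with the exponent large enough (growing with $n-d+1 \gtrsim n$) to make the union bound over $T$ tasks cheap. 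Step two: combine the per-task bound with $\E{}{\max_t Z_t} \le s + \int_s^\infty T\Pr[Z_1 > u]\,du$ for $Z_t = 1/V(X^tX^{t\intercal})$; choosing the threshold $s \asymp \tfrac{d}{n} T^{2/d}$ balances the two contributions and yields $\Lambda_D^T = O(\tfrac{d}{n} T^{2/d})$. Plugging back, the leading term is $O\!\big(\tfrac{d}{n\sqrt{T}} T^{2/d}\big)$, and since $n \ge 6d$ this is $O\!\big(\tfrac{1}{\sqrt{T}} T^{2/d}\big)$; the remaining terms are dominated by $O(\sqrt{\log(T/\delta)}/\sqrt{T})$, giving the claimed $O\!\big(\tfrac{1}{\sqrt{T}}(T^{2/d} + \sqrt{\log(T/\delta)})\big)$. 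I should also separately check the second term's factor $\E{}{\|\epsilon\|/\sqrt{V(XX^\intercal)}} = O(\E{}{1/\sqrt{\sigma_{\min}(XX^\intercal)}}) = O(\sqrt{d/n})$, which is absorbed since $\sqrt{d/n} = O(1)$ and it is further divided by $\sqrt{n_v T}$.

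The main obstacle is Step one: obtaining a quantitatively good lower tail for $\sigma_{\min}(X)$ using only the bounded-density assumption on the entries. For sub-Gaussian entries this is classical, but with merely a bounded density I'd need a more careful argument — the natural route is a net argument over the sphere $S^{d-1}$ combined with the fact that $\|X^\intercal v\|^2 = \sum_{i=1}^n \langle x^{(i)}, v\rangle^2$ is a sum of $n$ i.i.d.\ nonnegative terms each of whose distribution has a bounded density (hence $\Pr[\langle x^{(i)},v\rangle^2 \le \tau] \le C_0' \sqrt{\tau}\cdot\sqrt{d}/\sigma_x$ by integrating the density), so that a Chernoff/Paley–Zygmund-type bound shows $\|X^\intercal v\|^2$ is unlikely to be small simultaneously, and the net cost $e^{O(d)}$ is beaten by the $e^{-\Omega(n)}$ concentration precisely because $n \ge 6d$. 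Managing the constants so the final exponent in the union bound over $T$ tasks comes out to exactly something like $n/2 - d \ge d$ (yielding the $T^{2/d}$ scaling) is where the $n \ge 6d$ hypothesis gets used, and is the delicate bookkeeping I'd expect to spend the most effort on; everything else is routine substitution.
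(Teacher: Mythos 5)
Your overall strategy coincides with the paper's: the only non-routine step is showing $\Lambda_D^T = \E{}{\max_t 1/V(X^tX^{t\intercal})} = O\bigl(\tfrac{d}{n}T^{2/d}\bigr)$, and you correctly reduce this to (i) a per-task anti-concentration/moment bound on $V(XX^\intercal)^{-1}$ and (ii) a max-over-$T$ estimate. The only substantive difference is in how (i) is handled. The paper simply invokes Mourtada (Theorem~\ref{thm:exp_cov_inv} and Theorem~\ref{prop:ip_entry}), which together say exactly that the bounded-density assumption gives $\E{}{V(XX^\intercal)^{-q}}^{1/q} = O(d/n)$ uniformly for $1\le q\le n/12$ provided $n\ge 6d$; then $\E{}{\max_t Z_t}\le (T\E{}{Z^q})^{1/q} = T^{1/q}\cdot O(d/n)$ with $q=d/2$ (feasible since $n/12\ge d/2$) yields the $T^{2/d}$ factor cleanly. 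You instead propose to rebuild the per-task tail bound from first principles via a net over $S^{d-1}$ plus per-direction anti-concentration for $\langle x^{(i)},v\rangle$. That is a legitimate route and the ingredients you name (bounded density of the marginal projection with a $\sqrt{d}$-blowup because some coordinate of $v$ is $\ge 1/\sqrt{d}$; a Chernoff-type product over the $n$ i.i.d.\ rows beating the $e^{O(d)}$ net entropy because $n\ge 6d$) are precisely what the cited Mourtada lemmas encapsulate, so the two proofs are morally identical. The trade-off is that the paper's route is two lines once you accept the citation, whereas your route would need to nail down the small-ball exponent and the tail-to-max integration; you flagged this honestly as the delicate part. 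Also, your tail-integral bound $\E{}{\max_t Z_t}\le s + \int_s^\infty T\Pr[Z_1>u]\,du$ is interchangeable with the paper's moment-to-max inequality and both yield the same $T^{2/d}$ scaling once the per-task estimate is in hand, so no issue there. Everything else (the $O(1)$ bounds on $\E{}{\|x_v\|}$, $\E{}{\|\ws{}\|}$, $M$, $b_v$, and absorbing the $\sqrt{d/n}$ factor using $n\ge 6d$) matches the paper's routine substitution.
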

\begin{proof}[Proof Sketch]
    The main challenge for instantiating the bound is computing $\Lambda_D^T$. We use results from \cite{cov_mat_mourtada_22} that give tight bounds on the behavior of eigenvalues of the matrix $\hat{\Sigma}_n = \frac{1}{n} XX^\intercal$. In particular, we find that $\E{}{1/V(XX^\intercal)} = O(d/n)$, and $\Lambda_D^T = O(\frac{d}{n}T^{2/d})$. 
    Thus, we can instantitate the bound in Theorem \ref{thm:lips_ws} as:
    \begin{align*}
        l_v(\lambda_{ERM}) - l_v(\lambda^*) =   O\left(\frac{d}{n}\frac{T^{2/d}}{\sqrt{T}} + \frac{\sqrt{\E{}{tr(\ws{}\ws{\intercal})}} + \sqrt{\E{}{\epsilon^2}O(d/n)}}{\sqrt{n_vT}}
        + \frac{\sqrt{\log(T/\delta)}}{\sqrt{T}}\right).
    \end{align*}
    Which gives the desired result using the assumptions of constant trace and $n\geq 6d$.
\end{proof}

\textbf{Discussion and comparison with previous work.} We make the following observations regarding the computed bounds in the above theorems:
\begin{enumerate}[leftmargin=*,topsep=0pt,partopsep=1ex,parsep=1ex]\itemsep=-4pt
    \item The quantity $\Lambda_D^T$ is closely related to $\E{}{1/V(XX^\intercal)}$, which is a common feature in many analyses for linear regression, and for which many techniques have been developed to find reliable upper bounds \citep{cov_mat_yaskov_14,cov_mat_mourtada_22}. We instantiate one such bound for the well-specified case when the inputs $x$ are sampled from an isotropic distribution, such that each element of $x$ is sampled independently from a distribution with bounded density in Proposition \ref{prop:ex_isotropic}.\looseness-1
    \item Compared to previous work, our bounds above are distribution-dependent and much tighter. Prior work from \cite{regression_pdim} give a bound of $O\left(\frac{\sqrt{d + \log(1/\delta)}}{\sqrt{T}}\right)$ for the squared loss, which is weaker than our bound as long as $d=\Omega\left(\frac{\log T}{\log\log T}\right)$, depending on the distribution. We additionally show a distribution independent bound of $O\left(\frac{\sqrt{\log d + \log(1/\delta)}}{\sqrt{T}}\right)$ in Appendix \ref{app:ridge} based on prior work \cite{regression_pdim, enet_2022}. Our bounds beat the new distribution independent analysis bounds when $d = \Omega(T)$, depending on the distribution. Note that prior work does not assume i.i.d. samples within each task as we do.
    We note further that our techniques are much more general, in that they don't rely on the specific nature of the loss function, and consequently work for any Lipschitz loss. As noted below, our bounds also get smaller with increasing number of training examples, which is a feature not present in previous bounds.

    \item 
    Our bounds decrease as the number of training examples ($n$) increase, which was not true in previous work. To see this, first note that the third term in the bounds of both theorems \ref{thm:lips_tgt} and \ref{thm:lips_ws} depend on $\Lambda_D^T$ which decreases with the number of examples following a discussion similar to point 1 above.
    The values of the dominant terms also decrease with the number of training examples up to a certain point. 

    To see a  clearer picture, we would again redirect the attention of the reader to Proposition \ref{prop:ex_isotropic} and its proof in Appendix \ref{sec:supp_res}, where we show that, under the assumptions of the Lemma, the generalization bound behaves as:
    \begin{align*}
        l_v(\lambda_{ERM}) - l_v(\lambda^*) = O_\delta\left(
        \frac{d}{n}\frac{T^{2/d}}{\sqrt{T}} + \frac{\sqrt{\E{}{tr(\ws{}\ws{\intercal})}} + \sqrt{\E{}{\epsilon^2}O(d/n)}}{\sqrt{n_vT}}
        \right).
    \end{align*}
    As $n\rightarrow \infty$, the bound does not get tighter than $O_\delta\left(\frac{\sqrt{\E{}{tr(\ws{}\ws{\intercal})}}}{\sqrt{n_vT}}
    \right)$. This makes practical sense, increasing the number of training examples helps deal with the variance in noise and increasing validation examples or tasks helps deal with the variance in ground truth values. Given a fixed number of supervised examples, if the ground truth varies very heavily, we would like to use a higher number of examples in the validation split.
\end{enumerate} 

\subsection{Convergence guarantees for cross-validation}\label{sec:cross-validation}

While we study the general multi-task setting introduced by \citet{enet_2022} throughout this work, as observed by \citet{enet_2022}, a special case where these guarantees apply is in establishing formal guarantees for the convergence of cross-validation over a single training dataset (single task setting) in terms of the number of iterations or ``folds'' of cross-validation used to tune the hyperparameter. For example, if one does leave-one-out cross-validation (LOOCV), then the number of folds or iterations needed is equal to $n$, the size of the training set of the task. This can be very inefficient, as one needs to solve $n$ regression problems for each value of the hyperparameter $\lambda$. Another related approach is Monte-Carlo cross-validation, where one does a random independent training-validation split in a fixed proportion (e.g.\ 80\% training + 20\% validation) to compute the validation loss of each hyperparameter, and sets the best hyperparameter. For this setting, we can apply the results of Appendix \ref{app:ridge} by assuming each task is a ``fold" of cross-validation, and is a randomly chosen training-validation split. We thus get a bound of $O(\log d/\epsilon^2)$ ``folds" being enough to estimate $\lambda$ up to an $\epsilon$ error in validation loss. Note that we can only give guarantee on the performance of the learned $\lambda$ with respect to the best $\lambda$ for the same training data as used for cross-validation. That is, we do not give guarantee on the performance of $\lambda$ on newly sampled training data (which as we argue in Appendix \ref{sec:single_task}, is not feasible).

If we instead define each fold of cross-validation to be a collection of $n+n_v$ examples sampled from the original training set (i.i.d. with repitition), then we can use the results of the above Section to give tighter bounds on cross-validation. Specifically, under the conditions of Proposition \ref{prop:ex_isotropic}, in the high-dimensional regime $d=\Omega(\log T)$, our bounds imply that  $O\left(\frac{\left(\log\frac{1}{\epsilon}\right)^2}{\epsilon^2}\right)$ folds are sufficient, which is an improvement if $d=\Omega\left(\left(\log\frac{1}{\epsilon}\right)^2\right)$ as well.

\section{Sample complexity of tuning  LASSO and Elastic Net}\label{sec:en}

We will now establish similar distribution-dependent bounds on the generalization error for tuning the regularization coefficient in LASSO.

\begin{definition}[LASSO Estimator]\label{def:lasso_estimator}
    The LASSO estimator for a linear regression task $(X,y)$ with regularization hyperparameter $\lambda_1 \in [\underline{\Lambda},\overline{\Lambda}]$ is given as:
\begin{align}
    \wh_{\lambda_1}(X,y) &=\argmin_w \|X^\intercal w-y\|^2 + \lambda_1\|w\|_1.\nonumber
\end{align}
\end{definition}

\noindent Under the same boundedness and Lipschitzness assumptions on the  loss function as above, along with a full rank assumption (Assumption \ref{asmp:lasso-full-rank} in the appendix), we have the following result.

\begin{theorem}\label{thm:lasso}
    The expected validation loss error using the ERM estimator for LASSO is bounded with probability $\geq 1-\delta$ as:
    \begin{align}
        l_v(\lambda_{ERM}) - l_v(\lambda^*) &\leq\frac{{2L\overline{\Lambda}\tilde{\Lambda}_D^T}\E{x_v}{\|x_v\|}\sqrt{d}}{\sqrt{T}} \nonumber\\
        &+\frac{2L\sqrt{\E{x_v}{\|x_v\|^2}}}{\sqrt{n_v T}}\E{X,y}{ \max_{\cE}\left( \frac{\|y\|}{\sqrt{V(X_\cE X_\cE^\intercal)}} + \overline{\Lambda}\frac{\sqrt{d}}{V(X_\cE X_\cE^\intercal)}\right)}\nonumber \\
        &
         + \frac{Lb_v\overline{\Lambda}\tilde{\Lambda}_D^T}{\sqrt{n_vT}}\sqrt{2\ln(T/\delta)}\nonumber + 5C\sqrt{\frac{\ln(16/\delta)}{2T}}.\nonumber
    \end{align}
    Here $b_v^2 = \max \|x_v\|^2$ and $\tilde{\Lambda}_D^T = \E{\etr{S}}{ \max_{\cE,t} \frac{1}{V(X^t_\cE X_\cE^{t\intercal} ) }}$, where the maximum is over all feature subsets $\cE\subseteq [d]$ and tasks $t$ in the training set. 
\end{theorem}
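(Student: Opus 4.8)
The plan is to follow the three-step template of the proof of Theorem~\ref{thm:lips_tgt}, replacing the closed-form sensitivity analysis of the ridge estimator with the piecewise-linear (homotopy) structure of the LASSO solution path. First I would decompose
\begin{align*}
l_v(\lambda_{ERM}) - l_v(\lambda^*) = \big(l_v(\lambda_{ERM}) - l_v(\lambda_{ERM},S)\big) + \big(l_v(\lambda_{ERM},S) - l_v(\lambda^*,S)\big) + \big(l_v(\lambda^*,S) - l_v(\lambda^*)\big),
\end{align*}
bound the middle term by $0$ using the definition of $\lambda_{ERM}$, bound the last term by $C\sqrt{\ln(1/\delta)/2T}$ via Hoeffding exactly as in Theorem~\ref{thm:lips_tgt}, and bound the first term by $\sup_{\lambda_1\in[\underline{\Lambda},\overline{\Lambda}]}\!\big(l_v(\lambda_1)-l_v(\lambda_1,S)\big)$. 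To the latter I would apply the analogue of Lemma~\ref{lem:sup_to_rad_2}, splitting it (up to an $O\!\big(C\sqrt{\ln(1/\delta)/T}\big)$ slack) into \textbf{(i)} a Rademacher term over the randomness of the $T$ training sets with the validation sets frozen, $\E{\sigma,\etr{S}}{\sup_{\lambda_1}\frac{1}{n_vT}\sum_{t,i}\sigma^t\, l(X_v^{t(i)\intercal}\wh_{\lambda_1}^t,y_v^{t(i)})}$, and \textbf{(ii)} a Rademacher term over the randomness of the $n_vT$ validation examples with the training data integrated out, $\E{\sigma,\eval{S}}{\sup_{\lambda_1}\frac{1}{n_vT}\sum_{t,i}\sigma^{t(i)}\E{X,f,\epsilon}{l(X_v^{t(i)\intercal}\wh_{\lambda_1},y_v^{t(i)})}}$.

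The new structural ingredient is a description of $\wh_{\lambda_1}(X,y)$ as a function of $\lambda_1$. Under the full-rank condition (Assumption~\ref{asmp:lasso-full-rank}), for each $\lambda_1\in[\underline{\Lambda},\overline{\Lambda}]$ the LASSO solution is unique, supported on an active set $\cE=\cE(\lambda_1)\subseteq[d]$, and its stationarity (KKT) condition gives $\wh_\cE = (X_\cE X_\cE^\intercal)^{-1}\big(X_\cE y - \tfrac{\lambda_1}{2}\operatorname{sign}(\wh_\cE)\big)$ with $\wh$ vanishing off $\cE$. Two consequences follow. \textbf{(a)} Since $(X_\cE X_\cE^\intercal)^{-1}X_\cE$ has operator norm $1/\sqrt{V(X_\cE X_\cE^\intercal)}$ and $\|\operatorname{sign}(\wh_\cE)\|\leq\sqrt{|\cE|}\leq\sqrt d$, we get the uniform-in-$\lambda_1$ bound
\begin{align*}
\|\wh_{\lambda_1}(X,y)\|\;\leq\;\max_{\cE}\left(\frac{\|y\|}{\sqrt{V(X_\cE X_\cE^\intercal)}} + \overline{\Lambda}\,\frac{\sqrt d}{V(X_\cE X_\cE^\intercal)}\right).
\end{align*}
\textbf{(b)} The path $\lambda_1\mapsto\wh_{\lambda_1}(X,y)$ is continuous and piecewise affine — constant active set and sign pattern between consecutive knots, with per-piece slope $-\tfrac12(X_\cE X_\cE^\intercal)^{-1}\operatorname{sign}(\wh_\cE)$ of norm at most $\sqrt d/(2V(X_\cE X_\cE^\intercal))$ — hence globally Lipschitz in $\lambda_1$ with constant $\tfrac{\sqrt d}{2}\max_{\cE}1/V(X_\cE X_\cE^\intercal)$.

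For term \textbf{(ii)}, I would use the Lipschitzness of $l$ (Assumption~\ref{assum:lips}) to contract away the loss, comparing each predictor $X_v^{t(i)\intercal}\wh_{\lambda_1}$ to the zero predictor (which does not depend on $\lambda_1$ and so drops out under the Rademacher expectation), reducing to the Rademacher complexity of the linear class $\{x_v\mapsto x_v^\intercal\mathbb{E}_{X,f,\epsilon}[\wh_{\lambda_1}]\}$; the standard linear-class bound together with consequence (a) and $\|\mathbb{E}_{X,y}\wh_{\lambda_1}\|\leq\mathbb{E}_{X,y}\|\wh_{\lambda_1}\|$ yields, after symmetrization, the second summand of the theorem, exactly mirroring Lemma~\ref{lem:S_tr_2}. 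For term \textbf{(i)}, consequence (b) says the per-task function $g^t(\lambda_1):=\frac{1}{n_v}\sum_i l(X_v^{t(i)\intercal}\wh_{\lambda_1}^t,y_v^{t(i)})$ is Lipschitz in $\lambda_1$ on $[\underline{\Lambda},\overline{\Lambda}]$ with constant $\big(\frac{L}{n_v}\sum_i\|X_v^{t(i)}\|\big)\cdot\tfrac{\sqrt d}{2}\max_{\cE}1/V(X_\cE^t X_\cE^{t\intercal})$, the LASSO analogue of the ``Lipschitz in $1/(V^T+\lambda)$'' claim of Lemma~\ref{lem:S_v_2}. Running the one-dimensional Rademacher-complexity argument of Lemma~\ref{lem:S_v_2} for this uniformly Lipschitz one-parameter family over an interval of length $\overline{\Lambda}-\underline{\Lambda}\leq\overline{\Lambda}$, replacing $\frac{1}{n_v}\sum_i\|X_v^{t(i)}\|$ by $\E{x_v}{\|x_v\|}$ via Hoeffding (uniformly over the $T$ tasks, at cost $O(b_v\sqrt{\ln(T/\delta)/n_v})$), and taking the expectation over the ghost training sample $\etr{S}$ (which turns $\max_{\cE,t}1/V(X_\cE^t X_\cE^{t\intercal})$ into $\tilde{\Lambda}_D^T$) produces the leading term $2L\overline{\Lambda}\tilde{\Lambda}_D^T\E{x_v}{\|x_v\|}\sqrt d/\sqrt T$ and the correction $Lb_v\overline{\Lambda}\tilde{\Lambda}_D^T\sqrt{2\ln(T/\delta)}/\sqrt{n_vT}$. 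Rescaling $\delta\to\delta/4$ across the four probabilistic steps and collapsing the several Hoeffding slacks gives the final $5C\sqrt{\ln(16/\delta)/2T}$ term.

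The hard part will be establishing consequence (b) rigorously: one must invoke the homotopy/LARS description of the LASSO path to argue that it has finitely many knots, that the active set and sign pattern are locally constant, and that the path is continuous across knots, and one must use Assumption~\ref{asmp:lasso-full-rank} to ensure $X_\cE X_\cE^\intercal$ is invertible for every active set that occurs (so $V(X_\cE X_\cE^\intercal)>0$, the per-piece affine formula is valid, and the solution is genuinely unique), ruling out the degeneracies — ties in the correlations, rank-deficient active submatrices — that would otherwise break both the norm bound (a) and the Lipschitz bound (b). A secondary technical point is adapting the one-dimensional Rademacher bound of Lemma~\ref{lem:S_v_2} from the reparametrized variable $1/(V^T+\lambda)$ used for ridge to the native parameter $\lambda_1\in[\underline{\Lambda},\overline{\Lambda}]$, and carrying out the concentration steps that replace the worst-case $\max_{\cE,t}$ and the empirical validation averages by their expectations while losing only logarithmic factors.
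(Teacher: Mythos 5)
Your proposal follows the paper's proof of Theorem~\ref{thm:lasso} essentially step for step: the same three-term decomposition and Hoeffding bound, the same Rademacher split into training-side and validation-side terms (Lemma~\ref{lem:sup_to_rad_2}), the same use of the piecewise-affine LASSO path under the full-rank Assumption~\ref{asmp:lasso-full-rank} (via the closed form of Lemma~\ref{lem:lasso} and continuity of the path) to obtain both the global Lipschitz constant $\overline{\Lambda}\sqrt{d}/V(X_\cE X_\cE^\intercal)$ used in Lemma~\ref{lem:S_v_1} and the uniform norm bound $\|\wh_\lambda\|\le \max_{\cE}(\|y\|/\sqrt{V(X_\cE X_\cE^\intercal)}+\overline{\Lambda}\sqrt d/V(X_\cE X_\cE^\intercal))$ used in Lemma~\ref{lem:S_tr_1}, and the same Hoeffding-over-tasks step producing the $\sqrt{\ln(T/\delta)}$ correction. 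The only discrepancies are cosmetic: a different sign/factor-of-two convention in the KKT formula, and a slight overstatement in your description of term (ii) (the contraction leaves $\mathbb{E}_{X,y}[\sup_\lambda\|\wh_\lambda\|]$ outside the norm, not $\|\mathbb{E}_{X,y}\wh_\lambda\|$ inside it) — neither changes the argument.
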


\begin{proof}[Proof Sketch]
    The proof follows the same overall structure as the proof of Theorem \ref{thm:lips_ws}. The relevant lemmas for bounding the Rademacher complexity for LASSO are Lemmas \ref{lem:S_v_1} and \ref{lem:S_tr_1} established in Appendix \ref{app:lasso}. The key difference comes from the difference in the LASSO solution. Unlike ridge, there is no fixed closed form solution for all values of $\lambda_1$. The solution $\hat{w}_{\lambda_1}$ is a piecewise linear function of $\lambda_1$ and the closed form expressions for within fixed pieces is known. We use this to bound the relevant Rademacher complexity for the class of loss functions which express the validation loss as a function of $\lambda_1$.
\end{proof}

\noindent We also give the following bound on the generalization error of simultaneously tuning L1 and L2 penalties for $\lambda_1\in[\underline{\Lambda}_1,\overline{\Lambda}_1],\lambda_2\in[\underline{\Lambda}_2,\infty)$ (see Appendix \ref{app:elasticnet}).

\begin{theorem}\label{thm:en}
    The expected validation loss error using the ERM estimator for Elastic Net is bounded with probability $\geq 1-\delta$ as:
    \begin{align}
        l_v&(\lambda_{ERM}) - l_v(\lambda^*) \le \frac{2L\overline{\Lambda}\sqrt{d}}{\sqrt{T}}\left(\E{x_v}{\|x_v\|}+b_v\sqrt{\frac{\log(T/\delta)}{2n_v}}\right)\E{X}{\max_{t,\cE}  \frac{1}{V(X^t_\cE X_\cE^{t\intercal}) +\underline{\Lambda}_2 }} \nonumber\\
        &+\frac{2L\sqrt{\E{x_v}{\|x_v\|^2}}}{\sqrt{n_vT}}\E{X,y}{ \max_{\cE}\left( \frac{\|y\|\sqrt{V^*(X_\cE X_\cE^\intercal)}}{V^*(X_\cE X_\cE^\intercal)+\underline{\Lambda}_2}+\frac{\overline{\Lambda}_1\sqrt{d}}{V(X_\cE X_\cE^\intercal)+\underline{\Lambda}_2}\right)}+ 5C\sqrt{\frac{\ln(16/\delta)}{2T}}.\nonumber \label{eqn:en-main}\\
    \end{align}
    Here  
    $V^*({M})$ is the non-zero singular value of $M$ that maximizes $\frac{\sqrt{\sigma_i(M)}}{\sigma_i(M)+\underline{\Lambda}_2}$.
\end{theorem}

\noindent As above, we show that our bounds are much sharper than prior work for well-studied ``nice'' distributions. For sub-Gaussian data distribution we show that our bounds on the generalization error are independent of the feature dimension $d$. In contrast, prior work on worst-case distributions~\citep{regression_pdim} shows a tight $\Theta(d)$ bound on the pseudo-dimension for tuning the elastic net regularization coefficients.
Formally we have the following proposition (proof in Appendix \ref{app:elasticnet}).

\begin{proposition}\label{prop:en_isotropic_main}
    Consider the expected validation error of an ERM estimator for the Elastic Net hyperparameters over the  range $\lambda_1\in[\underline{\Lambda}_1,\overline{\Lambda}_1],\lambda_2\in[\underline{\Lambda}_2,\infty)$.
    Assume further that all tasks are well-specified such that all inputs $x$ are sampled from sub-Gaussian distributions with independent entries. Concretely, assume that each entry in the input $x$ is sampled independently from a zero-mean sub-Gaussian distribution  such that $\E{}{xx^\intercal} = \Sigma = (\sigma_x^2/d) I_d$. We further restrict the covariance matrices of both $x,\ws{}$ to have constant trace as $d$ increases. So, $tr(\Sigma) = \sigma_x^2 = const$ and $tr(\E{}{\ws{}\ws{\intercal}}) = \sigma_w^2 = const$. For sufficiently large $n \geq \Omega\left(d+\log\frac{T}{\underline{\Lambda}_2}\right)$, the generalization error bound given in Theorem \ref{thm:en} is $\tilde{O}\left(1/\sqrt{nT}+\sqrt{\frac{\log 1/\delta}{T}}\right)$, where the soft-O notation suppresses dependence on quantities apart from $T,n,\delta$ and $d$.
\end{proposition}

\noindent We conclude this section with a couple of remarks. We show how our Elastic Net bounds apply to cross-validation and how they compare with the worst-case bounds from prior work. We also show how to obtain the informal Theorem \ref{thm:informal-en} from the more precise bounds in Theorem \ref{thm:en} above.

\begin{remark}
    As in Section \ref{sec:cross-validation}, the results here apply to bounding the convergence rate for single-task cross-validation, under similar restrictions of sampling from repetition for each fold. For Monte-Carlo cross-validation, the bound on the sufficient number of folds is improved from $O(d/\epsilon^2)$ due to prior work \citep{regression_pdim} to $O(1/\epsilon^2)$ for sufficiently large $n$ under the conditions of Proposition \ref{prop:en_isotropic_main}, although the bounds from prior work apply even when sampling without repetition.
    Also, we note that the comment in Footnote \ref{footnote:combined-bounds} applies to our Elastic Net bounds as well, and in the above we have $l_v(\lambda_{ERM}) - l_v(\lambda^*) = \tilde{O}\left(\min\left\{\frac{1}{\sqrt{nT}} +\sqrt{\frac{\log 1/\delta}{T}}, \frac{\sqrt{d}}{\sqrt{T}}\right\}\right)$.
\end{remark}

\begin{remark}
    As noted in Theorem \ref{thm:informal-en}, we suppress the validation data terms including $\E{x_v}{\|x_v\|}, b_v, n_v$ and $\E{x_v}{\|x_v\|^2}$ in the $\tilde{O}$ notation, since these terms do no impact the bounds in the application of our result to the sub-Gaussian example in Proposition \ref{prop:en_isotropic_main}. The last term is asymptotically dominated by the $\sqrt{\frac{\log(T/\delta)}{T}}$ quantity in the first term (asymptotics in $T$), and the second term under $\E{X,y}{\max_\cE(\cdot)}$ in Equation (\ref{eqn:en-main}) is  also dominated by the first term.
\end{remark}

\section{Re-centered Ridge Regression}\label{sec:offset}
We note that the bounds above in the well-specified case in Theorem \ref{thm:lips_ws} depend on the quantity $\E{\ws{}}{\|\ws{}\|^2}$. This can be quite large if $\ws{}$ is not centered around 0. We thus suggest using the following estimator, and give generalization guarantees for ERM estimation of the regularization hyperparamter.

\begin{definition}[Re-centered Ridge Estimator \citep{ridge_wieringen}]\label{def:gen_ridge}
    The re-centered ridge estimator for a linear regression task $(X,y)$ with hyperparameters $\lambda,\mu$ is given as:
\begin{align}
    \wh_{(\lambda,\mu)}(X,y) &=\argmin_w \|X^\intercal w-y\|^2 + \lambda\|w - \mu\|^2\nonumber\\
    \implies \wh_\lambmu(X,y) &= (XX^\intercal  + \lambda I)^{-1}Xy + \lambda(XX^\intercal +\lambda I)^{-1}\mu.\nonumber
\end{align}
\end{definition}
Intuitively, instead of penalizing the distance of $w$ from origin, this estimator penalizes its distance from a known, central point. 

In the following, we assume we have a fixed estimate of the optimal $\mu^*$ given as $\muh$, and bound the validation error on the ERM estimate of $\lambda$ using the MSE in $\muh$. We are able to get a tighter bound than in Theorem \ref{thm:lips_ws}, where we replace all $\E{}{\ws{}\ws{\intercal}}$ with the variance of $\ws{}$, and only incur an additional error term that depends on the closeness of the estimate $\muh$ to the actual $\mu^*$.
\begin{theorem}\label{thm:lips_recentered}
     For a validation loss function that satisfies Assumptions \ref{assum:bound} and \ref{assum:lips} given in Section \ref{sec:ridge}, and tasks that are well-specified linear maps, the expected validation loss error using the ERM estimator defined in Equation \ref{def:lamb_erm} using the re-centered ridge estimator for a given $\muh$ is bounded with probability $\geq 1-\delta$ as:
     \begin{align}
         l_v(\lambda_{ERM},\muh) - \lv(\lambda^*,\mu^*) &\leq L\E{x_v}{\|x_v\|}\|\muh - \mu^*\| + 
         \frac{2ML\Lambda_D^T
        }{\sqrt{T}}\E{}{\|x_v\|}\nonumber\\
        &+ \frac{2L}{\sqrt{n_vT}} \sqrt{\E{x_v}{\|x_v\|^2}}\E{}{\|\ws{}\| + \|\epsilon\|/\sqrt{V(XX^\intercal)}}\nonumber\\
       &\quad + \frac{2MLb_v\Lambda_D^T}{\sqrt{n_vT}}\sqrt{\frac{\log(4T/\delta)}{2}} + 5C\sqrt{\frac{\ln(16/\delta)}{2T}}.\nonumber    
     \end{align}
     Here $M^2 = \max\|Xy\|^2$ and $\Lambda_D^T = \E{}{\max_t 1/V(X^tX^{t\intercal})}$.
\end{theorem}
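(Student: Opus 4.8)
The plan is to mirror the proof of Theorem \ref{thm:lips_ws}, decomposing the error and splitting into a finite-validation-sampling term and a finite-task-sampling term, but first absorbing the mismatch between the estimate $\muh$ and the optimal $\mu^*$ into a single extra additive term. Concretely, I would first write
\begin{align*}
l_v(\lambda_{ERM},\muh) - \lv(\lambda^*,\mu^*) &= \big(l_v(\lambda_{ERM},\muh) - l_v(\lambda^*,\muh)\big) + \big(l_v(\lambda^*,\muh) - \lv(\lambda^*,\mu^*)\big).
\end{align*}
The first bracket is exactly the ERM-suboptimality of tuning $\lambda$ for the re-centered ridge estimator \emph{with the fixed offset $\muh$}, and I would bound it by repeating the argument of Theorem \ref{thm:lips_ws} essentially verbatim, since for fixed $\mu=\muh$ the map $\lambda\mapsto\wh_{(\lambda,\muh)}(X,y)=(XX^\intercal+\lambda I)^{-1}(Xy+\lambda\muh)$ has the same structural form as the plain ridge solution (a rational function of $\lambda$ controlled by $1/(V(XX^\intercal)+\lambda)$), so the Rademacher-complexity bounds via Lemmas \ref{lem:S_v_2}, \ref{lem:S_tr_ws} and \ref{lem:sup_to_rad_2} go through with $M^2=\max\|Xy\|^2$ and $\Lambda_D^T=\E{}{\max_t 1/V(X^tX^{t\intercal})}$ unchanged — this yields the four terms on the right-hand side after the first term. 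The key point enabling the replacement of $\E{}{\ws{}\ws{\intercal}}$ by the variance is that, in the well-specified case, $\wh_{(\lambda,\muh)}-\ws{}$ can be written with the offset recentering $\ws{}$, so Lipschitzness in $x_v^\intercal(\wh-\ws{})$ picks up $\|\ws{}-\muh\|$-type quantities rather than $\|\ws{}\|$-type quantities; I would state this as the recentered analogue of Lemma \ref{lem:S_tr_ws}.

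The second bracket is the new piece. Here I would use the Lipschitz Assumption \ref{assum:lips} pointwise: for each task and each validation example,
\begin{align*}
l(x_v^\intercal\wh_{(\lambda^*,\muh)}(X,y),y_v) - l(x_v^\intercal\wh_{(\lambda^*,\mu^*)}(X,y),y_v) \le L\, x_v^\intercal\big(\wh_{(\lambda^*,\muh)}(X,y) - \wh_{(\lambda^*,\mu^*)}(X,y)\big),
\end{align*}
and from the closed form in Definition \ref{def:gen_ridge} the difference of estimators is exactly $\lambda^*(XX^\intercal+\lambda^* I)^{-1}(\muh-\mu^*)$. Taking Cauchy--Schwarz and using the operator-norm bound $\|\lambda^*(XX^\intercal+\lambda^* I)^{-1}\|_{op}\le 1$ (all eigenvalues of $XX^\intercal$ are nonnegative, so $\lambda^*/(\sigma+\lambda^*)\le 1$) gives $x_v^\intercal(\wh_{(\lambda^*,\muh)}-\wh_{(\lambda^*,\mu^*)}) \le \|x_v\|\,\|\muh-\mu^*\|$. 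Averaging over validation examples and tasks and taking expectations yields $l_v(\lambda^*,\muh)-\lv(\lambda^*,\mu^*)\le L\,\E{x_v}{\|x_v\|}\,\|\muh-\mu^*\|$, which is precisely the leading term in the statement. (A symmetric inequality is not needed since we only want an upper bound on $l_v(\lambda_{ERM},\muh)-\lv(\lambda^*,\mu^*)$, and $\lambda^*$ here denotes the optimizer of $\lv(\cdot,\mu^*)$; one subtlety to handle cleanly is that $\lambda^*$ in the two $l_v(\lambda^*,\cdot)$ terms refers to the same value, namely the optimal regularization for the true offset $\mu^*$, so the bound on the first bracket must be stated as ERM-suboptimality relative to $\lambda^{*}_{\muh}:=\argmin_\lambda l_v(\lambda,\muh)$ and then we use $l_v(\lambda^{*}_{\muh},\muh)\le l_v(\lambda^*,\muh)$ for free.)

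I would then combine the two brackets, adjust failure probabilities by the union bound exactly as in Theorem \ref{thm:lips_tgt} (replacing $\delta$ by $\delta/4$ in the three probabilistic sub-bounds and using $C\sqrt{\ln(4/\delta)/2T}\le (C/2)\sqrt{2\ln(16/\delta)/T}$ to collapse the last two constants into $5C\sqrt{\ln(16/\delta)/(2T)}$), which reproduces the stated bound. The main obstacle I anticipate is the bookkeeping around which $\lambda^*$ appears where: the theorem compares against $\lv(\lambda^*,\mu^*)$ but ERM only competes with the best $\lambda$ for the offset $\muh$ it actually uses, so the decomposition must route the "change of offset" through a common $\lambda$ before invoking ERM-optimality — done correctly this is harmless because $l_v(\lambda_{ERM},\muh)-l_v(\lambda^{*}_{\muh},\muh)$ is exactly what Theorem \ref{thm:lips_ws}'s machinery bounds, and $l_v(\lambda^{*}_{\muh},\muh)-l_v(\lambda^*,\muh)\le 0$ by definition of $\lambda^{*}_{\muh}$. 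Everything else is a routine re-run of the ridge argument with $\muh$ carried along, plus the one-line operator-norm estimate for the offset perturbation.
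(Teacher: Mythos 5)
Your proof is correct but takes a genuinely different decomposition from the paper's, and is arguably the more careful of the two. The paper splits as $l_v(\lambda_{ERM},\hat{\mu}) - l_v(\lambda^*,\mu^*) \le \big(l_v(\lambda_{ERM},\hat{\mu}) - l_v(\lambda_{ERM},\mu^*)\big) + \big(l_v(\lambda_{ERM},\mu^*) - l_v(\lambda^*,\mu^*)\big)$, bounding the first bracket with exactly your operator-norm estimate on $\lambda(XX^\intercal+\lambda I)^{-1}$ (taken inside a $\sup_\lambda$ rather than at $\lambda^*$), and claiming the second reduces directly to Theorem \ref{thm:lips_ws} via the observation that the $\mu^*$-offset objective is plain ridge after recentering $w^*$ by $\mu^*$. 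You instead split the other way, $\big(l_v(\lambda_{ERM},\hat{\mu}) - l_v(\lambda^*,\hat{\mu})\big) + \big(l_v(\lambda^*,\hat{\mu}) - l_v(\lambda^*,\mu^*)\big)$, so the ERM-gap bracket carries $\hat{\mu}$ throughout. This is cleaner for precisely the reason you flag: $\lambda_{ERM}$ minimizes the \emph{empirical} validation loss computed with the $\hat{\mu}$-offset estimator, so the cancellation that drives Theorem \ref{thm:lips_ws}'s machinery ($\lv(\lambda_{ERM},S)\le\lv(\lambda^*_{\hat{\mu}},S)$) is available for your first bracket but is not directly available for the paper's second bracket, which compares ERM against $\lambda^*$ under the $\mu^*$-offset the ERM never saw. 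Your routing through $\lambda^*_{\hat{\mu}}:=\argmin_\lambda l_v(\lambda,\hat{\mu})$ followed by $l_v(\lambda^*_{\hat{\mu}},\hat{\mu})\le l_v(\lambda^*,\hat{\mu})$ closes this cleanly and gives the same four-term bound plus the leading $L\,\E{x_v}{\|x_v\|}\,\|\hat{\mu}-\mu^*\|$ term. One discrepancy worth flagging: re-running Lemma \ref{lem:S_tr_ws} for the $\hat{\mu}$-offset problem produces $\E{}{\|w^*-\hat{\mu}\|+\|\epsilon\|/\sqrt{V(XX^\intercal)}}$ in the second displayed term, whereas the paper's route yields $\|w^*-\mu^*\|$ and the stated theorem writes $\|w^*\|$ (which neither argument produces; the remark preceding the theorem indicates the centered quantity was intended).
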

\begin{proof} We start by decomposing the excess risk on validation set as
\begin{align}
    l_v(\lambda_{ERM},\muh) - \lv(\lambda^*,\mu^*) &\leq l_v(\lambda_{ERM},\muh) - \lv(\lambda_{ERM},\mu^*)+ l_v(\lambda_{ERM},\mu^*) - \lv(\lambda^*,\mu^*)\label{eq:rec_break}
\end{align}
We bound the first term as follows:
\begin{align}
    l_v(\lambda_{ERM},\muh) - \lv(\lambda_{ERM},\mu^*) &\leq \sup_\lambda l_v(\lambda,\muh) - \lv(\lambda,\mu^*)\nonumber\\
    &= \sup_\lambda \E{}{l(x_v^\intercal \wh_\lambmuh{}, y_v) - l(x_v^\intercal \wh_\lambmus{}, y_v)}\nonumber\\
    &\leq \sup_\lambda \E{}{L(x_v^\intercal (\wh_\lambmuh{}-\wh_\lambmus{})} \nonumber\\
    &= L\sup_\lambda\E{}{|\lambda x_v^\intercal (XX^\intercal +\lambda I)^{-1}(\muh - \mu^*)|}\nonumber\\
    &\leq L\sup_\lambda \E{}{\|\lambda(XX^\intercal +\lambda I)^{-1}x_v\|}\|\muh - \mu^*\|\nonumber\\
    &\leq L\E{}{\|x_v\|}\|\muh - \mu^*\|.\nonumber
\end{align}
\\
For the second term, we see that generalization error in finding $\lambda$ is the same as generalization error in finding $\lambda$ for the well-specified case if we replace $\ws{}$ by $\ws{} - \mu^*$. To see this,
\begin{align*}
    \wh_{(\lambda,\mu^*)}(X,y) &=\argmin_w \|X^\intercal w-y\|^2 + \lambda\|w - \mu^*\|^2\\
    &=\argmin_w \|X^\intercal w-(X^\intercal \ws{} + \epsilon)\|^2 + \lambda\|w - \mu^*\|^2\\
    &= \argmin_w \|X^\intercal (w-\mu^*)-X^\intercal (\ws{} - \mu^*) + \epsilon\|^2 + \lambda\|w - \mu^*\|^2
\end{align*}
So that the optimization problem reduces to the same problem as in Definition \ref{def:ridge_estimator} with a shifting of the axes. Thus, we get a similar bound for the second term of Equation \ref{eq:rec_break} as in Theorem \ref{thm:lips_ws}, only requiring replacing $\ws{}$ with $\ws{} - \mu^*$, which was the intended effect.
\end{proof}

\section{Discussion and future work}
Distribution-dependent generalization guarantees are widely studied in statistical learning theory as an effective way to take into account the niceness of the distribution and give tighter learning guarantees. We study the fundamental problem of tuning the regularization parameter of linear regression across tasks. Our bounds improve upon previous distribution-independent results. In particular, we show that our bounds do not get worse with the feature dimension for various nice distributions, which is unavoidable for distribution-independent bounds. We also extend our results to generalizations including re-centered ridge regression. 
An interesting direction for future work is to show lower bounds to better understand the tightness of our results. Another exciting direction is to show distribution-dependent guarantees for sparse regression with the L0 penalty (also called best subset selection, see Appendix \ref{app:l0} for corresponding distribution-independent guarantees).

\bibliography{main}
\bibliographystyle{plainnat}   

\appendix

\newpage
\section{Additional Related Work}\label{sec:add_rl}
\textbf{Empirical Bayes.} Empirical Bayes (EB) involves finding the best Bayesian estimator for a set of parameters (say $\theta_i$) assumed to be sampled from an unknown prior, given samples (say $X_i\sim p(\theta_i)$) drawn from distributions that depend on parameters $\theta_i$. As a typical example, consider a Gaussian Sequence Model, where one observes $X_i=\theta_i+\epsilon_i$ for $i \in [n]$. Here $\epsilon_i\sim \mathcal{N}(0,\sigma^2)$. The idea, originally proposed by \citet{ebayes_robbins}, involves assuming $\theta_i$ being sampled from an unknown prior (distinguishing this from a purely Bayesian method where we assume a prior), and using the shared structure to find better estimates. Commonly, Empirical Bayes approaches are divided into $f$-modeling and $g$-modeling \citep{fg_efron, fg_shen}. While in $f$-modeling we explicitly find prior parameters, $g$-modeling works by directly finding the target variable without finding the prior explicitly. Empirical Bayes has been heavily studied in statistics literature, providing sample complexity bounds in certain circumstances such as the Poisson model \citep{ebay_poisson_jana}. Empirical Bayes approach to linear regression involves assuming an prior on ground truth vector with unknown parameters. From our results from Section \ref{sec:equivalence}, we see that EB estimation of linear regression parameters under a Gaussian prior is equivalent to our setting of learning ridge parameters from multiple tasks. While ours is a $g$-modeling approach where we don't estimate prior parameters directly, asymptotic optimality of $f$-modeling approaches have been shown previously \citep{eb_superiority_zhang05}. Though our generalization guarantees for ridge regression hold for all priors, including non-Gaussian priors, the best ridge estimator is not Bayes optimal for non-Gaussian priors. Several empirical \citep{blasso_park,flex_ebayes_kim} and theoretical \citep{lin_ebayes_wei} papers have studied EB methods for linear regression under other priors.

{\bf Data-driven algorithm design}. Data-driven algorithm design is a recently introduced paradigm for designing algorithms and provably tuning hyperparameters in machine learning~\citep{balcan2020data}. Apart from regression, the framework has been successfully used for designing several fundamental learning algorithms~\citep{balcan2018data,balcan2023analysis,balcan2024algorithm,balcan2025sample,blum2021learning,bartlett2022generalization,balcan2021data,balcan2024learning1,balcan2024trees,jinsample}, as well as solving  optimization problems including clustering (e.g.\ \citealt{balcan2017learning,balcan2020learning,balcan2024accelerating}), linear and integer programming (e.g.\ \citealt{balcan2022structural,balcan2024learning,khodak2024learning,cheng2024learning,sakaue2024generalization}). The techniques developed in this line of work allows selection of provably good hyperparameters over continuous domains, given access to multiple related tasks drawn from an unknown distribution~\citep{balcan2017learning,balcan2021much,balcan2024subsidy} or arriving online~\citep{balcan2018dispersion,sharma2020learning,balcan2021learning,sharma2023efficiently,sharma2024no,sharma2025offline}. Recent work shows that tuning discretized parameters can lead to much worse performance than competing with the best continuous parameter~\citep{balcan2024learning}.

\section{A distribution-independent bound for tuning ridge regularization based on prior work}\label{app:ridge}

While prior work~\citep{enet_2022,regression_pdim} establishes asymptotically tight bounds on the learning-theoretic complexity of simultaneously tuning L1 and L2 regularization coefficients in the elastic net, no direct bounds are given for just ridge regression. \citet{enet_2022} provide $\tilde{O}(\log d)$ on the pseudo-dimension of the 0-1 loss function class for tuning ridge-regularized {\it classification}, which is smaller than the $\Theta(d)$ bounds for elastic net. Here we provide a simple extension to their results and show that a similar  ${O}(\log d)$ upper bound can be shown for tuning the regularization in ridge {\it regression}.

We first recall some useful results from prior work. The following lemma is due to \citet{enet_2022}.

\begin{lemma}\label{lem:gram-inv}
Let $A$ be an $r\times s$ matrix. Consider the matrix $B(\lambda)=(A^\intercal A+\lambda I_s)^{-1}$ and $\lambda>0$. Then each entry of $B(\lambda)$ is a rational polynomial $P_{ij}(\lambda)/Q(\lambda)$ for $i,j\in[s]$ with each $P_{ij}$ of degree at most $s-1$, and $Q$ of degree $s$.
\end{lemma}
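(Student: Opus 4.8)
The plan is to use Cramer's rule: for $\lambda>0$ the matrix $M(\lambda) := A^\intercal A + \lambda I_s$ is positive definite (since $A^\intercal A\succeq 0$), hence invertible, and $B(\lambda) = \frac{1}{\det M(\lambda)}\,\mathrm{adj}\,M(\lambda)$. I would set $Q(\lambda) := \det M(\lambda)$ and let $P_{ij}(\lambda)$ be the $(i,j)$ entry of $\mathrm{adj}\,M(\lambda)$, which equals $(-1)^{i+j}$ times the minor of $M(\lambda)$ obtained by deleting row $j$ and column $i$, i.e.\ the determinant of an $(s-1)\times(s-1)$ submatrix $M^{(j,i)}(\lambda)$. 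The whole lemma then reduces to bounding the degrees of these determinants in $\lambda$, which I would do by a fixed-point counting argument on the Leibniz expansion.

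The key observation is that the only entries of $M(\lambda)$ depending on $\lambda$ are the $s$ diagonal entries $M(\lambda)_{kk} = (A^\intercal A)_{kk} + \lambda$, each affine in $\lambda$, while all off-diagonal entries are constants. For $Q(\lambda) = \sum_{\pi\in S_s}\mathrm{sgn}(\pi)\prod_{k=1}^s M(\lambda)_{k,\pi(k)}$, the term indexed by $\pi$ is a product in which the factor $M(\lambda)_{k,\pi(k)}$ is $\lambda$-dependent exactly when $k$ is a fixed point of $\pi$; hence its degree equals the number of fixed points of $\pi$. The identity permutation contributes degree $s$ (leading coefficient $1$), and every other permutation has at most $s-2$ fixed points, so $Q$ has degree exactly $s$. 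For $P_{ij}$, I would run the same counting on $\det M^{(j,i)}(\lambda)$, whose rows are indexed by $[s]\setminus\{j\}$ and columns by $[s]\setminus\{i\}$, with the entry in row $k$, column $l$ being $\lambda$-dependent iff $k=l$. If $i=j$ there are $s-1$ such diagonal positions, so the degree is at most $s-1$; if $i\neq j$, a row index $k$ can equal its image only when $k\in[s]\setminus\{i,j\}$, so at most $s-2$ affine factors can be selected, giving degree at most $s-2$. In either case $\deg P_{ij}\le s-1$, completing the proof.

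I do not expect a genuine obstacle: the statement is essentially degree bookkeeping on top of Cramer's rule. The only point needing a little care is the elementary fact that a permutation of a finite set cannot have exactly $s-1$ fixed points, which is what forces the gap between the degree-$s$ identity term and the degree-$\le s-2$ remainder in $Q$. As a remark I would note an alternative route: diagonalize $A^\intercal A = U\,\mathrm{diag}(\sigma_1^2,\dots,\sigma_s^2)\,U^\intercal$, so $B(\lambda)_{jk} = \sum_{m=1}^s U_{jm}U_{km}/(\sigma_m^2+\lambda)$; clearing the common denominator $\prod_{m=1}^s(\sigma_m^2+\lambda)$, which has degree $s$, leaves a numerator that is a sum of degree-$(s-1)$ polynomials, recovering the same bounds.
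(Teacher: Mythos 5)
The paper does not prove this lemma: it is stated with a citation to prior work (the elastic-net paper of Balcan--Khodak--Sharma--Talwalkar) and used as a black box. So there is no in-paper argument to compare against. Your proof is correct and self-contained: Cramer's rule gives $B(\lambda)=\operatorname{adj} M(\lambda)/\det M(\lambda)$ with $M(\lambda)=A^\intercal A+\lambda I_s$, and the fixed-point count in the Leibniz expansion correctly yields $\deg Q = s$ exactly (monic leading term from the identity permutation, all others contributing degree $\le s-2$) and $\deg P_{ij}\le s-1$ (with the sharper $\le s-2$ for $i\ne j$, which is more than the lemma asks for). The one step worth stating a touch more carefully is the Leibniz expansion of the minor $\det M^{(j,i)}(\lambda)$: the sum runs over bijections $\sigma:[s]\setminus\{j\}\to[s]\setminus\{i\}$, and a factor is $\lambda$-dependent iff $k=\sigma(k)$, which forces $k\notin\{i,j\}$; your count of at most $s-1$ (resp.\ $s-2$) such indices is exactly right. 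The diagonalization remark you append is also a valid and arguably cleaner route, and it has the side benefit of showing $Q$ can be taken independent of $(i,j)$, which Cramer's rule already gives but the eigendecomposition makes transparent.
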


\noindent In addition, we will also need the definition of the refined GJ framework introduced by \cite{bartlett2022generalization}.

\begin{definition}[\cite{bartlett2022generalization}]
    A GJ algorithm $\Gamma$ operates on real-valued inputs, and can perform two types of operations:
    \begin{itemize}
        \item Arithmetic operations of the form $v=v_0 \odot v_1$, where $\odot \in \{+, -, \times, /\}$.
        \item Conditional statements of the form ``if $v_0 \ge 0$ then $\dots$ else $\dots$''.
    \end{itemize}
    In both cases, $v_0, v_1$ are either inputs or values previously computed by the algorithm (which are rational functions of the inputs). The {\it degree} of a GJ algorithm is defined as the maximum degree of any rational function  of the inputs that it computes. The {\it predicate complexity} of a GJ algorithm is the number of distinct rational functions that appear in its conditional statements.
\end{definition}

\noindent The following theorem  due to \cite{bartlett2022generalization}  is useful in obtaining a pseudodimension bound by showing a GJ algorithm that computes the loss for all values of the hyperparameters, on any fixed input instance.

\begin{theorem}[\cite{bartlett2022generalization}] \label{thm:gj}
    Suppose that each function $f \in \mathcal{F}$ is specified by $n$ real parameters. Suppose that for every $x \in \mathcal{X}$ and $r \in \R$, there is a GJ algorithm $\Gamma_{x, r}$ that given $f \in \mathcal{F}$, returns ``true" if $f(x) \geq r$ and ``false" otherwise. Assume that $\Gamma_{x, r}$ has degree $\Delta$ and predicate complexity $\Lambda$. Then, $\mathrm{Pdim}(\mathcal{F}) = O(n\log(\Delta\Lambda))$.
\end{theorem}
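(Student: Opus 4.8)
The plan is to bound $\mathrm{Pdim}(\mathcal{F})$ by the standard route: reduce it to counting, for a putative shattered set, the number of output patterns realizable as the $n$ real parameters vary, show that this count is controlled by the number of sign patterns of a bounded family of polynomials over $\R^n$, and then apply the classical Warren / Milnor--Thom sign-pattern bound. Recall that $\mathrm{Pdim}(\mathcal{F})$ equals the VC dimension of the subgraph class $\{(x,r)\mapsto \mathbf{1}[f(x)\ge r] : f\in\mathcal{F}\}$. So fix points $(x_1,r_1),\dots,(x_m,r_m)$; writing $f=f_\theta$ with $\theta\in\R^n$, the goal is to count the distinct binary vectors $\big(\mathbf{1}[f_\theta(x_i)\ge r_i]\big)_{i=1}^m$ obtained as $\theta$ ranges over $\R^n$, and to show this count is below $2^m$ once $m$ exceeds a threshold of the claimed order. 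Note that $\mathbf{1}[f_\theta(x_i)\ge r_i]$ is exactly the boolean output of the GJ algorithm $\Gamma_{x_i,r_i}$ run on $\theta$, so we are counting output patterns of $m$ GJ algorithms.

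First I would analyze a single $\Gamma_{x_i,r_i}$. Every quantity it computes on input $\theta$ is a rational function of $\theta$ of degree at most $\Delta$, and its control flow is decided entirely by the sign tests in its conditional statements, of which at most $\Lambda$ distinct rational functions $P_{ij}(\theta)/Q_{ij}(\theta)$ occur. The key structural observation is that once the sign of each of these $\Lambda$ rational functions is fixed, the execution path — hence the boolean output — is determined. Clearing denominators, $\operatorname{sgn}(P_{ij}/Q_{ij})$ agrees with $\operatorname{sgn}(P_{ij}Q_{ij})$, a genuine polynomial in $\theta$ of degree at most $2\Delta$ (the locus $Q_{ij}=0$ is handled as a degenerate case or by an infinitesimal perturbation). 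Collecting over all $m$ points yields a family of at most $N:=m\Lambda$ polynomials in $n$ variables, each of degree at most $2\Delta$, with the property that the whole output vector $\big(\Gamma_{x_i,r_i}(\theta)\big)_i$ is a function of the sign pattern of this family.

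Next I would invoke the classical bound that $N$ polynomials of degree at most $D$ in $n$ real variables realize at most $\big(O(ND/n)\big)^n$ distinct sign patterns (for $N\ge n$). With $D=2\Delta$ and $N=m\Lambda$ this gives at most $\big(O(m\Lambda\Delta/n)\big)^n$ achievable output patterns. For the $m$ points to be shattered we would need $2^m\le\big(O(m\Lambda\Delta/n)\big)^n$; taking logarithms and using the standard fact that $a\le b\log a$ forces $a=O(b\log b)$ yields $m=O\big(n\log(\Delta\Lambda)\big)$, which is the asserted pseudo-dimension bound.

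The step I expect to be the main obstacle is making the reduction from GJ executions to a fixed finite polynomial sign-test family fully rigorous: one must argue that branching across the entire run is governed only by the $\Lambda$ predicate rational functions and not by the (possibly numerous) intermediate arithmetic values, and one must handle the rational rather than polynomial nature of the computed quantities — in particular vanishing denominators — without inflating either the degree past $O(\Delta)$ or the count past $m\Lambda$. Once that bookkeeping is in place, the remainder is a routine application of the sign-pattern counting lemma and the elementary inequality solving for $m$.
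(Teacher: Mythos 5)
This theorem is stated without proof in the paper---it is imported directly from \cite{bartlett2022generalization}---so there is no internal argument to compare against. Your reconstruction follows essentially the route of that reference: reduce $\mathrm{Pdim}(\mathcal{F})$ to the VC dimension of the subgraph class; note that the boolean output of each GJ algorithm $\Gamma_{x_i,r_i}$ on $\theta$ is a deterministic function of the sign vector of its $\Lambda$ conditional predicates (this is exactly the observation that makes ``predicate complexity,'' rather than total arithmetic-operation count as in the original Goldberg--Jerrum analysis, the right quantity to track); clear denominators so each predicate sign test becomes a polynomial sign test of degree at most $2\Delta$; collect $m\Lambda$ such polynomials in $n$ variables; invoke the Warren/Milnor--Thom bound of $\left(O(m\Lambda\Delta/n)\right)^n$ on the number of realizable sign patterns; and solve $2^m \le \left(O(m\Lambda\Delta/n)\right)^n$ to get $m = O(n\log(\Delta\Lambda))$. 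The two bookkeeping points you flag as the likely obstacle are real but routine: branching depends only on the conditional tests by the very definition of GJ control flow (intermediate arithmetic values can only affect the output through a subsequent conditional, which is already counted among the $\Lambda$ predicates), and the zero loci $\{Q_{ij}=0\}$ contribute only the sign value $0$, which Warren-type sign-pattern bounds already accommodate, or equivalently can be avoided by a generic perturbation of the parameter. Your proposal is correct and matches the argument in the cited source.
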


\noindent Let $\mathcal{H}_{\text{Ridge}}$ denote the loss function class that consists of functions (each function corresponds to a distinct value of $\lambda\in(0,\infty)$) computing the validation loss on any input instance $(X,y,X_v,y_v)$ for using a fixed  Ridge parameter $\lambda$ as in the notation of~\citep{enet_2022}. We have the following result, which implies distribution-independent sample complexity of $\tilde{O}\left(\frac{\log d}{\epsilon^2}\right)$ for tuning $\lambda$.

\begin{theorem}
    The pseudo-dimension of the function class $\mathcal{H}_{\text{Ridge}}$ is $O(\log d)$.
\end{theorem}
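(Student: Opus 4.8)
The plan is to apply Theorem~\ref{thm:gj} (the refined GJ framework of \cite{bartlett2022generalization}) to the class $\mathcal{H}_{\text{Ridge}}$, where each function is parametrized by the single real parameter $\lambda \in (0,\infty)$, so $n=1$. It then suffices to exhibit, for every fixed instance $(X,y,X_v,y_v)$ and every threshold $r \in \R$, a GJ algorithm $\Gamma_{x,r}$ that on input $\lambda$ returns ``true'' iff the validation loss $\lv(\lambda)$ on that instance is at least $r$, and to bound its degree $\Delta$ and predicate complexity $\Lambda$ by quantities polynomial in $d$. The pseudo-dimension bound $O(\log(\Delta\Lambda)) = O(\log d)$ then follows immediately.

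First I would write the closed form $\wh_\lambda(X,y) = (XX^\intercal + \lambda I_d)^{-1} X y$. By Lemma~\ref{lem:gram-inv} applied with $A = X^\intercal$ (so $A^\intercal A = XX^\intercal$ is $d\times d$), each entry of $(XX^\intercal + \lambda I_d)^{-1}$ is a rational function $P_{ij}(\lambda)/Q(\lambda)$ with $\deg P_{ij} \le d-1$ and $\deg Q = d$; multiplying by the constant matrix $Xy$ keeps each coordinate of $\wh_\lambda$ a ratio of polynomials of degree at most $d-1$ over $Q(\lambda)$ of degree $d$. Next, the validation loss on the instance is $\lv(\lambda) = \frac{1}{n_v}\sum_i l\big(X_v^{(i)\intercal}\wh_\lambda(X,y),\, y_v^{(i)}\big)$; since we are in the squared-loss regime used by the prior work being extended, each summand is $\big(X_v^{(i)\intercal}\wh_\lambda - y_v^{(i)}\big)^2$, which — after clearing the common denominator $Q(\lambda)$ — is again a rational function of $\lambda$ with numerator and denominator degrees $O(d)$. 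Summing $n_v$ such terms and putting over a common denominator $Q(\lambda)^2$, the total validation loss $\lv(\lambda)$ is a rational function $R(\lambda)/Q(\lambda)^2$ with $\deg R = O(d)$. The GJ algorithm $\Gamma_{x,r}$ then computes this rational function via $O(d^2)$ (or fewer, with Horner-type evaluation) arithmetic operations, and to decide ``$\lv(\lambda) \ge r$'' it evaluates the sign of $R(\lambda) - r\,Q(\lambda)^2$ together with the sign of $Q(\lambda)^2$ (which is nonnegative, and positive for $\lambda>0$), i.e.\ a constant number of conditional statements on rational functions of degree $O(d)$. Hence $\Delta = O(d)$ and $\Lambda = O(1)$, and Theorem~\ref{thm:gj} gives $\mathrm{Pdim}(\mathcal{H}_{\text{Ridge}}) = O(1 \cdot \log(d \cdot 1)) = O(\log d)$.

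The main obstacle — really the only place any care is needed — is bookkeeping of degrees: one must verify that clearing denominators across the $n_v$ squared-loss terms and across the $d$ coordinates of $\wh_\lambda$ does not blow the degree past $\mathrm{poly}(d)$, and in particular that the degree stays $d^{O(1)}$ rather than, say, $d^{\Omega(d)}$, since only then does $\log(\Delta\Lambda) = O(\log d)$. This is exactly the kind of bound Lemma~\ref{lem:gram-inv} is designed to supply (degree of the inverse-Gram denominator is linear in the matrix size, not exponential), so the argument goes through; I would also note for completeness that the loss class corresponds to $\lambda$ ranging over $(0,\infty)$ and that the GJ predicate ``$Q(\lambda)>0$'' handles the (measure-zero) set where the Gram matrix becomes singular, matching the treatment in \cite{enet_2022}. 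A brief remark that the same argument upgrades to any loss that is itself computable by a low-degree GJ algorithm (e.g.\ a fixed-degree polynomial loss) would round out the presentation, though the squared loss is all that is needed to match and improve on the prior bounds.
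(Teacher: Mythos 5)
Your proposal is correct and takes essentially the same route as the paper: both invoke Lemma~\ref{lem:gram-inv} to show $\wh_\lambda$ (and hence the squared validation loss) is a rational function of $\lambda$ of degree $O(d)$, then apply the GJ framework (Theorem~\ref{thm:gj}) with a single real parameter, degree $O(d)$, and constant predicate complexity to conclude $\mathrm{Pdim}(\mathcal{H}_{\text{Ridge}}) = O(\log d)$. The extra bookkeeping you supply (tracking the common denominator $Q(\lambda)^2$ and the sign predicate) is a valid elaboration of the same argument, not a different one.
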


\begin{proof}
    For a fixed problem instance  $P=(X,y,X_v,y_v)$, the ridge solution is given by $\hat{w}_{\lambda}=(XX^{\intercal}+\lambda I)^{-1}Xy$ and the validation loss $\ell_\lambda(P)$ is $\|X_v^{\intercal}\hat{w}_{\lambda}-y_v\|^2$. By Lemma \ref{lem:gram-inv}, $\hat{w}_{\lambda}$ is a rational function of $\lambda$ with degree at most $d$, and the validation loss is also a rational function of $\lambda$ with degree at most $2d$. This gives us a GJ algorithm for computing whether $\ell_\lambda(P)\ge r$ for any instance $P$ and $r\in\R$,  with degree at most $2d$ and predicate complexity 1. Theorem \ref{thm:gj} now implies the claimed pseudo-dimension bound.
\end{proof}

\section{Tuning the L0 penalty for best subset selection}\label{app:l0}

Least squares regression with the L0 penalty, also known as {\it best subset selection}, is perhaps the most direct formulation for sparse regression~\citep{beale1967discarding,hocking1967selection}. Despite being a discrete optimization problem that is NP-hard (L1 and L2 regressions have efficient algorithms as they induce convex objectives), recent work develops efficient mixed integer optimization based approaches via branch-and-bound techniques for best subset selection and shows advantages over LASSO for variable selection~\citep{bertsimas2016best,hastie2020best}. The regression is given by the following optimization problem.

$$\min_w \|X^\intercal w-y\|_2^2 + \lambda_0\|w\|_0,$$

\noindent where $\lambda_0\in\R_{\ge0}$ is the L0 penalty. Let $l_v^{\lambda_0}(X,y,X_v,y_v)$ denote the validation loss $\|X^\intercal \hat{w}_0-y\|_2^2$ for $\hat{w}_0\in \argmin_w \|X^\intercal w-y\|_2^2 + \lambda_0\|w\|_0$, a solution to the L0 optimization on the training set $(X,y)$. Let $\mathcal{H}_0$ denote the class of validation loss functions $\{l_v^{\lambda_0}\mid \lambda_0\in\R_{\ge0}\}$. We can show a distribution-independent sample complexity of $\tilde{O}(d/\epsilon^2)$ by using the following result.

\begin{theorem}
    Suppose Assumption \ref{asmp:lasso-full-rank} holds. The pseudo-dimension of the function class $\mathcal{H}_{0}$ is $O(d)$.
\end{theorem}

\begin{proof}
    For a fixed problem instance, $P=(X,y,X_v,y_v)$, we will show that the dual loss function $l_v^*(\lambda_0)$ is a piecewise-constant function with at most $2^d$ pieces. Let $f(\lambda,w):=\|X^\intercal w-y\|_2^2 + \lambda_0\|w\|_0$ denote the L0 regression objective.

    For a fixed active set (the set of non-zero coefficients of $w$) $\cE$, we have $f(\lambda,w)=\|X^\intercal w-y\|_2^2 + \lambda_0\|w\|_0=\|X_{\cE}^\intercal w_{\cE}-y\|_2^2 + \lambda_0|\cE|$, and under Assumption \ref{asmp:lasso-full-rank}, the solution is unique and is given by $\hat{w}_{\cE}=(X_\cE X_\cE^{\intercal})^{-1}X_\cE y$. Thus, we have at most $2^d$ distinct solutions $\hat{w}_{\cE}$. Furthermore, we show that if $\hat{w}$ is a solution for two distinct L0 penalty values $\lambda_0^{(1)}<\lambda_0^{(2)}$, then it is also the solution for all intermediate values $\lambda_0\in(\lambda_0^{(1)},\lambda_0^{(2)})$. This claim, together with the previous observation, implies that $l_v^*(\lambda_0)$ has at most $2^d$ pieces.

    Indeed, let $\lambda_0^{(1)}<\lambda_0^{(2)}$, $w_0\in \argmin_w f(\lambda_0^{(1)},w)$ and $w_0\in \argmin_w f(\lambda_0^{(2)},w)$. Suppose $\lambda_0\in(\lambda_0^{(1)},\lambda_0^{(2)})$. If possible, let $f(\lambda_0,w')<f(\lambda_0,w_0)$ for some $w'$. We have two cases. Either, $\|w'\|_0\le\|w_0\|_0$, and
    \begin{align*}
        f(\lambda_0^{(2)},w')&=f(\lambda_0,w')+(\lambda_0^{(2)}-\lambda_0)\|w'\|_0\\
        &<f(\lambda_0,w_0)+(\lambda_0^{(2)}-\lambda_0)\|w_0\|_0\\
        &=f(\lambda_0^{(2)},w_0),
    \end{align*}
    \noindent contradicting that $w_0\in \argmin_w f(\lambda_0^{(2)},w)$. The only other possibility is $\|w'\|_0>\|w_0\|_0$. But in this case,
    \begin{align*}
        f(\lambda_0^{(1)},w')&=f(\lambda_0,w')-(\lambda_0-\lambda_0^{(1)})\|w'\|_0\\
        &<f(\lambda_0,w_0)-(\lambda_0-\lambda_0^{(1)})\|w_0\|_0\\
        &=f(\lambda_0^{(2)},w_0),
    \end{align*}
    contradicting that $w_0\in \argmin_w f(\lambda_0^{(1)},w)$. Therefore, $w_0\in \argmin_w f(\lambda_0,w)$ for all $\lambda_0\in(\lambda_0^{(1)},\lambda_0^{(2)})$.

    Finally, we use Lemma 2.3 of \cite{balcan2020data} to conclude that the pseudo-dimension of $\mathcal{H}_0$ is $O(d)$.
\end{proof}

\noindent An alternative proof may be given using the GJ framework based approach (see Appendix \ref{app:ridge}). While the above result gives a data-independent bound on the sample complexity for worst-case distributions, it is also possible to give a distribution-dependent bound that depends on the actual (expected) number of pieces in the dual loss function $l_v^*(\lambda_0)$. The key idea is to bound the Rademacher complexity using Massart's lemma~\citep{balcan2018data,sharma2025offline}. An interesting open question is to show a concrete gap between the sample complexity for ``nicer'' distributions and the bound for worst-case distributions above (similar to our results for ridge regression and the elastic net). Based on the remarkable recent work of~\citep{bertsimas2016best}, it would also be very interesting to tune branch-and-bound based techniques for best subset selection~\citep{balcan2018learning,balcan2022structural,cheng2025generalization}. Another interesting question is to study simultaneous tuning of the L0 penalized linear regression with L1 and/or L2 penalties.

\section{Motivation for multi-task learning}\label{sec:single_task}
Throughout the paper, we discuss the generalization error of finding a common regularization hyper-parameter for a multi-task. In this Section we briefly argue that there is no notion of an optimal estimator for a single task. We first note the following expression for validation loss that follows from simple linear algebra.

\begin{proposition}
    Consider a ridge regression task using regularization parameter $\lambda$ on a well-specified linear regression task with ground truth $\ws{}$ where training and validation data given as $(X,y)$ and $(x_v,y_v)$ respectively are such that $y = X^\intercal\ws{} + \epsilon$ and $y_v = x_v^\intercal \ws{} + \epsilon_v$ for zero mean noise $\epsilon, \epsilon_v$. The expected validation loss over noise is then given by:
    \begin{align*}
        l_v &= \E{\epsilon_v,\epsilon}{\|y_v - x_v^\intercal \wh_\lambda\|^2}\\
        &= \E{}{\|\epsilon_v\|^2} + \lambda^2 \|x_v^\intercal(XX^\intercal + \lambda I)^{-1}
        \ws{}\|^2 + \E{\epsilon}{\|x_v^\intercal(XX^\intercal + \lambda I)^{-1}X\epsilon\|^2}.
    \end{align*}
    where $\wh_\lambda = (XX^\intercal + \lambda I)^{-1}Xy $ is the learned ridge estimate.
\end{proposition}
\begin{proof}
    Given that the problem is well-specified,
    \begin{align*}
        \E{\epsilon_v,\epsilon}{\|y_v - x_v^\intercal \wh_\lambda\|^2} &= \E{\epsilon_v,\epsilon}{\|x_v^\intercal \ws{} + \epsilon_v - x_v^\intercal \wh_\lambda\|^2}\\
        &= \E{}{\|\epsilon_v\|^2} + \E{\epsilon}{\|x_v^\intercal(\ws{} - \wh_\lambda)\|^2}\\
        &= \E{}{\|\epsilon_v\|^2} + \E{\epsilon}{\|x_v^\intercal(\ws{} - (XX^\intercal + \lambda I)^{-1}Xy)\|^2}\\
        &= \E{}{\|\epsilon_v\|^2} + \E{\epsilon}{\|x_v^\intercal(\ws{} - (XX^\intercal + \lambda I)^{-1}(XX^\intercal \ws{} + X\epsilon))\|^2}\\
        &= \E{}{\|\epsilon_v\|^2} + \E{\epsilon}{\|x_v^\intercal(\lambda (XX^\intercal + \lambda I)^{-1}
        \ws{} - (XX^\intercal + \lambda I)^{-1}X\epsilon)\|^2}\\
        &= \E{}{\|\epsilon_v\|^2} + \lambda^2 \|x_v^\intercal(XX^\intercal + \lambda I)^{-1}
        \ws{}\|^2 + \E{\epsilon}{\|x_v^\intercal(XX^\intercal + \lambda I)^{-1}X\epsilon\|^2}
    \end{align*}
\end{proof}

Note from the expression in the above expression that for any fixed $\lambda$, we can adversarially chose a ground truth $\ws{}$ such that the expected validation loss is arbitrarily high. So, if we try to learn the ridge parameter for a single task, we are not guaranteed that the ridge parameter would perform well on a different sampling of the training and validation set. We can however, guarantee the existence of a single $\lambda$ that performs well \textit{on average} over a multi-task. In the subsequent Section, we discuss conditions when the best regularized estimator is also Bayes optimal.

\section{Bayes estimation using multi-task learning}\label{sec:equivalence}
In this section we are interested in conditions for when the optimal regularized estimator is also provably optimal for multi-task learning. To argue optimality of an estimator for a future, unknown task it is crucial to define the relationship between tasks already seen and the future task. Given any such relationship, we can re-formulate it to form a prior. Thus, the optimality of any estimator reduces to the case when the estimator is equal to the Bayesian estimator with the given prior. Of course, if the prior is known it is straight-forward to find such an estimator. The key challenge of this line of work, and for Empirical Bayes methods, is to find an approximate estimator from an unknown prior.

We show that the optimal regularized estimator is equal to the Bayesian estimator, and hence the optimal multi-task learning estimator when the regularization takes a form similar to the prior. For example, a re-centered ridge estimator is optimal if the prior is Gaussian. Note that our reduction of multi-task learning crucially depends on the (unknown) prior being \textit{frequentist} in the sense that the tasks are assumed to be sampled randomly from this prior distribution, as opposed to the prior being a \textit{belief} over the sampling of tasks.

It is well-known~\citep{all_of_stats} that for squared loss $l(\wh, \ws{}) = \|\wh - \ws{}\|^2$, the Bayesian estimator is given by $\wh = \E{}{\ws{}|X,y}$. The same estimator is the Bayesian estimator for the expected validation loss given as $l_v(\wh) = \E{X_v,y_v}{\|X_v^\intercal \wh - y_v\|^2}$ as shown in Theorem \ref{thm:bayes_estimator}.

\begin{theorem}\label{thm:bayes_estimator}
Given a linear problem $(X,y)$ such that $\exists \ws{}, \epsilon, y = X^\intercal \ws{} + \epsilon$. Given a prior over $\ws{} \sim \pi$, the Bayesian estimator corresponding to the validation loss $\lv = \E{X_v,y_v}{\|X_v^\intercal \wh-y_v\|^2}$, where $(X_v,y_v)$ are sampled from the same map as $(X,y)$, is given as:
\begin{align}
    \wh = \E{}{\ws{}|X,y}\nonumber
\end{align}
In other words, the Bayesian estimator is equal to the mean of the posterior.
\end{theorem}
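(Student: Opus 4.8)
The plan is to reduce the validation loss $\lv(\wh)=\E{X_v,y_v}{\|X_v^\intercal\wh-y_v\|^2}$ to, up to an additive constant that does not depend on $\wh$, a $\Sigma$-weighted squared distance between $\wh$ and the ground truth $\ws{}$, where $\Sigma=\E{x_v}{x_vx_v^\intercal}$ is the input covariance, and then invoke the classical fact that the Bayes-optimal estimator under a quadratic loss whose weighting is parameter-independent is the posterior mean. First I would recall that, by definition, the Bayesian estimator minimizes the Bayes risk $\E{\ws{}\sim\pi,\ \text{data}}{\lv(\wh)}$, and that by the tower rule this decouples over the conditioning event $(X,y)$: it suffices to show that, for each fixed $(X,y)$, the $\wh=\wh(X,y)$ that minimizes the \emph{posterior} expected loss $\E{\ws{}\mid X,y}{\lv(\wh)}$ is $\E{}{\ws{}\mid X,y}$.

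Next I would expand the validation loss. Writing $y_v = X_v^\intercal\ws{}+\epsilon_v$, we get $\|X_v^\intercal\wh-y_v\|^2 = \|X_v^\intercal(\wh-\ws{})\|^2 - 2(\wh-\ws{})^\intercal X_v\epsilon_v + \|\epsilon_v\|^2$. Taking expectation over the fresh validation inputs $X_v$ and noise $\epsilon_v$ (drawn from the task's distributions, with $\epsilon_v$ centered and independent of $X_v$ and $\ws{}$), the cross term vanishes, the term $\E{\epsilon_v}{\|\epsilon_v\|^2}$ is a constant, and the leading term becomes $(\wh-\ws{})^\intercal\E{X_v}{X_vX_v^\intercal}(\wh-\ws{}) = n_v(\wh-\ws{})^\intercal\Sigma(\wh-\ws{})$. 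Hence $\lv(\wh) = n_v(\wh-\ws{})^\intercal\Sigma(\wh-\ws{}) + c$, where $c$ depends only on the validation noise distribution.

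Finally I would take the posterior expectation and minimize: the objective becomes $n_v\,\E{\ws{}\mid X,y}{(\wh-\ws{})^\intercal\Sigma(\wh-\ws{})} + c$, a convex quadratic in $\wh$; differentiating and setting $2n_v\Sigma(\wh-\E{}{\ws{}\mid X,y})=0$ yields $\wh=\E{}{\ws{}\mid X,y}$ (and when $\Sigma$ is rank-deficient the posterior mean is still a minimizer). I expect the only delicate point to be the bookkeeping of the independence and centering structure needed to discard the cross term and to treat the noise contribution as a constant, i.e.\ making precise that the validation noise is fresh, centered, and independent of $X_v$ and $\ws{}$; once this is in place the conclusion follows immediately from the standard posterior-mean characterization for squared-error-type losses, since the weighting matrix $\Sigma$ does not depend on $\ws{}$.
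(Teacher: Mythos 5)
Your proposal is correct and takes essentially the same route as the paper: expand $\|X_v^\intercal\wh - y_v\|^2$ using $y_v = X_v^\intercal\ws{} + \epsilon_v$, drop the cross term, reduce to the $\Sigma$-weighted quadratic $n_v(\wh-\ws{})^\intercal\Sigma(\wh-\ws{})$ plus a constant, and minimize the (posterior) expectation by first-order conditions using that $\Sigma$ is PSD. The paper differentiates the full Bayes risk directly whereas you condition via the tower rule before minimizing, but these are the same calculation; you are slightly more explicit than the paper about the centering/independence needed to kill the cross term and about the rank-deficient $\Sigma$ case, both of which the paper's argument implicitly uses.
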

\begin{proof}
    Define the Bayesian risk as:
\begin{align*}
    B_\pi(\hat{w}) = \int \E{X_v,y_v}{\|X_v^\intercal \hat{w} - y_v\|^2}\Pr(w^*|X,y)m(X,y)dXdydw^*.
\end{align*}
Here $m(X,y)$ denotes the marginal distribution on $X,y$.
We note that $y_v$ is sampled from the same ground truth $w^*$ as $y$ so we can re-write this as: 
\begin{align*}
    B_\pi(\hat{w}) &= \int \E{X_v,\epsilon_v}{\|X_v^\intercal (\hat{w} - w^*) - \epsilon_v\|^2}\Pr(w^*|X,y)m(X,y)dXdydw^*\\
    &= \int (\E{X_v}{\|X_v^\intercal (\hat{w} - w^*)\|^2} + n_v\E{}{\|\epsilon_v\|^2})\Pr(w^*|X,y)m(X,y)dXdydw^*\\
    &= n_v\E{}{\|\epsilon_v\|^2} + \int \E{X_v}{tr(X_vX_v^\intercal (\hat{w} - w^*)(\hat{w} - w^*)^\intercal )}\Pr(w^*|X,y)m(X,y)dXdydw^*\\
    &= n_v\E{}{\|\epsilon_v\|^2} + \int tr(\E{X_v}{X_vX_v^\intercal }(\hat{w} - w^*)(\hat{w} - w^*)^\intercal ))\Pr(w^*|X,y)m(X,y)dXdydw^*.
\end{align*}
Since $X_vX_v^\intercal $ is PSD, we can write $\E{X_v}{X_vX_v^\intercal } = AA^\intercal $ for some matrix A. We want to minimise the Bayesian risk with respect to $\hat{w}$:
\begin{align*}
    \nabla_{\hat{w}}B_\pi(\hat{w}) &= 2\int AA^\intercal (\hat{w} - w^*)\Pr(w^*|X,y)m(X,y)dXdydw^*\\
    &= 2AA^\intercal (\hat{w} - \E{}{w^*|X,y}).
\end{align*}
Since $AA^\intercal $ is PSD, the Hessian is PSD, so that the minimizer is obtained at
\begin{align*}
    \hat{w} = \E{}{w^*|X,y}.
\end{align*}
\end{proof}
For a Gaussian conjugate prior, we know that the mean of the posterior equals to the mode of the posterior. Thus, the Bayesian estimator equals the MAP estimator for a Gaussian prior. The following result states a slightly more general version of the statement.

\begin{theorem}\label{thm:map_suff}
    Given a well-specified task $(X,y)$ such that $\exists \ws{}, \epsilon, \text{s.t. } y = X^\intercal \ws{} + \epsilon$. Further assume that $\ws{}\sim \pi, \epsilon \sim N(0,\sigma^2 I)$, where $\pi$ is log-concave so that $\pi(w) = \exp(-f(w))$. The log-likelihood of $w$ given as $l(w)$ is then:
    \begin{align*}
        l(w) = -\frac{\|y-X^\intercal w\|^2}{2\sigma^2} - f(w).
    \end{align*}
    Define the MAP estimator as follows:
    \begin{align*}
        w_{MAP} = \wh = \max_w l(w).
    \end{align*}
    The MAP estimator is equal to the Bayesian estimator for expected validation loss, that is $\wh = w_{\text{Bayes}} = \E{}{\ws{}|X,y}$, if $f$ is convex and $\nabla^rf(\wh) = 0$ for $r>2$.
\end{theorem}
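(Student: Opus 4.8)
The plan is to show that, under the stated hypotheses, the posterior law of $\ws{}$ given $(X,y)$ is exactly Gaussian, so that its mode (the MAP estimator $\hat w$) coincides with its mean, which by Theorem~\ref{thm:bayes_estimator} is precisely the Bayes estimator $w_{\text{Bayes}}=\E{}{\ws{}\mid X,y}$ for the expected validation loss.

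First I would write the posterior in exponential form. Since $y=X^\intercal\ws{}+\epsilon$ with $\epsilon\sim N(0,\sigma^2 I)$ and $\ws{}\sim\pi$, $\pi(w)=\exp(-f(w))$, Bayes' rule gives $p(w\mid X,y)\propto\exp(-g(w))$ with $g(w)=\frac{\|y-X^\intercal w\|^2}{2\sigma^2}+f(w)=-l(w)$. Both summands are convex, so $g$ is convex; the MAP estimator is exactly $\hat w=\argmin_w g(w)$, hence $\nabla g(\hat w)=0$. Next I would Taylor-expand $g$ about $\hat w$: the data term $\frac{\|y-X^\intercal w\|^2}{2\sigma^2}$ is a quadratic polynomial, so all its derivatives of order $\ge 3$ vanish identically, while by hypothesis $\nabla^r f(\hat w)=0$ for every $r>2$. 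Using the smoothness implicit in the statement (e.g.\ analyticity of $f$, so that the vanishing of all higher derivatives at $\hat w$ forces $f$ to agree with its second-order Taylor polynomial), one obtains $g(w)=g(\hat w)+\tfrac12(w-\hat w)^\intercal H(w-\hat w)$ with $H=\nabla^2 g(\hat w)=\frac{XX^\intercal}{\sigma^2}+\nabla^2 f(\hat w)$. Since the prior is a proper density and the Gaussian likelihood is bounded, the posterior is a proper probability measure equal to an unnormalized Gaussian, which forces $H\succ 0$ (consistent with $\nabla^2 f(\hat w)\succeq 0$ by convexity of $f$). Therefore $p(w\mid X,y)=N(\hat w,H^{-1})$, whose mean is $\hat w$, and combining with Theorem~\ref{thm:bayes_estimator} gives $w_{MAP}=\hat w=\E{}{\ws{}\mid X,y}=w_{\text{Bayes}}$.

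The delicate point — and the main obstacle — is the step that upgrades ``all derivatives of $f$ of order $>2$ vanish \emph{at $\hat w$}'' to ``$g$ is \emph{globally} a quadratic'': vanishing higher derivatives at a single point does not by itself imply this without a regularity assumption such as real-analyticity, so I would either invoke analyticity explicitly or, equivalently, phrase the hypothesis as ``$f$ is a convex quadratic''; one should also note that a minimizer $\hat w$ exists, which follows from convexity of $g$ together with the growth of $f$ required for $\pi$ to be a probability density. An alternative route that avoids expanding $g$ is the integration-by-parts identity $\E{p(\cdot\mid X,y)}{\nabla g(w)}=\int\nabla g(w)\,e^{-g(w)}\,dw=-\int\nabla\big(e^{-g(w)}\big)\,dw=0$ (boundary terms vanishing by integrability); since $\nabla g$ is affine — again because the higher derivatives of $f$ vanish — this reads $H\big(\E{}{\ws{}\mid X,y}-\hat w\big)=0$, and invertibility of $H$ finishes the proof directly.
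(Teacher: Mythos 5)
Your proof takes essentially the same route as the paper's: expand the exponent of the posterior around $\wh$, use the first-order condition $\nabla l(\wh)=0$, exploit the exact quadratic form of the Gaussian data term, and invoke the hypothesis on $\nabla^r f(\wh)$ to reduce the posterior to a Gaussian centered at $\wh$ whose mean is therefore $\wh$; Theorem~\ref{thm:bayes_estimator} then identifies that mean with $w_{\text{Bayes}}$. The paper carries this out by a change of variable $w\mapsto \wh+t$ followed by an odd-symmetry argument for the resulting integral of $t$ against a centered Gaussian; your phrasing ``$p(w\mid X,y)=N(\wh,H^{-1})$'' is the same computation packaged differently, and the integration-by-parts alternative $\E{p(\cdot\mid X,y)}{\nabla g(w)}=0$ together with affineness of $\nabla g$ is an equivalent finish.

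Your caveat is correct and in fact exposes a gap in the paper's own proof as written. The step
\begin{align*}
f(\wh+t) \;=\; f(\wh) \;+\; t^\intercal \nabla f(\wh) \;+\; \tfrac{1}{2}\, t^\intercal \nabla^2 f(\wh)\, t
\end{align*}
is used there as an exact identity, but vanishing of $\nabla^r f$ for all $r>2$ at the single point $\wh$ does not by itself force $f$ to equal its second-order Taylor polynomial globally. A concrete $C^\infty$ convex counterexample: let $\phi$ be a smooth nonnegative bump supported away from $0$ and set $f(w)=w^2+\int_0^w\!\!\int_0^s\phi(u)\,du\,ds$, so that $f''=2+\phi>0$, all derivatives of order $\ge 3$ vanish at $0$, yet $f$ is not quadratic; if the data happen to put $\wh=0$ the hypothesis of the theorem is met while the conclusion of the Taylor step fails. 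The hypothesis should therefore be strengthened — to ``$f$ real-analytic'' (so that vanishing of all higher derivatives at a point propagates globally) or, more transparently, to ``$f$ is a convex quadratic'' — and under either fix both your argument and the paper's go through unchanged. So your proposal is correct modulo the repair you yourself suggest, and the gap you flag is genuine and shared by the paper's proof.
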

\begin{proof}
    Since $l(w)$ is concave,
    \begin{align}
        \nabla_w l(\wh) &=0\nonumber\\
        \implies \nabla f(\wh) + \frac{X(X^\intercal w-y)}{\sigma^2} &= 0.\label{step:wh_local_min}
    \end{align}
    We also know that for some normalization constant $Z$,
    \begin{align}
        \E{}{\ws{}|X,y} &= \frac{1}{Z}\int_{\R^d} w\exp\left(-\frac{\|y-X^\intercal w\|^2}{2\sigma^2} - f(w)\right)\dd{w}\nonumber\\
        &= \frac{1}{Z}\int_{\R^d} (\wh + t)\exp\left(-\frac{\|y-X^\intercal (\wh + t)\|^2}{2\sigma^2} - f(\wh + t)\right)\dd{t}\nonumber\\
        &= \wh + \frac{1}{Z}\int_{\R^d} t\exp\left(-\frac{\|y-X^\intercal (\wh + t)\|^2}{2\sigma^2} - f(\wh + t)\right)\dd{t}.\label{step:mean_int}
    \end{align}
    Where in the last step we use the fact that the likelihood integrates to $Z$. Now, expanding the first term inside the $\exp$,
    \begin{align}
        \frac{\|y-X^\intercal (\wh + t)\|^2}{2\sigma^2} &= \frac{\|X^\intercal \wh - y\|^2 + \|X^\intercal t\|^2 + 2t^\intercal X(X^\intercal \wh - y)}{2\sigma^2}\nonumber.
    \end{align}
    Using Taylor's expansion for the second term inside exp, and using the fact that $\nabla^rf(\wh) = 0$ for $r>2$:
    \begin{align}
        f(\wh + t) &= f(\wh) + t^\intercal \nabla f(\wh) + \frac{t^\intercal \nabla^2 f(\wh)t}{2}\nonumber.
    \end{align}
    We can now combine the above two equations with \ref{step:wh_local_min} to give us:
    \begin{align}
        \frac{\|y-X^\intercal (\wh + t)\|^2}{2\sigma^2} + f(\wh + t) &= \frac{\|X^\intercal \wh - y\|^2 + \|X^\intercal t\|^2}{2\sigma^2}\nonumber\\
        &+ f(\wh) + \frac{t^\intercal \nabla^2 f(\wh)t}{2}\nonumber.
    \end{align}
    Going back to Equation \ref{step:mean_int}, we can simplify using the above results as follows:
    \begin{align}
        \E{}{\ws{}|X,y} &= \wh + \exp\left(\frac{\|X^\intercal \wh - y\|^2}{2\sigma^2} + f(\wh)\right)\int_{\R^d} t\exp\left(-\frac{\|X^\intercal t\|^2}{2\sigma^2} - \frac{t^\intercal \nabla^2 f(\wh)t}{2}\right)\dd{t}\nonumber\\
        &= \wh.\nonumber
    \end{align}
    Where the last step follows from the symmetry of the integral around $0$.
\end{proof}
The following Corollary which is a direct consequence of the above theorem states that the optimal re-centered ridge regression parameters result in the Bayesian estimator. Thus, finding the optimal ridge regression parameter can be equivalently thought of as a $g$-modeling Empirical Bayes approach.

\begin{corollary}
    Given a prior on $\ws{} \sim Z\exp{\left(-\frac{\|\ws{}-\mu^*\|}{2\omega^2}\right)}$, for $\omega \in \R$, $\wh_{(\lambda,\mu)} = w_{\text{Bayes}} = \E{}{\ws{}|X,y}$ for $\lambda = \sigma^2/\omega^2, \mu = \mu^*$. Thus for appropriately chosen parameters, the re-centered ridge estimator corresponds to the Bayes estimator.
\end{corollary}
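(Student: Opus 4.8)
The plan is to obtain this as an immediate corollary of Theorem~\ref{thm:map_suff}. First I would observe that the stated prior is Gaussian, of the form $\pi(w) = \exp(-f(w))$ with $f(w) = \frac{\|w - \mu^*\|^2}{2\omega^2} + \text{const}$, which is convex. Since $f$ is quadratic in $w$, all its derivatives of order $r > 2$ vanish identically, so in particular $\nabla^r f(\wh) = 0$ for every $r > 2$ at any point $\wh$. Together with the assumption $\epsilon \sim N(0,\sigma^2 I)$, which is inherited from the hypotheses of Theorem~\ref{thm:map_suff}, all the hypotheses of that theorem are satisfied, and hence the MAP estimator for this model equals the Bayes estimator $w_{\text{Bayes}} = \E{}{\ws{}\mid X,y}$.

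Next I would identify the MAP estimator explicitly with the re-centered ridge estimator. Up to an additive constant the log-likelihood is $l(w) = -\frac{\|y - X^\intercal w\|^2}{2\sigma^2} - \frac{\|w - \mu^*\|^2}{2\omega^2}$, so $w_{MAP} = \argmax_w l(w) = \argmin_w \|X^\intercal w - y\|^2 + \frac{\sigma^2}{\omega^2}\|w - \mu^*\|^2$. Comparing with Definition~\ref{def:gen_ridge}, this is exactly $\wh_{(\lambda,\mu)}(X,y)$ with $\lambda = \sigma^2/\omega^2$ and $\mu = \mu^*$. Chaining the two identities gives $\wh_{(\lambda,\mu)} = w_{MAP} = w_{\text{Bayes}} = \E{}{\ws{}\mid X,y}$, which is the claim.

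There is essentially no serious obstacle here, since the substantive work is already packaged into Theorem~\ref{thm:map_suff}; the corollary is just the instantiation of that theorem at the Gaussian prior. The only points requiring a little care are (i) reading the exponent in the stated prior as $\|\ws{} - \mu^*\|^2$, so that $\pi$ is genuinely Gaussian and hence log-concave with a quadratic potential $f$, and (ii) keeping track of the noise-variance scaling so that the regularization coefficient comes out as the ratio $\sigma^2/\omega^2$, not $1/\omega^2$. I would also add a closing remark, as in the surrounding text, that this shows ERM tuning of the re-centered ridge hyperparameters $(\lambda,\mu)$ across tasks can be viewed as a $g$-modeling Empirical Bayes procedure for the Gaussian prior family parametrized by $(\mu^*,\omega^2)$.
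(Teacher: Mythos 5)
Your proof is correct and follows the same route the paper intends: instantiate Theorem~\ref{thm:map_suff} at the Gaussian prior (noting that $f(w)=\|w-\mu^*\|^2/2\omega^2$ is quadratic, hence convex with vanishing derivatives of order $>2$), then identify the MAP estimator with $\wh_{(\lambda,\mu)}$ for $\lambda=\sigma^2/\omega^2,\ \mu=\mu^*$ by comparing the log-posterior with Definition~\ref{def:gen_ridge}. The paper states this corollary as a direct consequence without spelling out these steps, and you have filled them in correctly, including reading the exponent as $\|\ws{}-\mu^*\|^2$ (a typo in the statement) and tracking the $\sigma^2/\omega^2$ scaling.
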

\textbf{Remark.} Note that since Theorem \ref{thm:map_suff} is valid for more general cases than just a Gaussian prior, we can derive similar results for other estimators that respect the form of the prior. For example, elastic net estimators when the prior is a mixture of a Gaussian and a Laplace distribution.

\section{Background}
In this section we cover some commonly known results on concentration of random numbers, as well as a common tool from learning theory, Rademacher Complexity.

We begin with Hoeffding's inequality, which shows that the mean of random variables concentrates exponentially fast around their mean.

\begin{theorem}[Hoeffding's inequality~\citep{all_of_stats}]\label{thm:Hoeffding}
For random numbers $X_1, \ldots, X_N$ sampled i.i.d., denote $\overline{X_N} = \frac{\sum X_i}{N}$ and $\E{}{X_i} = \mu$. The following hold given that $X_i\in [0,C]$:
\begin{enumerate}
    \item \begin{align*}
        \Pr(|\overline{X_N} - \mu| \geq t) \leq 2\exp{\left(\frac{-2Nt^2}{C^2}\right)}
    \end{align*}
    \item \begin{align*}
        \Pr(\overline{X_N} - \mu \geq t) \leq \exp{\left(\frac{-2Nt^2}{C^2}\right)}
    \end{align*}
    \item With probability $\geq 1-\delta$,
    \begin{align*}
        \overline{X_N} \leq \mu + C\sqrt{\frac{\ln{1/\delta}}{2N}}
    \end{align*}
\end{enumerate}
\end{theorem}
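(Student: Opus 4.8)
The plan is to establish part 2 first and then derive parts 1 and 3 from it. Part 1 follows from part 2 applied both to the $X_i$ and to $C-X_i$ (which satisfy the same hypotheses, being in $[0,C]$ with mean $C-\mu$) together with a union bound, which doubles the failure probability. Part 3 is just the algebraic inversion of part 2: set the right-hand side $\exp(-2Nt^2/C^2)$ equal to $\delta$ and solve for $t$, obtaining $t = C\sqrt{\ln(1/\delta)/(2N)}$.

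For part 2, I would use the exponential-moment (Chernoff) method. Set $Z_i = X_i - \mu$, so the $Z_i$ are i.i.d., mean zero, and supported on an interval of length $C$. For any $s>0$, Markov's inequality applied to $e^{s\sum_i Z_i}$ together with independence gives
\[\Pr(\overline{X_N}-\mu \ge t) = \Pr\left(\sum_{i} Z_i \ge Nt\right) \le e^{-sNt}\prod_{i=1}^{N}\E{}{e^{sZ_i}}.\]
The key ingredient is then Hoeffding's lemma: for a mean-zero random variable $Z$ supported on an interval of length $C$, $\E{}{e^{sZ}} \le e^{s^2C^2/8}$. Substituting this bound gives $\Pr(\overline{X_N}-\mu\ge t) \le e^{-sNt + Ns^2C^2/8}$, and minimizing the exponent over $s>0$ (the optimum is $s = 4t/C^2$) yields the exponent $-2Nt^2/C^2$, which is exactly the claimed bound.

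The main obstacle, and the only step that is not bookkeeping, is Hoeffding's lemma itself. I would prove it by convexity of $x\mapsto e^{sx}$: if $Z\in[a,b]$ then $e^{sz}\le \frac{b-z}{b-a}e^{sa} + \frac{z-a}{b-a}e^{sb}$ pointwise on $[a,b]$, so taking expectations and using $\E{}{Z}=0$ yields $\E{}{e^{sZ}}\le \frac{b}{b-a}e^{sa} - \frac{a}{b-a}e^{sb}$. Writing $p = -a/(b-a)\in[0,1]$ and $u = s(b-a)$, the right-hand side equals $e^{\psi(u)}$ with $\psi(u) = -pu + \ln(1-p+pe^u)$; one checks $\psi(0)=\psi'(0)=0$ and $\psi''(u) = \frac{p(1-p)e^u}{(1-p+pe^u)^2}\le \tfrac14$ for all $u$ (by the AM--GM inequality on the denominator), so Taylor's theorem gives $\psi(u)\le u^2/8 = s^2(b-a)^2/8 \le s^2C^2/8$ since $b-a\le C$. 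With this lemma in hand the remaining steps outlined above are immediate.
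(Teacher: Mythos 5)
Your proof is correct, and it is the standard derivation of Hoeffding's inequality. The paper simply cites this as a background result from the textbook (\citealt{all_of_stats}) and does not prove it, so there is no in-paper argument to compare against; your Chernoff-plus-Hoeffding's-lemma route is the canonical one. All the details check out: the $Z_i = X_i-\mu$ are mean-zero on an interval of length exactly $C$, so $b-a=C$ in the lemma; the optimization $s = 4t/C^2$ gives exponent $-2Nt^2/C^2$; the two-sided bound from applying the one-sided bound to $X_i$ and to $C-X_i$ with a union bound is right; and the inversion for part 3 is immediate. The convexity argument and the $\psi''(u)=q(1-q)\le 1/4$ bound (with $q = pe^u/(1-p+pe^u)$) are the usual proof of Hoeffding's lemma.
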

The following is used frequently in Rademacher complexity analyses, and shows that the value of a multi-variate function is concentrated around its expected value with a high probability.

\begin{theorem}[McDiarmid's Inequality~\citep{mlt_book}]\label{thm:mcd}
Given i.i.d.\ variables $X_1, \ldots, X_N$, such that $X_i\in \R \forall i \in [N]$, and a function $f:\R^N\rightarrow\R$ such that:
\begin{equation*}
    |f(x_1,\ldots,x_N) - f(x_1, \ldots, x_{k-1}, x_k',x_{k+1}, \ldots x_N)| \leq L_k.
\end{equation*}
That is, changing the $k$th element arbitrarily changes the value of the function by at most $L_k$. The following inequality holds:
\begin{align*}
    \Pr(|f(X_1,\ldots, X_N) - \E{X_1,\ldots,X_N}{f(X_1,\ldots, X_N)}| \geq t) \leq 2\exp{\left(\frac{-2t^2}{\sum L_k^2}\right)}.
\end{align*}
\end{theorem}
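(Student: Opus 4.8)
The plan is to prove this by the classical \emph{method of bounded differences}, i.e.\ by constructing a Doob martingale and running an Azuma--Hoeffding argument on it. First I would fix $Z = f(X_1,\dots,X_N)$ and define $V_k = \mathbb{E}\!\left[Z \mid X_1,\dots,X_k\right]$ for $k = 0,1,\dots,N$, so that $V_0 = \mathbb{E}[Z]$ and $V_N = Z$; by the tower rule $(V_k)$ is a martingale with respect to the filtration $\mathcal{F}_k = \sigma(X_1,\dots,X_k)$, its increments $D_k = V_k - V_{k-1}$ satisfy $\mathbb{E}[D_k \mid \mathcal{F}_{k-1}] = 0$, and $Z - \mathbb{E}[Z] = \sum_{k=1}^N D_k$. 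The whole argument then reduces to controlling the conditional moment generating function of each $D_k$.

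The key step is to show that, conditionally on $\mathcal{F}_{k-1}$, the increment $D_k$ lies in a (random, $\mathcal{F}_{k-1}$-measurable) interval of width at most $L_k$. Here I would use independence: with $x_1,\dots,x_{k-1}$ fixed, set $g(x) = \mathbb{E}\!\left[f(x_1,\dots,x_{k-1},x,X_{k+1},\dots,X_N)\right]$, the expectation being over $X_{k+1},\dots,X_N$ only; then $V_k = g(X_k)$ and $V_{k-1} = \mathbb{E}_{X_k}[g(X_k)]$, so $D_k = g(X_k) - \mathbb{E}_{X_k}[g(X_k)]$. Applying the bounded-differences hypothesis pointwise inside the expectation gives $|g(x) - g(x')| \le L_k$ for all $x,x'$, so $g(X_k)$ — and hence $D_k$ — ranges over an interval of length at most $L_k$. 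Hoeffding's lemma for a centered random variable supported on an interval of length $L_k$ then yields $\mathbb{E}\!\left[e^{sD_k}\mid\mathcal{F}_{k-1}\right] \le \exp\!\left(s^2 L_k^2/8\right)$ for every $s\in\mathbb{R}$.

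Next I would chain these bounds by peeling off one increment at a time: using the tower rule, $\mathbb{E}\big[e^{s(Z-\mathbb{E}Z)}\big] = \mathbb{E}\big[e^{s\sum_{k<N} D_k}\,\mathbb{E}[e^{sD_N}\mid\mathcal{F}_{N-1}]\big] \le e^{s^2 L_N^2/8}\,\mathbb{E}\big[e^{s\sum_{k<N}D_k}\big]$, and iterating gives $\mathbb{E}\big[e^{s(Z-\mathbb{E}Z)}\big] \le \exp\!\big(s^2\sum_{k=1}^N L_k^2/8\big)$. A Chernoff bound gives $\Pr(Z-\mathbb{E}Z \ge t) \le \exp\!\big(-st + s^2\sum_k L_k^2/8\big)$ for all $s>0$; optimizing at $s = 4t/\sum_k L_k^2$ produces $\Pr(Z-\mathbb{E}Z \ge t) \le \exp\!\big(-2t^2/\sum_k L_k^2\big)$. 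Finally, applying the same argument to $-f$ (which obeys the same bounded-differences condition with the same $L_k$'s) and a union bound over the two tails yields the two-sided statement with the leading factor of $2$.

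I expect the main obstacle to be the second paragraph — rigorously establishing the conditional boundedness of the martingale increments. This is exactly where independence of the coordinates is essential: it is what lets one write the conditional expectation defining $V_k$ as an integral over only the not-yet-revealed coordinates $X_{k+1},\dots,X_N$, after which the bounded-differences hypothesis can be invoked pointwise to bound the range of $g$. The remaining ingredients — Hoeffding's lemma and the Chernoff optimization — are entirely routine.
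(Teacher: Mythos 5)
The paper states McDiarmid's inequality as a background result cited from the textbook \citep{mlt_book} and does not supply a proof of its own, so there is nothing to compare against line by line. Your proof is the standard Doob-martingale / Azuma--Hoeffding argument, and it is correct: the reduction to bounding the conditional MGF of the martingale increments, the use of independence to write $V_k = g(X_k)$ with $g$ varying by at most $L_k$, the application of Hoeffding's lemma to get $\mathbb{E}[e^{sD_k}\mid\mathcal{F}_{k-1}]\le e^{s^2L_k^2/8}$, the telescoping via the tower rule, the Chernoff optimization at $s=4t/\sum_k L_k^2$, and the union bound over the two tails are all exactly right and match the textbook proof the paper is referencing.
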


\begin{corollary}\label{cor:mcd}
    Given functions $l^i(\lambda, s), i\in[N]$ that take a parameter $\lambda$ and an input $s$ such that $l^i(\lambda, s) \leq C \forall \lambda,s,i$. For a set $S = \{S^{i}:i\in[N]\}$ of $N$ inputs, we define $l(\lambda,S) = \frac{1}{N}\sum l^i(\lambda,S^{(i)})$ as the average over $N$ inputs. If all inputs in $S$ are i.i.d., then with probability $\geq 1-\delta$,
    \begin{align}
        \sup_\lambda (\E{S'}{l(\lambda,S')} - l(\lambda,S)) &\leq \E{S}{\sup_\lambda (\E{S'}{l(\lambda,S')} - l(\lambda,S))} + C\sqrt{\frac{2\ln(2/\delta)}{N}}\nonumber\\
       &\leq \E{S,S'}{\sup_\lambda (l(\lambda,S') - l(\lambda,S))} + C\sqrt{\frac{2\ln(2/\delta)}{N}}\nonumber
    \end{align}
\end{corollary}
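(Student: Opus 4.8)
The plan is to prove Corollary~\ref{cor:mcd} by the standard two-step route for symmetrized uniform-deviation bounds: a bounded-differences concentration step, which gives the first inequality, followed by a ``ghost sample'' symmetrization step, which gives the second. Write
\[
\Phi(S) \;=\; \sup_\lambda\left(\E{S'}{l(\lambda,S')} - l(\lambda,S)\right)
\]
and regard $\Phi$ as a function of the $N$ i.i.d.\ coordinates $S^{(1)},\dots,S^{(N)}$. First I would check the bounded-differences hypothesis of Theorem~\ref{thm:mcd} for $\Phi$: replacing a single coordinate $S^{(k)}$ by an independent copy changes $l(\lambda,S)=\frac1N\sum_i l^i(\lambda,S^{(i)})$ by at most $C/N$ in absolute value for every $\lambda$, since each $l^i(\lambda,\cdot)\in[0,C]$, while $\E{S'}{l(\lambda,S')}$ is unaffected. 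Using the elementary stability bound $\lvert\sup_\lambda f(\lambda)-\sup_\lambda g(\lambda)\rvert\le \sup_\lambda\lvert f(\lambda)-g(\lambda)\rvert$, this gives a bounded-differences constant $L_k=C/N$, so $\sum_k L_k^2 = C^2/N$; McDiarmid's inequality then yields, with probability at least $1-\delta$,
\[
\Phi(S) \;\le\; \E{S}{\Phi(S)} + C\sqrt{\frac{\ln(2/\delta)}{2N}} \;\le\; \E{S}{\Phi(S)} + C\sqrt{\frac{2\ln(2/\delta)}{N}},
\]
which is the first displayed inequality of the corollary.

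For the second inequality it remains to bound $\E{S}{\Phi(S)}$ by the fully symmetrized quantity. Since $S$ and $S'$ are i.i.d., for each fixed $S$ and $\lambda$ we may write $\E{S'}{l(\lambda,S')}-l(\lambda,S)=\E{S'}{l(\lambda,S')-l(\lambda,S)}$; I would then push the supremum inside the inner expectation using $\sup_\lambda \E{S'}{g(\lambda,S',S)}\le \E{S'}{\sup_\lambda g(\lambda,S',S)}$ (the supremum dominates each of its members and expectation is monotone), and finally take the expectation over $S$. This gives
\[
\E{S}{\Phi(S)} \;=\; \E{S}{\sup_\lambda \E{S'}{l(\lambda,S')-l(\lambda,S)}} \;\le\; \E{S,S'}{\sup_\lambda\left(l(\lambda,S')-l(\lambda,S)\right)},
\]
and combining this with the concentration bound from the first step yields the second inequality.

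The argument is essentially routine, so the only step where I would expect to have to be careful is the bounded-differences verification: one needs the stability lemma for suprema and, more pedantically, enough measurability/regularity of the index family in $\lambda$ (e.g.\ separability of the parameter range, or a countable dense subnet) for $\Phi$ to be a measurable function of the sample so that Theorem~\ref{thm:mcd} applies. The symmetrization step is a one-line monotonicity argument with no subtleties, and the constant appearing in the stated bound is in fact looser than the one the bounded-differences computation delivers.
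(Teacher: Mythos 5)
Your proof is correct and follows essentially the same route as the paper's: verify the bounded-differences hypothesis for $\Phi(S)=\sup_\lambda(\E{S'}{l(\lambda,S')}-l(\lambda,S))$, invoke McDiarmid's inequality (Theorem~\ref{thm:mcd}), and then symmetrize. The only cosmetic difference is in the bounded-differences constant: the paper takes $L_k=2C/N$ (yielding exactly the stated $C\sqrt{2\ln(2/\delta)/N}$), whereas you observe $L_k=C/N$ suffices when $l^i\in[0,C]$, which gives the sharper $C\sqrt{\ln(2/\delta)/(2N)}$ that still implies the stated bound. You also spell out the symmetrization step (swap of $\sup$ and $\E{S'}$ by monotonicity) more explicitly than the paper does; both are sound.
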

\begin{proof}
    Note that $\sup_\lambda (\E{S'}{l(\lambda,S')} - l(\lambda,S))$ is a function of $N$ i.i.d.\ variables by definition. Here $S'$ is a ``ghost sample" introduced to calculate expectation as is commonly done in literature. Further, changing one of these variables changes the function by at most $2C/N$. The statement follows from Theorem \ref{thm:mcd} by equating $f$ with $\sup_\lambda (\E{S'}{l(\lambda,S')} - l(\lambda,S))$ and $L_k = 2C/N$.
\end{proof}

The sample covariance matrix, $\hat{\Sigma}_n = n^{-1} XX^\intercal$ and its inverse $\hat{\Sigma}_n^{-1}$ are quantities that occur frequently in analyses of ridge regression. Below we give results from \cite{cov_mat_mourtada_22}, that allow us to bound the expected value of the inverse of the smallest singular value of $\hat{\Sigma}_n$ in terms of the distribution of the samples.
\begin{theorem}[Corollary 4 in \cite{cov_mat_mourtada_22}]\label{thm:exp_cov_inv}
    Consider the sample covariance matrix $\hat{\Sigma}_n = n^{-1}\sum XX^\intercal$, where $X\in\R^{d\times n}$. Assume that $X$ is sampled from a distribution such that $\E{}{xx^\intercal} = I_d$. Further assume that there exist constants $C\ge 1, \alpha\in(0,1]$ such that for any hyperplane $H$ in $\R^d$,
    \begin{align}
        \Pr(\mathrm{dist}(X,H)\le t)\le (Ct)^\alpha\; \forall t>0.\nonumber
    \end{align}
    Then, for $n\ge \max(6d/\alpha, 12/\alpha)$ and $1\leq q \leq \alpha n/12$,
    \begin{align}
        \E{}{|\max(1,\lambda_{min}(\hat{\Sigma}_n)^{-1})|^q}^{1/q} \leq 2^{1/q}C',\nonumber
    \end{align}
    where $C' = 3C^4e^{1+9/\alpha}$.
\end{theorem}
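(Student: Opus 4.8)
The plan is to reduce the moment bound to a lower-tail estimate for $\lambda_{\min}(\hat\Sigma_n)$ and then obtain that estimate from the small-ball (hyperplane) hypothesis; this is essentially the route of \citet{cov_mat_mourtada_22}, of which the statement is Corollary~4. For the reduction, set $Z=\max(1,\lambda_{\min}(\hat\Sigma_n)^{-1})\ge 1$ and use the layer-cake identity $\E{}{Z^q}=\int_0^\infty q u^{q-1}\Pr(Z>u)\,du = 1 + \int_1^\infty q u^{q-1}\Pr(\lambda_{\min}(\hat\Sigma_n)<1/u)\,du$. The target is then a tail bound of the form $\Pr(\lambda_{\min}(\hat\Sigma_n)<t)\le(\kappa t)^{\beta n}$ for all $t\in(0,1]$, with $\beta=\beta(\alpha)=\Theta(\alpha)$ and $\kappa=\kappa(\alpha,C)$ an explicit constant. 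Splitting the integral at $u=\kappa$ (the trivial bound $\Pr\le 1$ for $u<\kappa$, the power-law bound for $u\ge\kappa$) gives $\E{}{Z^q}\le\kappa^q\cdot\frac{\beta n}{\beta n-q}$, hence $\E{}{Z^q}^{1/q}\le\kappa\,\big(\beta n/(\beta n-q)\big)^{1/q}\le 2^{1/q}\kappa$ as soon as $q\le\beta n/2$. Matching this to the statement forces $\beta n/2\ge\alpha n/12$, i.e. the tail exponent must be at least $\alpha n/6$, and identifies $\kappa$ with the claimed $C'=3C^4e^{1+9/\alpha}$; so this first step pins down exactly what the tail bound has to deliver, and it is short.

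The core is the tail bound $\Pr(\lambda_{\min}(\hat\Sigma_n)<t)\le(\kappa t)^{\beta n}$. First fix a unit vector $\theta$ and apply the hypothesis to the hyperplane $H=\theta^\perp$, which gives the one-dimensional small-ball estimate $\Pr(|\langle x,\theta\rangle|\le s)\le(Cs)^\alpha$. Since $\theta^\intercal\hat\Sigma_n\theta=\frac1n\sum_{i=1}^n\langle x_i,\theta\rangle^2$ is an average of independent nonnegative terms, the event $\{\theta^\intercal\hat\Sigma_n\theta\le t\}$ forces at least $n/2$ indices $i$ to have $\langle x_i,\theta\rangle^2\le 2t$; by independence and a binomial count this has probability at most $\binom{n}{n/2}\big((C\sqrt{2t})^\alpha\big)^{n/2}\le\big(2^{2/\alpha}\sqrt2\,C\,t^{1/2}\big)^{\alpha n/2}$, which is of the promised shape for a single direction. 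The delicate step is making this uniform over $\theta\in S^{d-1}$ in order to control $\lambda_{\min}(\hat\Sigma_n)=\inf_\theta\theta^\intercal\hat\Sigma_n\theta$: the standard route is a union bound over a $\delta$-net of $S^{d-1}$, of size $(3/\delta)^d$, plus a lifting argument from the net to the whole sphere. The net factor $(3/\delta)^d=e^{O(d)}$ is exactly absorbed by the per-direction exponent $\alpha n/2$ under the hypothesis $n\ge 6d/\alpha$, and it is this balancing that both forces the sample-size requirement and shapes the $e^{9/\alpha}$ dependence in $C'$.

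I expect the uniformization to be the main obstacle. Lifting from the net to nearby directions naively requires an upper bound on $\|\hat\Sigma_n\|=\lambda_{\max}(\hat\Sigma_n)$, but the small-ball hypothesis gives no upper tail on $\|\hat\Sigma_n\|$; so one either works on the high-probability event $\{\mathrm{tr}\,\hat\Sigma_n\le K\}$ (using only $\E{}{\mathrm{tr}\,\hat\Sigma_n}=d$ and Markov) and re-optimizes $\delta$, $K$, $t$ jointly, or — to recover the clean constants — replaces the explicit net by a variational/PAC-Bayesian smoothing that bounds $\inf_\theta\theta^\intercal\hat\Sigma_n\theta$ directly via an averaged quadratic form minus a KL penalty, avoiding operator-norm control altogether. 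Either way, carrying the bound through while tracking the dependence on $\alpha$ and $C$ precisely enough to land on $C'=3C^4e^{1+9/\alpha}$ is the bulk of the effort. I would do the tail-to-moment reduction first (it fixes the required exponent and constant), then the fixed-direction small-ball estimate, and only then invest the work in the uniformization and constant bookkeeping.
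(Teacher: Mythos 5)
This theorem is cited verbatim from \cite{cov_mat_mourtada_22} (Corollary 4) and the paper does not give its own proof, so there is no in-paper argument to compare against. Your sketch does match the structure of Mourtada's proof: the layer-cake reduction from the $q$-th moment to a lower-tail estimate of the form $\Pr(\lambda_{\min}(\hat\Sigma_n)<t)\le(\kappa t)^{\Theta(\alpha n)}$; the fixed-direction small-ball bound obtained by applying the hyperplane hypothesis to $H=\theta^\perp$ and a binomial counting argument; and, crucially, the recognition that a naive $\varepsilon$-net uniformization fails here because the hypotheses give no control of $\|\hat\Sigma_n\|$, so one must instead use a PAC-Bayesian (variational) smoothing of $\inf_\theta\theta^\intercal\hat\Sigma_n\theta$. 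That last observation is exactly the device Mourtada uses, and identifying it is the nontrivial insight in this sketch.

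That said, the proposal stops at \emph{naming} the PAC-Bayesian step rather than executing it, and that step is where the theorem actually lives. One must choose a smoothing posterior over directions (a Gaussian or a spherical cap of scale $\rho$ around the worst $\theta$), bound the averaged quadratic form $\E{\theta'\sim\pi_\theta}{\theta'^\intercal\hat\Sigma_n\theta'}$ from below in terms of $\theta^\intercal\hat\Sigma_n\theta$ up to an additive error in $\rho$, control the KL penalty $\mathrm{KL}(\pi_\theta\,\|\,\pi_0)\approx d\log(1/\rho)$ against the per-direction exponent $\alpha n/2$, and then optimize $\rho$. It is precisely this balancing that produces the sample-size requirement $n\ge 6d/\alpha$, degrades the exponent from $\alpha n/2$ to roughly $\alpha n/6$ (which is what your tail-to-moment reduction requires, since you need $\beta n/2\ge\alpha n/12$), and yields the specific constant $C'=3C^4e^{1+9/\alpha}$. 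Until that computation is carried out, you have a correct and well-aimed blueprint, but not a proof; the bookkeeping is not cosmetic because the claimed $q$-range and constant depend on it.
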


\begin{theorem}[Proposition 5 in \cite{cov_mat_mourtada_22}]\label{prop:ip_entry}
    If the entries $x^{(1)},\ldots, x^{(d)}$ are independent and have density bounded by $C_0$, and $\E{}{xx^\intercal} = I_d$, then for any hyperplane $H$ in $\R^d$,
    \begin{align*}
        \Pr(\mathrm{dist}(X,H)\le t)\le (Ct)^\alpha\; \forall t>0,
    \end{align*}
    for $\alpha = 1, C = 2\sqrt{2}C_0$.
\end{theorem}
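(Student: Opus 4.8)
\emph{Proof proposal.} The plan is to reduce the statement to a one-dimensional anti-concentration estimate. Write the affine hyperplane as $H=\{z\in\R^d: v^\intercal z = c\}$ for a unit vector $v$ and a scalar $c$; then $\mathrm{dist}(x,H)=|v^\intercal x - c|$, so it suffices to show that the linear form $v^\intercal x=\sum_{i}v_i x^{(i)}$ has a density (it has one, since $v\neq 0$ and at least one $x^{(i)}$ has a density, and the sum is that coordinate plus an independent term) bounded everywhere by $\sqrt2\,C_0$: integrating this density over the interval $[c-t,c+t]$ then gives $\Pr(\mathrm{dist}(x,H)\le t)\le 2\sqrt2\,C_0\,t=(Ct)^\alpha$ with $\alpha=1$ and $C=2\sqrt2\,C_0$. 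Two features will be used: $v^\intercal x$ is a sum of \emph{independent} terms with $\sum_i v_i^2=1$, each term $v_i x^{(i)}$ having density at most $C_0/|v_i|$; and the normalization $\E{}{(x^{(i)})^2}=1$ forces $\mathrm{Var}(v^\intercal x)\le\sum_i v_i^2=1$.

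The core step is thus the following claim (writing $f_W$ for the density of $W$): if $Z_1,\dots,Z_k$ are independent, each with a density $\le C_0$ and second moment $\le 1$, and $\|v\|_2=1$, then $S:=\sum_i v_iZ_i$ has $\|f_S\|_\infty\le\sqrt2\,C_0$. I would split on $p:=\max_i v_i^2$. If $p\ge 1/2$, isolating the heaviest coordinate and using the convolution bound $\|f*g\|_\infty\le\|f\|_\infty\|g\|_1$ gives $\|f_S\|_\infty\le C_0/\sqrt p\le\sqrt2\,C_0$ at once. If $p<1/2$, partition $[k]$ (greedily, largest weights first) into two groups $S_1,S_2$ whose squared-weight sums $A_1,A_2$ satisfy $|A_1-A_2|\le\max_i v_i^2<1/2$, hence $A_1,A_2\in(1/4,3/4)$; write $S=Y_1+Y_2$ with $Y_j=\sum_{i\in S_j}v_iZ_i$ independent, and use Cauchy--Schwarz for convolutions, $\|f_S\|_\infty=\|f_{Y_1}*f_{Y_2}\|_\infty\le\|f_{Y_1}\|_2\|f_{Y_2}\|_2$. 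Each $L^2$ norm is controlled by Plancherel, $\|f_{Y_j}\|_2^2=\frac1{2\pi}\int\prod_{i\in S_j}|\phi_{Z_i}(v_i\xi)|^2\,d\xi$, together with a lower bound $1-|\phi_{Z_i}(\theta)|^2\gtrsim\min(\theta^2,\,c)$ near the origin --- this is exactly where the second-moment hypothesis enters, since a density bounded by $C_0$ forces $\mathrm{Var}(Z_i)\ge 1/(12C_0^2)$, which yields the needed quadratic decay of the characteristic function. Integrating the resulting Gaussian-type tail gives $\|f_{Y_j}\|_2^2=O(C_0/\sqrt{A_j})$, hence $\|f_S\|_\infty=O\!\left(C_0/(A_1A_2)^{1/4}\right)=O(C_0)$ since $A_1+A_2=1$ with both bounded below.

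The main obstacle is the spread-out case $p<1/2$. Any argument that isolates a single coordinate only yields $C_0/\max_i|v_i|$, which is as bad as $\sqrt d\,C_0$; even Young's convolution inequality with optimal exponents still leaves a $\sqrt d$ factor, because it is blind to the central-limit-type smoothing that actually occurs. Only a Fourier argument that genuinely uses the unit-second-moment normalization removes the dimension dependence, and extracting the \emph{sharp} constant $\sqrt2$ --- whose extremal case is $k=2$ with equal weights $1/\sqrt2$ and $Z_1,Z_2$ uniform on an interval of second moment $1$, where $\|f_S\|_\infty$ equals exactly $\sqrt2\,C_0$ --- requires care with the constants in the characteristic-function estimate and in the choice of partition; a weaker absolute constant is comparatively routine. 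Alternatively, as the excerpt does, one may simply invoke Proposition~5 of \citet{cov_mat_mourtada_22} directly.
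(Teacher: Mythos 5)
The paper does not prove this result — it is cited verbatim as Proposition 5 of \citet{cov_mat_mourtada_22} — so there is no in-paper argument to compare against, and what you have written is in effect a from-scratch reconstruction. Your reduction to a one-dimensional statement is correct: writing $H=\{z:v^\intercal z=c\}$ for a unit vector $v$, it does suffice to bound the density of $v^\intercal x$ by $\sqrt{2}\,C_0$. Your handling of the concentrated case $\max_i v_i^2\ge 1/2$ is also complete: the heaviest coordinate gives $v_j Z_j$ with density at most $C_0/|v_j|\le\sqrt{2}\,C_0$, and convolving the remaining independent part via $\|f*g\|_\infty\le\|f\|_\infty\|g\|_1$ closes that case with the sharp constant.

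The genuine gap is in the spread-out case $\max_i v_i^2<1/2$. As you candidly note, the two-block Plancherel argument only delivers $\|f_S\|_\infty=O(C_0)$ with an unnamed absolute constant, whereas the stated proposition requires exactly $\sqrt{2}\,C_0$ (the claim $C=2\sqrt{2}\,C_0$, with $\sqrt{2}$ tight at two equal weights). Saying that ``a weaker absolute constant is comparatively routine'' is an accurate self-assessment, but it means the sketch proves a weaker proposition than the one in question. Several of the quantitative steps are also asserted rather than derived — the lower bound $1-|\phi_{Z_i}(\theta)|^2\gtrsim\min(\theta^2,c)$, the tail after taking products, the claimed $\|f_{Y_j}\|_2^2=O(C_0/\sqrt{A_j})$, and the $C_0$-dependence flowing through $\mathrm{Var}(Z_i)\ge 1/(12C_0^2)$ — and a greedy two-block split has no reason to be extremal. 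The argument that actually produces $\sqrt{2}$ is a H\"older estimate on characteristic functions in the style of Ball's cube-slicing theorem: one raises $|\phi_{Z_i}(v_i\xi)|$ to the power $1/v_i^2$, applies H\"older with weights $v_i^2$, and uses a sharp $L^p$ bound on characteristic functions of bounded densities; that is a genuinely different and more delicate route than the bipartition you propose. A smaller point: the inequality $\|f_{v^\intercal x}\|_\infty\le\sqrt{2}\,C_0$ does not actually need the isotropy hypothesis $\mathbb{E}[xx^\intercal]=I_d$ — it follows from bounded density and $\|v\|_2=1$ alone (Rogozin's theorem); your sketch turns a hypothesis that is superfluous for this anti-concentration bound into a load-bearing one by routing the characteristic-function decay through the variance lower bound.
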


\subsection{Rademacher Complexity}
In this section we will discuss Rademacher Complexity, which is a common tool from learning theory, and some important results used in our analysis. 

\begin{definition}[Empirical Rademacher Complexity]
    The Empirical Rademacher complexity of a function $l$ for given inputs $x_1,\ldots, x_n$ is given as:
    \begin{align}
        \mathcal{R} = \E{\sigma}{\frac{1}{n} \sum \sigma_i l(x_i)},\nonumber
    \end{align}
    where $\sigma_i$ are Rademacher random variables (i.e., they take values in $\{+1,-1\}$ with equal probability).
\end{definition}

The following is another popular result used to compute Empirical Rademacher Complexity of a fixed set of variables. 
\begin{theorem}[Khintchine's Inequality \citep{rademacher_series}]\label{thm:khintchine}
    For Rademacher random variables $\sigma^t$ and real numbers $x^t$, we have that
    \begin{align}
        \E{\sigma}{\Big\lvert\sum \sigma^tx^t\Big\rvert} \leq \left(\sum |x^t|^2\right)^{1/2}.\nonumber
    \end{align}
\end{theorem}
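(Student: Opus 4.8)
The plan is to reduce this $L^1$-type bound to a routine $L^2$ computation via Jensen's inequality. First I would use concavity of $t \mapsto \sqrt{t}$ on $[0,\infty)$ to write
\begin{align*}
\E{\sigma}{\Big\lvert \sum_t \sigma^t x^t \Big\rvert} \;=\; \E{\sigma}{\sqrt{\Big(\sum_t \sigma^t x^t\Big)^2}} \;\le\; \sqrt{\,\E{\sigma}{\Big(\sum_t \sigma^t x^t\Big)^2}\,}.
\end{align*}
Next I would expand the square and use the defining properties of Rademacher variables: they are mutually independent, have mean zero, and satisfy $(\sigma^t)^2 = 1$, so that $\E{\sigma}{\sigma^s\sigma^t}$ equals $1$ when $s=t$ and $0$ otherwise. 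This collapses the double sum to its diagonal,
\begin{align*}
\E{\sigma}{\Big(\sum_t \sigma^t x^t\Big)^2} \;=\; \sum_{s,t} x^s x^t \,\E{\sigma}{\sigma^s\sigma^t} \;=\; \sum_t (x^t)^2,
\end{align*}
and combining the two displays gives $\E{\sigma}{\lvert \sum_t \sigma^t x^t\rvert} \le \big(\sum_t \lvert x^t\rvert^2\big)^{1/2}$, exactly the claim.

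There is essentially no obstacle here: the stated inequality is the ``easy half'' of Khintchine's inequality, namely the elementary fact that the $L^1$ norm of a random variable is at most its $L^2$ norm, together with the orthonormality of the $\sigma^t$ in $L^2$. The genuinely delicate direction of Khintchine's inequality --- a matching lower bound, or comparable upper bounds on higher moments $\E{\sigma}{\lvert\sum_t\sigma^tx^t\rvert^p}$ for $p>2$ --- would require controlling the moment generating function of $\sum_t \sigma^t x^t$ (or a hypercontractivity / tensorization argument), but none of that machinery is needed for the inequality as stated, which is all that is invoked later in the paper.
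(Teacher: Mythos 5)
Your proof is correct. The paper does not actually prove this statement --- it simply cites it from the literature (\citealt{rademacher_series}) --- so there is no paper proof to compare against; your argument via Jensen's inequality (concavity of $\sqrt{\cdot}$) followed by the $L^2$ orthogonality of the Rademacher variables is the standard and complete derivation of the $L^1 \le L^2$ direction that the paper invokes.
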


\section{A generalization of the Contraction Lemma}
The Contraction Lemma (Lemma \ref{thm:rad_lipschitz}) is a popular result used to simplify Rademacher complexity computations using the $L$-Lipschitzness of the loss function $l$. Below we present a more general result using a generalized version of Lipschitzness.

\begin{definition}[Lipschitzness in another function]\label{def:lips_general}
    A function $l:\mathcal{Z}\rightarrow\R$ is said to be Lipschitz in another function $g:\mathcal{Z}\rightarrow\R$ if:
    \begin{align*}
        l(a)-l(b) \leq L|g(a)-g(b)| \forall a,b \in \mathcal{Z}.
    \end{align*}
\end{definition}

\begin{theorem}\label{thm:rad_lips_general}
    Consider a class of functions $\mathcal{F}\subseteq\{f:\R^m\rightarrow\mathcal{Z}\}$ and two functions $l,g:\mathcal{Z}\rightarrow\R$ for some domain $\mathcal{Z}$ such that $l$ is $L$-Lipschitz in $g$. That is, $l(a)-l(b) \leq L|g(a)-g(b)|\;\forall a,b\in\mathcal{Z}$. We have the following bound on the empirical Rademacher complexity of $\{l\circ f:f\in\mathcal{F}\}$ for given inputs $x_1,\ldots,x_n$:
    \begin{align}
        \mathcal{R} = \E{\sigma}{\sup_{f\in\mathcal{F}}\frac{1}{n} \sum \sigma_i l(f(x_i))} \leq L \E{\sigma}{\sup_{f\in\mathcal{F}}\frac{1}{n} \sum \sigma_i g(f(x_i))}.\nonumber
    \end{align}
\end{theorem}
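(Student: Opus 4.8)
The plan is to mimic the standard proof of the Contraction (Talagrand) Lemma, peeling off one summand at a time, but replacing the bound $|l(a)-l(b)|\le L|a-b|$ by the generalized hypothesis $l(a)-l(b)\le L|g(a)-g(b)|$. First I would reduce to the one-coordinate case: it suffices to show that for any $i$,
\begin{align}
\E{\sigma}{\sup_{f\in\mathcal{F}}\Big(u(f) + \sigma_i\, l(f(x_i))\Big)} \le \E{\sigma}{\sup_{f\in\mathcal{F}}\Big(u(f) + L\sigma_i\, g(f(x_i))\Big)}\nonumber
\end{align}
where $u(f)$ collects the remaining (already-contracted or not-yet-contracted) terms and does not depend on $\sigma_i$; iterating this inequality over $i=1,\dots,n$ and dividing by $n$ yields the claim. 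Fixing all $\sigma_j$ with $j\ne i$, I would expand the expectation over $\sigma_i\in\{+1,-1\}$ as a $\tfrac12$–$\tfrac12$ average, so the left side becomes
\begin{align}
\tfrac12\sup_{f}\big(u(f)+l(f(x_i))\big) + \tfrac12\sup_{f'}\big(u(f')-l(f'(x_i))\big).\nonumber
\end{align}

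The core step is then: for any $\varepsilon>0$ pick near-maximizers $f,f'$ of the two suprema, so the above is at most $\tfrac12\big(u(f)+u(f')+l(f(x_i))-l(f'(x_i))\big)+\varepsilon$. Now apply the generalized Lipschitz hypothesis with $a=f(x_i)$, $b=f'(x_i)$: $l(f(x_i))-l(f'(x_i))\le L|g(f(x_i))-g(f'(x_i))|$. The absolute value splits into the two cases $\pm\big(g(f(x_i))-g(f'(x_i))\big)$, and in either case the quantity is bounded by $\tfrac12\big(u(f)+L g(f(x_i))\big)+\tfrac12\big(u(f')-L g(f'(x_i))\big)$ or the same with the sign of $\sigma_i$ flipped — both of which are at most $\E{\sigma_i}{\sup_f (u(f)+L\sigma_i g(f(x_i)))}$ because that expectation is itself a $\tfrac12$–$\tfrac12$ average over the two sign choices and each choice dominates the corresponding term. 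Letting $\varepsilon\to0$ closes the one-coordinate inequality.

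The main obstacle — the one point where the generalized version genuinely differs from the textbook proof — is handling the absolute value $|g(a)-g(b)|$ without the sign alignment that the classical proof enjoys: in the classical statement one has $l(a)-l(b)\le L(a-b)$ for a fixed orientation after a symmetrization trick, whereas here $g$ need not be monotone or even have matching sign structure with $l$. The resolution is precisely that $\E{\sigma_i}{\sup_f(u(f)+L\sigma_i g(f(x_i)))}$ is symmetric in $\sigma_i\mapsto-\sigma_i$, so it upper-bounds \emph{both} sign choices of $L(g(a)-g(b))$ simultaneously; hence whichever sign the absolute value resolves to, the same right-hand side works. Once this observation is in place the rest is a routine induction, and I would also note that no measurability or boundedness beyond what makes the suprema and expectations well-defined is needed, matching the hypotheses as stated. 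A remark that the ordinary Contraction Lemma is recovered by taking $g(a)=a$ would round out the proof.
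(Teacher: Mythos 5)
Your proposal is correct and follows essentially the same route as the paper: peel off one Rademacher coordinate, write the expectation over $\sigma_i$ as a $\tfrac12$-$\tfrac12$ average, join the two suprema, apply $l(a)-l(b)\le L|g(a)-g(b)|$, and remove the absolute value by a symmetry argument before reassembling and inducting over coordinates. The only cosmetic difference is how the absolute value is dispatched — the paper swaps $f$ and $f'$ and notes the remaining sum is symmetric in them, whereas you observe directly that both sign cases are dominated by the $\sigma_i$-symmetric expectation $\E{\sigma_i}{\sup_f(u(f)+L\sigma_i g(f(x_i)))}$ — which is the same underlying observation phrased a touch more cleanly.
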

\begin{proof}
    \begin{align}
        n\mathcal{R} &= \E{\sigma}{\sup_{f\in\mathcal{F}} \sum \sigma_i l(f(x_i))} = \E{\sigma}{\sup_{f\in\mathcal{F}} (\sigma_1l(f(x_1)) + \sum_{i\neq1}\sigma_i l(f(x_i)))}\nonumber\\
        &= \frac{1}{2} \E{\sigma_2,\ldots,\sigma_n}{\sup_{f\in\mathcal{F}} (l(f(x_1)) + \sum_{i\neq1}\sigma_i l(f(x_i))) + \sup_{f\in\mathcal{F}} (-l(f(x_1)) + \sum_{i\neq1}\sigma_i l(f(x_i)))}\nonumber\\
        &= \frac{1}{2} \E{\sigma_2,\ldots,\sigma_n}{\sup_{f,f'\in\mathcal{F}} (l(f(x_1)) -l(f'(x_1)) + \sum_{i\neq1}\sigma_i l(f(x_i))  + \sum_{i\neq1}\sigma_i l(f'(x_i)))}\nonumber\\
        &\leq \frac{1}{2} \E{\sigma_2,\ldots,\sigma_n}{\sup_{f,f'\in\mathcal{F}} (L|g(f(x_1)) -g(f'(x_1))| + \sum_{i\neq1}\sigma_i l(f(x_i))  + \sum_{i\neq1}\sigma_i l(f'(x_i)))}.\label{step:use_lips}
    \end{align}
    In the last step we use the given expression, $l(a)-l(b) \leq L|g(a)-g(b)|\;\forall a,b\in\mathcal{Z}$. Note that we can now drop the absolute value surrounding $g(f(x_1)) -g(f'(x_1))$ so that:
    \begin{align}
        n\mathcal{R} &\leq \frac{1}{2} \E{\sigma_2,\ldots,\sigma_n}{\sup_{f,f'\in\mathcal{F}} (L(g(f(x_1)) -g(f'(x_1))) + \sum_{i\neq1}\sigma_i l(f(x_i))  + \sum_{i\neq1}\sigma_i l(f'(x_i)))}.\label{step:rem_lips_abs}
    \end{align}
    This is trivial if the $sup$ operator picks $f,f'$ such that $g(f(x_1)) \geq g(f'(x_1))$ in Equation \ref{step:use_lips}. If on the other hand the $sup$ operator picked $f,f'$ such that $g(f(x_1)) < g(f'(x_1))$ in Equation \ref{step:use_lips}, it can swap them in Equation \ref{step:rem_lips_abs}, resulting in the same value as replacing $f$ and $f'$ in the summation over $i\neq1$ does not change the expression. We can thus reduce this back to a Rademacher complexity computation as follows:
    \begin{align}
        n\mathcal{R} &\leq \E{\sigma}{\sup_{f} (\sigma_1Lg(f(x_1)) + \sum_{i\neq1}\sigma_i l(f(x_i)))}.
    \end{align}
    Proceeding similarly for all $i\neq 1$, we get the desired result.
\end{proof}

\begin{corollary}[Contraction Lemma~\citep{mlt_book}]\label{thm:rad_lipschitz}
    Let $l:\R\rightarrow\R$ be a Lipschitz function, that is, $l(a)-l(b) \leq L|a-b|\quad \forall a,b\in \R$. Let $\mathcal{F}$ be a class of functions $\mathcal{F}\subseteq\{f:\R^m\rightarrow\R\}$ that map into the domain of $l$. We have that the empirical Rademacher complexity of $\{l\circ f:f\in\mathcal{F}\}$ for given inputs $x_1,\ldots, x_n$ is upper bounded as:
    \begin{align}
        \mathcal{R} = \E{\sigma}{\sup_{f\in\mathcal{F}}\frac{1}{n} \sum \sigma_i l(f(x_i))} \leq L\E{\sigma}{\sup_{f\in\mathcal{F}}\frac{1}{n} \sum \sigma_i f(x_i)}.\nonumber
    \end{align}
\end{corollary}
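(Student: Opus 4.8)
The plan is to obtain this as an immediate specialization of Theorem~\ref{thm:rad_lips_general}. Take $\mathcal{Z} = \R$ and let $g:\R\to\R$ be the identity map, $g(z)=z$. Then the hypothesis that $l$ is $L$-Lipschitz in $g$ (Definition~\ref{def:lips_general}), namely $l(a)-l(b)\le L|g(a)-g(b)|$ for all $a,b\in\R$, is exactly the ordinary $L$-Lipschitz condition $l(a)-l(b)\le L|a-b|$ assumed in the corollary. Since $g\circ f = f$ for every $f\in\mathcal{F}$, the right-hand side of the bound in Theorem~\ref{thm:rad_lips_general} becomes $L\,\E{\sigma}{\sup_{f\in\mathcal{F}}\frac{1}{n}\sum\sigma_i g(f(x_i))} = L\,\E{\sigma}{\sup_{f\in\mathcal{F}}\frac{1}{n}\sum\sigma_i f(x_i)}$, which is precisely $L$ times the empirical Rademacher complexity of $\mathcal{F}$. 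Hence the claimed inequality follows directly.

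The only things to check are that the ambient hypotheses of Theorem~\ref{thm:rad_lips_general} are met: $\mathcal{F}\subseteq\{f:\R^m\to\R\}$ maps into the domain $\mathcal{Z}=\R$ of both $l$ and $g$, and $g=\mathrm{id}$ is trivially well-defined on all of $\R$. These are satisfied under the stated assumption that $\mathcal{F}$ maps into the domain of $l$. So no new argument is needed beyond invoking the general theorem.

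For completeness, I would also note that a self-contained proof is available by replaying the desymmetrization argument used for Theorem~\ref{thm:rad_lips_general} directly: peel off the first coordinate, write $\frac12\big[\sup_f(l(f(x_1))+\sum_{i\neq1}\sigma_i l(f(x_i))) + \sup_f(-l(f(x_1))+\sum_{i\neq1}\sigma_i l(f(x_i)))\big] = \frac12\sup_{f,f'}\big[l(f(x_1))-l(f'(x_1)) + \sum_{i\neq1}\sigma_i l(f(x_i)) + \sum_{i\neq1}\sigma_i l(f'(x_i))\big]$, bound $l(f(x_1))-l(f'(x_1))\le L|f(x_1)-f'(x_1)|$, drop the absolute value using the symmetry of the supremum under swapping $f\leftrightarrow f'$, and fold the result back into a Rademacher sum containing $\sigma_1 L f(x_1)$; then iterate over all coordinates.

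There is essentially no obstacle here: the statement is a routine corollary. The only point worth flagging is the verification that the generalized notion ``Lipschitz in $g$'' with $g$ the identity coincides with the classical Lipschitz hypothesis, together with the implicit boundedness/measurability conditions that make the empirical Rademacher complexities in both statements well-defined; both are immediate.
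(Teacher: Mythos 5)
Your proposal matches the paper's own proof exactly: the paper also derives the Contraction Lemma as an immediate corollary of Theorem~\ref{thm:rad_lips_general} by taking $g$ to be the identity function. Your additional verification that the ``Lipschitz in $g$'' condition with $g=\mathrm{id}$ reduces to ordinary Lipschitzness, and your sketch of the underlying desymmetrization argument, are both correct but go beyond what the paper writes, which is a one-line remark.
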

\begin{proof}
    Follows from Theorem \ref{thm:rad_lips_general} by replacing $g$ with the identity function.
\end{proof}

Lipschitzness is a common assumption made for proving generalization bounds in literature. Lipschitzness usually follows from boundedness of the loss function, as we instantiate below for the squared loss.

\begin{proposition}\label{prop:lipschitz}
    For a squared loss function $l(y_p,y_t) = (y_p-y_t)^2$, boundedness implies Lipschitzness. That is, given that $l(x_v^\intercal \wh_\lambda,y_v) \leq C,  \forall x_v,y_v,X,y$,
    \begin{align*}
        |l(x_v^\intercal w_1,y_v) - l(x_v^\intercal w_2,y_v)| \leq 2\sqrt{C} |x_v^\intercal w_1 - x_v^\intercal w_2|.
    \end{align*}
\end{proposition}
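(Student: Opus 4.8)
The plan is to directly estimate the difference $|l(x_v^\intercal w_1, y_v) - l(x_v^\intercal w_2, y_v)|$ by factoring the difference of squares. Writing $a = x_v^\intercal w_1$, $b = x_v^\intercal w_2$, and $y = y_v$, we have $l(a,y) - l(b,y) = (a-y)^2 - (b-y)^2 = (a-b)(a+b-2y) = (a-b)\big((a-y)+(b-y)\big)$. Taking absolute values and applying the triangle inequality gives $|l(a,y) - l(b,y)| \le |a-b|\big(|a-y| + |b-y|\big)$. The next step is to control each term $|a-y|$ and $|b-y|$: since $l(a,y) = (a-y)^2 \le C$ by the boundedness Assumption~\ref{assum:bound} (which is assumed to hold for all admissible $x_v, y_v, X, y$, and in particular for the predictions $a = x_v^\intercal \wh_{\lambda_1}$ and $b = x_v^\intercal \wh_{\lambda_2}$ produced by any ridge estimators in the considered range), we get $|a-y| \le \sqrt{C}$ and similarly $|b-y| \le \sqrt{C}$. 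Substituting, $|l(a,y)-l(b,y)| \le |a-b|(\sqrt C + \sqrt C) = 2\sqrt C\,|a-b| = 2\sqrt C\,|x_v^\intercal w_1 - x_v^\intercal w_2|$, which is exactly the claimed bound with Lipschitz constant $L = 2\sqrt C$.

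There is essentially no hard step here — the argument is a two-line algebraic manipulation combined with one invocation of the boundedness hypothesis. The only subtlety worth flagging is a modeling point rather than a mathematical obstacle: the boundedness assumption is stated for the loss evaluated at $x_v^\intercal \wh_\lambda(X,y)$, so to apply it to both $w_1$ and $w_2$ one should note that the relevant predictions $a$ and $b$ are of this form (outputs of ridge estimators on the given instance), so both $(a-y)^2 \le C$ and $(b-y)^2 \le C$ hold. Once this is observed, the bound follows immediately. I would present the proof in three short displayed lines: the factoring identity, the triangle-inequality step, and the substitution of the $\sqrt C$ bounds.
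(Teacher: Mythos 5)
Your proof is correct and follows essentially the same argument as the paper: factor the difference of squares, apply the triangle inequality, and bound each of $|x_v^\intercal w_i - y_v|$ by $\sqrt{C}$ via the boundedness assumption. Your remark that the boundedness assumption, as stated for $x_v^\intercal \wh_\lambda(X,y)$, must be read as applying to both $w_1$ and $w_2$ is a fair and useful clarification that the paper leaves implicit.
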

\begin{proof}
    \begin{align}
        |l(x_v^\intercal w_1,y_v) - l(x_v^\intercal w_2,y_v)| &= |x_v^\intercal w_1 - x_v^\intercal w_2||x_v^\intercal w_1 - y_v + x_v^\intercal w_2 - y_v|\nonumber\\
        &\leq2\sqrt{C}|x_v^\intercal w_1 - x_v^\intercal w_2|.\nonumber
    \end{align}
    Since $|x_v^\intercal w_1 - y_v| \leq \sqrt{C}$.
\end{proof}

\section{Proofs for tuning Ridge Regression}\label{sec:proofs}
We start this Section by some definitions we will need for the proof. We will use the following elaborate definition of a problem instance (which sufficiently identifies a unique problem instance but not vice-versa). 
\begin{align}
    \el{S} = \{(X^t, f^t, \epsilon^t, X_v^t, \epsilon_v^t): X\in \R^{d \times n}, X^t_{v}\in \R^{d \times n_v}, \epsilon^t \in E^n, \epsilon_v^t \in E^{n_v}\}.\nonumber
\end{align}
This allows to define elaborate ordered set of training and validation examples as:
\begin{equation}\label{def:etr}
    \etr{S} = \{(X^t,f^t, \epsilon^t): X\in \R^{d \times n}, \epsilon^t \in E^n\},
\end{equation}
and,
\begin{equation}\label{def:eval}
    \eval{S} = \{(X_v^t, \epsilon_v^t): X^t_{v}\in \R^{d \times n_v}, \epsilon_v^t \in E^{n_v}\},
\end{equation}
Note that $\etr{S}\ewtimes \eval{S} = \el{S}$, where $\ewtimes$ takes the entry-wise composition of the ordered sets. Equivalent to the definition of the empirical validation loss we have:
\begin{align*}
    \lve(\lambda, \el{S}) = 
    \frac{1}{T} \sum_t \frac{1}{n_v} \sum_i l(X_{v}^{t(i)\intercal}\wh^t_{\lambda}(X^t,f^t(X^t, \epsilon^t)), f^t(X_v^t, \epsilon_v^t)).
\end{align*}
Note that $\lve(\lambda, \el{S}) = \lv(\lambda, S)$ as $\lve$ uses the elaborate form to find $y,y_v$ to compute empirical validation loss as in $\lv$. Further, $\E{S}{\lv(\lambda, S)} = \E{\el{S}}{\lve(\lambda, \el{S})}$. We will also be interested in the empirical expected validation loss, which for $y_v^{t(i)} = f(X_v^{\intercal t(i)},\epsilon_v^{t(i)})$ is given as:
\begin{align}
    \lev(\lambda, \eval{S})
    &= \frac{1}{n_vT} \sum_{t,i} \E{X,f,\epsilon}{l(X_{v}^{t(i)\intercal}\wh_{\lambda}, y_{v}^{t(i)})}\nonumber\\
    &= \E{\etr{S'}}{\lve(\lambda, \etr{S'}\ewtimes \eval{S})}\label{step:ev_exp_tr}.
\end{align}
Thus for a given $\eval{S}$, $\lev$ computes the expected validation loss over all possible sampling of training data.
 
In the well-specified linear case, we will overload the notation as follows. We will define the elaborate set of problem instances as:
\begin{align}
    \el{S} = \{(X^t,f_{\ws{t}}, \epsilon^t,X^t_{v}, \epsilon_v^t): X\in \R^{d \times n}, X^t_{v}\in \R^{d \times n_v}, w^{*t}\in \R^d, \epsilon^t \in \R^n, \epsilon_v^t \in \R^{n_v}\}.\nonumber
\end{align}
This allows to define elaborate set of training and validation examples as:
\begin{align*}
    \etr{S} = \{(X^t,f_{\ws{t}}, \epsilon^t): X\in \R^{d \times n}, w^{*t}\in \R^d, \epsilon^t \in \R^n\},
\end{align*}
and,
\begin{align*}
    \eval{S} = \{(X_v^t, \epsilon_v^t): X^t_{v}\in \R^{d \times n_v}, \epsilon_v^t \in \R^{n_v}\},
\end{align*}
Note again, that $\etr{S}\ewtimes \eval{S} = \el{S}$. Empirical validation loss can be re-written as:
\begin{align*}
    \lve(\lambda, \el{S}) = 
    \frac{1}{T} \sum_t \frac{1}{n_v} \sum_i l(X_{v}^{t(i)\intercal}\wh^t_{\lambda}(X^t,X^{t\intercal}\ws{t} + \epsilon^t), X_{v}^{t(i)\intercal}\ws{t} + \epsilon_{v}^{t(i)}).
\end{align*}

We present and prove the main lemmas used for proving Theorem \ref{thm:lips_tgt} below. We first start by upper-bounding the generalization error in terms of two different Rademacher complexities: Rademacher complexity of validation loss with fixed validation data and Rademacher complexity of expected validation loss over choice of training data. 

\begin{lemma}\label{lem:sup_to_rad_2}
    Given a bounded validation loss function, that is, given that $l(x_v^\intercal\wh_\lambda,y_v) \leq C,  \forall x_v,y_v,X,y,\lambda$. For any problem instance $S$ as defined in Equation \ref{def:prob}, with probability at least $1-\delta$, 
    \begin{align}
        \sup_\lambda l_v(\lambda) - l_{v}(\lambda,S) &\leq 2\E{\sigma, \etr{S}}{\sup_\lambda \frac{1}{n_vT} \sum_{t,i} \sigma^t l(X_v^{t(i)\intercal}\wh_\lambda^t,y_v^{t(i)})}\nonumber\\
        &+ 2\E{\sigma, \eval{S}}{\sup_\lambda \frac{1}{n_vT} \sum_{t,i} \sigma^{t(i)}\E{X,f,\epsilon}{l(X_v^{t(i)\intercal}\wh_\lambda,y_v^{t(i)})}}\nonumber\\
        &+ 2C\sqrt{\frac{2\ln(4/\delta)}{T}}.\nonumber
    \end{align}
    Where $y_v^{t(i)} = f^t(X_v^{\intercal t(i)},\epsilon_v^{t(i)})$, and $\sigma^{t} $ and $\sigma^{t(i)} $ are i.i.d. Rademacher variables.
\end{lemma}
\begin{proof}
    \begin{align}
    \sup_\lambda l_v(\lambda) - l_{v}(\lambda,S) = &\sup_\lambda (l_v(\lambda) - l_{ev}(\lambda, \eval{S}) + l_{ev}(\lambda, \eval{S}) - l_{v}(\lambda,S))\nonumber\\
    \leq &\sup_\lambda (l_v(\lambda) - l_{ev}(\lambda, \eval{S})) + \sup_\lambda (l_{ev}(\lambda, \eval{S}) - l_{v}(\lambda,S))\label{eq:sup_decompose_2}.
\end{align}
Note that $l_v(\lambda)$ is the expected value of $\lev(\lambda, \eval{S})$ over sampling of $\eval{S}$, whereas $l_{ev}(\lambda, \eval{S})$ is the average over $n_vT$ samples of the form $x_v,\epsilon_v$, where the $(tn_v+i)^\text{th}$ sample for $t\in[T], i\in[n_v]$ becomes the $i^\text{th}$ validation example for the $t^\text{th}$ task. Thus, by replacing each $l^i$ in Corollary \ref{cor:mcd} with $\lev(\lambda,\eval{S})$ we get that with probability $\geq 1-\delta$,
\begin{equation}\label{eq:mcd_tr_2}
    \sup_\lambda (l_v(\lambda) - \lev(\lambda, \eval{S})) \leq \E{\eval{S}, \eval{S'}}{\sup_\lambda (\lev(\lambda,\eval{S'}) - \lev(\lambda, \eval{S}))} +  C\sqrt{\frac{2\ln(4/\delta)}{n_vT}}.
\end{equation}

Similarly, for a fixed $\eval{S}$ we can view $l_v(\lambda, S)$ as an average over $T$ samples of training data. And we can view $\lev(\lambda,\eval{S})$ as the expected value of $l_v(\lambda, S)$ over the sampling of $\etr{S}$. Thus we can replace each $l^i(\lambda,.)$ in Corollary \ref{cor:mcd} with $\tilde{l}_v(\lambda, . \times \eval{S}^i)$, where $\eval{S}^i$ is the $i$th instance $\eval{S}$, to obtain that with probability $\geq 1-\delta/2$,
\begin{align}\label{eq:mcd_val_2}
    \sup_\lambda (l_{ev}(\lambda, \eval{S}) - l_{v}(\lambda,S)) &\leq \E{\etr{S}, \etr{S'}}{\sup_\lambda \lve(\lambda, \etr{S'}\times\eval{S}) - \lve(\lambda, \etr{S}\times\eval{S})}\nonumber\\
    &+ C\sqrt{\frac{2\ln(4/\delta)}{T}}.
\end{align}

In order to upper bound the unknown term in Equation \ref{eq:mcd_tr_2}, we note that we can arbitrarily swap the $(tn_v + i)^{\text{th}}$ validation instances between $\eval{S}$ and $\eval{S'}$ without changing the expectation. In fact, we can do this for all $(t,i)\in R \subseteq [T]\times [n_v]$ for any arbitrary set $R$. This allows us to reduce the term to a Rademacher complexity. We show this below where we denote $y_v^{t(i)} = f(X_v^{\intercal t(i)},\epsilon_v^{t(i)})$ and $y_v^{\prime t(i)} = f(X_v^{\prime \intercal t(i)},\epsilon_v^{\prime t(i)})$:

\begin{align}
    &\Eml{\eval{S}, \eval{S'}}\biggl[\sup_\lambda \lev(\lambda, \eval{S'}) - \lev(\lambda, \eval{S})\biggr]\nonumber\\
    &= \Eml{\eval{S}, \eval{S'}}\left[\sup_\lambda \frac{1}{n_vT} \sum_{t,i} \E{X,f,\epsilon}{l(X_v^{\prime t(i)\intercal}\wh_\lambda,y_v^{\prime t(i)})} - \frac{1}{n_vT} \sum_{t,i} \E{X,f,\epsilon}{l(X_v^{t(i) \intercal}\wh_\lambda,y_v^{ t(i)})}\right]\nonumber\\
    &= \mathbb{E}_{\eval{S}, \eval{S'}}\biggl[\sup_\lambda \frac{1}{n_vT} \sum_{t,i \notin R} \E{X,f,\epsilon}{l(X_v^{\prime t(i)\intercal}\wh_\lambda,y_v^{\prime t(i)})} + \frac{1}{n_vT} \sum_{t,i \in R} \E{X,f,\epsilon}{l(X_v^{\prime t(i)\intercal}\wh_\lambda,y_v^{\prime t(i)})}\nonumber\\
    &\quad - \frac{1}{n_vT} \sum_{t,i \notin R} \E{X,f,\epsilon}{l(X_v^{t(i)\intercal}\wh_\lambda,y_v^{t(i)})} - \frac{1}{n_vT} \sum_{t,i \in R} \E{X,f,\epsilon}{l(X_v^{t(i)\intercal}\wh_\lambda,y_v^{t(i)})}\biggr]\nonumber\\
    &= \mathbb{E}_{\eval{S}, \eval{S'}}\biggl[\sup_\lambda \frac{1}{n_vT} \sum_{t,i \notin R} \E{X,f,\epsilon}{l(X_v^{\prime t(i)\intercal}\wh_\lambda,y_v^{\prime t(i)})} + \frac{1}{n_vT} \sum_{t,i \in R} \E{X,f,\epsilon}{l(X_v^{t(i)\intercal}\wh_\lambda,y_v^{ t(i)})}\nonumber\\
    &\quad - \frac{1}{n_vT} \sum_{t,i \notin R} \E{X,f,\epsilon}{l(X_v^{t(i)\intercal}\wh_\lambda,y_v^{t(i)})} - \frac{1}{n_vT} \sum_{t,i \in R} \E{X,f,\epsilon}{l(X_v^{\prime t(i)\intercal}\wh_\lambda,y_v^{\prime t(i)})}\biggr]\nonumber\\
    &= 2\E{\sigma, \eval{S}}{\sup_\lambda \frac{1}{n_vT} \sum_{t,i} \sigma^{t(i)}\E{X,f,\epsilon}{l(X_v^{t(i)\intercal}\wh_\lambda,y_v^{t(i)})}}\label{eq:val_rad}.
\end{align}
The equality in the second last step holds because of symmetry due to expectation. In the last equation we introduce Rademachar variables for each value of $t$ and $i$. Thus we are able to upper bound the unknown term in Equation \ref{eq:mcd_tr_2} by the Rademacher complexity of the class of functions defined as the expected value of validation error over sampling of training tasks, across different values of $\lambda$.

Similarly, in Equation \ref{eq:mcd_val_2} we note that we can arbitrarily swap the $t^{th}$ training instances between $\etr{S}$ and $\etr{S'}$ without changing the expectation. In fact, we can do this for all $t\in R \subseteq [T]$ for any arbitrary set $R$. This allows us to reduce the term to a Rademacher complexity. We show this below where we denote $y_v^{t(i)} = f^t(X_v^{\intercal t(i)},\epsilon_v^{t(i)})$ and $y_v^{\prime t(i)} = f^t(X_v^{\prime \intercal t(i)},\epsilon_v^{\prime t(i)})$.
\begin{align}
    \Eml{\etr{S}, \etr{S'}}&\left[\sup_\lambda l_v(\lambda, \etr{S'}\times\eval{S}) - l_{v}(\lambda,\etr{S}\times\eval{S})\right] \nonumber\\
    &=\E{\etr{S}, \etr{S'}}{\sup_\lambda \frac{1}{T} \sum_t \frac{1}{n_v} \sum_i l(X_v^{t(i)\intercal}\wh_\lambda^{\prime t},y_v^{\prime t(i)}) - \frac{1}{T} \sum_t \frac{1}{n_v} \sum_i l(X_v^{t(i)\intercal}\wh_\lambda^{t},y_v^{t(i)})}\nonumber\\
    &= \Eml{\etr{S}, \etr{S'}}\left[\sup_\lambda \frac{1}{n_vT} \sum_{t\notin R}\sum_i l(X_v^{t(i)\intercal}\wh_\lambda^{\prime t},y_v^{\prime t(i)}) + \frac{1}{n_vT} \sum_{t\in R}\sum_i l(X_v^{t(i)\intercal}\wh_\lambda^{\prime t},y_v^{\prime t(i)})\right.\nonumber\\
    &\left.- \frac{1}{n_vT} \sum_{t\notin R}\sum_i l(X_v^{t(i)\intercal}\wh_\lambda^{t},y_v^{t(i)}) - \frac{1}{n_vT} \sum_{t \in R}\sum_i l(X_v^{t(i)\intercal}\wh_\lambda^{t},y_v^{t(i)})\right]\nonumber\\
    &= \Eml{\etr{S}, \etr{S'}}\left[\sup_\lambda \frac{1}{n_vT} \sum_{t\notin R}\sum_i l(X_v^{t(i)\intercal}\wh_\lambda^{\prime t},y_v^{\prime t(i)}) + \frac{1}{n_vT} \sum_{t\in R}\sum_i l(X_v^{t(i)\intercal}\wh_\lambda^{t},y_v^{t(i)})\right.\nonumber\\
    &\left.- \frac{1}{n_vT} \sum_{t\notin R}\sum_i l(X_v^{t(i)\intercal}\wh_\lambda^{t},y_v^{t(i)}) - \frac{1}{n_vT} \sum_{t \in R}\sum_i l(X_v^{t(i)\intercal}\wh_\lambda^{\prime t},y_v^{\prime t(i)})\right]\nonumber\\
    &= 2\E{\sigma, \etr{S}}{\sup_\lambda \frac{1}{n_vT} \sum_{t,i} \sigma^{t}l(X_v^{t(i)\intercal}\wh_\lambda^t,y_v^{t(i)})}\label{eq:task_rad_2}.
\end{align}
Similar to before, the equality in the second last step holds because of symmetry due to expectation. In the last step, we introduce Rademachar variables for each value of $t$. Thus we are able to upper bound the unknown term in Equation \ref{eq:mcd_val_2} by the Rademacher complexity of the class of functions defined as the empirical validation loss given fixed validation set, across different values of $\lambda$.

Since Equations \ref{eq:mcd_tr_2} and \ref{eq:mcd_val_2} hold with probability $\geq 1-\delta/2$ each, both equations hold with probability $\geq 1-\delta$ by a union bound. We get the desired result by combining equations \ref{eq:sup_decompose_2}, \ref{eq:mcd_tr_2}, \ref{eq:mcd_val_2}, \ref{eq:val_rad}, \ref{eq:task_rad_2}, and further noting that $C\sqrt{\frac{2\ln(4/\delta)}{T}} \geq C\sqrt{\frac{2\ln(4/\delta)}{n_vT}}$.
\end{proof}

In the following we give an upper bound on the expectation with respect to sampling of the validation set, of the Rademacher complexity of the expected value of validation error over sampling of training tasks in terms of distribution of the outputs $y$.

\begin{lemma}\label{lem:S_tr_2}
    Given a validation loss function that satisfies Assumptions \ref{assum:bound} and \ref{assum:lips} given in Section \ref{sec:ridge}, and $\eval{S}$ as defined in Equation \ref{def:eval} we get that (where we denote $y_v^{t(i)} = f(x_v^{t(i)},\epsilon_v^{t(i)})$ and $\sigma^{t(i)}$ are i.i.d. Rademacher variables): 
   \begin{align}
       \Eml{\sigma, \eval{S}}\Biggl[\sup_\lambda \frac{1}{n_vT} \sum_{t,i} \sigma^{t(i)} \Eml{X,f,\epsilon}\bigg[l(X_v^{t(i)\intercal}\wh_\lambda&,y_v^{t(i)})\bigg]\Biggr] \nonumber\\
       &\leq \frac{L}{\sqrt{n_vT}} \sqrt{\E{x_v}{\|x_v\|^2}}\E{X,y}{\|y\|/\sqrt{V(XX^\intercal)}}.\nonumber 
   \end{align}
\end{lemma}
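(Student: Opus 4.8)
## Proof proposal for Lemma~\ref{lem:S_tr_2}

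The plan is to bound the Rademacher complexity by a chain of two simplifications: first peel off the loss function using Lipschitzness (Assumption~\ref{assum:lips}), then recognize what remains as a standard linear-class Rademacher complexity for which Khintchine's inequality gives a clean $1/\sqrt{n_vT}$-rate bound in terms of the norm of the ridge solution. Since the inner expectation $\E{X,f,\epsilon}{l(X_v^{t(i)\intercal}\wh_\lambda,y_v^{t(i)})}$ is the quantity being averaged, and the Rademacher variables $\sigma^{t(i)}$ are indexed by the validation examples (which are fixed given $\eval{S}$, with $\wh_\lambda$ depending only on the inner expectation's dummy training data), I would treat each summand as a function of the validation pair $(x_v^{t(i)},\epsilon_v^{t(i)})$.

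First I would apply Assumption~\ref{assum:lips} inside the supremum over $\lambda$. Concretely, the map $\lambda \mapsto \E{X,f,\epsilon}{l(x_v^{t(i)\intercal}\wh_\lambda, y_v^{t(i)})}$ is, by Lipschitzness, controlled by $\E{X,f,\epsilon}{L\, x_v^{t(i)\intercal}\wh_\lambda}$ up to a $\lambda$-independent additive term (take $\wh_2$ to be a fixed reference estimator, e.g.\ $\wh_0 = \mathbf{0}$, which is $\wh_\lambda$ at $\lambda=\infty$); the additive term is killed by the Rademacher symmetrization since $\E{\sigma}{\sigma^{t(i)} c^{t(i)}} = 0$ for any fixed $c^{t(i)}$. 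This uses the generalized contraction lemma (Theorem~\ref{thm:rad_lips_general} / Corollary~\ref{thm:rad_lipschitz}) with $g$ chosen appropriately — or more simply a direct symmetrization argument — to pass from $l$ to the linear functional $x_v \mapsto x_v^\intercal \wh_\lambda$. After this step, I would have the bound
\begin{align*}
\Eml{\sigma,\eval{S}}\Biggl[\sup_\lambda \frac{1}{n_vT}\sum_{t,i}\sigma^{t(i)} \E{X,f,\epsilon}{l(X_v^{t(i)\intercal}\wh_\lambda,y_v^{t(i)})}\Biggr] \le \frac{L}{n_vT}\,\Eml{\sigma,\eval{S}}\Biggl[\sup_\lambda \sum_{t,i}\sigma^{t(i)} x_v^{t(i)\intercal}\,\E{X,f,\epsilon}{\wh_\lambda}\Biggr].
\end{align*}
Then, bounding $\sup_\lambda$ by Cauchy--Schwarz, $\sum_{t,i}\sigma^{t(i)} x_v^{t(i)\intercal} w \le \|\sum_{t,i}\sigma^{t(i)}x_v^{t(i)}\|\,\|w\|$, and $\|\E{X,f,\epsilon}{\wh_\lambda}\| \le \E{X,f,\epsilon}{\|\wh_\lambda\|}$, with $\|\wh_\lambda(X,y)\| = \|(XX^\intercal+\lambda I)^{-1}Xy\| \le \|Xy\|/(V(XX^\intercal)+\lambda) \le \|y\|/\sqrt{V(XX^\intercal)}$ (the last using $\|Xy\| \le \|X\|_{op}\|y\|$ and $\|X\|_{op}^2 \le$ trace... actually $\|Xy\|/V \le \sqrt{\sigma_{\max}}\|y\|/V \le \|y\|/\sqrt{V}$ when the relevant singular structure is accounted for — I'd be careful here and may instead directly bound $\|(XX^\intercal+\lambda I)^{-1}X\|_{op} \le \frac{\sqrt{\sigma}}{\sigma+\lambda} \le \frac{1}{2\sqrt{\lambda}} \le \frac{1}{\sqrt{V}}$ uniformly). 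The factor $\sup_\lambda$ then disappears into $\E{X,y}{\|y\|/\sqrt{V(XX^\intercal)}}$.

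Finally I would take the expectation over the Rademacher signs using Khintchine (Theorem~\ref{thm:khintchine}): $\E{\sigma}{\|\sum_{t,i}\sigma^{t(i)}x_v^{t(i)}\|} \le (\sum_{t,i}\|x_v^{t(i)}\|^2)^{1/2}$ — more precisely, by Jensen, $\E{\sigma}{\|\sum \sigma^{t(i)} x_v^{t(i)}\|} \le (\E{\sigma}{\|\sum \sigma^{t(i)} x_v^{t(i)}\|^2})^{1/2} = (\sum \|x_v^{t(i)}\|^2)^{1/2}$ since cross terms vanish. Taking the outer expectation over $\eval{S}$ (the $x_v^{t(i)}$ are i.i.d.) and using Jensen once more gives $\E{\eval{S}}{(\sum_{t,i}\|x_v^{t(i)}\|^2)^{1/2}} \le (n_vT\,\E{x_v}{\|x_v\|^2})^{1/2}$. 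Combining the $\frac{L}{n_vT}$ prefactor, the $\sqrt{n_vT \E{x_v}{\|x_v\|^2}}$ from the signs, and the $\E{X,y}{\|y\|/\sqrt{V(XX^\intercal)}}$ from the solution norm yields exactly $\frac{L}{\sqrt{n_vT}}\sqrt{\E{x_v}{\|x_v\|^2}}\,\E{X,y}{\|y\|/\sqrt{V(XX^\intercal)}}$.

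The main obstacle I anticipate is the bookkeeping in the first step: making the contraction rigorous when $\wh_\lambda$ sits inside an expectation over the training data $X,f,\epsilon$ while the Rademacher variables act on the validation data. One must verify that the class of functions $\{(x_v,\epsilon_v) \mapsto \E{X,f,\epsilon}{l(x_v^\intercal \wh_\lambda, f(x_v,\epsilon_v))} : \lambda \ge 0\}$ genuinely satisfies a Lipschitz-in-$(x_v^\intercal w)$ property uniformly so the contraction lemma applies — the dependence of $y_v^{t(i)} = f(x_v^{t(i)},\epsilon_v^{t(i)})$ on $x_v^{t(i)}$ must be handled, but since Assumption~\ref{assum:lips} already bounds the difference by $L x_v^\intercal(\wh_1 - \wh_2)$ with the same $y_v$ on both sides, the $y_v$-dependence cancels and only the linear-in-$\wh_\lambda$ part carries the $\lambda$-dependence, so the argument goes through. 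A secondary subtlety is the uniform-in-$\lambda$ operator-norm bound $\|(XX^\intercal + \lambda I)^{-1}X\|_{op} \le 1/\sqrt{V(XX^\intercal)}$, which I would justify via the singular value decomposition: each nonzero singular value $\sigma$ contributes $\sqrt\sigma/(\sigma+\lambda)$, maximized over $\lambda \ge 0$ at $\lambda = \sigma$ giving $1/(2\sqrt\sigma) \le 1/(2\sqrt{V})$, hence $\|\wh_\lambda\| \le \|y\|/(2\sqrt{V(XX^\intercal)})$, which is even slightly stronger than needed.
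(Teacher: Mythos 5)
Your proposal follows essentially the same route as the paper's proof: apply the (generalized) contraction lemma to peel off the loss, Cauchy--Schwarz to separate the validation-feature sum from $\wh_\lambda$, Jensen to bound the expected norm of the Rademacher sum by $\sqrt{n_vT\,\E{x_v}{\|x_v\|^2}}$, and finally a uniform-in-$\lambda$ bound on $\|\wh_\lambda\|$ via the singular values of $(XX^\intercal+\lambda I)^{-1}X$. The only structural difference is cosmetic: the paper first pulls $\mathbb{E}_{X,f,\epsilon}$ outside $\sup_\lambda$ by Jensen and then contracts, arriving at $\mathbb{E}_{\sigma,X,f,\epsilon}[\sup_\lambda\sum_{t,i}\sigma^{t(i)}X_v^{t(i)\intercal}\wh_\lambda]$, whereas you contract with the inner expectation held in place and then bound $\|\mathbb{E}[\wh_\lambda]\|\le\mathbb{E}[\|\wh_\lambda\|]$. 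Both yield the same factorization of the Rademacher and training expectations.

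There is one arithmetic slip in your final step. You assert that $\sqrt{\sigma}/(\sigma+\lambda)$ is maximized over $\lambda\ge 0$ at $\lambda=\sigma$, giving $1/(2\sqrt{\sigma})$ and hence $\|\wh_\lambda\|\le\|y\|/(2\sqrt{V(XX^\intercal)})$. In fact the map $\lambda\mapsto\sqrt{\sigma}/(\sigma+\lambda)$ is monotone decreasing on $\lambda\ge 0$, so its maximum is at $\lambda=0$, where it equals $1/\sqrt{\sigma}$; maximizing over the nonzero eigenvalues $\sigma$ of $XX^\intercal$ gives exactly $1/\sqrt{V(XX^\intercal)}$ (the paper phrases this as $\|(XX^\intercal)^{-1}X\|_\infty$, with the pseudoinverse on the nonzero spectrum). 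You have conflated this with the distinct fact that, for fixed $\lambda>0$, $\sigma\mapsto\sqrt{\sigma}/(\sigma+\lambda)$ is maximized at $\sigma=\lambda$. The earlier chain $\sqrt{\sigma}/(\sigma+\lambda)\le 1/(2\sqrt{\lambda})\le 1/\sqrt{V}$ is likewise not uniform: the middle bound blows up as $\lambda\to 0$, so the last inequality requires $\lambda\ge V/4$. None of this affects the validity of the lemma---the correct constant is $1/\sqrt{V(XX^\intercal)}$, which is precisely what is needed---but the ``extra factor of 2'' you believed you had gained does not exist.
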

\begin{proof}
We define $\mathcal{R}$ as below and use Lipschitzness to upper bound it as a simpler Rademacher complexity term:
\begin{align}
    \mathcal{R} &= \frac{1}{n_vT} \E{\sigma}{\sup_\lambda \sum_t\sum_i \sigma^{t(i)} \E{X,f,\epsilon}{l(X_v^{t(i)\intercal}\wh_\lambda,y_v^{t(i)})}} \nonumber\\
    &\leq \frac{L}{n_vT} \E{\sigma,X,f,\epsilon}{\sup_\lambda \sum_t\sum_i \sigma^{t(i)} X^{t(i)\intercal}_{v} \wh_\lambda}\nonumber\\
    &= \frac{L}{n_vT} \E{\sigma,X,f,\epsilon}{\sup_\lambda \left(\sum_t\sum_i \sigma^{t(i)} X^{t(i)}_{v}\right)^\intercal \wh_\lambda}\nonumber\\
    &\leq \frac{L}{n_vT} \E{\sigma,X,f,\epsilon}{\sup_\lambda \|\sum_t\sum_i \sigma^{t(i)} X^{t(i)}_{v}\| \|\wh_\lambda\|}\quad \text{(Cauchy-Schwartz inequality)}\nonumber\\
    &= \frac{L}{n_vT} \E{\sigma}{\|\sum_{t,i} \sigma^{t(i)} X^{t(i)}_{v}\|}\E{X,f,\epsilon}{\sup_\lambda \|\wh_\lambda\|}\nonumber
\end{align}
To bound the first Rademacher term in the product, we proceed as follows:
\begin{align}
    \E{\sigma}{\|\sum_{t,i} \sigma^{t(i)} X^{t(i)}_{v}\|} &\leq \sqrt{\E{\sigma}{\|\sum_{t,i} \sigma^{t(i)} X^{t(i)}_{v}\|^2}}\nonumber\\
    &= \sqrt{\E{\sigma}{\sum_{t,i} \|X_v^{t(i)}\|^2 + \sum_{(t_1,i_1) \neq (t_2,i_2)} \sigma^{t_1(i_1)}\sigma^{t_2(i_2)} X^{t_2(i_2)\intercal}_{v} X^{t_1(i_1)}_{v}}}\nonumber\\
    &= \sqrt{\sum_{t,i}\|X_v^{t(i)}\|^2}\nonumber.
\end{align}
Taking an expectation over validation set we find that,
\begin{align}
    \E{\eval{S}}{\sqrt{\sum_{t,i}\|X_v^{t(i)}\|^2}} &\leq \sqrt{\E{\eval{S}}{\sum_{t,i}\|X_v^{t(i)}\|^2}}\nonumber\\
    &= \sqrt{n_vT}\sqrt{\E{x_v}{\|x_v\|^2}}.\label{step:S_tr_2_xvub}
\end{align}
Since each validation example is sampled i.i.d.

It remains to upper bound the second term, which is $\E{X,f,\epsilon}{\sup_\lambda \|\wh_\lambda\|}$. Note that, if the singular values of $X$ are $s_1,\ldots, s_d$, then the singular values of $(XX^\intercal + \lambda I)^{-1}X$ are $\frac{s_i}{s_i^2 + \lambda}$ respectively for each $i$. Using the fact that $\left|\frac{s_i}{s_i^2 + \lambda}\right| \leq 1/|s_i|$ if $s_i \neq 0$, we obtain the following upper bound on $\|\wh_\lambda\|$:
\begin{align}
    \wh_\lambda &= (XX^\intercal + \lambda I)^{-1}Xy\nonumber\\
    \implies \|\wh_\lambda\|^2 &\leq \|(XX^\intercal + \lambda I)^{-1}X\|_\infty^2 \|y\|^2\quad \text{(definition of $\infty$-norm)}\nonumber\\
    &\leq \|(XX^\intercal)^{-1}X\|_\infty^2 \|y\|^2 \nonumber\\
    &= \|y\|^2/V(XX^\intercal)\quad \text{(since eigenvalues of $XX^\intercal$ are $s_1^2,\ldots,s_d^2$)}.\label{eq:w_norm_max}
\end{align}
Remember that $V(M)$ as the smallest non-zero singular value of $M$. This allows us to write,
\begin{align*}
    \E{X,f,\epsilon}{\sup_\lambda \|\wh_\lambda\|} \leq \E{}{\|y\|/\sqrt{V(XX^\intercal)}}.
\end{align*}

Combining these inequalities yields the desired result.
\end{proof}

We now show an upper bound on the expected Rademacher complexity of validation loss given fixed validation data in terms of the distribution of inputs $x$.

\begin{lemma}\label{lem:S_v_2}
   Given a validation loss function that satisfies Assumptions \ref{assum:bound} and \ref{assum:lips} given in Section \ref{sec:ridge}, and $\eval{S}$ as defined in Equation \ref{def:eval}, the following holds with probability at least $1-\delta$ (where we denote $y_v^{t(i)} = f^t(x_v^{t(i)},\epsilon_v^{t(i)})$, and $\sigma^t$ are i.i.d. Rademacher variables):
   \begin{align}
       \Eml{\sigma, \etr{S}} \Biggl[\sup_\lambda \frac{1}{n_vT} \sum_{t,i} \sigma^t {l(X_v^{t(i)\intercal}\wh_\lambda^t,y_v^{t(i)})}\Biggr]\leq \frac{ML\Lambda_D^T
       }{\sqrt{T}}\E{}{\|x_v\|}
        + \frac{MLb_v\Lambda_D^T}{\sqrt{n_vT}}\sqrt{\frac{\log(T/\delta)}{2}}.\nonumber
   \end{align}
   where $M^2 = \max\|Xy\|^2$, $b_v^2 = \max \|x_v\|^2$ and $\Lambda_D^T = \E{X}{\max_t 1/V(X^tX^{t\intercal})}$. 
\end{lemma}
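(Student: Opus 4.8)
The plan is to read the left-hand side as the expectation, over the draw of the training data $\etr{S}$, of a Rademacher complexity of a \emph{one-parameter} function class over the $T$ tasks — task $t$ carrying the single Rademacher sign $\sigma^t$ — and to contract it down to the Rademacher complexity of the scalar map $\lambda\mapsto 1/(V^T+\lambda)$, which is trivial to control. Concretely, write $g_t(\lambda):=\sum_i l\big(X_v^{t(i)\intercal}\wh^t_\lambda,\,y_v^{t(i)}\big)$ and $V^T:=\min_t V(X^tX^{t\intercal})$, so the quantity in question is $\tfrac1{n_vT}\,\E{\sigma,\etr{S}}{\sup_\lambda\sum_t\sigma^t g_t(\lambda)}$, with the claimed probability taken over the validation sample.

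The heart of the argument is to show that, for a fixed training sample, each $g_t$ is Lipschitz — in the sense of Definition~\ref{def:lips_general} — as a function of $\psi(\lambda):=1/(V^T+\lambda)$, with Lipschitz constant at most $LM\sum_i\|X_v^{t(i)}\|$. I would get this in three steps: (i) apply Assumption~\ref{assum:lips} term by term, then Cauchy--Schwarz, to obtain $g_t(\lambda_1)-g_t(\lambda_2)\le L\,\big\|\sum_i X_v^{t(i)}\big\|\,\|\wh^t_{\lambda_1}-\wh^t_{\lambda_2}\|$ (and the reverse inequality by swapping $\lambda_1,\lambda_2$); (ii) expand $\wh^t_\lambda$ in the SVD basis $X^t=\sum_j s_j u_j v_j^\intercal$, so $\wh^t_{\lambda_1}-\wh^t_{\lambda_2}=\sum_{j:\,s_j\neq 0}\big(\tfrac{s_j}{s_j^2+\lambda_1}-\tfrac{s_j}{s_j^2+\lambda_2}\big)(v_j^\intercal y^t)\,u_j$; and (iii) use the elementary estimate $\big|\tfrac{s}{s^2+\lambda_1}-\tfrac{s}{s^2+\lambda_2}\big|\le s\,|\psi(\lambda_1)-\psi(\lambda_2)|$, which holds whenever $s^2\ge V^T$ — and every nonzero singular value $s_j$ of every $X^t$ satisfies $s_j^2\ge V(X^tX^{t\intercal})\ge V^T$. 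Steps (ii)--(iii) together give $\|\wh^t_{\lambda_1}-\wh^t_{\lambda_2}\|\le \|X^ty^t\|\,|\psi(\lambda_1)-\psi(\lambda_2)|\le M\,|\psi(\lambda_1)-\psi(\lambda_2)|$, which is exactly the spectral control needed, and the triangle inequality turns $\|\sum_i X_v^{t(i)}\|$ into $\sum_i\|X_v^{t(i)}\|$.

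Next I would invoke the generalized contraction lemma (Theorem~\ref{thm:rad_lips_general}, in its standard per-coordinate strengthening allowing a different outer function $\phi_t$ at each coordinate while sharing the inner feature $\psi$) to pull the per-task constants $L_t:=LM\sum_i\|X_v^{t(i)}\|$ outside: $\tfrac1{n_vT}\E{\sigma}{\sup_\lambda\sum_t\sigma^t g_t(\lambda)}\le \tfrac1{n_vT}\E{\sigma}{\sup_\lambda\sum_t\sigma^t L_t\,\psi(\lambda)}$. Since $\psi(\lambda)$ is a scalar lying in $(0,\,1/V^T]$ as $\lambda$ ranges over $[0,\infty)$, the inner supremum is at most $\tfrac1{V^T}\big|\sum_t\sigma^t L_t\big|$, and Khintchine's inequality (Theorem~\ref{thm:khintchine}) bounds its $\sigma$-expectation by $\sqrt{\sum_t L_t^2}\le\sqrt T\,\max_t L_t$. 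Taking the expectation over $\etr{S}$ uses independence of the training and validation data together with $\E{\etr{S}}{1/V^T}=\Lambda_D^T$; and a Hoeffding bound (Theorem~\ref{thm:Hoeffding}) on $\tfrac1{n_v}\sum_i\|X_v^{t(i)}\|$ for each fixed task (valid since $\|X_v^{t(i)}\|\le b_v$), union-bounded over the $T$ tasks, replaces $\max_t L_t$ by $LM\big(n_v\,\E{}{\|x_v\|}+b_v\sqrt{n_v\log(T/\delta)/2}\big)$ with probability $\ge 1-\delta$. Dividing through by $n_vT$ then yields exactly the two claimed terms.

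The main obstacle is the Lipschitz-in-$1/(V^T+\lambda)$ claim of the second paragraph: $\wh^t_\lambda$ is defined as an $\argmin$, and one needs its sensitivity in $\lambda$ controlled by a \emph{single} scalar $V^T$ shared across all tasks, since otherwise the contraction step would involve a task-dependent inner feature and the clean one-parameter reduction would break. The spectral computation and the inequality $\big|\tfrac{s}{s^2+\lambda_1}-\tfrac{s}{s^2+\lambda_2}\big|\le s\,|\psi(\lambda_1)-\psi(\lambda_2)|$ for $s^2\ge V^T$ are the crux; once these are in hand, the contraction, the one-parameter supremum, Khintchine, and the union-bounded concentration over tasks are all routine, and are where the precise form of the second (lower-order) term is pinned down.
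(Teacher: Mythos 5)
Your argument is correct and is essentially the proof in the paper: the same contraction to Lipschitzness in $\psi(\lambda)=1/(V^T+\lambda)$, the same $\sup_\lambda$ pushed to $\lambda=0$, Khintchine, and a Hoeffding/union-bound over tasks for $\sum_i\|X_v^{t(i)}\|$. The only differences are cosmetic — you derive the spectral Lipschitz estimate as a direct $\ell_2$ bound on $\|\wh^t_{\lambda_1}-\wh^t_{\lambda_2}\|$ followed by Cauchy--Schwarz and the triangle inequality, whereas the paper bounds the operator norm of $(XX^\intercal+\lambda_1 I)^{-1}-(XX^\intercal+\lambda_2 I)^{-1}$ restricted to the column space of $X$ and sums term by term (same final constant); and you are explicit that the per-coordinate version of the contraction lemma (Theorem~\ref{thm:rad_lips_general}) is needed to handle task-dependent Lipschitz constants, a point the paper uses but does not call out.
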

\begin{proof}
We first note that Lipschitzness of the loss function implies Lipschitzness of the sum of the loss function over different examples:
\begin{align}
    l(x_v^\intercal\wh_1, y_v) - l(x_v^\intercal\wh_2, y_v) &\leq L|x_v^\intercal\wh_1 - x_v^\intercal\wh_2|\nonumber\\
    \implies \sum_i{l(X_v^{t(i)\intercal}\wh_{\lambda_1}, y_v^{t(i)}) - l(X_v^{t(i)\intercal}\wh_{\lambda_2}, y_v^{t(i)})} &\leq L\sum_{i}|X_v^{t(i)\intercal}(\wh_{\lambda_1} - \wh_{\lambda_2})|\nonumber
\end{align}
Using Lipschitzness (Theorem \ref{thm:rad_lips_general}):
\begin{align}
    \E{\sigma, \etr{S}}{\sup_\lambda \frac{1}{n_vT} \sum_{t,i} \sigma^t {l(X_v^{t(i)\intercal}\wh_\lambda^t,y_v^{t(i)})}}&\leq \frac{L}{n_vT}\E{\sigma, \etr{S}}{\sup_\lambda \sum_{t} \sigma^t (\sum_i X_v^{t(i)\intercal}\wh_\lambda^t)}.\label{step:task_rc_l2}
\end{align}
In order to derive a tight upper bound on the above Rademacher complexity, we introduce a new technique, where we argue lipschitzness of $x_v^\intercal\wh_\lambda$ in another function as follows.

If the SVD of $X = U_1PU_2^\intercal$, define $1_P$ as a diagonal matrix such that $(1_P)_{ii} = \mathbf{1}[P_{ii} \neq 0]$. So, $X = U_11_PU_1^\intercal X$. We can use this to argue Lipschitzness of  $x_{v}^{\intercal}\wh_{\lambda}$:
\begin{align*}
    x_v^\intercal\wh_{\lambda_1} - x_v^\intercal\wh_{\lambda_2} &= x_v^\intercal((XX^\intercal + \lambda_1 I)^{-1} - (XX^\intercal  + \lambda_2 I)^{-1})Xy\nonumber\\
    &= x_v^\intercal((XX^\intercal + \lambda_1 I)^{-1} - (XX^\intercal  + \lambda_2 I)^{-1})U_11_PU_1^\intercal Xy\nonumber\\
    &\leq\|x_v^\intercal((XX^\intercal + \lambda_1 I)^{-1} - (XX^\intercal + \lambda_2 I)^{-1})U_11_PU_1^\intercal\|\|Xy\|\nonumber\\
    &\leq\|x_v^\intercal\|\|((XX^\intercal + \lambda_1 I)^{-1} - (XX^\intercal + \lambda_2 I)^{-1})U_11_PU_1^\intercal\|_\infty\|Xy\|.
\end{align*}
Now we see that the SVD of $(XX^\intercal + \lambda I)^{-1}$ is $U_1MU_1^\intercal$ for some positive-definite diagonal matrix $M$. The non-zero singular values of $((XX^\intercal + \lambda_1 I)^{-1} - (XX^\intercal + \lambda_2 I)^{-1})U_11_PU_1^\intercal$ are $\frac{\lambda_2 - \lambda_1}{(e_i + \lambda_1)(e_i + \lambda_2)}$ if $e_i$ are the non-zero eigenvalues of $XX^\intercal$. Remember that  $V(XX^\intercal)$ is the smallest non-0 eigenvalue of $XX^\intercal$, and define $V^T = \min_t V(X^tX^{t\intercal})$ to see that $\|((XX^\intercal + \lambda_1 I)^{-1} - (XX^\intercal + \lambda_2 I)^{-1})U_11_PU_1^\intercal\|_\infty = \frac{1}{V(XX^\intercal) + \lambda_1} - \frac{1}{V(XX^\intercal) + \lambda_2}$. Thus,
\begin{align}
    x_v^\intercal\wh_{\lambda_1} - x_v^\intercal\wh_{\lambda_2} &\leq \|x_v^\intercal\|\|((XX^\intercal + \lambda_1 I)^{-1} - (XX^\intercal + \lambda_2 I)^{-1})U_11_PU_1^\intercal\|_\infty\|Xy\|\nonumber\\
    &= \|x_v\|\left|\frac{1}{V(XX^\intercal) + \lambda_1} - \frac{1}{V(XX^\intercal) + \lambda_2}\right|\|Xy\| \nonumber\\ 
    &\leq \|x_v\|\left|\frac{1}{V^T + \lambda_1} - \frac{1}{V^T + \lambda_2}\right|\|Xy\|.\label{eq:lamb_lips}
\end{align}
This shows the Lipschitzness of $x_v^\intercal\wh_\lambda$ in terms of $\frac{1}{V^T + \lambda}$ in line with Definition \ref{def:lips_general}. Using this Lipschitzness (Theorem \ref{thm:rad_lips_general}) in Equation \ref{step:task_rc_l2},
\begin{align}
    \Eml{\sigma, \etr{S}}\biggl[\sup_\lambda \frac{1}{n_vT} \sum_{t,i} \sigma^t &{l(X_v^{t(i)\intercal}\wh_\lambda^t,y_v^{t(i)})}\biggr]\leq \frac{L}{n_vT}\E{\sigma, \etr{S}}{\sup_\lambda \sum_{t} \sigma^t \left(\sum_i X_v^{t(i)\intercal}\wh_\lambda^t\right)}\nonumber\\
    &\leq \frac{L}{n_vT}\E{\sigma, \etr{S}}{\sup_\lambda \sum_{t} \sigma^t \frac{\sum_i\|X_v^{t(i)}\|\|X^ty^t\|}{V^T + \lambda}}\nonumber\\
    &\leq \frac{L}{n_vT}\E{\sigma, \etr{S}}{\left(\sup_\lambda \frac{1}{V^T + \lambda} \right) \left|\sum_{t} \sigma^t \left(\sum_i\|X_v^{t(i)}\|\|X^ty^t\|\right)\right|}\nonumber\\
    &= \frac{L}{n_vT}\E{\etr{S}}{\left(\sup_\lambda \frac{1}{V^T + \lambda} \right) \E{\sigma}{\left|\sum_{t} \sigma^t \left(\sum_i\|X_v^{t(i)}\|\|X^ty^t\|\right)\right|}}\nonumber\\
    &\leq \frac{L}{n_vT}\E{\etr{S}}{\frac{\sqrt{\sum_t(\sum_i \|X_v^{t(i)}\|)^2\|X^ty^t\|^2}}{V^T}}.\nonumber
\end{align}
We use Khintchine's inequality (Theorem \ref{thm:khintchine}) and set $\lambda = 0$ in the last step. To get the desired result we need to simplify the numerator. We note that for any $t\in[T]$, with probability $\geq 1-\delta$ by Hoeffding inequality (Theorem \ref{thm:Hoeffding}),
\begin{align*}
    \sum_i \|X_v^{t(i)}\| &\leq n_v \E{}{\|x_v\|} + b_v\sqrt{\frac{n_v\log(1/\delta)}{2}}.
\end{align*}
By a union bound over all tasks, we get that for all tasks $t\in[T]$, with probability $\geq 1-\delta$
\begin{align*}
    \sum_i \|X_v^{t(i)}\| &\leq n_v \E{}{\|x_v\|} + b_v\sqrt{\frac{n_v\log(T/\delta)}{2}}\\
    \implies \left(\sum_i \|X_v^{t(i)}\|\right)^2 &\leq \left(n_v \E{}{\|x_v\|} + b_v\sqrt{\frac{n_v\log(T/\delta)}{2}}\right)^2.
\end{align*}
We sum the above over all tasks and note from definition that $\|X^ty^t\|^2\leq M^2$ to get,
\begin{align*}
    \sum_t\left(\sum_i \|X_v^{t(i)}\|\right)^2\|X^ty^t\|^2 &\leq TM^2\left(n_v \E{}{\|x_v\|} + b_v\sqrt{\frac{n_v\log(T/\delta)}{2}}\right)^2\\
    \implies \sqrt{\sum_t\left(\sum_i \|X_v^{t(i)}\|\right)^2\|X^ty^t\|^2} &\leq n_v\sqrt{T} M\E{}{\|x_v\|} + b_vM\sqrt{\frac{n_vT\log(T/\delta)}{2}}.
\end{align*}
This gives the desired result.
\end{proof}

Below we present an additional lemma that is tighter than Lemma \ref{lem:S_tr_2} for the well-specified case. We then restate and prove Theorem \ref{thm:lips_ws} using this lemma. 

\begin{lemma}\label{lem:S_tr_ws}
    Given a validation loss function that satisfies Assumptions \ref{assum:bound} and \ref{assum:lips} given in Section \ref{sec:ridge}, $\eval{S}$ as defined in Equation \ref{def:eval}, the following holds for well-specified linear tasks (where we denote $y_v^{t(i)} = f^t(x_v^{t(i)},\epsilon_v^{t(i)})$, and $\sigma^{t(i)}$ are i.i.d. Rademacher variables):
   \begin{align}
       \Eml{\sigma, \eval{S}}\Biggl[\sup_\lambda \frac{1}{n_vT} \sum_{t,i} &\sigma^{t(i)} \Eml{X,f,\epsilon}\bigg[l(X_v^{t(i)\intercal}\wh_\lambda,y_v^{t(i)})\bigg]\Biggr]\leq\nonumber\\
       &\frac{L}{\sqrt{n_vT}} \sqrt{\E{x_v}{\|x_v\|^2}}\E{}{\|\ws{}\| + \|\epsilon\|/\sqrt{V(XX^\intercal)}}.
   \end{align}
\end{lemma}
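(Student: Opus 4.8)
The plan is to mirror the proof of Lemma~\ref{lem:S_tr_2}, but to exploit the well-specified structure $y_v^{t(i)} = X_v^{t(i)\intercal}\ws{t} + \epsilon_v^{t(i)}$ so that Lipschitzness of $l$ is applied around the ``true'' prediction rather than around the origin. Concretely, write $l(X_v^{t(i)\intercal}\wh_\lambda, y_v^{t(i)}) = l(X_v^{t(i)\intercal}\wh_\lambda, y_v^{t(i)}) - l(X_v^{t(i)\intercal}\ws{t}, y_v^{t(i)}) + l(X_v^{t(i)\intercal}\ws{t}, y_v^{t(i)})$. The last term does not depend on $\lambda$, so when we take $\sup_\lambda$ inside the Rademacher expectation it contributes a term $\E{\sigma,\eval{S}}{\frac{1}{n_vT}\sum_{t,i}\sigma^{t(i)} \E{X,f,\epsilon}{l(X_v^{t(i)\intercal}\ws{t}, y_v^{t(i)})}}$, which vanishes because the summand is independent of $\sigma^{t(i)}$ (or can be absorbed into the ghost-sample symmetrization exactly as in Lemma~\ref{lem:sup_to_rad_2}). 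Using Assumption~\ref{assum:lips} in the form $l(x_v^\intercal\wh_\lambda, y_v) - l(x_v^\intercal\ws{}, y_v) \le L\, x_v^\intercal(\wh_\lambda - \ws{})$ and then the generalized contraction lemma (Theorem~\ref{thm:rad_lips_general}), I would bound the Rademacher term by $\frac{L}{n_vT}\E{\sigma,X,\ws{},\epsilon}{\sup_\lambda \sum_{t,i}\sigma^{t(i)} X_v^{t(i)\intercal}(\wh_\lambda^t - \ws{t})}$.

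Next I would proceed exactly as in Lemma~\ref{lem:S_tr_2}: pull out $(\sum_{t,i}\sigma^{t(i)} X_v^{t(i)})^\intercal$, apply Cauchy–Schwarz to split off $\|\wh_\lambda^t - \ws{t}\|$, and use Jensen plus the fact that cross terms vanish under $\E{\sigma}{\cdot}$ to get $\E{\sigma}{\|\sum_{t,i}\sigma^{t(i)} X_v^{t(i)}\|} \le \sqrt{\sum_{t,i}\|X_v^{t(i)}\|^2}$, followed by $\E{\eval{S}}{\sqrt{\sum_{t,i}\|X_v^{t(i)}\|^2}} \le \sqrt{n_vT}\sqrt{\E{x_v}{\|x_v\|^2}}$. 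This leaves the factor $\E{X,\ws{},\epsilon}{\sup_\lambda \|\wh_\lambda - \ws{}\|}$. Here I use $\wh_\lambda = (XX^\intercal + \lambda I)^{-1}Xy$ with $y = X^\intercal\ws{} + \epsilon$, so $\wh_\lambda - \ws{} = (XX^\intercal + \lambda I)^{-1}XX^\intercal\ws{} - \ws{} + (XX^\intercal + \lambda I)^{-1}X\epsilon = -\lambda(XX^\intercal + \lambda I)^{-1}\ws{} + (XX^\intercal + \lambda I)^{-1}X\epsilon$ (on the row space of $X$; on the orthogonal complement $\wh_\lambda$ is $0$, contributing at most $\|\ws{}\|$). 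By the triangle inequality, $\|\wh_\lambda - \ws{}\| \le \|\lambda(XX^\intercal + \lambda I)^{-1}\ws{}\| + \|(XX^\intercal + \lambda I)^{-1}X\epsilon\| \le \|\ws{}\| + \|\epsilon\|/\sqrt{V(XX^\intercal)}$, using that the singular values $\lambda/(s_i^2+\lambda)$ are at most $1$ and $s_i/(s_i^2+\lambda) \le 1/|s_i|$ on nonzero singular values (the same spectral estimate as in Eq.~\eqref{eq:w_norm_max}). Taking $\sup_\lambda$ and then expectation gives $\E{}{\sup_\lambda\|\wh_\lambda - \ws{}\|} \le \E{}{\|\ws{}\| + \|\epsilon\|/\sqrt{V(XX^\intercal)}}$, and combining the three factors yields the claimed bound.

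The main obstacle I anticipate is handling the $-l(X_v^{t(i)\intercal}\ws{t}, y_v^{t(i)})$ baseline term correctly: one must be careful that subtracting a $\lambda$-independent quantity from each summand is legitimate before applying the contraction lemma (it is, since $\sup_\lambda(a_\lambda + c) = c + \sup_\lambda a_\lambda$, and the constant $c$, being a function of the i.i.d.\ validation sample only, has zero Rademacher expectation by the symmetrization step). A secondary subtlety is the decomposition of $\wh_\lambda - \ws{}$ when $X$ is rank-deficient, where one should project onto the column space of $X$ and bound the complementary component of $\ws{}$ separately; this is why the $\|\ws{}\|$ term (rather than something decaying in $\lambda$) appears in the final bound. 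Everything else is a routine repetition of the computations in Lemmas~\ref{lem:S_tr_2} and~\ref{lem:S_v_2}.
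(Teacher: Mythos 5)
Your proof is correct and yields exactly the claimed bound, but it takes a genuinely different (and arguably cleaner) route from the paper's. The paper's appendix proof simply reuses Lemma~\ref{lem:S_tr_2} verbatim up through Equation~\ref{step:S_tr_2_xvub}—that is, it applies the contraction lemma to $x_v^\intercal\wh_\lambda$ itself, arriving at the factor $\E{X,\ws{},\epsilon}{\sup_\lambda\|\wh_\lambda\|}$—and then exploits well-specification only in the last step by decomposing $\wh_\lambda=(XX^\intercal+\lambda I)^{-1}XX^\intercal\ws{}+(XX^\intercal+\lambda I)^{-1}X\epsilon$ and bounding each operator norm by $1$ and $1/\sqrt{V(XX^\intercal)}$ respectively. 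You instead re-center the loss at $x_v^\intercal\ws{}$ before contracting, obtaining the factor $\E{X,\ws{},\epsilon}{\sup_\lambda\|\wh_\lambda-\ws{}\|}$, and then use $\wh_\lambda-\ws{}=-\lambda(XX^\intercal+\lambda I)^{-1}\ws{}+(XX^\intercal+\lambda I)^{-1}X\epsilon$; the spectral bounds are the same and you land on $\|\ws{}\|+\|\epsilon\|/\sqrt{V(XX^\intercal)}$ either way. The extra step in your approach—verifying that the $\lambda$-independent baseline $\sigma^{t(i)}\E{X,f,\epsilon}{l(X_v^{t(i)\intercal}\ws{},y_v^{t(i)})}$ has zero expectation over $\sigma$—is correct and necessary, and you handle the rank-deficient case properly via the observation that $\|\lambda(XX^\intercal+\lambda I)^{-1}\|\le 1$ globally. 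Interestingly, the paper's own proof sketch of Theorem~\ref{thm:lips_ws} announces ``Lipschitzness in $x_v^\intercal(\wh-\ws{})$'', which is your approach, while the appendix proof of Lemma~\ref{lem:S_tr_ws} takes the shortcut of bounding $\|\wh_\lambda\|$ directly; so you matched the stated intent more faithfully than the paper's own appendix does. Your intermediate quantity $\|\wh_\lambda-\ws{}\|$ is also the more natural one (it tends to $0$ in favorable regimes), though for the purposes of this lemma both yield the same worst-case bound.
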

\begin{proof}
We will proceed similarly to Lemma \ref{lem:S_tr_2} till Equation \ref{step:S_tr_2_xvub}. We now need to upper bound $\E{X,f,\epsilon}{\sup_\lambda \|\wh_\lambda\|}$ using the well-specified assumption. If we denote $y=X^\intercal\ws{} + \epsilon$, we see that,
\begin{align}
    \wh_\lambda &= (XX^\intercal + \lambda I)^{-1}Xy\nonumber\\
    &= (XX^\intercal + \lambda I)^{-1}(XX^\intercal \ws{} + X\epsilon)\nonumber\\
    &= (XX^\intercal + \lambda I)^{-1}XX^\intercal\ws{} + (XX^\intercal + \lambda I)^{-1}X\epsilon\nonumber\\
    \implies \|\wh_\lambda\| &\leq \|(XX^\intercal + \lambda I)^{-1}XX^\intercal\ws{}\| + \|(XX^\intercal + \lambda I)^{-1}X\epsilon\|\nonumber\\
    &\leq \|(XX^\intercal + \lambda I)^{-1}XX^\intercal\|_\infty\|\ws{}\| + \|(XX^\intercal + \lambda I)^{-1}X\|_\infty\|\epsilon\|\nonumber\\
    \implies \sup_\lambda \|\wh_\lambda\| &\leq \|\ws{}\| + \|\epsilon\|/\sqrt{V(XX^\intercal)}\nonumber\\
    \implies \E{X,\ws{},\epsilon}{\sup_\lambda \|\wh_\lambda\|} &\leq \E{}{\|\ws{}\| + \|\epsilon\|/\sqrt{V(XX^\intercal)}}.\nonumber
\end{align}
Where in the second last step, we set $\lambda \rightarrow 0$ using the fact that the eigenvalues of $(XX^\intercal + \lambda I)^{-1}XX^\intercal$ are $\frac{\lambda_i}{\lambda_i + \lambda}$ and the singular values of $(XX^\intercal + \lambda I)^{-1}X$ are $\frac{\sqrt{\lambda_i}}{\lambda_i + \lambda}$ if the eigenvalues of $XX^\intercal$ are $\lambda_i$ respectively.

Proceeding through the the rest of the steps similarly to Lemma \ref{lem:S_tr_2}, we obtain the desired result.
\end{proof}

\begin{theorem}[Proof of Theorem \ref{thm:lips_ws}]
    Given a loss function that satisfies Assumptions \ref{assum:bound} and \ref{assum:lips} in Section \ref{sec:ridge}, and tasks that are well-specified linear maps, the expected validation loss error using the ERM estimator defined in Equation \ref{def:lamb_erm} is bounded with probability $\geq 1-\delta$ as:
    \begin{align}
        l_v(\lambda_{ERM}) - l_v(\lambda^*) &\leq \frac{2ML\Lambda_D^T
        }{\sqrt{T}}\E{}{\|x_v\|} + \frac{2L}{\sqrt{n_vT}} \sqrt{\E{x_v}{\|x_v\|^2}}\E{}{\|\ws{}\| + \|\epsilon\|/\sqrt{V(XX^\intercal)}}\nonumber\\
       &\quad + \frac{2MLb_v\Lambda_D^T}{\sqrt{n_vT}}\sqrt{\frac{\log(4T/\delta)}{2}} + 5C\sqrt{\frac{\ln(16/\delta)}{2T}}.\nonumber
    \end{align}
    Here $M^2 = \max\|Xy\|^2$, $b_v^2 = \max \|x_v\|^2$ $\Lambda_D^T = \E{}{\max_t 1/V(X^tX^{t\intercal})}$. 
\end{theorem}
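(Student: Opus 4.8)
The plan is to follow the same three-term decomposition used in the proof of Theorem~\ref{thm:lips_tgt}, swapping in the well-specified refinement only where the estimator norm is bounded. First I would write
\[
l_v(\lambda_{ERM}) - l_v(\lambda^*) = \big(l_v(\lambda_{ERM}) - \lv(\lambda_{ERM},S)\big) + \big(\lv(\lambda_{ERM},S) - \lv(\lambda^*,S)\big) + \big(\lv(\lambda^*,S) - l_v(\lambda^*)\big),
\]
observe that the middle term is $\le 0$ by the definition of $\lambda_{ERM}$ (Equation~\ref{def:lamb_erm}), and bound the last term by $C\sqrt{\ln(1/\delta)/(2T)}$ via Hoeffding's inequality (Theorem~\ref{thm:Hoeffding}) over the $T$ i.i.d.\ tasks. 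The first term is at most $\sup_\lambda\big(l_v(\lambda)-\lv(\lambda,S)\big)$, which by Lemma~\ref{lem:sup_to_rad_2} is bounded, with probability $\ge 1-\delta$, by twice the expected Rademacher complexity of the fixed-validation-set validation loss, plus twice the expected Rademacher complexity of the \emph{expected} validation loss over the training draw, plus $2C\sqrt{2\ln(4/\delta)/T}$.

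For the fixed-validation-set term I would apply Lemma~\ref{lem:S_v_2} unchanged: this is where the ``Lipschitz in another function'' device (Theorem~\ref{thm:rad_lips_general}) is used to show $\sum_i l(X_v^{t(i)\intercal}\wh_\lambda^t,y_v^{t(i)})$ is Lipschitz in $1/(V^T+\lambda)$, after which Khintchine's inequality (Theorem~\ref{thm:khintchine}), setting $\lambda\to 0$, and a union bound over tasks via Hoeffding produce the $\frac{ML\Lambda_D^T}{\sqrt T}\E{}{\|x_v\|} + \frac{MLb_v\Lambda_D^T}{\sqrt{n_vT}}\sqrt{\log(T/\delta)/2}$ contribution. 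Nothing in this step uses well-specification, so it carries over verbatim from Theorem~\ref{thm:lips_tgt}.

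The only genuinely different step is the training-draw Rademacher term, where I would replace Lemma~\ref{lem:S_tr_2} by its well-specified analogue Lemma~\ref{lem:S_tr_ws}. Here one uses $y = X^\intercal\ws{} + \epsilon$ to split $\wh_\lambda = (XX^\intercal+\lambda I)^{-1}XX^\intercal\ws{} + (XX^\intercal+\lambda I)^{-1}X\epsilon$; since the eigenvalues $\lambda_i/(\lambda_i+\lambda)$ are $\le 1$ and the singular values $\sqrt{\lambda_i}/(\lambda_i+\lambda)$ are $\le 1/\sqrt{\lambda_i}$, taking $\lambda\to 0$ yields $\sup_\lambda\|\wh_\lambda\| \le \|\ws{}\| + \|\epsilon\|/\sqrt{V(XX^\intercal)}$, which is tighter than the $\|y\|/\sqrt{V(XX^\intercal)}$ bound of Lemma~\ref{lem:S_tr_2} because the $\|\ws{}\|$ part is no longer amplified by the conditioning of $XX^\intercal$. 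Combined with the Cauchy--Schwarz and Rademacher-averaging steps inherited from the proof of Lemma~\ref{lem:S_tr_2} (which bound $\E{\sigma}{\|\sum_{t,i}\sigma^{t(i)}X_v^{t(i)}\|}$ by $\sqrt{n_vT\,\E{x_v}{\|x_v\|^2}}$ using i.i.d.\ validation examples), this gives the $\frac{L}{\sqrt{n_vT}}\sqrt{\E{x_v}{\|x_v\|^2}}\,\E{}{\|\ws{}\| + \|\epsilon\|/\sqrt{V(XX^\intercal)}}$ term. Conceptually, the one subtlety --- already handled inside Lemma~\ref{lem:S_tr_ws} --- is recognizing that contracting with respect to $x_v^\intercal(\wh_\lambda - \ws{})$ rather than $x_v^\intercal\wh_\lambda$ is exactly what surfaces this sharper $\|\ws{}\|$ dependence.

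To finish, I would set $\delta\mapsto\delta/4$ in each of the three high-probability statements (the Hoeffding bound on $\lv(\lambda^*,S)$, Lemma~\ref{lem:sup_to_rad_2}, and Lemma~\ref{lem:S_v_2}), union bound so everything holds with probability $\ge 1-\delta$, and collect the $O(1/\sqrt T)$ residuals, using $C\sqrt{\ln(4/\delta)/(2T)} \le \tfrac{C}{2}\sqrt{2\ln(16/\delta)/T}$ to merge them into the stated $5C\sqrt{\ln(16/\delta)/(2T)}$. I do not expect a real obstacle, since the substantive work lives in the cited lemmas; the main care points are keeping straight which $\delta/4$-split feeds which term and the constant-chasing in the final line.
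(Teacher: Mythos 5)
Your proposal reproduces the paper's proof of Theorem~\ref{thm:lips_ws}: the same three-term decomposition, the same use of Lemma~\ref{lem:sup_to_rad_2} and Lemma~\ref{lem:S_v_2} for the two Rademacher terms, the substitution of the well-specified Lemma~\ref{lem:S_tr_ws} in place of Lemma~\ref{lem:S_tr_2}, and the identical $\delta/4$ union-bound and constant-merging at the end. No gaps; this is essentially the paper's argument.
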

\begin{proof}
Proceeding similarly to Theorem \ref{thm:lips_tgt},
\begin{align}
    l_v(\lambda_{ERM}) - l_v(\lambda^*) \leq \sup_\lambda (l_v(\lambda) - l_{v}(\lambda,S))+ C\sqrt{\frac{\ln(1/\delta)}{2T}}\nonumber
\end{align}
with probability at least $1-\delta$. Using Lemma \ref{lem:sup_to_rad_2} again, we break the first error term into error induced from a finite sampling of validation examples, and error induced from finite sampling of training data to get that with probability $\geq 1-\delta$:
\begin{align}
        \sup_\lambda l_v(\lambda) - l_{v}(\lambda,S) &\leq 2\E{\sigma, \etr{S}}{\sup_\lambda \frac{1}{n_vT} \sum_{t,i} \sigma^t l(X_v^{t(i)\intercal}\wh_\lambda^t,y_v^{t(i)})}\nonumber\\
        &+ 2\E{\sigma, \eval{S}}{\sup_\lambda \frac{1}{n_vT} \sum_{t,i} \sigma^{t(i)}\E{X,f,\epsilon}{l(X_v^{t(i)\intercal}\wh_\lambda,y_v^{t(i)})}}\nonumber\\
        &+ 2C\sqrt{\frac{2\ln(4/\delta)}{T}}.\nonumber
\end{align}

In Lemma \ref{lem:S_v_2}, we see that $\sum_i l(X_v^{t(i)\intercal}\wh_\lambda^t,y_v^{t(i)})$ is Lipschitz in $\frac{1}{V^T + \lambda}$ for fixed $y_v^{t(i)}$. Here $V^T = \min_t V(X^tX^{t\intercal})$ and $V(.)$ is the smallest non-zero eigenvalue of the matrix. We use this Lipschitzness to bound the first term with probability $\geq 1-\delta$ as:
\begin{align}
       \Eml{\sigma, \etr{S}} \Biggl[\sup_\lambda \frac{1}{n_vT} \sum_{t,i} \sigma^t {l(X_v^{t(i)\intercal}\wh_\lambda^t,y_v^{t(i)})}\Biggr]\leq \frac{ML\Lambda_D^T
       }{\sqrt{T}}\E{}{\|x_v\|}
        + \frac{MLb_v\Lambda_D^T}{\sqrt{n_vT}}\sqrt{\frac{\log(T/\delta)}{2}}.\nonumber
\end{align}
    
Lemma \ref{lem:S_tr_ws} uses Lipschitzness of the loss function to upper bound the second term with probability $\geq 1-\delta$ as:
\begin{align*}
    \Eml{\sigma, \eval{S}}\Biggl[\sup_\lambda \frac{1}{n_vT} \sum_{t,i} &\sigma^{t(i)} \Eml{X,f,\epsilon}\bigg[l(X_v^{t(i)\intercal}\wh_\lambda,y_v^{t(i)})\bigg]\Biggr]\leq\nonumber\\
       &\frac{L}{\sqrt{n_vT}} \sqrt{\E{x_v}{\|x_v\|^2}}\E{}{\|\ws{}\| + \|\epsilon\|/\sqrt{V(XX^\intercal)}}.
\end{align*}
We now replace $\delta$ by $\delta/4$ in the 4 probabilistic bounds above so that the following holds with probability at least $1-\delta$:
\begin{align*}
    l_v(\lambda_{ERM}) &- l_v(\lambda^*) \leq\nonumber\\
        &\frac{2ML\Lambda_D^T
        }{\sqrt{T}}\E{x_v}{\|x_v\|} + \frac{2L}{\sqrt{n_vT}} \sqrt{\E{x_v}{\|x_v\|^2}}\E{}{\|\ws{}\| + \|\epsilon\|/\sqrt{V(XX^\intercal)}}\nonumber\\
        & 
        + \frac{2MLb_v\Lambda_D^T}{\sqrt{n_vT}}\sqrt{\frac{\log(4T/\delta)}{2}}
        + 2C\sqrt{\frac{2\ln(16/\delta)}{T}} + C\sqrt{\frac{\ln(4/\delta)}{2T}}.
\end{align*}
To get the desired result, we note that $C\sqrt{\frac{\ln(4/\delta)}{2T}} \leq C/2\sqrt{\frac{2\ln(16/\delta)}{T}}$.
\end{proof}

\section{Proof of Proposition \ref{prop:ex_isotropic}}\label{sec:supp_res}
\begin{proposition}[Proof of Proposition \ref{prop:ex_isotropic}]\label{lem:ex_isotropic_proof}
    Consider the expected validation error using an ERM estimator for the ridge parameter as defined in Equation \ref{def:lamb_erm}.
    Assume further that all tasks are well-specified such that all inputs $x$ are sampled from isotropic distributions with independent entries and bounded density. Concretely, assume that each entry in the input $x$ is sampled independently from a zero-mean distribution with density bounded by $C_0$ such that $\E{}{xx^\intercal} = \Sigma = \sigma_x^2/d I_d$. We further restrict the covariance matrices of both $x,\ws{}$ to have constant trace as $d$ increases. So, $tr(\Sigma) = \sigma_x^2 = const$ and $tr(\E{}{\ws{}\ws{\intercal}}) = \sigma_w^2 = const$. If $n \geq 6d$, the generalization error bound given in Theorem \ref{thm:lips_ws} is $O\left(\frac{1}{\sqrt{T}}(T^{2/d} + \sqrt{\log (T/\delta)})\right)$.
\end{proposition}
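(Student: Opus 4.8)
The plan is to instantiate Theorem~\ref{thm:lips_ws} under the stated assumptions; every distribution-dependent quantity appearing there is an absolute constant under the constant-trace hypotheses \emph{except} $\Lambda_D^T = \E{}{\max_t 1/V(X^tX^{t\intercal})}$, so the real work is to bound $\Lambda_D^T$. First I would observe that since $n \ge 6d$ each $X^t X^{t\intercal}$ has full rank $d$ almost surely (the entries of $x$ have a density), so $V(X^tX^{t\intercal}) = n\,\lambda_{\min}(\hat\Sigma^t_n)$ with $\hat\Sigma^t_n = \tfrac1n X^t X^{t\intercal}$. Whitening the input via $\tilde x = (\sqrt d/\sigma_x)\,x$ gives $\E{}{\tilde x\tilde x^\intercal} = I_d$ with entries still independent and densities bounded by $(\sigma_x/\sqrt d)C_0$, and $1/V(X^tX^{t\intercal}) = \tfrac{d}{n\sigma_x^2}\,\lambda_{\min}(\widetilde\Sigma^t_n)^{-1}$ where $\widetilde\Sigma^t_n$ is the sample covariance of the whitened inputs. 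Proposition~\ref{prop:ip_entry} then supplies the distance-to-hyperplane hypothesis of Theorem~\ref{thm:exp_cov_inv} with $\alpha=1$ and $C = 2\sqrt2\,(\sigma_x/\sqrt d)\,C_0$, so for $n \ge \max(6d,12)$ and every $1 \le q \le n/12$, $\;\E{}{\max(1,\lambda_{\min}(\widetilde\Sigma^t_n)^{-1})^q}^{1/q} \le 2^{1/q}C'$ with $C' = 3C^4 e^{10}$ an absolute constant (indeed $O(d^{-2})$ here).

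The key step is to pass from this single-task $q$-th moment bound to control of the maximum over the $T$ i.i.d.\ tasks. Writing $Z_t := \max(1,\lambda_{\min}(\widetilde\Sigma^t_n)^{-1})$, which are i.i.d.\ copies of some $Z$, and using $\max_{t\in[T]} Z_t \le (\sum_t Z_t^q)^{1/q}$ together with Jensen's inequality (concavity of $x \mapsto x^{1/q}$ for $q \ge 1$):
\begin{align*}
\E{}{\max_{t\in[T]} Z_t} \;\le\; \Big(\E{}{\textstyle\sum_t Z_t^q}\Big)^{1/q} \;=\; T^{1/q}\,\E{}{Z^q}^{1/q} \;\le\; (2T)^{1/q}\,C'.
\end{align*}
I would then choose $q = \Theta(d)$, which is an admissible exponent \emph{precisely} because $n \ge 6d$ forces $q \le n/12$; this yields $\E{}{\max_t Z_t} = O(T^{2/d})$ and therefore $\Lambda_D^T \le \tfrac{d}{n\sigma_x^2}\,\E{}{\max_t Z_t} = O\!\big(\tfrac{d}{n}T^{2/d}\big) = O(T^{2/d})$, the last bound using $d/n \le 1/6$. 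Taking $q=1$ in the same display gives $\E{}{1/V(XX^\intercal)} = O(d/n) = O(1)$.

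To finish, I would bound the remaining quantities in Theorem~\ref{thm:lips_ws} by constants: $\E{x_v}{\|x_v\|^2} = \mathrm{tr}(\Sigma) = \sigma_x^2$, $\E{}{\|\ws{}\|} \le \sqrt{\mathrm{tr}(\E{}{\ws{}\ws{\intercal}})} = \sigma_w$, and, using $\epsilon \perp X$ in the model, $\E{}{\|\epsilon\|/\sqrt{V(XX^\intercal)}} = \E{}{\|\epsilon\|}\,\E{}{V(XX^\intercal)^{-1/2}} \le \E{}{\|\epsilon\|}\sqrt{\E{}{1/V(XX^\intercal)}} = O(1)$, treating $\E{}{\|\epsilon\|}$ (and $M, b_v, L, C$) as constants, consistent with the boundedness setup of Section~\ref{sec:ridge}. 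Substituting into Theorem~\ref{thm:lips_ws}: the leading term is $\tfrac{2ML\Lambda_D^T}{\sqrt T}\E{}{\|x_v\|} = O(T^{2/d}/\sqrt T)$; the second term is $O(1/\sqrt{n_vT}) = O(1/\sqrt T)$; the third term $\tfrac{2MLb_v\Lambda_D^T}{\sqrt{n_vT}}\sqrt{\tfrac{\log(4T/\delta)}{2}}$ is $O(T^{2/d}\sqrt{\log(T/\delta)}/\sqrt T)$ and is dominated by the leading term whenever $n_v = \Omega(\log(T/\delta))$; and the last term is $O(\sqrt{\log(1/\delta)/T})$. Collecting these gives $l_v(\lambda_{ERM}) - l_v(\lambda^*) = O\!\big(\tfrac{1}{\sqrt T}(T^{2/d} + \sqrt{\log(T/\delta)})\big)$.

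The main obstacle — really the only nonroutine point — is the maximum-over-tasks step: converting the per-task moment bound of Theorem~\ref{thm:exp_cov_inv} into a bound on $\E{}{\max_t 1/V(X^tX^{t\intercal})}$ while paying only a $T^{2/d}$ factor. This is exactly where the hypothesis $n \ge 6d$ enters: it is what permits the choice $q = \Theta(d)$ (so $q \le n/12$) in the moment inequality of Theorem~\ref{thm:exp_cov_inv}, balancing the $T^{1/q}$ growth of the maximum against the feasibility constraint on $q$. Getting the whitening in the first step right, so that the bounded-density hypothesis of Proposition~\ref{prop:ip_entry} transfers to the normalized input, is the remaining (routine) bookkeeping.
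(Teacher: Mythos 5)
Your proposal is correct and follows the paper's own proof essentially step for step: whiten the input to apply Proposition~\ref{prop:ip_entry} and Theorem~\ref{thm:exp_cov_inv}, pass from the $q$-th moment bound to the max over $T$ tasks via $\max_t Z_t \le (\sum_t Z_t^q)^{1/q}$ and Jensen, choose $q = \Theta(d)$ (feasible since $n\geq 6d$ gives $q\le n/12$) to obtain $\Lambda_D^T = O(\frac{d}{n}T^{2/d})$, and then treat the remaining quantities as constants via the constant-trace assumptions. The paper picks $q=d/2$ explicitly and bounds $\E{}{\|\epsilon\|/\sqrt{V(XX^\intercal)}}$ by Cauchy--Schwarz rather than by your independence factorization, but these are cosmetic differences.

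You are in fact a bit more careful than the paper in two places. First, you correctly track that whitening scales the per-entry density bound by $\sigma_x/\sqrt{d}$ (the paper simply asserts the whitened entries still have density bounded by $C_0$); since the scaled bound is only tighter, this does not change the conclusion, but your bookkeeping is the right one. Second, you notice that the third term of Theorem~\ref{thm:lips_ws}, $\frac{2MLb_v\Lambda_D^T}{\sqrt{n_vT}}\sqrt{\log(4T/\delta)/2}$, carries the $\Lambda_D^T = O(T^{2/d})$ factor, so absorbing it into the claimed $O\big(\frac{1}{\sqrt T}(T^{2/d}+\sqrt{\log(T/\delta)})\big)$ requires a mild lower bound on $n_v$ such as $n_v=\Omega(\log(T/\delta))$; the paper silently drops the $\Lambda_D^T$ factor when collecting the $\delta$-dependent terms. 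Flagging this is a genuine improvement in rigor, even though it only touches a lower-order term.
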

\begin{proof}
    To instantiate the bound in Theorem \ref{thm:lips_ws}, we want to use Theorem \ref{thm:exp_cov_inv}, and make the following manipulations to fit their assumptions.
    Consider the random variable $x' = (\sqrt{d}/\sigma_x) x$. The covariance matrix of $x'$ is $\E{}{x'x^{\prime \intercal}} = I_d$, and each entry is independent with density bounded by $C_0$. We can thus use Proposition \ref{prop:ip_entry} to satisfy the assumptions of Theorem \ref{thm:exp_cov_inv} for $\alpha = 1, C = 2\sqrt{2}C_0 = O(1)$. Thus, if $n\ge \max(6d/\alpha, 12/\alpha) = \max(6d,12)$ and $1\leq q \leq \alpha n /12 = n/12$,
    \begin{align}
        \E{}{|\max(1,\lambda_{min}(\hat{\Sigma}'_n)^{-1})|^q}^{1/q} \leq 2^{1/q}C',\nonumber
    \end{align}
    where $C' = O(1)$ and $\hat{\Sigma}'_n = n^{-1} X'X^{\prime\intercal}$ is the sample covariance matrix of $x'$. Now, since $\lambda_{\min}(\hat{\Sigma}'_n)^{-1} \leq \max(1,\lambda_{\min}(\hat{\Sigma}'_n)^{-1})$,
    \begin{align}
        \E{}{\lambda_{\min}(\hat{\Sigma}'_n)^{-q}}^{1/q} &\leq \E{}{|\max(1,\lambda_{\min}(\hat{\Sigma}'_n)^{-1})|q}^{1/q} \nonumber\\
        \implies \E{}{n^q\lambda_{\min}(X'X^{\prime \intercal})^{-1}}^{1/q} &= O(1)\nonumber\\
        \implies \E{}{d^{-q}\lambda_{\min}(XX^\intercal)^{-q}}^{1/q} &= O(1/n)\nonumber\\
        \implies \E{}{\lambda_{\min}(XX^\intercal)^{-q}}^{1/q} &= O(d/n)\nonumber\\
        \implies \E{}{\frac{1}{V(XX^\intercal)^q}}^{1/q} &= O(d/n).\nonumber
    \end{align}

    Now, for any sequence of i.i.d.\ random variables $Z_1, \ldots, Z_N$, if $\E{}{Z^q}^{1/q}\leq C$ for $q\geq 1$, then
    \begin{align*}
        \E{}{\max(Z_1,\ldots, Z_N)} &= \E{}{\max(Z_1,\ldots, Z_N)}^{q/q}\\
        &\leq \E{}{\max(Z_1,\ldots, Z_N)^q}^{1/q} \quad \text{(Jensen's inequality)}\\
        &\leq (\E{}{NZ^q})^{1/q}\\
        &= N^{1/q}C.
    \end{align*}
    Thus, since $\E{}{\frac{1}{V(XX^\intercal)^q}}^{1/q} = O(d/n) \implies \Lambda_D^T = \E{}{\max_t(1/V(X^tX^{t\intercal}))} = O\left(\frac{d}{n}T^{1/q}\right)$. This holds if $n\ge \max(6d, 12)$ and $q\leq n/12$. We substitute $q=d/2$ to get $\Lambda_D^T = O\left(\frac{d}{n}T^{2/d}\right)$
    
    The bound given in Theorem \ref{thm:lips_ws} is reproduced as follows:
    \begin{align}
        l_v(\lambda_{ERM}) - l_v(\lambda^*) &\leq \frac{2ML\Lambda_D^T}{\sqrt{T}}\E{}{\|x_v\|} + \frac{2L}{\sqrt{n_vT}} \sqrt{\E{x_v}{\|x_v\|^2}}\E{}{\|\ws{}\| + \|\epsilon\|/\sqrt{V(XX^\intercal)}}\nonumber\\
       &\quad + \frac{2MLb_v\Lambda_D^T}{\sqrt{n_vT}}\sqrt{\frac{\log(4T/\delta)}{2}} + 5C\sqrt{\frac{\ln(16/\delta)}{2T}}.\nonumber
    \end{align}

    For the second term,
    \begin{align}
        \E{X,\ws{},\epsilon}{\|\ws{}\| + \|\epsilon\|/\sqrt{V(XX^\intercal)}} &\leq \sqrt{\E{}{\|\ws{}\|^2}} + \sqrt{\E{}{\|\epsilon\|^2}O(d/n)}\nonumber\\
        &= \sqrt{\E{}{tr(\ws{}\ws{\intercal})}} + O(\sqrt{d/n}) = \sigma_w + O(\sqrt{d/n}).
    \end{align}
    If we substitute $n \geq 6d$, we get that $\Lambda_D^T = O(T^{2/d})$, and $\E{X,\ws{},\epsilon}{\|\ws{}\| + \|\epsilon\|/\sqrt{V(XX^\intercal)}} = O(1)$. Further, all terms involving $\delta$ are $O(\log (T/\delta))$, using which we can rewrite the bound as:
    \begin{align*}
        l_v(\lambda_{ERM}) - l_v(\lambda^*) &\leq O\left(\frac{T^{2/d}}{\sqrt{T}} + \frac{1}{\sqrt{T}} + \sqrt{\frac{\log (T/\delta)}{T}})\right)\\
        &\leq O\left(\frac{1}{\sqrt{T}}\left(T^{2/d} + \sqrt{\log (T/\delta)}\right)\right).
    \end{align*}    
    Note that we used the fact that $\E{}{\|x_v\|} \leq \sqrt{\E{}{\|x_v\|^2}} = \sqrt{\E{}{tr(\Sigma)}} = O(1).$
\end{proof}

\section{Proofs for tuning LASSO}\label{app:lasso}

We first present relevant properties of LASSO solutions from prior work. Let $(X,y)$ with $X=[x_1,\dots,x_d]\in\R^{d\times m}$ and $y\in\R^{m}$ denote a (training) dataset consisting of $m$ labeled examples with $d$ features. LASSO  is given by the following optimization problem.

\[\min_{w\in\R^d}\norm{X^\intercal w-y}_2^2+\lambda_1||w||_1,\]
\noindent where $\lambda_1\in[\underline{\Lambda},\overline{\Lambda}]\subset\R_+$ is the L1 regularization penalty. We will use the following well-known facts about the solution of the LASSO optimization problem~\citep{fuchs2005recovery,tibshirani2013lasso} which follow from the Karush-Kuhn-Tucker (KKT) optimality conditions.

\begin{lemma}[KKT Optimality Conditions for LASSO] $w^*\in \argmin_{w\in\R^d}\norm{X^\intercal w-y}_2^2+\lambda_1||w||_1$ iff for all $j\in[d]$,
\begin{align*}
    x_j(X^\intercal w^*-y)&=\lambda_1\mathrm{sign}(w^*), &\text{ if }w^*_j\ne 0,\\
    |x_j(X^\intercal w^*-y)|&\le\lambda_1, &\text{ otherwise. }
\end{align*}
\label{lem:lasso-kkt}
\end{lemma}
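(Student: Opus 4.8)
The plan is to treat this as the standard subdifferential characterization of minimizers of a convex function. First I would observe that $F(w) := \norm{X^\intercal w-y}_2^2 + \lambda_1\norm{w}_1$ is convex and finite on all of $\R^d$ (it is the sum of a convex quadratic and the nonnegative scalar $\lambda_1$ times a norm), so $w^*$ is a global minimizer of $F$ if and only if $0\in\partial F(w^*)$. Since the quadratic term $g(w)=\norm{X^\intercal w-y}_2^2$ is everywhere differentiable with $\nabla g(w)=2X(X^\intercal w-y)$, the subdifferential sum rule applies without any qualification, giving $\partial F(w^*)=\{\,2X(X^\intercal w^*-y)+\lambda_1 v : v\in\partial\norm{w^*}_1\,\}$.

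Next I would expand $\partial\norm{w^*}_1$. Because $\norm{\cdot}_1$ is separable across coordinates, $v\in\partial\norm{w^*}_1$ holds exactly when $v_j\in\partial|w^*_j|$ for every $j$, i.e.\ $v_j=\mathrm{sign}(w^*_j)$ when $w^*_j\ne 0$ and $v_j\in[-1,1]$ when $w^*_j=0$. Substituting into the expression above, $0\in\partial F(w^*)$ is equivalent to the existence of such a $v$ with $X(X^\intercal w^*-y)$ equal to a fixed negative scalar multiple of $v$; reading this vector identity coordinate by coordinate — using $x_j$ for the $j$th row of $X$ — yields precisely the two stated conditions: the exact equality of $x_j(X^\intercal w^*-y)$ with $\lambda_1\,\mathrm{sign}(w^*_j)$ on the support of $w^*$, and the inequality $|x_j(X^\intercal w^*-y)|\le\lambda_1$ off the support, up to the paper's normalization convention for the residual and for $\lambda_1$. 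Both directions come for free, since the subdifferential optimality condition is itself an equivalence; no separate ``only if'' argument is needed, and no uniqueness of $w^*$ is asserted (appropriately so, since $F$ need not be strictly convex when $X$ is rank-deficient). An equivalent route, avoiding subdifferential calculus, is to use that the convex $F$ is minimized at $w^*$ iff every directional derivative $F'(w^*;u)\ge 0$, compute $F'(w^*;u)=\langle 2X(X^\intercal w^*-y),u\rangle+\lambda_1\big(\sum_{j:\,w^*_j\ne 0}\mathrm{sign}(w^*_j)u_j+\sum_{j:\,w^*_j=0}|u_j|\big)$, and minimize over the coordinate directions $u=\pm e_j$.

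The result is classical (essentially the content of the cited \cite{fuchs2005recovery,tibshirani2013lasso}), so there is no substantial obstacle here; the only points requiring care are purely bookkeeping. These are: checking that the sum rule for subdifferentials is invoked in the regime where it holds unconditionally (immediate, since $g$ is smooth on all of $\R^d$), and lining up the coordinate-wise reading of the vector equation $2X(X^\intercal w^*-y)=-\lambda_1 v$ with the scalar conditions as written, in particular tracking the constant in front of the residual. The payoff of stating the lemma in this form is downstream: it is what lets one describe $\hat w_{\lambda_1}$ as a piecewise-linear function of $\lambda_1$ with a known closed form on each piece, which is the structural input to the Rademacher-complexity estimate behind Theorem~\ref{thm:lasso}.
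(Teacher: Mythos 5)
The paper states this lemma as a known result, citing \cite{fuchs2005recovery,tibshirani2013lasso}, and gives no proof of its own; your subdifferential argument is the standard derivation and is correct, and it matches what those references do. The only thing worth flagging is what you already flagged: read literally against the objective $\|X^\intercal w-y\|_2^2+\lambda_1\|w\|_1$, the Fermat condition $0\in 2X(X^\intercal w^*-y)+\lambda_1\partial\|w^*\|_1$ gives, coordinate-wise, $x_j(X^\intercal w^*-y)=-\tfrac{\lambda_1}{2}\,\mathrm{sign}(w^*_j)$ on the support and $|x_j(X^\intercal w^*-y)|\le\tfrac{\lambda_1}{2}$ off it; the lemma's display drops the factor of $2$ and the sign (and writes $\mathrm{sign}(w^*)$ where $\mathrm{sign}(w^*_j)$ is meant). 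This is a normalization mismatch inherited from the cited sources, which use $\tfrac12\|y-X\beta\|^2+\lambda\|\beta\|_1$, and it propagates consistently into the paper's Definition~\ref{def:ec} and Lemma~\ref{lem:lasso}; it does not affect the downstream structural use (piecewise-linearity of $\hat w_{\lambda_1}$ with closed form on each piece), but if you wanted the constants right for this paper's objective you would replace $\lambda_1$ by $\lambda_1/2$ and flip the sign of the residual. Your directional-derivative alternative is also fine and equivalent.
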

\noindent Here $x_j(X^\intercal w^*-y)$ is  the correlation of the the $j$-th covariate with the residual $X^\intercal w^*-y$. This motivates the definition of {\it equicorrelation sets} of covariates. For $S=\{s_1,\dots,s_k\}\subseteq [d]$, let $X_S=[x_{s_{1}},\dots,x_{s_{k}}]$.

\begin{definition}[Equicorrelation sets, \cite{tibshirani2013lasso}] Let $w^*\in \argmin_{w\in\R^d}\norm{X^\intercal w-y}_2^2+\lambda_1||w||_1$. The equicorrelation set corresponding to $w^*$, $\cE=\{j\in[d]\mid |x_j(X^\intercal w^*-y)|=\lambda_1\}$, is simply the set of covariates with maximum absolute correlation. We also define the equicorrelation sign vector for $w^*$ as $s=\mathrm{sign}(X_{\cE}(X^\intercal w^*-y))$.
\label{def:ec}
\end{definition}

The characterization of the LASSO solution in Lemma \ref{lem:lasso-kkt} can be restated more concisely using the equicorrelation sets and sign vectors as
\[X_{\cE}(X_{\cE}^\intercal w^*_{\cE}-y)=\lambda_1 s.\]

\noindent A necessary and sufficient condition for the uniqueness of the LASSO solution is that $X_{\cE}$ is full rank for all equicorrelation sets $\cE$~\citep{tibshirani2013lasso} (see \cite{enet_2022} for a  sufficient condition in terms of general position). 

\begin{assumption}\label{asmp:lasso-full-rank}
    For each task, $X_{\cE}$ is full rank for each  $\cE\subseteq[d]$.
\end{assumption}

\noindent Under this assumption, the unique solution to LASSO satisfies the following closed form within a fixed piece.

\begin{lemma}[\cite{tibshirani2013lasso}, Lemma 3] Let $\cE,s$ be the equicorrelation set and sign vector respectively (Definition \ref{def:ec}). Suppose Assumption \ref{asmp:lasso-full-rank} holds for $X$. Then for any $y$ and $\lambda_1>0$, the LASSO solution
is unique and is given by
\[w^*_{\cE}=(X_\cE X_\cE^\intercal )^{-1}(X_{\cE}y+\lambda_1 s), w^*_{[d]\setminus\cE}=0.\]
\label{lem:lasso}
\end{lemma}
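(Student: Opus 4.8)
The plan is to read the closed form straight off the KKT optimality conditions (Lemma~\ref{lem:lasso-kkt}), and then to upgrade ``a minimizer of that form exists'' to ``the minimizer is unique'' by arguing that the LASSO residual---and hence the pair $(\cE,s)$---is common to all minimizers, after which Assumption~\ref{asmp:lasso-full-rank} pins the minimizer down. Since $\lambda_1>0$, the objective $\norm{X^\intercal w-y}_2^2+\lambda_1\norm{w}_1$ is convex and coercive, so a minimizer $w^*$ exists; fix one and let $\cE,s$ be its equicorrelation set and sign vector (Definition~\ref{def:ec}).

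\emph{From KKT to the closed form.} First I would show $\operatorname{supp}(w^*)\subseteq\cE$: if $w^*_j\ne 0$ then the first case of Lemma~\ref{lem:lasso-kkt} forces $|x_j(X^\intercal w^*-y)|=\lambda_1$, i.e.\ $j\in\cE$; contrapositively $w^*_{[d]\setminus\cE}=0$ and therefore $X^\intercal w^*=X_\cE^\intercal w^*_\cE$. Next, for each $j\in\cE$ I claim $x_j(X_\cE^\intercal w^*_\cE-y)=\lambda_1 s_j$: if $w^*_j\ne 0$ this is the first case of Lemma~\ref{lem:lasso-kkt}, and since the left side has magnitude $\lambda_1\ne 0$ its sign equals $\mathrm{sign}(w^*_j)=s_j$; if instead $w^*_j=0$ while $j\in\cE$, the identity is exactly the definition of the equicorrelation set and its sign vector. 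Stacking these $|\cE|$ scalar equations gives $X_\cE\bigl(X_\cE^\intercal w^*_\cE-y\bigr)=\lambda_1 s$, i.e.
\[
X_\cE X_\cE^\intercal\, w^*_\cE \;=\; X_\cE y+\lambda_1 s .
\]
By Assumption~\ref{asmp:lasso-full-rank} the matrix $X_\cE$ has full rank, so $X_\cE X_\cE^\intercal$ is invertible, and inverting gives $w^*_\cE=(X_\cE X_\cE^\intercal)^{-1}(X_\cE y+\lambda_1 s)$, which with $w^*_{[d]\setminus\cE}=0$ is the claimed formula.

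\emph{Uniqueness.} The set of minimizers is convex. If $w^1,w^2$ were both minimizers with $X^\intercal w^1\ne X^\intercal w^2$, then strict convexity of $z\mapsto\norm{z-y}_2^2$ on the (non-degenerate) segment between $X^\intercal w^1$ and $X^\intercal w^2$, together with convexity of $\norm{\cdot}_1$, would make $\tfrac{1}{2}(w^1+w^2)$ strictly beat the optimal value---impossible. Hence every minimizer has the same fitted vector $X^\intercal w^*$, hence the same residual, hence the same equicorrelation set $\cE$ and sign vector $s$; applying the previous paragraph to any minimizer then forces it to equal the single vector exhibited above.

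I expect the only delicate points to be the sign bookkeeping for indices $j\in\cE$ with $w^*_j=0$ (resolved by appealing to the definition of $s$ rather than to Lemma~\ref{lem:lasso-kkt} at those indices) and the fitted-value-invariance step in the uniqueness argument, which rests on strict convexity of the squared loss in the fitted values. Assumption~\ref{asmp:lasso-full-rank} is used exactly once, to invert $X_\cE X_\cE^\intercal$, and it is this inversion that converts ``all minimizers share $\cE,s$'' into ``there is a unique minimizer''. This reproduces the argument of \citet{tibshirani2013lasso}.
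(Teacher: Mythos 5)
Your reconstruction is correct. The paper does not actually prove this lemma—it is quoted from \citet{tibshirani2013lasso} (their Lemma~3, up to the paper's transposed notation and sign conventions in the equicorrelation definition)—so there is no in-paper argument to compare against. Your derivation reproduces the standard one: read off $\operatorname{supp}(w^*)\subseteq\cE$ and the linear system $X_\cE X_\cE^\intercal w^*_\cE = X_\cE y + \lambda_1 s$ from the KKT conditions and the definition of $(\cE,s)$, invert via Assumption~\ref{asmp:lasso-full-rank}, and obtain uniqueness from the strict convexity of the squared loss in the fitted values (which fixes the residual, hence $\cE$ and $s$, across all minimizers). The sign bookkeeping at indices $j\in\cE$ with $w^*_j=0$ is handled correctly by appealing to the definition of $s$ rather than the KKT case split, and that is indeed the only spot where one could slip.
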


\noindent While the above piecewise closed-form solution for LASSO involves similar terms to the ridge closed-form solution, there are some crucial differences. First, the $\lambda_1$ dependence is linear within each piece. In addition, it is known that the optimal solution $w^*_{\cE}$ is continuous in $\lambda_1 $ (even at the piece boundaries)~\citep{Mairal2012ComplexityAO}. Second, the slope and intercept for each linear piece depend on the submatrix $X_\cE$ instead of the full matrix $X$.

\subsection{Rademacher complexity lemmas for LASSO}
We will now present appropriate modifications of the lemmas for ridge regression above and use the above properties of LASSO solutions to establish bounds on the generalization error for tuning the L1 regularization coefficient. The following Lemma is the analogue of Lemma \ref{lem:S_v_2} for L1 regularization.

\begin{lemma}\label{lem:S_v_1}
   Consider the problem of tuning the LASSO regularization coefficient $\lambda_1$. Given a validation loss function that satisfies Assumptions \ref{assum:bound} and \ref{assum:lips} given in Section \ref{sec:ridge} and $\etr{S}$ as defined in Equation \ref{def:etr} the following holds (where we denote $y_v^{t(i)} = f^t(x_v^{t(i)},\epsilon_v^{t(i)})$, and $\sigma^t$ are i.i.d. Rademacher variables):
   \begin{align}
       \E{\sigma, \etr{S}}{\sup_{\lambda\in[\underline{\Lambda},\overline{\Lambda}]} \frac{1}{n_vT} \sum_{t,i} \sigma^t {l((x_v^{t(i)})^\intercal\wh_\lambda^t,y_v^{t(i)})}}\leq
\frac{L\overline{\Lambda}\tilde{\Lambda}_D^T\sqrt{d}\E{x_v}{\|x_v\|}}{\sqrt{T}}+ \frac{L\overline{\Lambda}\tilde{\Lambda}_D^T\sqrt{d}b_v}{\sqrt{n_vT}}\sqrt{\frac{\log\frac{T}{\delta}}{2}},
       \nonumber
   \end{align}
   where $b_v^2 = \max \|x_v\|^2$ and  $\tilde{\Lambda}_D^T = \E{\etr{S}}{ \max_{\cE,t} \frac{1}{V(X_\cE X_\cE^\intercal)} }$.
\end{lemma}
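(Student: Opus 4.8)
The plan is to follow the ridge argument of Lemma~\ref{lem:S_v_2} almost verbatim; the only genuinely new ingredient is the correct notion of Lipschitz dependence of the fitted predictor on the regularization coefficient for LASSO. Fix the training data $\etr S$ and, for each task $t$, write $\phi_t(\lambda_1)=\sum_i l((x_v^{t(i)})^\intercal\wh^t_{\lambda_1},y_v^{t(i)})$. The first step is to show that each $\phi_t$ is Lipschitz \emph{in $\lambda_1$} (not, as for ridge, in $1/(V^T+\lambda)$), and then to collapse the Rademacher complexity of the validation-loss class via the generalized contraction lemma and Khintchine's inequality, finishing with a Hoeffding/union-bound argument over tasks.

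For the Lipschitz bound: under Assumption~\ref{asmp:lasso-full-rank} the LASSO solution is unique and, by Lemma~\ref{lem:lasso}, on the piece with equicorrelation set $\cE$ it equals $\wh_{\lambda_1,\cE}=(X_\cE X_\cE^\intercal)^{-1}(X_\cE y+\lambda_1 s)$ with $\wh_{\lambda_1,[d]\setminus\cE}=0$, where the sign vector $s$ has entries in $\{\pm1\}$ (since the residual correlations on $\cE$ have absolute value $\lambda_1>0$); moreover $\wh_{\lambda_1}$ is continuous in $\lambda_1$ across pieces. Hence inside a piece $\frac{d}{d\lambda_1}\wh^t_{\lambda_1}$ equals $(X^t_\cE X^{t\intercal}_\cE)^{-1}s$ on the $\cE$-coordinates, of norm at most $\|s\|/V(X^t_\cE X^{t\intercal}_\cE)\le\sqrt d/V(X^t_\cE X^{t\intercal}_\cE)$; summing over the pieces between $\lambda_1$ and $\lambda_1'$ and using continuity gives $\|\wh^t_{\lambda_1}-\wh^t_{\lambda_1'}\|\le\sqrt d\,(\max_\cE 1/V(X^t_\cE X^{t\intercal}_\cE))\,|\lambda_1-\lambda_1'|$. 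This is the LASSO analogue of Equation~\ref{eq:lamb_lips}; the two differences from ridge are the \emph{linear} dependence on $\lambda_1$ and the appearance of the submatrix $X^t_\cE$ rather than the full $X^t$. Combining with Lipschitzness of the loss (Assumption~\ref{assum:lips}) and the triangle inequality over the $n_v$ validation points, $\phi_t$ is $\kappa_t$-Lipschitz in $\lambda_1$ with $\kappa_t=L\sqrt d\,(\max_\cE 1/V(X^t_\cE X^{t\intercal}_\cE))\sum_i\|x_v^{t(i)}\|$.

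Next I would apply the generalized contraction lemma (Theorem~\ref{thm:rad_lips_general}) with the per-task inner function $g_t(\lambda_1)=\kappa_t\lambda_1$, so that $\phi_t$ is $1$-Lipschitz in $g_t$ — this index-dependent choice is legitimate because the contraction proof only ever compares two values at a common input. This yields
\[
\Eml{\sigma,\etr S}\Big[\sup_{\lambda_1\in[\underline\Lambda,\overline\Lambda]}\tfrac1{n_vT}\sum_t\sigma^t\phi_t(\lambda_1)\Big]\le\tfrac1{n_vT}\,\Eml{\sigma,\etr S}\Big[\sup_{\lambda_1\in[\underline\Lambda,\overline\Lambda]}\lambda_1\sum_t\sigma^t\kappa_t\Big].
\]
Since $[\underline\Lambda,\overline\Lambda]\subseteq[0,\infty)$, the inner supremum is at most $\overline\Lambda\,\big|\sum_t\sigma^t\kappa_t\big|$, and Khintchine's inequality (Theorem~\ref{thm:khintchine}) bounds $\Eml{\sigma}{\big|\sum_t\sigma^t\kappa_t\big|}\le\sqrt{\sum_t\kappa_t^2}$; bounding $\max_\cE 1/V(X^t_\cE X^{t\intercal}_\cE)$ by $\max_{\cE,t}1/V(X^t_\cE X^{t\intercal}_\cE)$ and pulling it out of the square root leaves $L\sqrt d\,(\max_{\cE,t}1/V(X^t_\cE X^{t\intercal}_\cE))\sqrt{\sum_t(\sum_i\|x_v^{t(i)}\|)^2}$. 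Finally, Hoeffding's inequality (Theorem~\ref{thm:Hoeffding}) with a union bound over the $T$ tasks gives, with probability $\ge1-\delta$, $\sum_i\|x_v^{t(i)}\|\le n_v\E{x_v}{\|x_v\|}+b_v\sqrt{n_v\log(T/\delta)/2}$ for all $t$ simultaneously, hence $\sqrt{\sum_t(\sum_i\|x_v^{t(i)}\|)^2}\le\sqrt T\big(n_v\E{x_v}{\|x_v\|}+b_v\sqrt{n_v\log(T/\delta)/2}\big)$; taking the remaining expectation over $\etr S$ turns $\Eml{\etr S}{\max_{\cE,t}1/V(X^t_\cE X^{t\intercal}_\cE)}$ into $\tilde\Lambda_D^T$, and dividing by $n_vT$ produces exactly the two claimed terms.

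The main obstacle is the piecewise nature of the LASSO path: there is no single closed form, and the active (equicorrelation) set — and hence the matrix whose smallest singular value governs the slope — changes at data-dependent breakpoints. This is what forces the uniform $\max_\cE$ over equicorrelation sets in place of a single $V(XX^\intercal)$, and, through $\|s\|\le\sqrt d$, reintroduces the $\sqrt d$ factor that the ridge bound avoids (so the LASSO guarantee is genuinely weaker). Care is therefore needed to invoke continuity of the path at the breakpoints so the per-piece slope bound integrates to a \emph{global} Lipschitz constant on $[\underline\Lambda,\overline\Lambda]$, and to confirm that the generalized contraction lemma is applied in the form permitting an index-dependent inner function $g_t$.
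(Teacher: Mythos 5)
Your proof is correct and follows essentially the same route as the paper's: contraction to reduce the loss to the linear predictor, piecewise-linear closed form of the LASSO path (Lemma~\ref{lem:lasso}) to get a Lipschitz bound in $\lambda_1$ within each equicorrelation piece, continuity of the path to glue these into a global Lipschitz constant, the generalized contraction lemma (Theorem~\ref{thm:rad_lips_general}), Khintchine's inequality, and a Hoeffding plus union bound over tasks. The one presentational choice you make differently --- absorbing the task-dependent Lipschitz constant $\kappa_t$ into the index-dependent inner function $g_t(\lambda_1)=\kappa_t\lambda_1$ and only replacing $\max_{\cE}$ by the uniform $\max_{t,\cE}$ after Khintchine --- is a slightly cleaner bookkeeping than the paper's (which pulls out $\max_{t,\cE,s}\|(X^t_\cE X^{t\intercal}_\cE)^{-1}s\|$ before applying Khintchine), but the two are equivalent and give the same constants. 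Your observation that the contraction argument is legitimately index-dependent is also implicitly used in the paper's proofs of Lemmas~\ref{lem:S_v_2} and~\ref{lem:S_v_1}; you are simply being explicit about it.
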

\begin{proof}
Using Lipschitzness (Corollary \ref{thm:rad_lipschitz}), as argued in the proof of Lemma \ref{lem:S_v_2}:
\begin{align*}
    \E{\sigma, \etr{S}}{\sup_{\lambda\in[\underline{\Lambda},\overline{\Lambda}]} \frac{1}{n_vT} \sum_{t,i} \sigma^t {l((x_v^{t(i)})^\intercal\wh_\lambda^t,y_v^{t(i)})}}
    \leq \frac{L}{n_vT}\E{\sigma, \etr{S}}{\sup_{\lambda\in[\underline{\Lambda},\overline{\Lambda}]}  \sum_{t} \sigma^t \sum_i (x_v^{t(i)})^\intercal\wh^t_{\lambda}}.
\end{align*}

\noindent Let $\Lambda_{\cE,s}$ denote the set of values of $\lambda\in[\underline{\Lambda},\overline{\Lambda}]$ for which the equicorrelation set and sign vectors are $\cE,s$ respectively (Definition \ref{def:ec}). We can rewrite the above as

\begin{align*}
    \frac{L}{n_vT}\E{\sigma, \etr{S}}{\sup_{\lambda\in[\underline{\Lambda},\overline{\Lambda}]}  \sum_{t} \sigma^t \sum_i (x_v^{t(i)})^\intercal\wh^t_{\lambda}} = \frac{L}{n_vT}\E{\sigma, \etr{S}}{\max_{\cE,s}\sup_{\lambda\in\Lambda_{\cE,s}}  \sum_{t} \sigma^t \sum_i (x_v^{t(i)})^\intercal\wh^t_{\lambda}}.
\end{align*}

\noindent  We next use Lemma \ref{lem:lasso} and H{\"o}lder's inequality to show Lipschitzness of  $x_{v}^{\intercal}\wh_{\lambda}$ in $\lambda$ for a fixed $\cE,s$:
\begin{align*}
    x_v^\intercal\wh_{\lambda_a} - x_v^\intercal\wh_{\lambda_b} &= (x_v)_{\cE}^\intercal((X_\cE X_\cE^\intercal )^{-1}(X_{\cE}y+\lambda_a s)-(X_\cE X_\cE^\intercal )^{-1}(X_{\cE}y+\lambda_b s))\nonumber\\
    &= (x_v)_{\cE}^\intercal((X_\cE X_\cE^\intercal )^{-1}s)(\lambda_a-\lambda_b)\nonumber\\
    &\leq\|(x_v)_{\cE}^\intercal\|\|(X_\cE X_\cE^\intercal )^{-1}s\||\lambda_a-\lambda_b|\\
    &\leq\|x_v\|\|(X_\cE X_\cE^\intercal )^{-1}s\||\lambda_a-\lambda_b|.
\end{align*}

\noindent We note that the above piecewise-Lipschitzness within a fixed piece corresponding to a fixed $\cE,s$ also implies a global Lipschitzness in terms of the worst-case piece, by using the fact that $\wh_{\lambda_1}$ is continuous in $\lambda_1$~\citep{Mairal2012ComplexityAO}. Indeed, for any pair of ${\lambda_1}$ values $\lambda,\lambda'$, the (signed) average slope of the slope between them has magnitude no more than the largest slope in any single fixed piece corresponding to the $\cE,s$ that maximize $\|(X_\cE X_\cE^\intercal )^{-1}s\|$.

We can  use this Lipschitzness (Theorem \ref{thm:rad_lips_general}) in above to get
\begin{align}
    &\E{\sigma, \etr{S}}{\sup_{\lambda\in[\underline{\Lambda},\overline{\Lambda}]}  \sum_{t} \sigma^t \sum_i (x_v^{t(i)})^\intercal\wh^t_{\lambda}} \\
    &\qquad\leq \E{ \etr{S}}{{\max_{t,\cE,s}\|(X^t_\cE X_\cE^{t\intercal} )^{-1}s\|}\E{\sigma}{\sup_{\lambda\in[\underline{\Lambda},\overline{\Lambda}]}  \sum_{t} \sigma^t \sum_i \|x_v^{t(i)}\|  \lambda }}\nonumber\\
    &\qquad\leq \E{\etr{S}}{\max_{t,\cE,s}\|(X^t_\cE X_\cE^{t\intercal} )^{-1}s\|}\left(\overline{\Lambda}\sqrt{\sum_t\left(\sum_i\|x_v^{t(i)}\|\right)^2}\right)\nonumber\\
    &\qquad\leq \overline{\Lambda}\E{\etr{S}}{\max_{t,\cE,s}\|(X^t_\cE X_\cE^{t\intercal} )^{-1}\|\|s\|}\left(\sqrt{\sum_t\left(\sum_i\|x_v^{t(i)}\|\right)^2}\right)\nonumber\\
    &\qquad\leq  {\overline{\Lambda}\sqrt{d}}\E{\etr{S}}{ \max_{t,\cE} \|(X^t_\cE X_\cE^{t\intercal} )^{-1}\| }\left(\sqrt{\sum_t\left(\sum_i\|x_v^{t(i)}\|\right)^2}\right).\nonumber
\end{align}
We use Khintchine's inequality, H\"{o}lder's inequality, and $||s||\le \sqrt{d}$ in the above steps. 
Substituting $\E{\etr{S}}{ \max_{t,\cE} \|(X^t_\cE X_\cE^{t\intercal} )^{-1}\| } =: \tilde{\Lambda}_D^T$, and simplifying the last term as in the proof of Lemma \ref{lem:S_v_2}, we get the desired bound.
\end{proof}

\noindent The following lemma is the LASSO analogue to Lemma \ref{lem:S_tr_2} for Ridge regularization.
\begin{lemma}\label{lem:S_tr_1}
    Given a validation loss function that satisfies Assumptions \ref{assum:bound} and \ref{assum:lips} given in Section \ref{sec:ridge}, and $\eval{S}$ as defined in Equation \ref{def:eval}, the following holds with probability at least $1-\delta$ (where we denote $y_v^{t(i)} = f(x_v^{t(i)},\epsilon_v^{t(i)})$ and $\sigma^{t(i)}$ are i.i.d. Rademacher variables):
   \begin{align}
       &\E{\sigma, \eval{S}}{\sup_\lambda \frac{1}{n_vT} \sum_{t,i} \sigma^{t(i)} \E{X,y}{l(x_v^{t(i)\intercal}\wh_\lambda^t,y_v^{t(i)})}} \leq\nonumber\\
       &\qquad\frac{L\sqrt{\E{x_v}{\|x_v\|^2}}}{\sqrt{n_v T}}\E{X,y}{ \max_{\cE}\left( \frac{\|y\|}{\sqrt{V(X_\cE X_\cE^\intercal)}} + \overline{\Lambda}\frac{\sqrt{d}}{V(X_\cE X_\cE^\intercal)}\right)}.\nonumber
   \end{align} 
\end{lemma}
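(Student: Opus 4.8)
The plan is to transcribe the proof of Lemma~\ref{lem:S_tr_2}, swapping the ridge closed form for the piecewise LASSO closed form of Lemma~\ref{lem:lasso} at the single place it is used. Denote the left-hand side by $\mathcal{R}$. First I would move the inner expectation over a fresh training pair $(X,y)$ out past the supremum over $\lambda$: by linearity a single coupled copy of $(X,y)$ may be used across all $n_vT$ validation terms, and $\sup_\lambda\E{}{\cdot}\le\E{}{\sup_\lambda\cdot}$. Then applying the Contraction Lemma (Corollary~\ref{thm:rad_lipschitz}) with the $L$-Lipschitzness of $l(\cdot,y_v^{t(i)})$ in its first argument, followed by Cauchy--Schwarz exactly as in Lemma~\ref{lem:S_tr_2}, gives
\begin{align*}
\mathcal{R}\le \frac{L}{n_vT}\,\E{\sigma}{\Big\lVert\textstyle\sum_{t,i}\sigma^{t(i)}x_v^{t(i)}\Big\rVert}\cdot\E{X,y}{\sup_\lambda\lVert\wh_\lambda\rVert}.
\end{align*}
The Rademacher factor is handled verbatim as there: Jensen's inequality together with the vanishing of the cross terms under $\E{\sigma}{\cdot}$ gives $\E{\sigma}{\lVert\sum_{t,i}\sigma^{t(i)}x_v^{t(i)}\rVert}\le\sqrt{\sum_{t,i}\lVert x_v^{t(i)}\rVert^2}$, and a second use of Jensen over the i.i.d.\ validation draws gives $\E{\eval{S}}{\sqrt{\sum_{t,i}\lVert x_v^{t(i)}\rVert^2}}\le\sqrt{n_vT}\sqrt{\E{x_v}{\lVert x_v\rVert^2}}$.

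The only genuinely new ingredient is the bound on $\sup_\lambda\lVert\wh_\lambda\rVert$. Here I would invoke the full-rank Assumption~\ref{asmp:lasso-full-rank} so that Lemma~\ref{lem:lasso} applies: if $\lambda$ lies in the piece with equicorrelation set and sign $(\cE,s)$, then $\wh_\lambda$ is supported on $\cE$ with $\wh_{\lambda,\cE}=(X_\cE X_\cE^\intercal)^{-1}(X_\cE y+\lambda s)$, so by the triangle inequality and sub-multiplicativity of the operator norm,
\begin{align*}
\lVert\wh_\lambda\rVert\le\lVert(X_\cE X_\cE^\intercal)^{-1}X_\cE\rVert\,\lVert y\rVert+\lambda\,\lVert(X_\cE X_\cE^\intercal)^{-1}\rVert\,\lVert s\rVert.
\end{align*}
Reading the operator norms off the singular values of $X_\cE$ (as in the proof of Lemma~\ref{lem:S_v_2}) gives $\lVert(X_\cE X_\cE^\intercal)^{-1}X_\cE\rVert=1/\sqrt{V(X_\cE X_\cE^\intercal)}$ and $\lVert(X_\cE X_\cE^\intercal)^{-1}\rVert=1/V(X_\cE X_\cE^\intercal)$, while $\lVert s\rVert\le\sqrt{d}$ and $\lambda\le\overline{\Lambda}$. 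Since every $\lambda\in[\underline{\Lambda},\overline{\Lambda}]$ belongs to one such piece and the right-hand side is increasing in both $\lambda$ and $1/V(X_\cE X_\cE^\intercal)$, taking the worst case over $\cE\subseteq[d]$ yields $\sup_\lambda\lVert\wh_\lambda\rVert\le\max_{\cE}\big(\lVert y\rVert/\sqrt{V(X_\cE X_\cE^\intercal)}+\overline{\Lambda}\sqrt{d}/V(X_\cE X_\cE^\intercal)\big)$. Substituting the two factors into the display above and simplifying $\tfrac{L}{n_vT}\cdot\sqrt{n_vT}$ gives the claimed inequality, which in fact holds deterministically (hence with probability at least $1-\delta$).

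I expect the main obstacle to be only bookkeeping: one must be careful that the global supremum over $\lambda\in[\underline{\Lambda},\overline{\Lambda}]$ is correctly dominated by a maximum of per-piece bounds — legitimate because the LASSO solution path is continuous and piecewise linear in $\lambda$, so each $\lambda$ sits in exactly one piece — and that Assumption~\ref{asmp:lasso-full-rank} is in force throughout so that $(X_\cE X_\cE^\intercal)^{-1}$ exists and the singular-value identities hold for every equicorrelation set $\cE$. Everything else is a line-by-line adaptation of the ridge argument.
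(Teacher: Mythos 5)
Your proposal matches the paper's proof: the argument is a line-by-line transcription of Lemma~\ref{lem:S_tr_2} with the only change being the bound on $\sup_\lambda\|\wh_\lambda\|$, which you derive from the per-piece LASSO closed form (Lemma~\ref{lem:lasso}) via the triangle inequality, the singular-value identities for $(X_\cE X_\cE^\intercal)^{-1}X_\cE$ and $(X_\cE X_\cE^\intercal)^{-1}$, and the crude bounds $\|s\|\le\sqrt{d}$, $\lambda_1\le\overline{\Lambda}$, followed by a max over $\cE$ — exactly as the paper does. You are in fact slightly more explicit than the paper about the expectation--supremum exchange and about the bound holding deterministically, but these are the same route, not a different one.
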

\begin{proof}
We follow the arguments in the proof of Lemma \ref{lem:S_tr_2}. The main change is when giving the bound on $\|\wh_\lambda\|$.

For a fixed $\cE,s$ (Definition \ref{def:ec}), we have by Lemma \ref{lem:lasso},
\begin{align}
    \|\wh_\lambda\| &= \|(X_\cE X_\cE^\intercal )^{-1}(X_{\cE}y+\lambda_1 s)\|\nonumber\\
     &\leq \|(X_\cE X_\cE^\intercal )^{-1}X_{\cE}y\|+ \lambda_1\|(X_\cE X_\cE^\intercal )^{-1} s\|\quad \text{(triangle inequality)}\nonumber\\
    &\leq \frac{\|y\|}{\sqrt{V(X_\cE X_\cE^\intercal)}} + \overline{\Lambda}\frac{\|s\|}{V(X_\cE X_\cE^\intercal)} \nonumber\\
    &\le \frac{\|y\|}{\sqrt{V(X_\cE X_\cE^\intercal)}} + \overline{\Lambda}\frac{\sqrt{d}}{V(X_\cE X_\cE^\intercal)}.
\end{align}
Recall that here $V(M)$ denotes the smallest non-zero singular value of $M$.
This implies,
\begin{align}
     \E{X,y}{\sup_\lambda\|\wh_\lambda\|} \leq  \E{X,y}{ \max_{\cE}\left( \frac{\|y\|}{\sqrt{V(X_\cE X_\cE^\intercal)}} + \overline{\Lambda}\frac{\sqrt{d}}{V(X_\cE X_\cE^\intercal)}\right)}.
\end{align}

\end{proof}

\section{Proofs for Tuning the Elastic Net}\label{app:elasticnet}

We further extend the analysis for LASSO in Appendix \ref{app:lasso} to the Elastic Net which involves simultaneous tuning of L1 and L2 penalties. We use the same notation as in Appendix \ref{app:lasso}. The Elastic Net  is given by the following optimization problem.

\[\min_{w\in\R^d}\norm{X^\intercal w-y}_2^2+\lambda_1||w||_1+\lambda_2||w||_2^2,\]
\noindent where $\lambda_1\in[\underline{\Lambda}_1,\overline{\Lambda}_1]\subset\R_+$  and $\lambda_2\in[\underline{\Lambda}_2,\infty)\subset\R_+$. We will use the following generalization of Lemma \ref{lem:lasso}.

\begin{lemma}[\cite{enet_2022}, Lemma C.1]  Suppose Assumption \ref{asmp:lasso-full-rank} holds for $X$. Then for any $y$ and $\lambda_1,\lambda_2>0$, the Elastic Net solution
is unique and is given by
\[w^*_{\cE}=(X_\cE X_\cE^\intercal +\lambda_2 I_{|\cE|})^{-1}(X_{\cE}y+\lambda_1 s), w^*_{[d]\setminus\cE}=0,\]
for some $\cE\subseteq[d]$ and $s\in\{-1,1\}^{|\cE|}$.
\label{lem:enet}
\end{lemma}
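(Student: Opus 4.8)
The plan is to prove the claim by reducing the Elastic Net to an equivalent LASSO instance on an augmented dataset and then invoking Lemma~\ref{lem:lasso}. First I would observe that for $\lambda_2>0$ the Elastic Net objective is \emph{strictly} convex, being the sum of the convex function $\norm{X^\intercal w-y}_2^2+\lambda_1\norm{w}_1$ and the strictly convex function $\lambda_2\norm{w}_2^2$; hence the minimizer $w^*$ exists and is unique. This already settles the uniqueness assertion, independently of the rest of the argument.

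For the closed form, I would introduce the augmented design $\tilde X=\begin{pmatrix}X & \sqrt{\lambda_2}\,I_d\end{pmatrix}$ — i.e.\ append $d$ ``virtual'' examples, one per feature, the $j$-th having feature vector $\sqrt{\lambda_2}\,e_j$ and label $0$ — together with the augmented response $\tilde y=\begin{pmatrix}y\\ 0_d\end{pmatrix}$. A one-line computation gives $\norm{\tilde X^\intercal w-\tilde y}_2^2=\norm{X^\intercal w-y}_2^2+\lambda_2\norm{w}_2^2$, so the Elastic Net problem is \emph{exactly} the LASSO problem $\min_w\norm{\tilde X^\intercal w-\tilde y}_2^2+\lambda_1\norm{w}_1$ on $(\tilde X,\tilde y)$ with penalty $\lambda_1$. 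Next I would check that Lemma~\ref{lem:lasso} applies to this instance: for any $\cE\subseteq[d]$ the submatrix $\tilde X_\cE$ contains the block consisting of $\sqrt{\lambda_2}$ times the columns $\{e_j\}_{j\in\cE}$ of $I_d$, which are orthonormal, so $\tilde X_\cE$ automatically has full column rank; thus Assumption~\ref{asmp:lasso-full-rank} holds for $\tilde X$ regardless of $X$ (consistent with the fact that the Elastic Net solution is always unique when $\lambda_2>0$).

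Applying Lemma~\ref{lem:lasso} to $(\tilde X,\tilde y)$ then produces an equicorrelation set $\cE\subseteq[d]$ and sign vector $s\in\{-1,1\}^{|\cE|}$ with $w^*_\cE=(\tilde X_\cE\tilde X_\cE^\intercal)^{-1}(\tilde X_\cE\tilde y+\lambda_1 s)$ and $w^*_{[d]\setminus\cE}=0$. It remains to translate the augmented quantities back: since the virtual examples are orthonormal and carry label $0$, one has $\tilde X_\cE\tilde X_\cE^\intercal=X_\cE X_\cE^\intercal+\lambda_2 I_{|\cE|}$ and $\tilde X_\cE\tilde y=X_\cE y$, which yields exactly $w^*_\cE=(X_\cE X_\cE^\intercal+\lambda_2 I_{|\cE|})^{-1}(X_\cE y+\lambda_1 s)$. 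The one step requiring care — and the main conceptual obstacle — is the bookkeeping of the equicorrelation set through the reduction: in the augmented LASSO the residual is $(X^\intercal w^*-y,\ \sqrt{\lambda_2}w^*)$, so the equicorrelation condition is on the quantity $x_j^\intercal(X^\intercal w^*-y)+\lambda_2 w_j^*$ rather than on the bare Elastic Net residual correlation; one should check this is precisely the index set appearing in the statement and that it contains $\mathrm{supp}(w^*)$, after which nothing further is needed. (Alternatively, the formula can be derived directly from the subgradient optimality condition $2X(X^\intercal w^*-y)+2\lambda_2 w^*+\lambda_1\gamma=0$ with $\gamma\in\partial\norm{w^*}_1$, restricting to $\cE$ and solving the resulting linear system; the augmentation argument merely packages this computation cleanly and lets us reuse Lemma~\ref{lem:lasso}.)
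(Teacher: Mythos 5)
The paper does not prove this lemma; it is quoted verbatim from \cite{enet_2022} (their Lemma C.1). Your proof --- reducing the Elastic Net to LASSO on the augmented instance $\tilde X=(X\ \sqrt{\lambda_2}\,I_d)$, $\tilde y=(y^\intercal,0_d^\intercal)^\intercal$ and invoking Lemma~\ref{lem:lasso} --- is the standard derivation and is correct in every detail: the identity $\|\tilde X^\intercal w-\tilde y\|^2=\|X^\intercal w-y\|^2+\lambda_2\|w\|^2$, the automatic full row rank of $\tilde X_\cE$ via the $\sqrt{\lambda_2}I_d$ block (so Assumption~\ref{asmp:lasso-full-rank} is indeed superfluous once $\lambda_2>0$), and the translations $\tilde X_\cE\tilde X_\cE^\intercal=X_\cE X_\cE^\intercal+\lambda_2 I_{|\cE|}$ and $\tilde X_\cE\tilde y=X_\cE y$ all check out, and uniqueness follows immediately from strict convexity. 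The equicorrelation bookkeeping you flag is benign: the lemma only asserts the closed form for \emph{some} $\cE,s$, and the augmented LASSO equicorrelation set (equivalently the index set where $|x_j(X^\intercal w^*-y)+\lambda_2 w_j^*|=\lambda_1$, i.e.\ the active set from the Elastic Net subgradient conditions) is precisely that set and contains $\mathrm{supp}(w^*)$.
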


\noindent We now extend the LASSO lemmas from Appendix \ref{app:lasso} to the Elastic Net. The following is a straightforward extension of Lemma \ref{lem:S_v_1} and gives an upper bound on the expectation with respect to sampling of the training set, of the Rademacher complexity of the average empirical validation loss, across different values of $\lambda_1,\lambda_2$.

\begin{lemma}\label{lem:S_v_12}
   Consider the problem of tuning the Elastic Net regularization coefficients $\lambda=(\lambda_1,\lambda_2)$. Given a validation loss function that satisfies Assumptions \ref{assum:bound} and \ref{assum:lips} given in Section \ref{sec:ridge} and $\etr{S}$ as defined in Equation \ref{def:etr}, the following holds (where we denote $y_v^{t(i)} = f(x_v^{t(i)},\epsilon_v^{t(i)})$ and $\sigma^{t}$ are i.i.d. Rademacher variables):
   \begin{align*}
       \mathbb{E}_{\sigma, \etr{S}}\Bigg[\sup_{\substack{\lambda_1\in[\underline{\Lambda}_1,\overline{\Lambda}_1]\\\lambda_2\in[\underline{\Lambda}_2,\infty)}}& \frac{1}{n_vT} \sum_{t,i} \sigma^t {l((x_v^{t(i)})^\intercal\wh_\lambda^t,y_v^{t(i)})}\Bigg]\leq\\
&\frac{L\overline{\Lambda}_1\sqrt{d}}{\sqrt{T}}\left(\E{x_v}{\|x_v\|}+b_v\sqrt{\frac{\log(T/\delta)}{2n_v}}\right)\E{X}{\max_{t,\cE}  \frac{1}{V(X^t_\cE X_\cE^{t\intercal}) +\underline{\Lambda}_2 }}.
       \nonumber
   \end{align*}
\end{lemma}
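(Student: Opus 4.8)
The plan is to follow the proof of Lemma \ref{lem:S_v_1} (the LASSO analogue) essentially line for line, with the Elastic Net closed form of Lemma \ref{lem:enet} in place of the LASSO closed form of Lemma \ref{lem:lasso}: every resolvent $(X_\cE X_\cE^\intercal)^{-1}$ gets replaced by $(X_\cE X_\cE^\intercal+\lambda_2 I)^{-1}$. First, $L$-Lipschitzness of the loss together with the Contraction Lemma (Corollary \ref{thm:rad_lipschitz}) reduces the left-hand side to $\frac{L}{n_vT}\,\mathbb E_{\sigma,\etr{S}}\big[\sup_{\lambda_1,\lambda_2}\sum_t\sigma^t\sum_i(x_v^{t(i)})^\intercal\wh^t_{(\lambda_1,\lambda_2)}\big]$. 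Next, partition $[\underline{\Lambda_1},\overline{\Lambda_1}]\times[\underline{\Lambda_2},\infty)$ into the finitely many regions on which the equicorrelation pair $(\cE,s)$ is locally constant; on such a region Lemma \ref{lem:enet} gives $\wh^t_{(\lambda_1,\lambda_2)}=(X^t_\cE X^{t\intercal}_\cE+\lambda_2 I)^{-1}(X^t_\cE y^t+\lambda_1 s)$ on $\cE$ and $0$ off $\cE$, so holding $\lambda_2$ fixed, $\lambda_1\mapsto\sum_i(x_v^{t(i)})^\intercal\wh^t_{(\lambda_1,\lambda_2)}$ is affine with slope $\sum_i(x_v^{t(i)})_\cE^\intercal(X^t_\cE X^{t\intercal}_\cE+\lambda_2 I)^{-1}s$.

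The crucial estimate is that the magnitude of this slope is at most $\sum_i\|x_v^{t(i)}\|\cdot\|(X^t_\cE X^{t\intercal}_\cE+\lambda_2 I)^{-1}\|\cdot\|s\|\le\sum_i\|x_v^{t(i)}\|\cdot\frac{\sqrt d}{V(X^t_\cE X^{t\intercal}_\cE)+\underline{\Lambda_2}}$, using $\|s\|\le\sqrt{|\cE|}\le\sqrt d$ and $\|(X^t_\cE X^{t\intercal}_\cE+\lambda_2 I)^{-1}\|=\frac1{V(X^t_\cE X^{t\intercal}_\cE)+\lambda_2}\le\frac1{V(X^t_\cE X^{t\intercal}_\cE)+\underline{\Lambda_2}}$ (Assumption \ref{asmp:lasso-full-rank} makes $X^t_\cE X^{t\intercal}_\cE$ invertible and $\lambda_2\ge\underline{\Lambda_2}$); notably this bound does not depend on $\lambda_2$. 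Exactly as in Lemma \ref{lem:S_v_1}, continuity of the Elastic Net path in $\lambda_1$ (the path is piecewise linear in $\lambda_1$ and continuous, the objective being strongly convex for $\lambda_2>0$) upgrades the within-region slope bound to a global Lipschitz modulus $C^t:=\sum_i\|x_v^{t(i)}\|\max_\cE\frac{\sqrt d}{V(X^t_\cE X^{t\intercal}_\cE)+\underline{\Lambda_2}}$ on $[\underline{\Lambda_1},\overline{\Lambda_1}]$ --- equivalently, for each fixed $\lambda_2$ the Elastic Net on $(X^t,y^t)$ is LASSO on the augmented data $([X^t;\sqrt{\lambda_2}I_m],[y^t;0])$, whose equicorrelation Gram matrix is $X^t_\cE X^{t\intercal}_\cE+\lambda_2 I$, so the argument of Lemma \ref{lem:S_v_1} transfers directly. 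Feeding this into the generalized Contraction Lemma (Theorem \ref{thm:rad_lips_general}, applied coordinate-wise in $t$ with $g_t(\lambda_1,\lambda_2):=\lambda_1\sum_i\|x_v^{t(i)}\|$ and the $t$-dependent Lipschitz factors bounded by $L':=\max_{t,\cE}\frac{\sqrt d}{V(X^t_\cE X^{t\intercal}_\cE)+\underline{\Lambda_2}}$, as in Lemma \ref{lem:S_v_1}) and conditioning on $\etr{S}$,
\[
\mathbb E_\sigma\Big[\sup_{\lambda_1,\lambda_2}\sum_t\sigma^t\sum_i(x_v^{t(i)})^\intercal\wh^t_{(\lambda_1,\lambda_2)}\Big]\le L'\,\mathbb E_\sigma\Big[\sup_{\lambda_1}\lambda_1\sum_t\sigma^t\sum_i\|x_v^{t(i)}\|\Big]\le L'\,\overline{\Lambda_1}\sqrt{\textstyle\sum_t\big(\sum_i\|x_v^{t(i)}\|\big)^2},
\]
the $\lambda_2$-supremum having disappeared since the contracted expression is free of $\lambda_2$, and the last step following from $\sup_{\lambda_1\le\overline{\Lambda_1}}\lambda_1 v\le\overline{\Lambda_1}|v|$ and Khintchine's inequality (Theorem \ref{thm:khintchine}). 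It then remains to take $\mathbb E_{\etr{S}}$, pull $\sqrt d\,\mathbb E_X[\max_{t,\cE}\frac1{V(X^t_\cE X^{t\intercal}_\cE)+\underline{\Lambda_2}}]$ out of the product by H{\"o}lder, bound $\sqrt{\sum_t(\sum_i\|x_v^{t(i)}\|)^2}\le\sqrt T\big(n_v\mathbb E_{x_v}\|x_v\|+b_v\sqrt{n_v\log(T/\delta)/2}\big)$ via a per-task Hoeffding bound and a union bound over the $T$ tasks (this is where $\delta$ and $b_v$ enter, exactly the closing step of Lemma \ref{lem:S_v_2}), and divide by $n_vT$; this gives precisely the claimed bound.

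The step that requires genuine care --- and which has no counterpart in the LASSO argument --- is the joint supremum over the second parameter $\lambda_2$, which ranges over the \emph{unbounded} interval $[\underline{\Lambda_2},\infty)$. One cannot afford a Lipschitz estimate in $\lambda_2$: differentiating $(X_\cE X_\cE^\intercal+\lambda_2 I)^{-1}$ in $\lambda_2$ produces the $\|y\|$-dependent term $(X_\cE X_\cE^\intercal+\lambda_2 I)^{-2}X_\cE y$, and in any case the supremum over an unbounded set of such a linear-growth quantity would be vacuous. What makes the argument go through is precisely the monotone resolvent bound $\|(X_\cE X_\cE^\intercal+\lambda_2 I)^{-1}\|\le\frac1{V(X_\cE X_\cE^\intercal)+\underline{\Lambda_2}}$: it collapses all $\lambda_2$-dependence of the $\lambda_1$-Lipschitz modulus onto the single worst case $\lambda_2=\underline{\Lambda_2}$, after which the contraction in $\lambda_1$ leaves a quantity that no longer involves $\lambda_2$ and the outer $\lambda_2$-supremum drops out; the $\lambda_2$-dependent (and $\|y\|$-dependent) part of $\wh^t_{(\lambda_1,\lambda_2)}$, namely $(X^t_\cE X^{t\intercal}_\cE+\lambda_2 I)^{-1}X^t_\cE y^t$, is instead tracked by the companion lemma (the Elastic Net analogue of Lemma \ref{lem:S_tr_2}, where $\sup_\lambda\|\wh_\lambda\|$ is bounded directly and which supplies the $\|y\|$-dependent term of Theorem \ref{thm:en}).
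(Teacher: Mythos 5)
Your proposal is correct and follows exactly the route the paper takes: the paper's proof of Lemma \ref{lem:S_v_12} is the one-line statement that it ``follows the same arguments as in the proof of Lemma \ref{lem:S_v_1}, but using Lemma \ref{lem:enet} and that $\lambda_2\ge\underline{\Lambda_2}$,'' and your argument is a faithful and careful elaboration of that reference, correctly identifying that the uniform resolvent bound $\|(X_\cE X_\cE^\intercal+\lambda_2 I)^{-1}\|\le 1/(V(X_\cE X_\cE^\intercal)+\underline{\Lambda_2})$ is what lets the supremum over the unbounded $\lambda_2$ range disappear after contracting in $\lambda_1$. (Minor nit: in the augmented-data remark the appended identity block should be $\sqrt{\lambda_2}\,I_d$, not $\sqrt{\lambda_2}\,I_m$, so that $\tilde X_\cE\tilde X_\cE^\intercal=X_\cE X_\cE^\intercal+\lambda_2 I_{|\cE|}$.)
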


\begin{proof}
    The proof follows the same arguments as in the proof of Lemma \ref{lem:S_v_1}, but using Lemma \ref{lem:enet} and that $\lambda_2\ge \underline{\Lambda}_2 $.
\end{proof}
\noindent The following lemma is the Elastic Net analogue to Lemmas \ref{lem:S_tr} and \ref{lem:S_tr_1}. We give an upper bound on the expectation with respect to sampling of the validation set, of the Rademacher complexity of the average expected validation loss (w.r.t.\ sampling of the training set), across different values of $\lambda_1,\lambda_2$.
\begin{lemma}\label{lem:S_tr_12}
    Given a validation loss function that satisfies Assumptions \ref{assum:bound} and \ref{assum:lips} given in Section \ref{sec:ridge}, and $\eval{S}$ as defined in Equation \ref{def:eval}, the following holds with probability at least $1-\delta$ (where we denote $y_v^{t(i)} = f(x_v^{t(i)},\epsilon_v^{t(i)})$ and $\sigma^{t(i)}$ are i.i.d. Rademacher variables):
   \begin{align}
       &\E{\sigma, \eval{S}}{\sup_{\substack{\lambda_1\in[\underline{\Lambda}_1,\overline{\Lambda}_1]\\\lambda_2\in[\underline{\Lambda}_2,\infty)}} \frac{1}{n_vT} \sum_{t,i} \sigma^{t(i)} \E{X,y}{l(x_v^{t(i)\intercal}\wh_\lambda^t,y_v^{t(i)})}}\nonumber \\&\qquad\leq
       \frac{L\sqrt{\E{x_v}{\|x_v\|^2}}}{\sqrt{n_vT}}\E{X,y}{ \max_{\cE}\left( \frac{\|y\|\sqrt{V^*(X_\cE X_\cE^\intercal)}}{V^*(X_\cE X_\cE^\intercal)+\underline{\Lambda}_2}+\frac{\overline{\Lambda}_1\sqrt{d}}{V(X_\cE X_\cE^\intercal)+\underline{\Lambda}_2}\right)} .\nonumber
   \end{align}
Here 
$V^*(M)$ is the non-zero singular value of $M$ that maximizes $\frac{\sqrt{\sigma_i(M)}}{\sigma_i(M)+\underline{\Lambda}_2}$.
\end{lemma}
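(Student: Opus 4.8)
The plan is to reproduce the proof of Lemma \ref{lem:S_tr_2} (equivalently Lemma \ref{lem:S_tr_1}) essentially verbatim, changing only the step that bounds $\sup_\lambda\|\wh_\lambda\|$; everything before that step is identical. Concretely: applying Lipschitzness of the loss (Assumption \ref{assum:lips}, via Corollary \ref{thm:rad_lipschitz}) as in the proof of Lemma \ref{lem:S_tr_2}, using that the fresh-draw estimator $\wh_\lambda$ inside $\E{X,y}{\cdot}$ is a single vector common to all validation indices $(t,i)$, and then Cauchy--Schwarz, the left-hand side is at most
\[
\frac{L}{n_vT}\,\E{\sigma}{\Bigl\|\sum_{t,i}\sigma^{t(i)}x_v^{t(i)}\Bigr\|}\;\E{X,y}{\sup_\lambda\|\wh_\lambda\|}.
\]
The Rademacher factor is handled exactly as in Lemma \ref{lem:S_tr_2} — the cross terms have zero expectation and Jensen over the validation sample gives $\E{\eval{S}}{\E{\sigma}{\|\sum_{t,i}\sigma^{t(i)}x_v^{t(i)}\|}}\le\sqrt{n_vT}\,\sqrt{\E{x_v}{\|x_v\|^2}}$ — so after cancellation only $\E{X,y}{\sup_\lambda\|\wh_\lambda\|}$ remains to be controlled, and it suffices to show this is at most $\E{X,y}{\max_{\cE}\bigl(\tfrac{\|y\|\sqrt{V^*(X_\cE X_\cE^\intercal)}}{V^*(X_\cE X_\cE^\intercal)+\underline{\Lambda_2}}+\tfrac{\overline{\Lambda_1}\sqrt d}{V(X_\cE X_\cE^\intercal)+\underline{\Lambda_2}}\bigr)}$.

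For that, I would fix an equicorrelation set / sign pair $(\cE,s)$ and invoke the Elastic Net closed form (Lemma \ref{lem:enet}): $\wh_\lambda$ is supported on $\cE$ and its restriction to $\cE$ equals $(X_\cE X_\cE^\intercal+\lambda_2 I)^{-1}(X_\cE y+\lambda_1 s)$, so by the triangle inequality $\|\wh_\lambda\|\le\|(X_\cE X_\cE^\intercal+\lambda_2 I)^{-1}X_\cE y\|+\lambda_1\|(X_\cE X_\cE^\intercal+\lambda_2 I)^{-1}s\|$. From the SVD of $X_\cE$, the nonzero singular values of $(X_\cE X_\cE^\intercal+\lambda_2 I)^{-1}X_\cE$ are $\sqrt{\sigma_i}/(\sigma_i+\lambda_2)$, where $\sigma_i$ are the eigenvalues of $X_\cE X_\cE^\intercal$ (all nonzero by Assumption \ref{asmp:lasso-full-rank}, so the matrix products are well defined); each such function of $\lambda_2$ is strictly decreasing, hence $\sup_{\lambda_2\ge\underline{\Lambda_2}}$ of the operator norm of $(X_\cE X_\cE^\intercal+\lambda_2 I)^{-1}X_\cE$ is attained at $\lambda_2=\underline{\Lambda_2}$ and equals $\sqrt{V^*(X_\cE X_\cE^\intercal)}/(V^*(X_\cE X_\cE^\intercal)+\underline{\Lambda_2})$, with $V^*$ exactly the eigenvalue defined in the statement. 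The second term is at most $\overline{\Lambda_1}\,\|s\|/(V(X_\cE X_\cE^\intercal)+\lambda_2)\le\overline{\Lambda_1}\sqrt d/(V(X_\cE X_\cE^\intercal)+\underline{\Lambda_2})$, using $\lambda_1\le\overline{\Lambda_1}$, $\|s\|\le\sqrt d$, and monotonicity of $1/(V+\lambda_2)$ in $\lambda_2$. Since at every admissible $(\lambda_1,\lambda_2)$ the solution realizes some $(\cE,s)$, taking the maximum over $\cE$ and then the expectation over $(X,y)$ gives the required bound on $\E{X,y}{\sup_\lambda\|\wh_\lambda\|}$, and substituting into the display above completes the argument.

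The step I expect to need the most care is the joint supremum over $\lambda_1$ and $\lambda_2$ against the piecewise structure of the Elastic Net path: one must verify that the decreasing-in-$\lambda_2$ behavior is pointwise in each singular value so that the uniform worst case is $\lambda_2=\underline{\Lambda_2}$ regardless of which linear piece $\lambda_2$ actually falls in, and — the genuinely new subtlety relative to the LASSO analysis — that the object emerging from maximizing $\sqrt{\sigma_i}/(\sigma_i+\underline{\Lambda_2})$ over the spectrum is \emph{not} the smallest eigenvalue (the function $\sigma\mapsto\sqrt\sigma/(\sigma+\underline{\Lambda_2})$ is non-monotone, peaking at $\sigma=\underline{\Lambda_2}$), which is precisely why the statement introduces $V^*$ in place of $V$ for the $\|y\|$-term while keeping $V$ for the $\overline{\Lambda_1}$-term. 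Apart from this, the proof is a transcription of the ridge/LASSO arguments, relying only on Lemma \ref{lem:enet}, Assumption \ref{asmp:lasso-full-rank} (ensuring $X_\cE X_\cE^\intercal$ is invertible), and the contraction lemma.
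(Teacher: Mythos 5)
Your proposal is correct and follows essentially the same route as the paper: adapt the proof of Lemma \ref{lem:S_tr_2} up to the point where one must bound $\E{X,y}{\sup_\lambda\|\wh_\lambda\|}$, then fix $(\cE,s)$, invoke the Elastic Net closed form from Lemma \ref{lem:enet}, split by the triangle inequality, bound the operator norms in terms of the spectrum of $X_\cE X_\cE^\intercal$, and take a maximum over $\cE$. Your added care about the monotonicity in $\lambda_2$ being pointwise in each singular value, and about the non-monotonicity of $\sigma\mapsto\sqrt\sigma/(\sigma+\underline{\Lambda_2})$ forcing the introduction of $V^*$, is exactly the reasoning the paper's (rather terse) displayed chain of inequalities implicitly relies on.
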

\begin{proof}
We adapt the arguments in the proof of Lemma \ref{lem:S_tr_1}. 

For a fixed $\cE,s$ (Definition \ref{def:ec}), we have by Lemma \ref{lem:lasso},
\begin{align}
    \|\wh_\lambda\| &= \|(X_\cE X_\cE^\intercal +\lambda_2 I)^{-1}(X_{\cE}y+\lambda_1 s)\|\nonumber\\
     &\leq \|(X_\cE X_\cE^\intercal +\lambda_2 I)^{-1}X_{\cE}y\|+ \lambda_1\|(X_\cE X_\cE^\intercal +\lambda_2 I)^{-1} s\|\quad \text{(triangle inequality)}\nonumber\\
    &\leq \max_i \frac{\|y\|\sqrt{\sigma_i(X_\cE X_\cE^\intercal)}}{{\sigma_i(X_\cE X_\cE^\intercal)+\underline{\Lambda}_2}} + \overline{\Lambda}_1\frac{\|s\|}{V(X_\cE X_\cE^\intercal)+\underline{\Lambda}_2} \nonumber\\
    &\le \frac{\|y\|\sqrt{V^*(X_\cE X_\cE^\intercal)}}{V^*(X_\cE X_\cE^\intercal)+\underline{\Lambda}_2}+\frac{\overline{\Lambda}_1\sqrt{d}}{V(X_\cE X_\cE^\intercal)+\underline{\Lambda}_2}.
\end{align}

Recall that here $V^*(M)$ denotes the non-zero singular value of $M$ that maximizes $\frac{\sqrt{\sigma_i(M)}}{\sigma_i(M)+\underline{\Lambda}_2}$.
This implies,
\begin{align}
     \E{X,y}{\sup_\lambda\|\wh_\lambda\|} \leq  \E{X,y}{ \max_{\cE}\left( \frac{\|y\|\sqrt{V^*(X_\cE X_\cE^\intercal)}}{V^*(X_\cE X_\cE^\intercal)+\underline{\Lambda}_2}+\frac{\overline{\Lambda}_1\sqrt{d}}{V(X_\cE X_\cE^\intercal)+\underline{\Lambda}_2}\right)}.
\end{align}
\end{proof}

\subsection{Constructing Gramian matrices with lower bounded smallest eigenvalue}

Here we present a helper lemma for constructing an illustrative example distribution where our distribution-dependent bounds lead to improved generalization guarantees over prior work. We will need the following standard Theorem that gives a lower bound on the smallest singular value of a sub-Gaussian matrix.

\begin{theorem}[\citealt{vershynin2018high}] Let $A$ be a $d\times n$ random matrix with independent, mean zero, subgaussian with variance proxy $K^2$, and isotropic columns $A_i$. Then
for any $t \ge 0$ the smallest singular value of $A$ satisfies,

  $$\sigma_{\min}(A)\ge \sqrt{n} - CK^2(\sqrt{d} + t),$$

with probability at least $1 - 2 \exp(-t^2)$, where $C$ is an absolute constant. 
\end{theorem}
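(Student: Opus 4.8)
This is the standard two-sided estimate on the extreme singular values of a sub-Gaussian matrix with isotropic columns, and since only the lower bound on $\sigma_{\min}$ is asserted, the plan is to reduce it to a single concentration statement for the sample Gram matrix $\tfrac1n AA^\intercal = \tfrac1n\sum_{i=1}^n A_iA_i^\intercal$. Note $\sigma_{\min}(A)^2 = \lambda_{\min}(AA^\intercal) = n\,\lambda_{\min}\!\big(\tfrac1n AA^\intercal\big)$, so it suffices to show that for a suitable $\varepsilon=\varepsilon(d,n,t,K)$,
\[
\Pr\Big(\big\| \tfrac1n AA^\intercal - I_d \big\|_{\mathrm{op}} \le \varepsilon\Big) \ge 1 - 2e^{-t^2},
\]
because then $\lambda_{\min}(\tfrac1n AA^\intercal)\ge 1-\varepsilon$ and, using $\sqrt{1-\varepsilon}\ge 1-\varepsilon$ for $\varepsilon\in[0,1]$, one gets $\sigma_{\min}(A)\ge\sqrt{n}\sqrt{1-\varepsilon}\ge\sqrt n - \sqrt n\,\varepsilon$; taking $\varepsilon=CK^2(\sqrt d+t)/\sqrt n$ then yields the claim after absorbing constants (and the regime $\varepsilon>1$, i.e.\ $CK^2(\sqrt d+t)>\sqrt n$, makes the asserted inequality vacuous, so we may assume $\varepsilon\le1$).

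To control $\|\tfrac1n AA^\intercal - I_d\|_{\mathrm{op}}$ I would fix a $\tfrac14$-net $\mathcal N$ of the unit sphere $S^{d-1}$, chosen with $|\mathcal N|\le 9^d$, and use the standard fact that for a symmetric matrix $M$ one has $\|M\|_{\mathrm{op}}\le 2\max_{x\in\mathcal N}|x^\intercal M x|$. For a \emph{fixed} $x\in S^{d-1}$, $x^\intercal\big(\tfrac1n AA^\intercal - I_d\big)x = \tfrac1n\sum_{i=1}^n\big(\langle A_i,x\rangle^2 - 1\big)$ is an average of $n$ independent centered random variables; since $\langle A_i,x\rangle$ is sub-Gaussian with $\psi_2$-norm $\lesssim K$, each $\langle A_i,x\rangle^2-1$ is sub-exponential with norm $\lesssim K^2$, and Bernstein's inequality gives $\Pr\big(|x^\intercal M x|\ge\varepsilon/2\big)\le 2\exp\!\big(-c\,n\min(\varepsilon^2,\varepsilon)/K^4\big)$. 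A union bound over $\mathcal N$ gives $\Pr\big(\|M\|_{\mathrm{op}}\ge\varepsilon\big)\le 2\cdot 9^d\exp\!\big(-c\,n\varepsilon^2/K^4\big)$ in the regime $\varepsilon\le1$, and choosing $\varepsilon=CK^2(\sqrt d+t)/\sqrt n$ makes the exponent at least $d\ln 9 + t^2$, so the failure probability is at most $2e^{-t^2}$; combining with the reduction above completes the argument.

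\textbf{Main obstacle.} The substantive step is the net-plus-Bernstein estimate for $\|\tfrac1n AA^\intercal - I_d\|_{\mathrm{op}}$: one must check that squaring a sub-Gaussian variable costs only a factor $K^2$ in the sub-exponential norm, and that the two-regime form $\min(\varepsilon^2,\varepsilon)$ of the Bernstein tail is exactly what, after optimizing $\varepsilon$ against the $9^d$ net cardinality, produces the additive $\sqrt d + t$ inside the bound — the $\sqrt d$ term being a direct consequence of the $\log|\mathcal N|=\Theta(d)$ metric entropy of the sphere. Everything else (the $\sqrt{1-\varepsilon}\ge1-\varepsilon$ step, the symmetric-matrix net inequality, and the vacuous large-$\varepsilon$ regime) is routine. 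Alternatively, one may simply cite \citet{vershynin2018high} (e.g.\ Theorem~4.6.1 there) applied to $A^\intercal$, whose rows are independent isotropic sub-Gaussian vectors in $\R^d$.
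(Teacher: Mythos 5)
Your proposal is correct. Note, however, that the paper does not actually prove this theorem: it is quoted verbatim from \citet{vershynin2018high} (Theorem~4.6.1, applied to $A^\intercal$ so that the rows are the independent isotropic sub-Gaussian vectors), and the paper only invokes it as a black box in the proof of Lemma~\ref{lem:dd-gram}. What you have written out is the standard argument \emph{inside} that cited theorem: reduce $\sigma_{\min}(A)\ge\sqrt n(1-\varepsilon)$ to $\|\tfrac1n AA^\intercal-I_d\|_{\mathrm{op}}\le\varepsilon$, control the Gram deviation via a $\tfrac14$-net of cardinality $9^d$, note that $\langle A_i,x\rangle^2-1$ is sub-exponential with $\psi_1$-norm $\lesssim K^2$, and close with Bernstein plus a union bound. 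One small point worth making explicit: to drop the linear branch of the Bernstein tail and keep only $\exp(-cn\varepsilon^2/K^4)$, you need $\varepsilon\lesssim K^2$; this holds automatically because isotropy forces $K\gtrsim 1$ (a sub-Gaussian with unit second moment cannot have arbitrarily small $\psi_2$-norm) and you have already restricted to $\varepsilon\le 1$. With that noted, your sketch is a faithful reconstruction of the proof in the cited source, and the alternative you mention — simply citing Vershynin — is exactly what the paper does.
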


To construct our example for the Elastic Net, we need to extend this result to all sub-matrices and all tasks.
Roughly speaking, in the following lemma, we establish a uniform high-probability lower bound on the smallest singular value of sub-Gaussian submatrices for all tasks.

\begin{lemma}

Let $A^t\in\R^{d\times n}$ be i.i.d.\ random matrices for each $t\in [T]$, with independent, mean-zero, isotropic, sub-Gaussian columns with variance proxy $K^2$. 
Then there exist constants $C,C'$ depending only on $K$, such that the following holds: if $n \ge C \left(d+ \log \frac{T}{\delta} \right)$, then with probability at least $1-\delta$, 

$$\min_{\substack{E \subseteq [d] \\ t\in[T]}} \sigma_{\min}(A^t_E) \ge {\sqrt{C'n}}.$$

Equivalently,

$$\min_{\substack{E \subseteq [d] \\ t\in[T]}} \lambda_{\min}(A^t_E (A^t_E)^\intercal) \ge C'n.$$

    \label{lem:dd-gram}
\end{lemma}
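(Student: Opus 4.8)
## Proof Plan for Lemma \ref{lem:dd-gram}

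\textbf{Overall strategy.} The plan is to combine the single-matrix singular value lower bound (the Vershynin-type theorem quoted just above) with two union bounds: one over the $2^d$ possible equicorrelation subsets $E \subseteq [d]$, and one over the $T$ tasks. The key observation is that for any fixed subset $E$ with $|E| = k \le d$, the submatrix $A^t_E \in \R^{k \times n}$ still has independent, mean-zero, isotropic, sub-Gaussian columns with the same variance proxy $K^2$ (restricting a vector to a subset of coordinates preserves all of these properties, since isotropy of the full column implies isotropy of any coordinate-subvector). So the quoted theorem applies directly to each $A^t_E$, giving for any $s \ge 0$ that $\sigma_{\min}(A^t_E) \ge \sqrt{n} - CK^2(\sqrt{k} + s) \ge \sqrt{n} - CK^2(\sqrt{d} + s)$ with probability at least $1 - 2\exp(-s^2)$.

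\textbf{Key steps in order.} First, fix the target: I want $\sigma_{\min}(A^t_E) \ge \sqrt{C'n}$ for all $E$ and all $t$ simultaneously. Choosing, say, $C' = 1/4$, it suffices to guarantee $\sqrt{n} - CK^2(\sqrt{d}+s) \ge \frac{1}{2}\sqrt{n}$, i.e. $CK^2(\sqrt{d}+s) \le \frac{1}{2}\sqrt{n}$. Second, I apply the quoted theorem to a single pair $(E,t)$ with the choice $s = \sqrt{d + \log(T/\delta)}$ (or a constant multiple thereof); this makes the failure probability for that pair at most $2\exp(-s^2) = 2\exp(-d - \log(T/\delta))$. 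Third, I union-bound over all pairs: there are at most $2^d$ choices of $E$ and $T$ choices of $t$, so the total failure probability is at most $2^d \cdot T \cdot 2\exp(-d-\log(T/\delta)) = 2\cdot 2^d e^{-d}\,\delta \le 2\delta$ (using $2^d e^{-d} = (2/e)^d \le 1$). Rescaling $\delta$ absorbs the constant factor. Fourth, I check the sample-size condition: with $s = \sqrt{d+\log(T/\delta)}$ we have $\sqrt{d}+s \le 2\sqrt{d + \log(T/\delta)}$, so $CK^2(\sqrt{d}+s) \le 2CK^2\sqrt{d+\log(T/\delta)} \le \frac12\sqrt{n}$ holds precisely when $n \ge 16 C^2 K^4 (d + \log(T/\delta))$, which is the claimed $n \ge C''(d + \log\frac{T}{\delta})$ with $C'' = 16C^2K^4$. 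Finally, the equivalent statement about $\lambda_{\min}(A^t_E(A^t_E)^\intercal)$ follows immediately by squaring, since $\lambda_{\min}(A^t_E(A^t_E)^\intercal) = \sigma_{\min}(A^t_E)^2 \ge C'n$.

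\textbf{Main obstacle.} The only real subtlety is making sure the $2^d$ union bound over subsets does not overwhelm the probability budget — this is exactly why the $\log T$ in the sample complexity must be accompanied by a linear-in-$d$ term: the $e^{-s^2}$ tail has to beat a $2^d$ factor, which forces $s^2 \gtrsim d$, hence $n \gtrsim d$. I should be careful to state the theorem's hypothesis check cleanly (that coordinate-restriction preserves independence, mean-zero, isotropy, and sub-Gaussianity of columns, so the invoked theorem genuinely applies to $A^t_E$ and not just to $A^t$), since that is the one place a reader might worry. Everything else is bookkeeping with constants, which I would not grind through in detail beyond indicating the choice $s \asymp \sqrt{d + \log(T/\delta)}$ and $C' $ a small absolute constant like $1/4$.
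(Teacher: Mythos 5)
Your proposal is correct and takes essentially the same approach as the paper: apply the single-matrix sub-Gaussian smallest-singular-value bound (after noting that coordinate restriction preserves independence, mean-zero, isotropy, and sub-Gaussianity of the columns) and then union-bound over the $2^d$ subsets and $T$ tasks, with the $\exp(-\Omega(d))$ tail absorbing the $2^d$ factor. The only difference from the paper is cosmetic: you set the tail parameter to $s \asymp \sqrt{d + \log(T/\delta)}$ and then derive the sample-size condition, whereas the paper sets $r = \sqrt{n}/2$ and derives the same condition; both yield $n \gtrsim d + \log(T/\delta)$ and an $\Omega(\sqrt{n})$ lower bound on the minimum singular value.
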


\begin{proof}
For a fixed subset $E\subseteq[d]$ of size $|E|=s$ and fixed $t\in[T]$, note that the matrix $A^t_E$ has independent, isotropic, sub-Gaussian rows. By standard results (e.g., ~\citet{vershynin2018high}, Theorem 4.6.1 in the 2nd Edition), there exist constants $c_0,C_0$ such that

$$\Pr\left[ \sigma_{\min}(A^t_E) \le \sqrt{n} - C_0 \sqrt{s} - r \right] \le \exp(-4c_0 r^2), \quad \forall r \ge 0.$$

Set $r=\sqrt{n}/2$ to get

$$\Pr\left[ \sigma_{\min}(A^t_E) \le \frac{\sqrt{n}}{2} - C_0 \sqrt{s} \right] \le \exp(-c_0 n).$$

We now do a union bound over subsets $E$ and the tasks $t$.
There are $2^d$ subsets of $[d]$ and $T$ tasks. Applying a union bound, we get the probability of failure

\begin{align*}
    \Pr\left[ \exists E \subseteq [d], t\in[T] \mid \sigma_{\min}(A_E) \le \frac{\sqrt{n}}{2} - C_0 \sqrt{|E|} \right] 
&\le 2^d \cdot T \cdot \exp(-c_0 n)\\
&\le \exp\left( -c_0(n-c_1d-c_2\log T)\right),
\end{align*}

for constants $c_1,c_2$. Choose $n\ge c_1d+c_2\log T+\frac{1}{c_0}\log\frac{1}{\delta}$, to make this probability at most $\delta$. Thus, with probability at least $1-\delta$, we have for all $E,t$

$$\min_{\substack{E \subseteq [d] \\ t\in[T]}} \sigma_{\min}(A^t_E) \ge \frac{\sqrt{n}}{2}-C_0\sqrt{d}.$$

Choosing  $n\ge C(d+\log\frac{T}{\delta})$ with a sufficiently large constant $C$ completes the proof.
\end{proof}

\subsection{Proof of Proposition \ref{prop:en_isotropic_main}}

Finally, we show an example where our bounds improve over the distribution independent bounds from prior work~\citep{regression_pdim}.

\begin{proposition}\label{prop:en_isotropic}
    Consider the expected validation error of an ERM estimator for the Elastic Net hyperparameters over the  range $\lambda_1\in[\underline{\Lambda}_1,\overline{\Lambda}_1],\lambda_2\in[\underline{\Lambda}_2,\infty)$.
    Assume further that all tasks are well-specified such that all inputs $x$ are sampled from sub-Gaussian distributions with independent entries. Concretely, assume that each entry in the input $x$ is sampled independently from a zero-mean sub-Gaussian distribution  such that $\E{}{xx^\intercal} = \Sigma = (\sigma_x^2/d) I_d$. We further restrict the covariance matrices of both $x,\ws{}$ to have constant trace as $d$ increases. So, $tr(\Sigma) = \sigma_x^2 = const$ and $tr(\E{}{\ws{}\ws{\intercal}}) = \sigma_w^2 = const$. For sufficiently large $n \geq \Omega\left(d+\log\frac{T}{\underline{\Lambda}_2}\right)$, the generalization error bound given in Theorem \ref{thm:en} is $\tilde{O}\left(1/\sqrt{nT}+\sqrt{\frac{\log 1/\delta}{T}}\right)$, where the soft-O notation suppresses dependence on quantities apart from $T,n, \delta$ and $d$.
\end{proposition}
\begin{proof}
The generalization error bound in Theorem \ref{thm:en} is

\begin{align*}
    l_v&(\lambda_{ERM}) - l_v(\lambda^*)=\\&\tilde{O}\left(\frac{L\overline{\Lambda}\tilde{\Lambda}_D^T\sqrt{d}}{\sqrt{T}}+\frac{L}{\sqrt{n_vT}}\E{X,y}{ \max_{\cE}\left( \frac{\|y\|\sqrt{V^*(X_\cE X_\cE^\intercal)}}{V^*(X_\cE X_\cE^\intercal)+\underline{\Lambda}_2}+\frac{\overline{\Lambda}_1\sqrt{d}}{V(X_\cE X_\cE^\intercal)+\underline{\Lambda}_2}\right)} +\sqrt{\frac{\log 1/\delta}{T}} \right).
\end{align*}

Define $G^{t,\cE}:=X^t_\cE X_\cE^{t\intercal}$. We have,
$$\|(X^t_\cE X_\cE^{t\intercal} +\underline{\Lambda}_2 I)^{-1}\|= \frac{1}{\underline{\Lambda}_2+ \lambda_{\min}(G^{t,\cE})}.$$ 

Now by Lemma \ref{lem:dd-gram}, if $n=\Omega(d+\log (T/\delta))$ with probability at least $1-\delta$,

$$\max_{\cE,t}\|(X^t_\cE X_\cE^{t\intercal} +\underline{\Lambda}_2 I)^{-1}\|\le\frac{1}{\underline{\Lambda}_2+C{n}}.$$

    Setting $\delta=\frac{\underline{\Lambda}_2}{n}$, we get that for $n=\Omega\left(d+\log\frac{T}{\underline{\Lambda}_2}\right)$,
\begin{align*}
\tilde{\Lambda}_D^T=\E{X}{\max_{t,\cE}  \|(X^t_\cE X_\cE^{t\intercal} +\underline{\Lambda}_2 I)^{-1}\|}\le  \frac{1}{\underline{\Lambda}_2+C{n}} +\frac{\underline{\Lambda}_2}{n}\cdot \frac{1}{\underline{\Lambda}_2}=O\left(\frac{1}{n}\right).
\end{align*}
    A similar argument shows that
    \begin{align*}
    \E{X,y}{\max_{\cE}\left( \frac{\|y\|\sqrt{V^*(X_\cE X_\cE^\intercal)}}{V^*(X_\cE X_\cE^\intercal)+\underline{\Lambda}_2}+\frac{\overline{\Lambda}_1\sqrt{d}}{V(X_\cE X_\cE^\intercal)+\underline{\Lambda}_2}\right)}\\
    =O\left(\frac{1}{\sqrt{n}}+\frac{\overline{\Lambda}\sqrt{d}}{n}\right)
    \end{align*}
    Therefore,
    \begin{align*}
        l_v(\lambda_{ERM}) - l_v(\lambda^*)=O\left(\frac{L\overline{\Lambda}\sqrt{d}}{\sqrt{T}}\cdot \frac{1}{n}+\frac{L}{n_vT}\cdot\frac{\overline{\Lambda}}{\sqrt{n}}+\sqrt{\frac{\log 1/\delta}{T}}\right)=O\left(\frac{L\overline{\Lambda}}{\sqrt{nT}}+\sqrt{\frac{\log 1/\delta}{T}}\right).
        \end{align*}
\end{proof}

\section{Alternative Bounds Based on Prior Work}\label{sec:alt_bounds}
In this section, we present an alternative to Theorem \ref{thm:lips_tgt} that uses previous techniques like the ones used in \cite{mtl_maurer}. In particular, \cite{mtl_maurer} address learning optimal representations from multiple tasks. They give generalization error bounds using Rademacher complexities by dividing the error into an error induced from learning imperfect representations and from imperfect learning given a representation. This section proceeds similarly, by dividing the generalization error into an error induced from imperfect estimation of expected validation error (due to finiteness of validation data), and error from imperfect estimation of $\lambda$ due to finiteness of the number of tasks.

The main distinction of this section from the proof of Theorem \ref{thm:lips_tgt} is the difference in the decomposition of error in Lemmas  \ref{lem:sup_to_rad} and \ref{lem:sup_to_rad_2}. While the decomposition in Lemma \ref{lem:sup_to_rad} is more intuitive and similar to a decomposition done in \cite{mtl_maurer}, the decomposition in \ref{lem:sup_to_rad_2} led to an asymptotically tighter analysis.

Before we state the main theorem, we start with an overloaded definition of the empirical expected validation loss which takes $\etr{S}$ as input:
\begin{align}
    \lev(\lambda, \etr{S})  
    &= \frac{1}{T} \sum_t \E{x_v^t,\epsilon_v^t}{l(x_{v}^{t\intercal}\wh^t_{\lambda}, y_{v}^t)}\nonumber\\
    &= \E{\eval{S'}}{\lve(\lambda, \etr{S}\ewtimes \eval{S'})}\label{step:ev_exp}.
\end{align}
Where $y_v^t = f^t(x_v^t,\epsilon_v^t)$.
Thus for a given $\etr{S}$, $\lev$ computes the expectation of the empirical validation loss over all possible sampling of the validation data.
We state the main theorem of this section below.

\begin{theorem}\label{thm:lips}
    Given a loss function that satisfies Assumptions \ref{assum:bound} and \ref{assum:lips} above, the expected validation loss error using the ERM estimator defined in Equation \ref{def:lamb_erm} is bounded with probability $\geq 1-\delta$ as:
    \begin{align}
        l_v(\lambda_{ERM}) - l_v(\lambda^*) &\leq\frac{2ML\Lambda_D^T
        }{\sqrt{T}}\E{x_v}{\|x_v\|} + \frac{2L}{\sqrt{n_v}} \sqrt{\E{x_v}{\|x_v\|^2}}\sqrt{\E{X,y}{ \|y\|^2/V(XX^\intercal)}}\nonumber\\
        &
         + \frac{2L\tilde{M}}{\sqrt{n_v}\sqrt[4]{T}}\sqrt{\E{x_v}{\|x_v\|^2}}\sqrt[4]{\frac{\ln(4/\delta)}{2}}\nonumber + 5C\sqrt{\frac{\ln(16/\delta)}{2T}}.\nonumber
    \end{align}
    Here $M^2 = \max\|Xy\|^2$, $\tilde{M}^2 = \max \|y\|^2/V(XX^\intercal)$, $\Lambda_D^T = \E{}{\max_t 1/V(X^tX^{t\intercal})}$. 
\end{theorem}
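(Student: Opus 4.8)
The plan is to follow the same skeleton as the proof of Theorem~\ref{thm:lips_tgt}, but to replace the symmetrization step (Lemma~\ref{lem:sup_to_rad_2}) by the ``tasks vs.\ validation'' decomposition of Lemma~\ref{lem:sup_to_rad}, which splits the generalization gap according to the two sources of finiteness rather than combining them. Concretely, write
\[l_v(\lambda_{ERM}) - l_v(\lambda^*) = \big[l_v(\lambda_{ERM}) - l_v(\lambda_{ERM},S)\big] + \big[l_v(\lambda_{ERM},S) - l_v(\lambda^*,S)\big] + \big[l_v(\lambda^*,S) - l_v(\lambda^*)\big],\]
drop the middle bracket by optimality of $\lambda_{ERM}$, bound the last bracket by Hoeffding (Theorem~\ref{thm:Hoeffding}) by $C\sqrt{\ln(1/\delta)/(2T)}$, and for the first bracket pass to $\sup_\lambda[l_v(\lambda)-l_v(\lambda,S)]$. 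Inserting the empirical expected validation loss $\lev(\lambda,\etr{S})$ of Equation~\ref{step:ev_exp}, split
\[\sup_\lambda[l_v(\lambda)-l_v(\lambda,S)] \le \sup_\lambda\big[l_v(\lambda)-\lev(\lambda,\etr{S})\big] + \sup_\lambda\big[\lev(\lambda,\etr{S})-l_v(\lambda,S)\big],\]
where the first term is the error from finitely many tasks and the second the error from finitely many validation points per task; each is handled by McDiarmid plus symmetrization (Theorem~\ref{thm:mcd}, Corollary~\ref{cor:mcd}).

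For the task-sampling term, $\sup_\lambda[l_v(\lambda)-\lev(\lambda,\etr{S})]$ is a function of the $T$ i.i.d.\ tasks with bounded differences $2C/T$, so it is controlled by $2\,\E{\sigma,\etr{S}}{\sup_\lambda\frac1T\sum_t\sigma^t\E{x_v}{l(x_v^\intercal\wh_\lambda^t,y_v)}}$ up to a $C\sqrt{2\ln(4/\delta)/T}$ slack. I would then reuse the ridge machinery of Lemma~\ref{lem:S_v_2} essentially verbatim: strip $l$ by the contraction lemma (Corollary~\ref{thm:rad_lipschitz}), use the Lipschitzness of $x_v^\intercal\wh_\lambda$ in $1/(V^T+\lambda)$ from Equation~\ref{eq:lamb_lips} (with $V^T=\min_t V(X^tX^{t\intercal})$) through Theorem~\ref{thm:rad_lips_general} to pull out $\sup_\lambda 1/(V^T+\lambda)=1/V^T$, and close with Khintchine's inequality (Theorem~\ref{thm:khintchine}) together with $\|X^ty^t\|\le M$. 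This yields the leading term $\tfrac{2ML\Lambda_D^T}{\sqrt T}\E{x_v}{\|x_v\|}$.

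For the validation-sampling term, for fixed $\etr{S}$ the quantity $\sup_\lambda[\lev(\lambda,\etr{S})-l_v(\lambda,S)]$ is a function of the $Tn_v$ i.i.d.\ validation points with bounded differences $2C/(Tn_v)$, hence bounded by $2\,\E{\sigma,\eval{S}}{\sup_\lambda\frac1{Tn_v}\sum_{t,i}\sigma^{t(i)}l(X_v^{t(i)\intercal}\wh_\lambda^t,y_v^{t(i)})}$ plus a $C\sqrt{2\ln(4/\delta)/(Tn_v)}$ term. Following Lemma~\ref{lem:S_tr_2}, I would contract $l$, apply Cauchy--Schwarz within each task, bound $\sup_\lambda\|\wh_\lambda^t\|\le\|y^t\|/\sqrt{V(X^tX^{t\intercal})}$ via Equation~\ref{eq:w_norm_max}, and use Khintchine so that the validation points contribute a factor $\sqrt{n_v\E{x_v}{\|x_v\|^2}}$; what remains is $\tfrac{2L}{\sqrt{n_v}}\sqrt{\E{x_v}{\|x_v\|^2}}\cdot\frac1T\sum_t\|y^t\|/\sqrt{V(X^tX^{t\intercal})}$. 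This residual still depends on the (task-specific, un-averaged) training data, so it needs a \emph{second} concentration: bound $\frac1T\sum_t\|y^t\|/\sqrt{V^t}\le\sqrt{\frac1T\sum_t\|y^t\|^2/V^t}$ by Cauchy--Schwarz, apply Hoeffding to $\frac1T\sum_t\|y^t\|^2/V^t$ (each summand $\le\tilde M^2$), and use $\sqrt{a+b}\le\sqrt a+\sqrt b$; this produces the stated $\sqrt{\E{X,y}{\|y\|^2/V(XX^\intercal)}}$ term together with the fourth-root term $\tfrac{2L\tilde M}{\sqrt{n_v}\sqrt[4]{T}}\sqrt{\E{x_v}{\|x_v\|^2}}\sqrt[4]{\ln(4/\delta)/2}$. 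Finally set each failure probability to $\delta/4$, union-bound, and collapse the several $\sqrt{\ln(\cdot/\delta)/T}$ remainders into $5C\sqrt{\ln(16/\delta)/(2T)}$ exactly as in the proof of Theorem~\ref{thm:lips_ws}.

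The main obstacle is precisely this last layer of bookkeeping in the validation-sampling term. In Lemma~\ref{lem:S_tr_2} the expectation over training data sits \emph{inside} the Rademacher term, so all $Tn_v$ summands share one $\lambda$-indexed function, which is what gives that proof its tighter $1/\sqrt{n_vT}$ rate and first-moment dependence $\E{X,y}{\|y\|/\sqrt{V(XX^\intercal)}}$. Here, because Lemma~\ref{lem:sup_to_rad} keeps $\wh_\lambda^t$ at the fixed training data of task $t$, the tasks cannot be merged, which costs the $\sqrt T$ factor and forces the extra Hoeffding-over-tasks step whose square-rooting is exactly what injects $\tilde M$, the $T^{-1/4}$ and the $\sqrt[4]{\ln}$. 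Arranging the chain (contraction, Cauchy--Schwarz twice, Khintchine, Hoeffding, subadditivity of $\sqrt\cdot$) so that it lands on the stated constants is the only delicate point; everything else is imported from the ridge analysis.
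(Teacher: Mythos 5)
Your proposal is correct and follows essentially the same route as the paper's proof: the decomposition via Lemma~\ref{lem:sup_to_rad} into a task-sampling Rademacher term (handled by Lemma~\ref{lem:S_v}) and a validation-sampling Rademacher term at fixed training data (handled by Lemma~\ref{lem:S_tr}), a Hoeffding bound on $l_v(\lambda^*,S)-l_v(\lambda^*)$, and a $\delta/4$ union bound. The one cosmetic difference is inside the validation-sampling lemma: you apply Cauchy--Schwarz task-by-task, bound $\sup_\lambda\|\wh_\lambda^t\|\le\|y^t\|/\sqrt{V^t}$ up front (so the $\lambda$ drops out before concentration), then apply Khintchine/Jensen and a second $\ell_1$-$\ell_2$ step over tasks, whereas the paper packages the same inequalities as a trace/Frobenius-norm bound $\mathrm{tr}(X_\sigma W_\lambda)\le\|X_\sigma\|_F\|W_\lambda\|_F$ and applies Hoeffding to $\frac1T\sum_t\|\wh_\lambda^t\|^2$ before taking $\sup_\lambda$. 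Both routes deliver the identical $\sqrt{\E{X,y}{\|y\|^2/V(XX^\intercal)}}$ and $\tilde M T^{-1/4}$ terms, and your variant has the minor pedagogical advantage of making it explicit that the Hoeffding bound applies to a $\lambda$-free quantity, which the paper leaves slightly implicit.
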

\begin{proof}
The proof proceeds similar to the proof for Theorem \ref{thm:lips_tgt}.
We write $l_v(\lambda_{ERM}) - l_v(\lambda^*) = l_v(\lambda_{ERM}) - l_{v}(\lambda_{ERM},S) +  l_{v}(\lambda_{ERM},S) - l_{v}(\lambda^*,S) + l_{v}(\lambda^*,S) - l_v(\lambda^*)$. We note, as usual, that $l_{v}(\lambda_{ERM},S) - l_{v}(\lambda^*,S) \leq 0$ and $l_{v}(\lambda^*,S) - l_v(\lambda^*)$ is bounded by a Hoeffding bound (Theorem \ref{thm:Hoeffding}). Notably, with probability $\geq 1-\delta$,
\begin{align*}
    l_{v}(\lambda^*,S) - l_v(\lambda^*) \leq C\sqrt{\frac{\ln(1/\delta)}{2T}}.
\end{align*}

It remains to bound $l_v(\lambda_{ERM}) - l_{v}(\lambda_{ERM},S) \leq \sup_\lambda l_v(\lambda) - l_{v}(\lambda,S)$. We observe that this is error between the empirical loss, and expected loss over sampling of validation examples and tasks. We break this error into error induced from a finite sampling of validation examples, and error from a finite sampling of tasks in Lemma \ref{lem:sup_to_rad}. We get that with probability $\geq 1-\delta$:
\begin{align}
    \sup_\lambda l_v(\lambda) - &l_{v}(\lambda,S) \leq 2\E{\sigma, (X^t,f^t, \epsilon^t \forall t)}{\sup_\lambda \frac{1}{T} \sum_{t} \sigma^t \E{x_v,\epsilon_v}{l(x_v^\intercal\wh_\lambda^t,y_v)}}\nonumber\\
    &+ 2\E{\sigma, (X_v^t, \epsilon_v^t \forall t)}{\sup_\lambda \frac{1}{n_vT} \sum_{t,i} \sigma^{t(i)}l(X_v^{t(i)\intercal}\wh_\lambda^t,y_v^{t(i)})}+ 2C\sqrt{\frac{2\ln(4/\delta)}{T}}.\label{eq:from_sup_rad}
\end{align}
In Lemma \ref{lem:S_v}, we show that $\E{\epsilon_v}{l(x_v^\intercal\wh_\lambda^t,y_v)}$ is Lipschitz in $\frac{1}{V(X^tX^{t\intercal}) + \lambda}$ with Lipschitz constant $\|X^ty^t\|\|x_v\|$ for fixed $y_v$. We use this Lipschitzness to bound the first term as: 
\begin{align*}
    \Eml{\sigma, (X^t,f^t, \epsilon^t \forall t)}\biggl[\sup_\lambda \frac{1}{T} &\sum_{t} \sigma^t \E{x_v,\epsilon_v}{l(x_v^\intercal\wh_\lambda^t,y_v^t)}\biggr]\leq \frac{ML\Lambda_D^T}{\sqrt{T}}.
\end{align*}
We can use Lipschitzness of the loss function in the second term of Equation \ref{eq:from_sup_rad} to get 
\begin{align*}
    \E{\sigma, (X_v^t, \epsilon_v^t \forall t)}{\sup_\lambda \frac{1}{n_vT} \sum_{t,i} \sigma^{t(i)}l(X_v^{t(i)\intercal}\wh_\lambda^t,y_v^{t(i)})} \leq \frac{L}{n_vT}\E{\sigma, (X_v^t, \epsilon_v^t \forall t)}{\sup_\lambda \sum_{t,i} \sigma^{t(i)}X_v^{t(i)\intercal}\wh_\lambda^t}.
\end{align*}
This term can be viewed as a trace product between validation examples and a matrix of predictions $\wh_\lambda^t$. We use this in Lemma \ref{lem:S_tr} to show that with probability $\geq 1-\delta$:
\begin{align*}
    &\E{\sigma, (X_v^t, \epsilon_v^t \forall t)}{\sup_\lambda \frac{1}{n_vT} \sum_{t,i} \sigma^{t(i)} l(x_v^{t(i)\intercal}\wh_\lambda^t,y_v^{t(i)})} \leq\nonumber\\
       &\frac{L}{\sqrt{n_v}} \sqrt{\E{x_v}{\|x_v\|^2}}\sqrt{\E{X,y}{\|y\|^2/V(XX^\intercal)}} + \frac{L\tilde{M}}{\sqrt[4]{n_v^2T}}\sqrt{\E{x_v}{\|x_v\|^2}}\sqrt[4]{\frac{\ln(1/\delta)}{2}}.
\end{align*}
We can now replace $\delta$ by $\delta/4$ in the three probabilistic bounds above so that the following holds with probability at least $1-\delta$:
\begin{align*}
    l_v(\lambda_{ERM}) - l_v(\lambda^*) &\leq\frac{2ML\Lambda_D^T
        }{\sqrt{T}}\E{x_v}{\|x_v\|} + \frac{2L}{\sqrt{n_v}} \sqrt{\E{x_v}{\|x_v\|^2}}\sqrt{\E{X,y}{\|y\|^2/V(XX^\intercal)}}\nonumber\\
        &
         + \frac{2L\tilde{M}}{\sqrt[4]{n_v^2T}}\sqrt{\E{x_v}{\|x_v\|^2}}\sqrt[4]{\frac{\ln(4/\delta)}{2}}\nonumber\\
       & + 2C\sqrt{\frac{2\ln(16/\delta)}{T}} + C\sqrt{\frac{\ln(4/\delta)}{2T}}.
\end{align*}
To get the desired result, we note that $C\sqrt{\frac{\ln(4/\delta)}{2T}} \leq C/2\sqrt{\frac{2\ln(16/\delta)}{T}}$.
\end{proof}

Below we present and prove the main Lemmas used in the above theorem. We first start by upper-bounding the generalization error in terms of two different Rademacher complexities: Rademacher complexity of validation loss with fixed training data and Rademacher complexity of expected validation loss over choice of validation data. 

\begin{lemma}\label{lem:sup_to_rad}
    Given a bounded validation loss function, that is, given that $l(x_v^\intercal\wh_\lambda(X,y),y_v) \leq C,  \forall x_v,y_v,X,y,\lambda$. For any problem instance $S$ as defined in Equation \ref{def:prob}, with probability at least $1-\delta$, 
    \begin{align}
        \sup_\lambda (l_v(\lambda) - l_{v}(\lambda,S)) &\leq 2\E{\sigma, \etr{S}}{\sup_\lambda \frac{1}{T} \sum_{t} \sigma^t \E{x_v,\epsilon_v}{l(x_v^\intercal\wh_\lambda^t,y_v^t)}}\nonumber\\
        &+ 2\E{\sigma, \eval{S}}{\sup_\lambda \frac{1}{n_vT} \sum_{t,i} \sigma^{t(i)}l(X_v^{t(i)\intercal}\wh_\lambda^t,y_v^{t(i)})}\nonumber\\
        &+ 2C\sqrt{\frac{2\ln(4/\delta)}{T}}.\nonumber
    \end{align}
    Where $y_v^t = f^t(x_v^\intercal,\epsilon_v)$, $y_v^{t(i)} = f^t(X_v^{\intercal t(i)},\epsilon_v^{t(i)})$.
\end{lemma}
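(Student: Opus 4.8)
The plan is to follow the same symmetrization template as the proof of Lemma~\ref{lem:sup_to_rad_2}, but with the ``dual'' error decomposition built around the overloaded empirical expected validation loss $\lev(\lambda,\etr{S})$ of Equation~\ref{step:ev_exp}, which averages the \emph{population} per-task validation loss $\E{x_v,\epsilon_v}{l(x_v^\intercal\wh^t_\lambda,y_v^t)}$ over the $T$ sampled training tasks. First I would split
\begin{align*}
\sup_\lambda\bigl(l_v(\lambda)-l_v(\lambda,S)\bigr)\le\sup_\lambda\bigl(l_v(\lambda)-\lev(\lambda,\etr{S})\bigr)+\sup_\lambda\bigl(\lev(\lambda,\etr{S})-l_v(\lambda,S)\bigr),
\end{align*}
and note two facts: $l_v(\lambda)=\E{\etr{S}}{\lev(\lambda,\etr{S})}$, and, conditionally on $\etr{S}$ (which freezes every estimator $\wh^t_\lambda$ and every map $f^t$), $\lev(\lambda,\etr{S})=\E{\eval{S}\mid\etr{S}}{l_v(\lambda,S)}$.

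For the first term --- the error from finitely many tasks --- write $\lev(\lambda,\etr{S})=\frac{1}{T}\sum_t h_t(\lambda)$ with $h_t(\lambda)=\E{x_v,\epsilon_v}{l(x_v^\intercal\wh^t_\lambda,y_v^t)}\in[0,C]$ a function of the $t$-th training task only. Since the tasks are i.i.d., Corollary~\ref{cor:mcd} with $N=T$ gives, with probability $\ge 1-\delta/2$,
\begin{align*}
\sup_\lambda\bigl(l_v(\lambda)-\lev(\lambda,\etr{S})\bigr)\le\E{\etr{S},\etr{S'}}{\sup_\lambda\frac{1}{T}\sum_t\bigl(h'_t(\lambda)-h_t(\lambda)\bigr)}+C\sqrt{\frac{2\ln(4/\delta)}{T}},
\end{align*}
and swapping $\etr{S}^{(t)}$ with $\etr{S'}^{(t)}$ over an arbitrary subset of tasks (legitimate since the task distribution is a product) introduces a per-task Rademacher variable $\sigma^t$ and turns the ghost-sample term into $2\,\E{\sigma,\etr{S}}{\sup_\lambda\frac{1}{T}\sum_t\sigma^t\,\E{x_v,\epsilon_v}{l(x_v^\intercal\wh^t_\lambda,y_v^t)}}$, exactly as in Equation~\ref{eq:task_rad_2}.

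For the second term --- the error from finitely many validation examples --- I would condition on $\etr{S}$ and view $\lev(\lambda,\etr{S})-l_v(\lambda,S)$ as the deviation of the empirical mean over the $n_vT$ i.i.d.\ validation draws $(X_v^{t(i)},\epsilon_v^{t(i)})$ from its conditional expectation (each summand being a bounded function of a single draw with the frozen estimators and maps plugged in). Corollary~\ref{cor:mcd} with $N=n_vT$, followed by the same swapping argument but carried out at the level of individual validation examples (introducing $\sigma^{t(i)}$, as in the derivation of Equation~\ref{eq:val_rad}), gives with probability $\ge 1-\delta/2$
\begin{align*}
\sup_\lambda\bigl(\lev(\lambda,\etr{S})-l_v(\lambda,S)\bigr)\le 2\,\E{\sigma,\eval{S}}{\sup_\lambda\frac{1}{n_vT}\sum_{t,i}\sigma^{t(i)}\,l(X_v^{t(i)\intercal}\wh^t_\lambda,y_v^{t(i)})}+C\sqrt{\frac{2\ln(4/\delta)}{n_vT}}.
\end{align*}
A union bound makes both events hold simultaneously with probability $\ge 1-\delta$, and since $C\sqrt{2\ln(4/\delta)/(n_vT)}\le C\sqrt{2\ln(4/\delta)/T}$ the two additive $C\sqrt{\cdot}$ terms combine into the stated $2C\sqrt{2\ln(4/\delta)/T}$.

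The delicate part is purely the bookkeeping of what is held fixed when: the initial split must be chosen so that $\lev(\lambda,\etr{S})$ is \emph{simultaneously} a task-average of a bounded per-task statistic (needed for the $N=T$ step) and the conditional expectation of $l_v(\lambda,S)$ given the training data (needed for the $N=n_vT$ step), and the Rademacher variables must be inserted at the task level in the first term but at the validation-example level in the second. As in the proof of Lemma~\ref{lem:sup_to_rad_2}, the ghost-sample expectation in the second term is formed after conditioning on $\etr{S}$ and then integrated over $\etr{S}$; I expect this conditioning step to be the main thing to state carefully, although it is routine and does not change the final bound.
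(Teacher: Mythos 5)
Your proposal is correct and follows essentially the same route as the paper's own proof of Lemma~\ref{lem:sup_to_rad}: the same decomposition via $\lev(\lambda,\etr{S})$, the same two applications of Corollary~\ref{cor:mcd} with $N=T$ and $N=n_vT$, the same task-level and example-level symmetrizations (the paper carries these out in Equations~\ref{eq:task_rad} and~\ref{eq:tr_rad}, which are the counterparts to the~\ref{eq:task_rad_2}/\ref{eq:val_rad} derivations you cite), and the same union bound and $n_vT\ge T$ simplification at the end.
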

\begin{proof}
We begin by breaking the generalization error into error induced from a finite sampling of tasks, and error from a finite sampling of validation examples. This is similar to the approach of \cite{mtl_maurer}, where the authors break the generalization error into error induced from learning a representation, and error from learning given a representation.
    \begin{align}
    \sup_\lambda (l_v(\lambda) - l_{v}(\lambda,S)) = &\sup_\lambda (l_v(\lambda) - l_{ev}(\lambda, \etr{S}) + l_{ev}(\lambda, \etr{S}) - l_{v}(\lambda,S))\nonumber\\
    \leq &\sup_\lambda (l_v(\lambda) - l_{ev}(\lambda, \etr{S})) + \sup_\lambda (l_{ev}(\lambda, \etr{S}) - l_{v}(\lambda,S))\label{eq:sup_decompose}.
\end{align}
Note that $l_v(\lambda)$ is the expected value of $l_{ev}(\lambda, \etr{S})$ over sampling of $\etr{S}$, whereas $l_{ev}(\lambda, \etr{S})$ is the average over $T$ samples of training data. We can use Corollary \ref{cor:mcd} by replacing each $l^i$ by $l_{ev}(\lambda, \etr{S})$ to get that with probability $\geq 1-\delta/2$,
\begin{equation}\label{eq:mcd_tr}
    \sup_\lambda (l_v(\lambda) - \lev(\lambda, \etr{S})) \leq \E{\etr{S}, \etr{S'}}{\sup_\lambda (\lev(\lambda,\etr{S}) - \lev(\lambda, \etr{S'}))} +  C\sqrt{\frac{2\ln(4/\delta)}{T}}.
\end{equation}

Again note that, for a fixed $\etr{S}$, we can view $l_{v}(\lambda,S)$ as an average over $n_vT$ i.i.d.\ samples of the form $x_v,\epsilon_v$, where the $(tn_v+i)^\text{th}$ sample for $t\in[T], i\in[n_v]$ becomes the $i^\text{th}$ validation example for the $t^\text{th}$ task. We can then view $l_{ev}(\lambda, \etr{S})$ as the expected value of $l_{v}(\lambda,S)$ over the sampling of $\eval{S}$. Thus, replacing each $l^i$ in Corollary \ref{cor:mcd} by $l_{v}(\lambda,S)$, we get that with probability $\geq 1-\delta/2$,
\begin{align}\label{eq:mcd_val}
    \sup_\lambda (l_{ev}(\lambda, \etr{S}) - &l_{v}(\lambda,S)) \leq\nonumber\\
    &\E{\eval{S}, \eval{S'}}{\sup_\lambda (\lve(\lambda, \etr{S}\ewtimes\eval{S'}) - \lve(\lambda, \etr{S}\ewtimes\eval{S}))}+ C\sqrt{\frac{2\ln(4/\delta)}{n_vT}}.
\end{align}

In order to upper bound the unknown term in Equation \ref{eq:mcd_tr}, we note that we can arbitrarily swap the $i^{\text{th}}$ training instances between $\etr{S}$ and $\etr{S'}$ without changing the expectation. In fact, we can do this for all $i\in R \subseteq [T]$ for any arbitrary set $R$. This allows us to reduce the term to a Rademacher complexity. We show this below where we denote $y_v^t = f^t(x_v^\intercal,\epsilon_v)$:
\begin{align}
    \Eml{\etr{S}, \etr{S'}}\biggl[\sup_\lambda \lev(\lambda,& \etr{S'}) - \lev(\lambda, \etr{S})\biggr]\nonumber\\
    &=\Eml{\etr{S}, \etr{S'}}\left[\sup_\lambda \frac{1}{T} \sum_t \E{x_v,\epsilon_v}{l(x_v^\intercal\wh_\lambda^{\prime t},y_v^{\prime t})} - \frac{1}{T} \sum_t \E{x_v,\epsilon_v}{l(x_v^\intercal\wh^{t}_\lambda,y_v^t)}\right]\nonumber\\
    &= \Eml{\etr{S}, \etr{S'}}\left[\sup_\lambda \frac{1}{T} \sum_{t\in R} \E{x_v,\epsilon_v}{l(x_v^\intercal\wh_\lambda^t,y_v^t)} + \frac{1}{T} \sum_{t\notin R} \E{x_v,\epsilon_v}{l(x_v^\intercal\whp^{t}_\lambda,y_v^{\prime t})}\right.\nonumber\\
    &\left.- \frac{1}{T} \sum_{t\notin R} \E{x_v,\epsilon_v}{l(x_v^\intercal\wh_\lambda^t,y_v^t)} - \frac{1}{T} \sum_{t\in R} \E{x_v,\epsilon_v}{l(x_v^\intercal\whp^{t}_\lambda,y_v^{\prime t})}\right]\nonumber\\
    &= 2\E{\sigma, \etr{S}}{\sup_\lambda \frac{1}{T} \sum_{t} \sigma^t \E{x_v,\epsilon_v}{l(x_v^\intercal\wh_\lambda^t,y_v^t)}}\label{eq:task_rad}.
\end{align}
In the last step we introduce rademachar variables for each task.

Similarly, in Equation \ref{eq:mcd_val}, we note that we can arbitrarily swap the $(tn_v + i)^{\text{th}}$ validation instances between $\eval{S}$ and $\eval{S'}$ without changing the expectation. In fact, we can do this for all $(t,i)\in R \subseteq [T]\times [n_v]$ for any arbitrary set $R$. This allows us to reduce the term to a Rademacher complexity. We show this below where we denote $y_v^{t(i)} = f^t(X_v^{\intercal t(i)},\epsilon_v^{t(i)})$ and $y_v^{\prime t(i)} = f^t(X_v^{\prime \intercal t(i)},\epsilon_v^{\prime t(i)})$:
\begin{align}
    \Eml{\eval{S}, \eval{S'}}&\biggl[\sup_\lambda(l_v(\lambda, \etr{S}\ewtimes\eval{S}) - l_{v}(\lambda,\etr{S}\ewtimes\eval{S'}))\biggr]\nonumber\\
    &=\Eml{\eval{S}, \eval{S'}}\left[\sup_\lambda \frac{1}{T} \sum_t \frac{1}{n_v} \sum_i l(X_v^{t(i)\intercal}\wh_\lambda^t,y_v^{t(i)}) - \frac{1}{T} \sum_t \frac{1}{n_v} \sum_i l(X_v^{\prime t(i)\intercal}\wh_\lambda^t,y_v^{\prime t(i)})\right]\nonumber\\
    &= \Eml{\eval{S}, \eval{S'}}\left[\sup_\lambda \frac{1}{n_vT} \sum_{(t,i) \notin R} l(X_v^{t(i)\intercal}\wh_\lambda^t,y_v^{t(i)}) + \frac{1}{n_vT} \sum_{(t,i) \in R} l(X_v^{\prime t(i)\intercal}\wh_\lambda^t,y_v^{\prime t(i)})\right.\nonumber\\
    &\left.- \frac{1}{n_vT} \sum_{(t,i) \notin R} l(X_v^{\prime t(i)\intercal}\wh_\lambda^t,y_v^{\prime t(i)}) - \frac{1}{n_vT} \sum_{(t,i) \in R} l(X_v^{t(i)\intercal}\wh_\lambda^t,y_v^{t(i)})\right]\nonumber\\
    &= 2\E{\sigma, \eval{S}}{\sup_\lambda \frac{1}{n_vT} \sum_{t,i} \sigma^{t(i)}l(X_v^{t(i)\intercal}\wh_\lambda^t,y_v^{t(i)})}\label{eq:tr_rad}.
\end{align}
In the last step, we introduce rademachar variables for each value of $(t,i)$.

Since Equations \ref{eq:mcd_tr} and \ref{eq:mcd_val} hold with probability $\geq 1-\delta/2$ each, both equations hold with probability $\geq 1-\delta$ by a union bound. We get the desired result by combining Equations \ref{eq:sup_decompose}, \ref{eq:mcd_tr}, \ref{eq:mcd_val}, \ref{eq:task_rad}, \ref{eq:tr_rad} and further noting that $C\sqrt{\frac{2\ln(4/\delta)}{T}} \geq C\sqrt{\frac{2\ln(4/\delta)}{n_vT}}$.
\end{proof}

In the following we give an upper bound on the expectation with respect to sampling of the training set, of the Rademacher complexity of the expected value of validation error over sampling of validation tasks in terms of the distribution of inputs $x$.

\begin{lemma}\label{lem:S_v}
   Given a validation loss function that satisfies Assumptions \ref{assum:bound} and \ref{assum:lips} given in Section \ref{sec:ridge} and $\etr{S}$ as defined in Equation \ref{def:etr}, the following holds with probability at least $1-\delta$ (where we denote $y_v^t = f^t(x_v,\epsilon_v)$):
   \begin{align}
       \Eml{\sigma, \etr{S}}\biggl[\sup_\lambda \frac{1}{T} \sum_{t}& \sigma^t \E{x_v,\epsilon_v}{l(x_v^\intercal\wh_\lambda^t,y_v^t)}\biggr]\leq \frac{ML\Lambda_D^T}{\sqrt{T}}
   \end{align}
   where $M^2 = \max\|Xy\|^2$ and $\Lambda_D^T = \E{X}{\max_t 1/V(X^tX^{t\intercal})}$. 
\end{lemma}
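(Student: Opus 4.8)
The statement mirrors Lemma \ref{lem:S_v_2}: the only structural difference is that here the validation pair $(x_v,\epsilon_v)$ is integrated out \emph{inside} the task-indexed sum rather than kept fixed, so no Hoeffding union bound over tasks is needed (hence no $\sqrt{\ln(T/\delta)}$ term) and the inequality is really a deterministic expectation bound. The plan is to first show that, for each fixed task $t$, the map $\lambda\mapsto h_t(\lambda):=\E{x_v,\epsilon_v}{l(x_v^\intercal\wh_\lambda^t,y_v^t)}$ is Lipschitz in the function $\lambda\mapsto \tfrac{1}{V^T+\lambda}$ (in the sense of Definition \ref{def:lips_general}) with constant $L\,\E{x_v}{\|x_v\|}\,\|X^ty^t\|$, where $V^T=\min_{t'}V(X^{t'}X^{t'\intercal})$. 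Indeed, applying Assumption \ref{assum:lips} under the expectation gives $h_t(\lambda_1)-h_t(\lambda_2)\le L\,\E{x_v,\epsilon_v}{x_v^\intercal(\wh_{\lambda_1}^t-\wh_{\lambda_2}^t)}$, and then the singular-value estimate already derived in Equation \ref{eq:lamb_lips} (inside the proof of Lemma \ref{lem:S_v_2}) bounds $x_v^\intercal\wh_{\lambda_1}^t-x_v^\intercal\wh_{\lambda_2}^t\le \|x_v\|\,\big|\tfrac{1}{V^T+\lambda_1}-\tfrac{1}{V^T+\lambda_2}\big|\,\|X^ty^t\|$; the point reused there is that $a\mapsto\big|\tfrac{1}{a+\lambda_1}-\tfrac{1}{a+\lambda_2}\big|$ is decreasing for $a\ge 0$, so the task-specific $V(X^tX^{t\intercal})$ can be replaced by the common lower bound $V^T$ and a single function $g(\lambda)=\tfrac{1}{V^T+\lambda}$ works for every $t$.

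Next I would invoke the generalized contraction lemma (Theorem \ref{thm:rad_lips_general}), applied termwise: the Lipschitz constants $L_t:=L\,\E{x_v}{\|x_v\|}\,\|X^ty^t\|$ vary with $t$, but the proof of Theorem \ref{thm:rad_lips_general} peels one Rademacher variable $\sigma^t$ at a time and thus accommodates task-dependent constants, yielding
\begin{align*}
\E{\sigma,\etr{S}}{\sup_\lambda \frac{1}{T}\sum_t \sigma^t h_t(\lambda)}
&\le \frac{1}{T}\,\E{\sigma,\etr{S}}{\sup_\lambda \sum_t \sigma^t L_t\,\frac{1}{V^T+\lambda}}\\
&= \frac{L\,\E{x_v}{\|x_v\|}}{T}\,\E{\sigma,\etr{S}}{\sup_\lambda \frac{1}{V^T+\lambda}\sum_t \sigma^t\|X^ty^t\|},
\end{align*}
pulling the $t$-free factor $\tfrac{1}{V^T+\lambda}$ out of the sum. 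Since $\lambda\mapsto\tfrac{1}{V^T+\lambda}$ is positive and decreasing on $[0,\infty)$, $\sup_\lambda \tfrac{1}{V^T+\lambda}\,c\le \tfrac{1}{V^T}\,|c|$ for any scalar $c$ (supremum at $\lambda=0$ when $c\ge 0$, tending to $0$ when $c<0$); taking $c=\sum_t\sigma^t\|X^ty^t\|$ and then the expectation over $\sigma$, Khintchine's inequality (Theorem \ref{thm:khintchine}) gives $\E{\sigma}{\big|\sum_t\sigma^t\|X^ty^t\|\big|}\le\big(\sum_t\|X^ty^t\|^2\big)^{1/2}\le M\sqrt{T}$ using $\|X^ty^t\|^2\le M^2$. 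Hence the whole quantity is at most
\begin{align*}
\frac{L\,\E{x_v}{\|x_v\|}}{T}\cdot M\sqrt{T}\cdot \E{\etr{S}}{\frac{1}{V^T}}
= \frac{ML\,\E{x_v}{\|x_v\|}}{\sqrt{T}}\,\Lambda_D^T,
\end{align*}
because $\tfrac{1}{V^T}=\max_t \tfrac{1}{V(X^tX^{t\intercal})}$ and $\etr{S}$ determines all the $X^t$, so $\E{\etr{S}}{1/V^T}=\E{X}{\max_t 1/V(X^tX^{t\intercal})}=\Lambda_D^T$; this is the form in which the term subsequently enters Theorem \ref{thm:lips}.

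I expect the genuinely delicate step to be the first one, the generalized Lipschitzness of $h_t$ in $\tfrac{1}{V^T+\lambda}$, since that is where the ridge closed form, the monotonicity of $a\mapsto\big|\tfrac1{a+\lambda_1}-\tfrac1{a+\lambda_2}\big|$, and the reduction to the common denominator $V^T$ must be combined; everything after it (contraction, Khintchine, Jensen) is routine. That said, the computation in question is already carried out in the proof of Lemma \ref{lem:S_v_2} (Equation \ref{eq:lamb_lips}), so in practice the proof reduces to importing that Lipschitz bound and running the shorter symmetrization chain above with the Hoeffding step deleted; the only remaining care is bookkeeping --- taking the expectation over $(x_v,\epsilon_v)$ \emph{before} the contraction step so the Lipschitz constant is $L\,\E{x_v}{\|x_v\|}\,\|X^ty^t\|$, and keeping the Rademacher variables indexed by task only.
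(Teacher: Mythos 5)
Your proof is correct and follows essentially the same route as the paper: the key ingredients are the same — the Lipschitzness inherited from Assumption \ref{assum:lips}, the bound of Equation \ref{eq:lamb_lips} showing $x_v^\intercal\wh_\lambda^t$ is Lipschitz in $1/(V^T+\lambda)$, the generalized contraction of Theorem \ref{thm:rad_lips_general} (used, as you correctly note, with $t$-indexed constants, which the peel-one-coordinate proof supports), Khintchine's inequality, setting $\lambda=0$, and the definition of $\Lambda_D^T$. The only cosmetic difference is ordering: the paper first moves $\E{x_v}{\cdot}$ outside the $\sup$ (a $\sup$-of-expectation $\le$ expectation-of-$\sup$ step), applies the plain contraction in $x_v^\intercal\wh$, and then applies the generalized contraction in $1/(V^T+\lambda)$ with a fixed $x_v$; you integrate out $(x_v,\epsilon_v)$ first and run a single generalized-contraction step with constant $L\,\E{x_v}{\|x_v\|}\,\|X^ty^t\|$. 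The two decompositions give the identical bound. One point worth flagging: your final expression $\tfrac{ML\,\E{x_v}{\|x_v\|}\,\Lambda_D^T}{\sqrt{T}}$ carries the factor $\E{x_v}{\|x_v\|}$, which the lemma's displayed bound omits; the paper's own proof also ends with this factor and it is exactly the form used in Theorem \ref{thm:lips}, so the lemma statement appears to have a typographical omission and your bound is the intended one.
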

\begin{proof}
We proceed with the proof much similar to Lemma \ref{lem:S_v_2}. We first note that if $y_v = f(x_v,\epsilon_v)$ for a deterministic function $f$, Lipschitzness of the loss function implies Lipschitzness in expectation over $\epsilon_v$:
\begin{align}
    l(x_v^\intercal\wh_1, y_v) - l(x_v^\intercal\wh_2, y_v) &\leq L|x_v^\intercal\wh_1 - x_v^\intercal\wh_2|\nonumber\\
    \implies \E{\epsilon_v}{l(x_v^\intercal\wh_1, y_v) - l(x_v^\intercal\wh_2, y_v)} &\leq L|x_v^\intercal\wh_1 - x_v^\intercal\wh_2|.\nonumber
\end{align}
Using Lipschitzness (Corollary \ref{thm:rad_lipschitz}):
\begin{align}
    \E{\sigma, \etr{S}}{\sup_\lambda \frac{1}{T} \sum_{t} \sigma^t \E{x_v,\epsilon_v}{l(x_v^\intercal\wh_\lambda^t,y_v^t)}} &\leq \E{\sigma, \etr{S},x_v}{\sup_\lambda \frac{1}{T} \sum_{t} \sigma^t \E{\epsilon_v}{l(x_v^\intercal\wh_\lambda^t,y_v^t)}}\nonumber\\
    &\leq \frac{L}{T}\E{\sigma, \etr{S},x_v}{\sup_\lambda  \sum_{t} \sigma^t x_{v}^{\intercal}\wh^t_{\lambda}}.\label{step:task_rc_l1}
\end{align}
This expression is similar to a one-sample variant of the Rademacher complexity in Lemma \ref{lem:S_tr} as well as that in \cite{trace_reg}. However, we cannot use the techniques used there since that would result in a constant upper bound. We instead use Equation \ref{eq:lamb_lips} and Theorem \ref{thm:rad_lips_general} to conclude that,
\begin{align}
    \frac{L}{T}\E{\sigma, \etr{S},x_v}{\sup_\lambda  \sum_{t} \sigma^t x_{v}^{\intercal}\wh^t_{\lambda}} &\leq \frac{L}{T}\E{x_v}{\|x_v\|\E{\sigma, \etr{S}}{\sup_\lambda  \sum_{t} \sigma^t \frac{\|X^ty^t\|}{V^T+\lambda}}}\nonumber\\
    &= \frac{L}{T}\E{x_v}{\|x_v\|}\E{\sigma, \etr{S}}{\sup_\lambda \sum \sigma^t \frac{\|X^ty^t\|}{V^T+\lambda}}\nonumber\\
    &\leq \frac{L}{T}\E{x_v}{\|x_v\|}\E{\etr{S}}{(\sup_\lambda\frac{1}{V^T + \lambda}) \E{\sigma}{\sum \sigma^t \|X^ty^t\|}}\nonumber\\
    &\leq \frac{L}{T}\E{x_v}{\|x_v\|}\E{\etr{S}}{\frac{\sqrt{\sum \|X^ty^t\|^2}}{V^T}}.\nonumber
\end{align}
We use Khintchine's inequality (Theorem \ref{thm:khintchine}) and set $\lambda = 0$ in the last step. 
To get the desired result, we note from assumption that $M^2 = \max \|Xy\|^2 \implies \sqrt{\sum \|X^ty^t\|^2} \leq M\sqrt{T}$. Thus,
\begin{align}
    \frac{L}{T}\Eml{\sigma, \etr{S},x_v}\biggl[\sup_\lambda & \sum_{t} \sigma^t x_{v}^{\intercal}\wh^t_{\lambda}\biggr]\nonumber\\
    &\leq \frac{L}{T}\E{x_v}{\|x_v\|}\E{\etr{S}}{\frac{\sqrt{\sum \|X^ty^t\|^2}}{V^T}}\nonumber\\
    &\leq \frac{L}{T}\E{x_v}{\|x_v\|}\E{\etr{S}}{\frac{M\sqrt{T}}{V^T}}\nonumber\\
    &\leq \frac{ML\Lambda_D^T}{\sqrt{T}}\E{x_v}{\|x_v\|}\nonumber
\end{align}
\end{proof}

We now show an upper bound on the expected Rademacher complexity of validation loss given fixed training data in terms of the distribution of outputs $y$.

\begin{lemma}\label{lem:S_tr}
    Given a validation loss function that satisfies Assumptions \ref{assum:bound} and \ref{assum:lips} given in Section \ref{sec:ridge}, and $\eval{S}$ as defined in Equation \ref{def:eval}, the following holds with probability at least $1-\delta$ (where we denote $y_v^{t(i)} = f^t(x_v^{t(i)},\epsilon_v^{t(i)})$):
   \begin{align}
       &\E{\sigma, \eval{S}}{\sup_\lambda \frac{1}{n_vT} \sum_{t,i} \sigma^{t(i)} l(x_v^{t(i)\intercal}\wh_\lambda^t,y_v^{t(i)})} \leq\nonumber\\
       &\frac{L}{\sqrt{n_v}} \sqrt{\E{x_v}{\|x_v\|^2}}\sqrt{\E{X,y}{\|y\|^2/V(XX^\intercal)}} + \frac{L\tilde{M}}{\sqrt[4]{n_v^2T}}\sqrt{\E{x_v}{\|x_v\|^2}}\sqrt[4]{\frac{\ln(1/\delta)}{2}}.\nonumber
   \end{align}
Here $\tilde{M}^2 = \max \|y\|^2/V(XX^\intercal)$. 
\end{lemma}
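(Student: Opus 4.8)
\textbf{Proof proposal for Lemma~\ref{lem:S_tr}.}
The plan is to follow the skeleton of the proof of Lemma~\ref{lem:S_tr_2}, but now the estimators $\wh_\lambda^t$ are task-specific (each built from the training pair $(X^t,y^t)$ of task $t$) rather than a single averaged-out estimator, which is exactly why the bound must carry a high-probability statement over the i.i.d.\ draw of the $T$ training tasks $\etr{S}$. First I would apply the Contraction Lemma (Corollary~\ref{thm:rad_lipschitz}) term by term: since $\xi\mapsto l(\xi,y_v^{t(i)})$ is $L$-Lipschitz (Assumption~\ref{assum:lips}), this yields $\E{\sigma,\eval{S}}{\sup_\lambda \frac{1}{n_vT}\sum_{t,i}\sigma^{t(i)}l(x_v^{t(i)\intercal}\wh_\lambda^t,y_v^{t(i)})}\le \frac{L}{n_vT}\,\E{\sigma,\eval{S}}{\sup_\lambda \sum_t \big(\sum_i\sigma^{t(i)}x_v^{t(i)}\big)^\intercal\wh_\lambda^t}$.

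Next, for each fixed $\lambda$ I would Cauchy--Schwarz inside each task and then discard the supremum using the uniform bound $\sup_\lambda\|\wh_\lambda^t\|\le\|y^t\|/\sqrt{V(X^tX^{t\intercal})}$ --- this is Equation~\ref{eq:w_norm_max} from the proof of Lemma~\ref{lem:S_tr_2}, available because the singular values of $(XX^\intercal+\lambda I)^{-1}X$ are at most $1/\sqrt{V(XX^\intercal)}$ for every $\lambda\ge 0$. This gives $\sup_\lambda\sum_t\big(\sum_i\sigma^{t(i)}x_v^{t(i)}\big)^\intercal\wh_\lambda^t\le\sum_t\frac{\|y^t\|}{\sqrt{V(X^tX^{t\intercal})}}\,\big\|\sum_i\sigma^{t(i)}x_v^{t(i)}\big\|$. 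Taking the expectation over the Rademacher signs and using $\E{\sigma}{\|\sum_i\sigma^{t(i)}x_v^{t(i)}\|}\le\sqrt{\sum_i\|x_v^{t(i)}\|^2}$ (Jensen; the cross terms vanish, as in Lemma~\ref{lem:S_tr_2}) reduces the quantity to $\frac{L}{n_vT}\,\E{\eval{S}}{\sum_t\frac{\|y^t\|}{\sqrt{V(X^tX^{t\intercal})}}\sqrt{\sum_i\|x_v^{t(i)}\|^2}}$.

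The step that distinguishes this lemma from Lemma~\ref{lem:S_tr_2} is decoupling the per-task factors. I would apply Cauchy--Schwarz over the $T$ tasks, $\sum_t\frac{\|y^t\|}{\sqrt{V(X^tX^{t\intercal})}}\sqrt{\sum_i\|x_v^{t(i)}\|^2}\le\big(\sum_t\frac{\|y^t\|^2}{V(X^tX^{t\intercal})}\big)^{1/2}\big(\sum_{t,i}\|x_v^{t(i)}\|^2\big)^{1/2}$. The validation factor is bounded by Jensen, $\E{\eval{S}}{(\sum_{t,i}\|x_v^{t(i)}\|^2)^{1/2}}\le\sqrt{n_vT\,\E{x_v}{\|x_v\|^2}}$. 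The training factor $\sum_t\|y^t\|^2/V(X^tX^{t\intercal})$ is a sum of $T$ i.i.d.\ terms each lying in $[0,\tilde{M}^2]$ with $\tilde{M}^2=\max\|y\|^2/V(XX^\intercal)$, so Hoeffding's inequality (Theorem~\ref{thm:Hoeffding}) gives, with probability $\ge 1-\delta$ over $\etr{S}$, $\sum_t\|y^t\|^2/V(X^tX^{t\intercal})\le T\,\E{X,y}{\|y\|^2/V(XX^\intercal)}+\tilde{M}^2\sqrt{T\ln(1/\delta)/2}$; then $\sqrt{a+b}\le\sqrt a+\sqrt b$ splits the square root into the two summands of the claimed bound. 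Multiplying back the prefactor $L/(n_vT)$ together with $\sqrt{n_vT\,\E{x_v}{\|x_v\|^2}}$ and simplifying the powers of $n_v$ and $T$ yields exactly the stated inequality.

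I expect the main difficulty to be purely organizational: keeping straight that the probabilistic statement is over the i.i.d.\ training tasks $\etr{S}$ (that is the only source of the $\ln(1/\delta)$ term), while the Rademacher variables and the validation set $\eval{S}$ are integrated out, so the Cauchy--Schwarz over tasks and the subsequent ``treat the training factor as fixed, average the validation factor'' split must be carried out in the right order. Everything else is routine; the one place where sharpness is knowingly sacrificed --- relative to the $1/(V^T+\lambda)$-Lipschitzness argument used in Lemma~\ref{lem:S_v_2} --- is the crude uniform bound $\sup_\lambda\|\wh_\lambda^t\|\le\|y^t\|/\sqrt{V(X^tX^{t\intercal})}$ used to eliminate the supremum over $\lambda$.
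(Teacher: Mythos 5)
Your proof is correct and follows essentially the same strategy as the paper's: contraction, then a Cauchy--Schwarz/H\"older decoupling of the validation factor from the per-task estimator norms, the uniform-in-$\lambda$ bound $\|\wh^t_\lambda\|\le\|y^t\|/\sqrt{V(X^tX^{t\intercal})}$ from Equation~\ref{eq:w_norm_max}, and a Hoeffding bound over the $T$ i.i.d.\ training tasks. The only organizational difference is that the paper packages the across-task Cauchy--Schwarz as a trace/Frobenius H\"older inequality on $\mathrm{tr}(X_\sigma W_\lambda)$ and only eliminates the $\sup_\lambda$ after stating Hoeffding, whereas you eliminate the supremum first and apply Hoeffding to the already $\lambda$-free quantity $\sum_t\|y^t\|^2/V(X^tX^{t\intercal})$; your ordering is in fact slightly cleaner, since it avoids the paper's mild awkwardness of writing a pointwise (fixed-$\lambda$) Hoeffding bound and then taking $\sup_\lambda$ across it.
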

\begin{proof}
We define $\mathcal{R}$ as below and use Lipschitzness to upper bound it as a simpler Rademacher complexity term:
\begin{align}
    \mathcal{R} &= \frac{1}{n_vT} \E{\sigma}{\sup_\lambda \sum_t\sum_i \sigma^{t(i)} l(x_v^{t(i)\intercal}\wh_\lambda^t,y_v^{t(i)})} \nonumber\\
    &\leq \frac{L}{n_vT} \E{\sigma}{\sup_\lambda \sum_t\sum_i \sigma^{t(i)} x^{t(i)\intercal}_{v} \wh^t_\lambda}.\nonumber
\end{align}

To compute the above quantity, we use a manipulation similar to one in \cite{trace_reg}. We define two matrices $X_\sigma \in \R^{T\times d}$ and $W_\lambda \in \R^{d \times T}$: the $t$-th row of $X_\sigma$ is defined as $X_{\sigma_{(t)}} = \sum_i \sigma^{t(i)} x^{t(i)\intercal}_{v}$ and the $t$-th column of $W_\lambda$ is defined as $W_\lambda^{(t)} = \wh^t_\lambda$
. By this definition we see that,
\begin{align}
    \frac{L}{n_vT} \E{\sigma}{\sup_\lambda \sum_t\sum_i \sigma^{t(i)} x^{t(i)\intercal}_{v} \wh^t_\lambda} &=  \frac{L}{n_vT} \E{\sigma}{\sup_\lambda tr(X_\sigma W_\lambda)}\nonumber\\
    \implies \mathcal{R} &\leq \frac{L}{n_vT} \E{\sigma}{\sup_\lambda \|X_\sigma\|_2 \|W_\lambda\|_2}\nonumber\\
     \implies \E{\eval{S}}{\mathcal{R}} &\leq \frac{L}{n_vT} \E{\sigma,x^{t(i)}_{v}}{\|X_\sigma\|_2}{\sup_\lambda\|W_\lambda\|_2}.\label{eq:rad_holder}
\end{align}
Note that $\E{\eval{S}}{\mathcal{R}}$ corresponds to the left hand side in the statement of the Lemma.

\begin{align}
    \|X_\sigma\|_2 &= \sqrt{tr(X_\sigma X_\sigma^\intercal )}\nonumber\\
    &= \sqrt{\sum_t (\sum_i \sigma^{t(i)} x^{t(i)\intercal}_{v})(\sum_j \sigma^{t(j)} x^{t(j)\intercal}_{v})^\intercal} \quad \text{(from definition)}\nonumber\\
    \implies \E{\sigma,x^{t(i)}_{v}}{\|X_\sigma\|_2} &\leq \sqrt{\sum_t \E{}{(\sum_i \sigma^{t(i)} x^{t(i)\intercal}_{v})(\sum_j \sigma^{t(j)} x^{t(j)\intercal}_{v})^\intercal}}\nonumber\\
    &= \sqrt{\sum_t \E{}{\sum_ix^{t(i)\intercal}_{v}x^{t(i)}_{v}}}\nonumber\\
    &= \sqrt{\sum_t \E{}{\sum_i \|x^{t(i)\intercal}_{v}\|^2}}\nonumber\\
    &= \sqrt{n_vT\E{x_v}{\|x_v\|^2}}.\label{eq:X_2}
\end{align}

To compute $\|W_\lambda\|_2$:
\begin{align}
    \|W_\lambda\|_2 &= \sqrt{tr(W_\lambda^\intercal W_\lambda)}\nonumber\\
    &= \sqrt{\sum_t \wh_\lambda^{t\intercal}\wh_\lambda^t} = \sqrt{\sum_t \|\wh_\lambda^t\|^2}\nonumber.
\end{align}

It remains to compute bounds on $\E{}{\sup_\lambda \sqrt{\sum_t \|\wh^t_\lambda\|^2}}$.
Using Hoeffding inequality (Theorem \ref{thm:Hoeffding}), if $\|\wh_\lambda^t\|^2\leq \tilde{M}^2 \forall t,\lambda$, we can say the following with probability $\geq 1-\delta$:
\begin{align}
    \frac{1}{T} \sum_t \|\wh^t_\lambda\|^2 \leq \E{X,y}{\|\wh_\lambda\|^2} + \tilde{M}^2\sqrt{\frac{\ln(1/\delta)}{2T}}.\nonumber
\end{align}
This gives us that with probability $\geq 1-\delta$,
\begin{align}
    \sup_\lambda\|W_\lambda\|_2 &\leq \sup_\lambda \sqrt{T(\E{X,y}{\|\wh_\lambda\|^2}) + \tilde{M}^2\sqrt{\frac{T\ln(1/\delta)}{2}}}\nonumber\\
    &= \sqrt{T}\sqrt{\sup_\lambda (\E{X,y}{\|\wh_\lambda\|^2}) + \tilde{M}^2\sqrt{\frac{\ln(1/\delta)}{2T}}}\nonumber\\
    &\leq \sqrt{T\sup_\lambda (\E{X,y}{\|\wh_\lambda\|^2})} + \tilde{M}\sqrt[4]{\frac{T\ln(1/\delta)}{2}}.\label{eq:w_lamb_sum}
\end{align}

Finally, note that from Equation \ref{eq:w_norm_max},
\begin{align}
    \|\wh_\lambda\|^2 &\leq \|y\|^2/V(XX^\intercal),\nonumber
\end{align}
where we defined $V(.)$ as the smallest non-0 singular value of the matrix. Thus,
\begin{align}
    \sup_\lambda \E{X,y}{\|\wh_\lambda\|^2} &\leq \E{X,y}{ \|y\|^2/V(XX^\intercal)}\label{eq:sup_w_lamb}
\end{align}
and,
\begin{align}
    \max \|\wh_\lambda\|^2 &\leq \max \|y\|^2/V(XX^\intercal).\nonumber
\end{align}
{So that $\tilde{M}^2 = \max\|y\|^2/V(XX^\intercal)$ satisfies $\|\wh_\lambda^t\|^2\leq \tilde{M}^2\forall t,\lambda$}.

Combining Equations \ref{eq:rad_holder}, \ref{eq:X_2}, \ref{eq:w_lamb_sum} and \ref{eq:sup_w_lamb},
\begin{align}
    \E{\eval{S}}{\mathcal{R}} \leq \frac{L}{\sqrt{n_v}}\sqrt{\E{x_v}{\|x_v\|^2}}\sqrt{\E{X,y}{ \|y\|^2/V(XX^\intercal)}} + \frac{L\tilde{M}}{\sqrt[4]{n_v^2T}}\sqrt{\E{x_v}{\|x_v\|^2}}\sqrt[4]{\frac{\ln(1/\delta)}{2}}\nonumber.
\end{align}

\end{proof}

\end{document}